
\documentclass[10pt,twocolumn,letterpaper]{article}


\usepackage[pagenumbers]{iccv} 

\usepackage{url}            
\usepackage{booktabs}       
\usepackage{amsfonts}       
\usepackage{nicefrac}       
\usepackage{microtype}      
\usepackage{xcolor}         
\usepackage{import}
\usepackage{amsmath}
\usepackage{amsthm}
\usepackage{graphicx}
\usepackage{cancel}
\usepackage{tikz}
\usetikzlibrary{spy}
\usepackage{subcaption}
\usepackage{float}
\usepackage{wrapfig}
\usepackage{verbatim}
\usepackage{rotating}
\usepackage{mwe}
\usepackage{wrapfig}
\usepackage{algorithm}
\usepackage{float}
\usepackage{algpseudocode}
\usepackage{afterpage}
\usepackage{multirow}
\usepackage{relsize} 


\usepackage{amsmath,amsfonts,bm}









\def\eqref#1{equation~\ref{#1}}









\def\1{\bm{1}}








\def\vg{{\bm{g}}}

\def\vm{{\bm{m}}}
\def\vn{{\bm{n}}}

\def\vu{{\bm{u}}}
\def\vv{{\bm{v}}}
\def\vw{{\bm{w}}}
\def\vx{{\bm{x}}}
\def\vy{{\bm{y}}}
\def\vz{{\bm{z}}}



\DeclareMathAlphabet{\mathsfit}{\encodingdefault}{\sfdefault}{m}{sl}
\SetMathAlphabet{\mathsfit}{bold}{\encodingdefault}{\sfdefault}{bx}{n}













\newtheorem{proposition}{Proposition}
\newtheorem{definition}{Definition}

\definecolor{alessiogreen}{RGB}{0, 192, 0}
\definecolor{addred}{RGB}{255, 0, 0}

\newcommand\restr[2]{{
  \left.\kern-\nulldelimiterspace 
  #1 
  \littletaller 
  \right|_{#2} 
  }}

\newcommand{\zoomedImage}[3]{%
    \begin{tikzpicture}[spy using outlines={circle,red,magnification=2,size=1.0cm, connect spies}]
        \node {\includegraphics[width=\textwidth]{#1}};
        \spy on (#2) in node [left] at (#3);
    \end{tikzpicture}
}

\newcommand{\scaleY}{0.40\textwidth}
\newcommand{\sidecap}[1]{{\begin{sideways}\parbox{\scaleY}{\centering #1}\end{sideways}}}

%
%



\usepackage{amsmath,amsfonts,bm}


\def\vn{{\bm{n}}} 
\def\vx{{\bm{x}}} 
\def\vy{{\bm{y}}} 
\def\vz{{\bm{z}}} 







\newcommand{\encoder}{{\mathcal{E}}}
\newcommand{\decoder}{{\mathcal{D}}}

\newcommand{\Id}{{\mathrm{Id}}}

\renewcommand{\eqref}[1]{\textup{(\ref{#1})}}

%
\definecolor{iccvblue}{rgb}{0.21,0.49,0.74}
\usepackage[pagebackref,breaklinks,colorlinks,allcolors=iccvblue]{hyperref}


\title{LATINO-PRO: LAtent consisTency INverse sOlver with PRompt Optimization\vspace{-0.2cm}}

\author{
Alessio Spagnoletti\textsuperscript{1}\thanks{Equal contribution.} \quad
Jean Prost\textsuperscript{2}\footnotemark[1] \quad
Andrés Almansa\textsuperscript{1} \quad
Nicolas Papadakis\textsuperscript{3} \quad
Marcelo Pereyra\textsuperscript{4} \\[2mm]
\footnotesize
\textsuperscript{1}Université Paris Cité, CNRS, MAP5 UMR 8145, F-75006 Paris, France\  
\footnotesize
\textsuperscript{2}Université de Lille, CRIStAL UMR 9189, F-59655 Villeneuve d'Ascq, France\\
\footnotesize
\textsuperscript{3}Univ. Bordeaux, CNRS, INRIA, Bordeaux INP, IMB, UMR 5251, F-33400 Talence, France\\
\footnotesize
\textsuperscript{4}\footnotesize Heriot-Watt University, MACS \& Maxwell Institute for Mathematical Sciences, EH14 4AS, Edinburgh, United Kingdom\\[2mm]
}

\begin{document}
\maketitle
\begin{abstract}
Text-to-image latent diffusion models (LDMs) have recently emerged as powerful generative models with great potential for solving inverse problems in imaging. However, leveraging such models in a Plug \& Play (PnP), zero-shot manner remains challenging because it requires identifying a suitable text prompt for the unknown image of interest. Also, existing text-to-image PnP approaches are highly computationally expensive. We herein address these challenges by proposing a novel PnP inference paradigm specifically designed for embedding generative models within stochastic inverse solvers, with special attention to Latent Consistency Models (LCMs), which distill LDMs into fast generators. We leverage our framework to propose LAtent consisTency INverse sOlver (LATINO), the first zero-shot PnP framework to solve inverse problems with priors encoded by LCMs. Our conditioning mechanism avoids automatic differentiation and reaches SOTA quality in as little as 8 neural function evaluations. As a result, LATINO delivers remarkably accurate solutions and is significantly more memory and computationally efficient than previous approaches. We then embed LATINO within an empirical Bayesian framework that automatically calibrates the text prompt from the observed measurements by marginal maximum likelihood estimation. Extensive experiments show that prompt self-calibration greatly improves estimation, allowing LATINO with PRompt Optimization to define new SOTAs in image reconstruction quality and computational efficiency. The code is available at \href{https://latino-pro.github.io/}{latino-pro.github.io}.\vspace{-0.4cm}
\end{abstract}

\section{Introduction}
\label{sec:intro}
We seek to recover on an unknown image of interest $\vx$, taking values in $\mathbb{R}^n$, from a measurement
\begin{equation*}
    \vy = \mathcal{A}\vx + \vn\, ,
\end{equation*}
where $\mathcal{A}$ is a linear measurement operator, and $\vn$ is additive Gaussian noise with covariance $\sigma_n^2\Id$. We consider situations in which the recovery of $\vx$ from $\vy$ is ill-conditioned or ill-posed, leading to significant uncertainty about the solution. Adopting a Bayesian approach, we leverage prior knowledge about $\vx$ in order to regularize the problem and deliver meaningful inferences that are well-posed. This is achieved by specifying the marginal distribution $p(\vx)$, so-called prior distribution, together with Bayes' theorem to obtain the posterior distribution $
    p(\vx|\vy) = {p(\vy|\vx)p(\vx)}/{p(\vy)}$
which models our knowledge about $\vx$ after observing $\vy$.

\begin{figure}[t]
\centering
\begin{minipage}{0.16\textwidth}
    \centering \textbf{Measurement} \\ 
    \includegraphics[width=\textwidth]{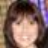} \\
    \includegraphics[width=\textwidth]{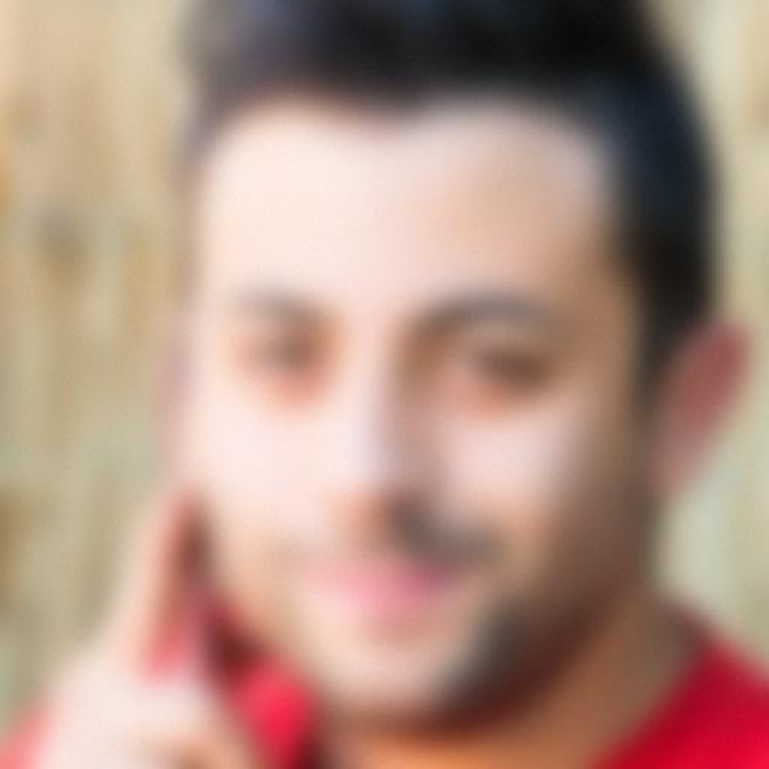}\\
    \includegraphics[width=\textwidth]{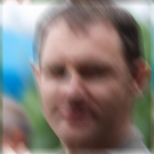}
\end{minipage}%
\begin{minipage}{0.16\textwidth}
    \centering \textbf{GT} \\ 
    \includegraphics[width=\textwidth]{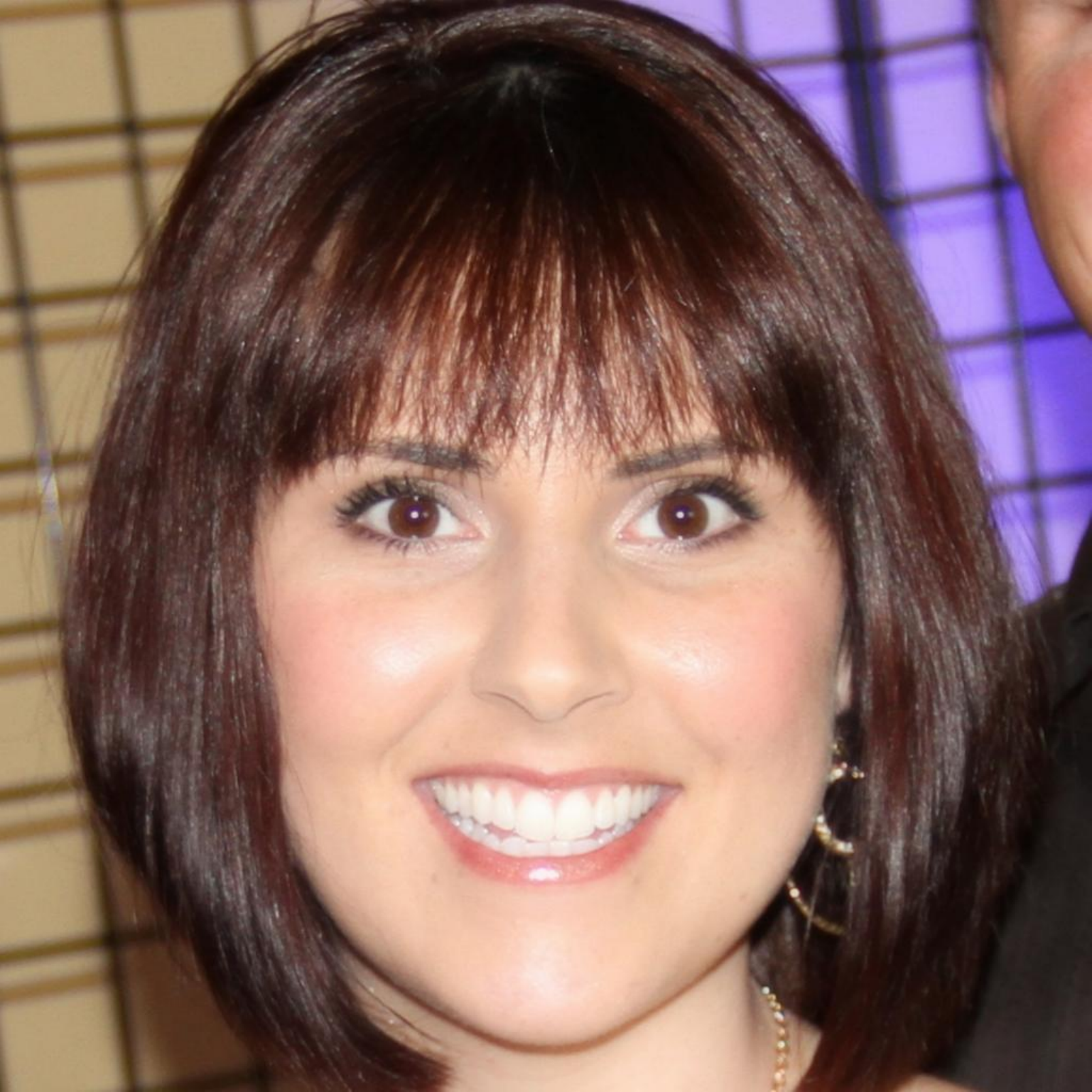} \\
    \includegraphics[width=\textwidth]{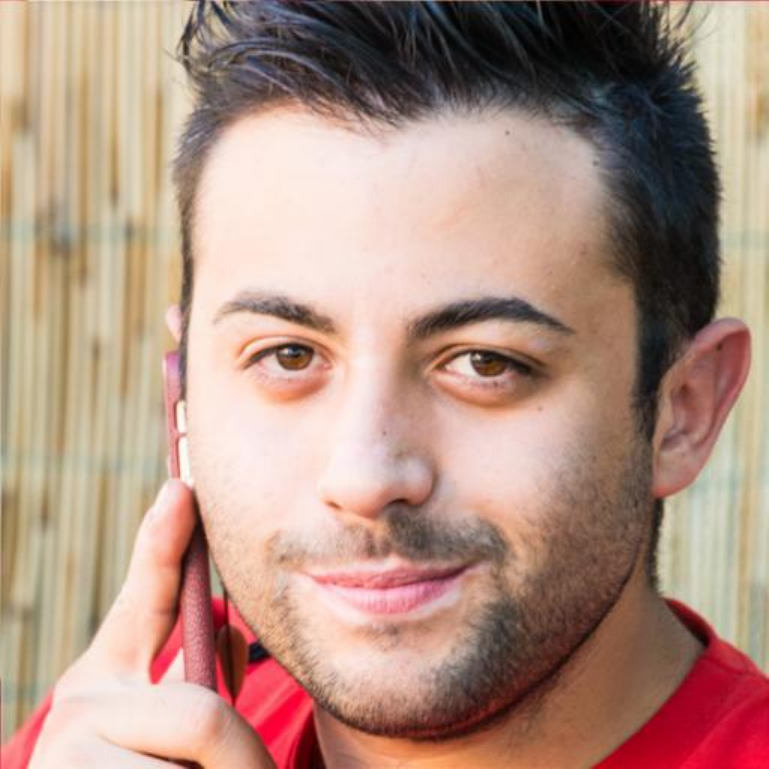}\\
    \includegraphics[width=\textwidth]{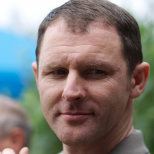}
\end{minipage}%
\begin{minipage}{0.16\textwidth}
    \centering \textbf{Ours} \\ 
    \includegraphics[width=\textwidth]{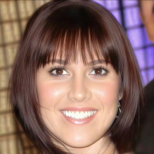} \\
    \includegraphics[width=\textwidth]{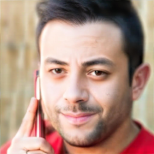}\\
    \includegraphics[width=\textwidth]{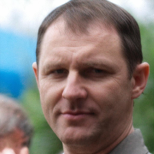}
\end{minipage}\vspace*{-0.2cm}
\caption{Qualitative comparison of LATINO-PRO on the FFHQ-1024 val dataset. Tasks: $\times 32$ super-resolution, Gaussian deblur $\sigma=20.0$ pixels, Motion deblur.\vspace{-0.2cm}}
\label{fig:preview}
\end{figure}

Modern imaging techniques rely increasingly on pre-trained foundational generative models as image priors. Developing methodology to leverage such models in a so-called Plug \& Play (PnP) manner, in combination with a likelihood function $p(\vy|\vx)$ specified during test time, is a highly active research area. In particular, strategies relying on diffusion models (DM), pioneered in \cite{Song2019, Song2020improved, Song2020SDE}, superseded GANs \cite{Dhariwal2021} and established themselves as the default approach; with notable examples including, e.g., DPS \cite{Chung2022}, DDRM \cite{kawar2022denoising}, DiffPIR \cite{Zhu2023DenoisingDM}, $\Pi$GDM \cite{Song2023PseudoinverseGuidedDM}, and mid-point~\cite{moufad2025variational}. In order to reduce computational efficiency, modern DMs operate in the latent space  of a variational autoencoder (VAE) \cite{Kingma2013AutoEncodingVB, Lopez2020AUTOENCODINGVB}; leading to Latent Diffusion Models (LDMs) \cite{Rombach2021}. LDMs simply project the data into a latent space using an encoder, $\mathcal{E}$, perform the diffusion in the latent space, and use a decoder, $\mathcal{D}$, to move back to pixel space. However, since the likelihood is specified in pixel space, naive adaptations of DM-based imaging algorithms fail when applied directly to LDMs. Instead, the specific properties of the latent space need to be considered, as proposed in PSLD \cite{Rout2023}.

Another fundamental advantage of using LDMs like the celebrated Stable Diffusion models \cite{Rombach2021, Podell2023SDXLIL} as image priors is the possibility to condition on a textual prompt input. Prompts can be highly informative, so it is natural to seek to exploit them for imaging. Specifying a suitable prompt a-priori is difficult, so P2L \cite{Chung2023PrompttuningLD} includes a strategy to calibrate the prompt automatically, while TReg \cite{Kim2023RegularizationBT} exploits classifier-free guidance to tune the null prompt.

A main limitation of all existing (L)DM-based imaging methods is that they rely on iterative sampling strategies that require hundreds or thousands of neural function evaluations (NFEs) to produce a single sample from $p(\vx|\vy)$. As a result, they are very computationally expensive. Also, they cannot support Bayesian inferences that require drawing a large number of samples from $p(\vx|\vy)$, e.g., to compute statistical estimators, quantify uncertainty, etc. This is in sharp contrast with the latest generative models, such as Consistency Models (CM) \cite{Song2023ConsistencyM, Luo2023LatentCM} and Distilled Diffusion Models (DDM) \cite{Luhman2021KnowledgeDI, salimans2022progressivedistillationfastsampling, Meng2022OnDO}, which can produce remarkable samples from the prior $p(\vx)$ in a few NFEs.   
Some concurrent works explore CMs within a PnP sampling framework~\cite{Garber_2025_CVPR, xu2024consistencymodeleffectiveposterior, li2025decoupleddataconsistencydiffusion}, and one could also consider fine-tuning the models to learn to sample from $p(\vx | \vy)$ instead of $p(\vx)$, e.g., see CoSIGN \cite{Zhao2024CoSIGNFG}. Unfortunately, again we are not aware of any existing imaging methods capable of achieving this for high-resolution latent CMs, exploiting the prompt conditioning.

We aim at developing a method that is simultaneously zero-shot, fast (low number of NFEs), light in GPU memory usage (no auto-grad required), that can sample high-resolution images, and which can be conditioned by a text prompt. In particular, the last two conditions are currently only satisfied by  LDM-based methods TReg and P2L requiring $\ge 200$ NFE per sample as well as the use of additional correction terms. Our two main contributions are:
\begin{enumerate}
    \item We propose LATINO, a new PnP approach that leverages pre-trained text-to-image LCMs to sample from the posterior $p(\vx \mid \vy,c)$ in a gradient-free way, and with few NFEs per sample.
    This allows scaling to large images ($\ge 1024^2$) 
    with low inference time and low GPU memory footprint.
    LATINO is by nature prompt-conditioned, allowing users to input semantic information. LATINO is derived from a careful discretization of a Langevin diffusion and a novel PnP approach tailored for LCMs, which involves stochastic auto-encoders instead of denoisers.

    \item We equip LATINO with automatic prompt optimization (LATINO-PRO), via a stochastic approximation proximal gradient scheme to simultaneously find $\hat{c}(\vy) = \arg\max_{c\in \mathbb{R}^k} \, p(\vy \mid c)$ and sample from $p(\vx \mid \vy,\hat{c}(\vy))$. This method can correct incomplete or misleading prompts while still requiring only 68 NFEs.
\end{enumerate}

\section{Related works}

\paragraph{Data-driven regularization for inverse problem.}
Deep learning regularization methods demonstrated a significant performance improvement over classical inverse problem solvers on a large number of applications.
Denoising-based regularization such as RED and Plug-and-play implicitly defines the prior model via a denoising neural network~\cite{romano2017little, meinhardt2017learning, Zhang2020PlugandPlayIR}. 
Deep generative model can also be used to define the prior model on the solution, 
 including GANs~\cite{ menon2020pulse, pan2021exploiting, daras2021intermediate}, variational autoencoders~\cite{bora2017compressed, gonzalez2022solving, prost2023inverse}, normalizing flows~\cite{cai2023nf}. Denoising diffusion models can also be used repurposed to solve inverse problem by defining an additional guidance term to enforce consistency with measurements~\cite{Chung2022, kawar2022denoising, Song2023PseudoinverseGuidedDM}. In this work, we instead use a latent consistency model.\vspace{-0.1cm}

\paragraph{Latent diffusion inverse problem solvers (LDIS).}
Latent diffusion models are significantly faster than classical diffusion models which operate in the pixel space. 
Various strategies were proposed to condition the latent diffusion process to a measurement, including optimizing the latent iterates to force consistency with the measurements
\cite{Rout2023, he2024faststablediffusioninverse, songsolving}, or using a second-order sampler~\cite{rout2023secondorder}. Unlike the proposed LATINO solver, those methods necessitate to backpropagate through the decoder and the latent score network, and are thus less efficient in terms of memory usage.\vspace{-0.1cm}

\paragraph{Inverse problem solvers with text-to-image generative models.}
Latent diffusion inverse solvers (LDIS) originally relied on null text embeddings $c_{\varnothing}$, while P2L \cite{Chung2023PrompttuningLD} has been the first to try to optimize it to improve the reconstruction. TReg \cite{Kim2023RegularizationBT} instead adopts the CFG \cite{Ho2022ClassifierFreeDG} framework to optimize the null text embedding while still enabling the user to input a custom conditioning prompt.
Both TReg and P2L use the proximal operator of the data fidelity, similar to our LATINO solver, although in those works the proximal operator is motivated by connection with ADMM \cite{Boyd2011DistributedOA,Parikh2013ProximalA}.
 
In Appendix~\ref{sec:related_work} we detail   (text-based) LDIS algorithms.\vspace{-0.1cm}

\paragraph{Langevin PnP inverse problem solvers.} DMs can be used as PnP priors within Langevin Markov chain Monte Carlo samplers. Notably, ~\cite{mbakam2024} embeds a DM within a PnP unadjusted Langevin algorithm (ULA)~\cite{laumont2022bayesian} that computes the posterior mean $\textrm{E}(\vx|\vy)$, whereas \cite{coeurdoux2024plug} embeds a DM within a split-Gibbs sampler~\cite{Vono2019} that is equivariant to a noisy ULA~\cite{Vargas-Mieles2022}. Moreover, ~\cite{Coeurdoux2024} and ~\cite{Melidonis2024} consider PnP priors encoded by a normalizing flow, whereas \cite{Holden2022} provides a general theoretical framework for using VAE and GAN priors. 
\section{Background}
\label{sec:background}

In this section, we will first recall the main properties of DMs and LDMs. Then, we will introduce the Consistency Model formulation that is used by methods like Distribution Matching Distillation (DMD) to sample from the prior distribution in as few as 4 steps. This last ingredient will be crucial for our implementation, allowing for fast IS.

\paragraph{Diffusion Models}
 are generative models designed to learn the reverse of a forward noising process \cite{SohlDickstein2015DeepUL, Ho2020Denoising, Song2020SDE, Song2020improved}. Starting from an initial distribution $p_0(\vx), \vx \in \mathbb{R}^n$, the forward process gradually transforms it into a standard Gaussian distribution $p_T(\vx) = \mathcal{N}(0, \Id)$ as $T \to \infty$. In the variance-preserving (VP) framework proposed by Ho et al. \cite{Ho2020Denoising}, both forward and reverse processes are described using Ito stochastic differential equations (SDEs)
\cite{Song2020SDE}:
\begin{align}
    d \vx_t & = -\frac{\beta_t}{2} \vx_t dt + \sqrt{\beta_t} d\vw, \nonumber\\
    d \vx_t & = \left[ -\frac{\beta_t}{2} \vx_t - \beta_t \nabla_{\vx_t} \log p_t(\vx_t) \right] dt + \sqrt{\beta_t} d\overline{\vw}, \label{eq:VP-SDE}
\end{align}
where $\beta_t$ represents the noise schedule, and $\vw, \overline{\vw}$ are Brownian motions for forward and reverse directions, respectively. The term $\nabla_{\vx_t} \log p_t(\vx_t)$ is typically approximated using a score network $s_{\theta}(\cdot)$, trained with denoising score matching (DSM) \cite{Vincent2011}. The main property of \eqref{eq:VP-SDE} is to possess also an equivalent Probability Flow ODE formulation \cite{Song2020SDE}, in which all the stochasticity is concentrated in the initial step $p_T(\vx)$:

\begin{equation}\label{eq:PF-ODE}
    d \vx_t = \left[ -\frac{\beta_t}{2} \vx_t - \frac{\beta_t}{2} \nabla_{\vx_t} \log p_t(\vx_t) \right] dt.
\end{equation}

While image diffusion models in pixel space $\vx$ are computationally intensive, a more efficient alternative is to operate in a lower-dimensional latent space using an autoencoder \cite{Rombach2021, Kingma2013AutoEncodingVB}. This approach is represented as follows:
\begin{equation*}
    \mathcal{E}: \mathbb{R}^n \mapsto \mathbb{R}^d, \quad \mathcal{D}: \mathbb{R}^d \mapsto \mathbb{R}^n, \quad \vx \approx \mathcal{D}(\mathcal{E}(\vx)),
\end{equation*}
where $\mathcal{E}$ and $\mathcal{D}$ denote the encoder and decoder, respectively, with $d \ll n$. By encoding images into a latent space $\vz = \mathcal{E}(\vx)$ \cite{Rombach2021}, one can train a diffusion model on the reduced representation, significantly decreasing computational costs and making it feasible to model high-resolution images (e.g., $\geq 512^2$ pixels). Latent diffusion models (LDMs) have thus become the standard for generative image models, notably under the name Stable Diffusion (SD).

A key distinction between SD and traditional image diffusion models \cite{Dhariwal2021} lies in the integration of text conditioning $s_{\theta}(\cdot, c)$, where $c$ is the prompt conditioning, which in practice is continuously embedded as a vector by the CLIP text encoder \cite{Radford2021LearningTV}. Since SD is trained on the large-scale LAION-5B dataset \cite{Schuhmann2022LAION5BAO}, which consists of image-text pairs, it can be conditioned at inference time to generate text-aligned images through $s_{\theta}(\cdot, c)$ or classifier-free guidance (CFG) \cite{Ho2022ClassifierFreeDG}.

\paragraph{Consistency Models}

are obtained by distillation of a pre-trained DM or trained from scratch, allow few or even single-step generation~\cite{Song2023ConsistencyM}. CMs are defined by using a so-called \textit{consistency function}, a pushforward map defined as follows:

\begin{definition}[Consistency function]
    Given a small $\eta>0$ and a trajectory $\{ \vx_t\}_{t\in[\eta, T]}$ of the PF-ODE \eqref{eq:PF-ODE}, we define the \textit{consistency function} as $f:(\vx_t,t)\rightarrow\vx_\eta$.   
\end{definition}
Given such a function, it is straightforward to sample $\vx_\eta = f_\theta(\vx_T,T)$ with $\vx_T \sim \mathcal{N}(0,\Id)$ in a single step.
This function is indeed self-consistent in the sense that $f(\vx_t, t) = f(\vx_{t'}, t') \ \ \forall \ t,t' \in [\eta,T]$. 
Given timesteps $t_1 > t_2 > \cdots > t_{N-1} > \eta$, the multistep consistency sampling process is
\begin{align*}
    &\hat{\vx}_T \sim \mathcal{N}(0,\Id), \quad \vx = f_\theta(\hat{\vx}_T, T) \\
    &\text{For } n = 1 \text{ to } N-1: \\
    &\quad \hat{\vx}_{t_n} = \vx + \sqrt{(1 - \alpha_{t_n}) - (1 - \alpha_\eta)}\epsilon \ \text{ with } \  \epsilon \sim \mathcal{N}(0, \Id) \\
    &\quad \vx = f_\theta(\hat{\vx}_{t_n}, t_n),
\end{align*}
where $\alpha_t := \prod_{s=1}^t (1 -\beta_s)$. As done with DMs, it is possible to define a CM in the latent space \cite{Luo2023LatentCM} and train it by distilling a LDM \cite{Luo2023LCMLoRAAU} to obtain a LCM, e.g.  by leveraging a LoRA \cite{Hu2021LoRALA} fine-tuning as done in LCM-LoRA \cite{Luo2023LCMLoRAAU}.


\tikzset{every picture/.style={line width=0.75pt}}         
\begin{figure*}
\begin{tikzpicture}[x=0.75pt,y=0.75pt,yscale=-1,xscale=1]

\draw (57.8,51.5) node  {\includegraphics[width=52.5pt,height=52.5pt]{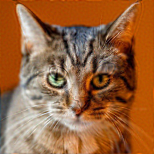}};
\draw (524.3,51.5) node  {\includegraphics[width=52.5pt,height=52.5pt]{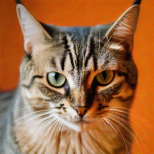}};
\draw  [color={rgb, 255:red, 245; green, 166; blue, 35 }  ,draw opacity=0.33 ][fill={rgb, 255:red, 245; green, 166; blue, 35 }  ,fill opacity=0.33 ] (97,4.5) -- (271,33.14) -- (271,71.32) -- (97,99.96) -- cycle ;
\draw  [color={rgb, 255:red, 80; green, 227; blue, 194 }  ,draw opacity=0.33 ][fill={rgb, 255:red, 80; green, 227; blue, 194 }  ,fill opacity=0.33 ] (481.96,100.42) -- (307.98,71.54) -- (308,33.2) -- (482.02,4.56) -- cycle ;
\draw    (561.3,52.06) -- (602.3,52.54) ;
\draw [shift={(604.3,52.56)}, rotate = 180.67] [color={rgb, 255:red, 0; green, 0; blue, 0 }  ][line width=0.75]    (10.93,-3.29) .. controls (6.95,-1.4) and (3.31,-0.3) .. (0,0) .. controls (3.31,0.3) and (6.95,1.4) .. (10.93,3.29)   ;
\draw (641.8,51.5) node  {\includegraphics[width=52.5pt,height=52.5pt]{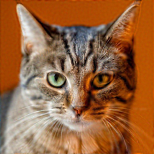}};
\draw  [color={rgb, 255:red, 0; green, 0; blue, 0 }  ,draw opacity=1 ] (102.6,44.98) .. controls (102.6,40.02) and (113.46,36) .. (126.85,36) .. controls (140.24,36) and (151.1,40.02) .. (151.1,44.98) .. controls (151.1,49.94) and (140.24,53.96) .. (126.85,53.96) .. controls (113.46,53.96) and (102.6,49.94) .. (102.6,44.98) -- cycle ;
\draw  [color={rgb, 255:red, 0; green, 0; blue, 0 }  ,draw opacity=1 ] (159.1,44.98) .. controls (159.1,40.02) and (169.96,36) .. (183.35,36) .. controls (196.74,36) and (207.6,40.02) .. (207.6,44.98) .. controls (207.6,49.94) and (196.74,53.96) .. (183.35,53.96) .. controls (169.96,53.96) and (159.1,49.94) .. (159.1,44.98) -- cycle ;
\draw  [color={rgb, 255:red, 0; green, 0; blue, 0 }  ,draw opacity=1 ] (216.6,44.98) .. controls (216.6,40.02) and (227.46,36) .. (240.85,36) .. controls (254.24,36) and (265.1,40.02) .. (265.1,44.98) .. controls (265.1,49.94) and (254.24,53.96) .. (240.85,53.96) .. controls (227.46,53.96) and (216.6,49.94) .. (216.6,44.98) -- cycle ;
\draw  [color={rgb, 255:red, 0; green, 0; blue, 0 }  ,draw opacity=1 ] (430.2,44.68) .. controls (430.2,39.72) and (441.06,35.7) .. (454.45,35.7) .. controls (467.84,35.7) and (478.7,39.72) .. (478.7,44.68) .. controls (478.7,49.64) and (467.84,53.66) .. (454.45,53.66) .. controls (441.06,53.66) and (430.2,49.64) .. (430.2,44.68) -- cycle ;
\draw  [color={rgb, 255:red, 0; green, 0; blue, 0 }  ,draw opacity=1 ] (373.7,44.68) .. controls (373.7,39.72) and (384.56,35.7) .. (397.95,35.7) .. controls (411.34,35.7) and (422.2,39.72) .. (422.2,44.68) .. controls (422.2,49.64) and (411.34,53.66) .. (397.95,53.66) .. controls (384.56,53.66) and (373.7,49.64) .. (373.7,44.68) -- cycle ;
\draw  [color={rgb, 255:red, 0; green, 0; blue, 0 }  ,draw opacity=1 ] (317.2,44.68) .. controls (317.2,39.72) and (328.06,35.7) .. (341.45,35.7) .. controls (354.84,35.7) and (365.7,39.72) .. (365.7,44.68) .. controls (365.7,49.64) and (354.84,53.66) .. (341.45,53.66) .. controls (328.06,53.66) and (317.2,49.64) .. (317.2,44.68) -- cycle ;
\draw    (126.85,53.96) .. controls (143.18,70.54) and (164.5,70.95) .. (182.01,55.2) ;
\draw [shift={(183.35,53.96)}, rotate = 136.24] [color={rgb, 255:red, 0; green, 0; blue, 0 }  ][line width=0.75]    (10.93,-3.29) .. controls (6.95,-1.4) and (3.31,-0.3) .. (0,0) .. controls (3.31,0.3) and (6.95,1.4) .. (10.93,3.29)   ;
\draw    (184.35,53.96) .. controls (200.68,70.54) and (222,70.95) .. (239.51,55.2) ;
\draw [shift={(240.85,53.96)}, rotate = 136.24] [color={rgb, 255:red, 0; green, 0; blue, 0 }  ][line width=0.75]    (10.93,-3.29) .. controls (6.95,-1.4) and (3.31,-0.3) .. (0,0) .. controls (3.31,0.3) and (6.95,1.4) .. (10.93,3.29)   ;
\draw    (341.45,53.66) .. controls (357.78,70.24) and (379.1,70.65) .. (396.61,54.9) ;
\draw [shift={(397.95,53.66)}, rotate = 136.24] [color={rgb, 255:red, 0; green, 0; blue, 0 }  ][line width=0.75]    (10.93,-3.29) .. controls (6.95,-1.4) and (3.31,-0.3) .. (0,0) .. controls (3.31,0.3) and (6.95,1.4) .. (10.93,3.29)   ;
\draw    (397.95,53.66) .. controls (414.28,70.24) and (435.6,70.65) .. (453.11,54.9) ;
\draw [shift={(454.45,53.66)}, rotate = 136.24] [color={rgb, 255:red, 0; green, 0; blue, 0 }  ][line width=0.75]    (10.93,-3.29) .. controls (6.95,-1.4) and (3.31,-0.3) .. (0,0) .. controls (3.31,0.3) and (6.95,1.4) .. (10.93,3.29)   ;

\draw (50,0) node [anchor=north west][inner sep=0.75pt]  [font=\small]  {$x^{(k-1)}$};
\draw (515,0) node [anchor=north west][inner sep=0.75pt]  [font=\small]  {$\Tilde{x}^{(k)}$};
\draw (630,0) node [anchor=north west][inner sep=0.75pt]  [font=\small]  {$x^{(k)}$};
\draw (561,59) node [anchor=north west][inner sep=0.75pt]  [font=\scriptsize]  {$prox_{\delta_{k} g_y}$};
\draw (273,46.7) node [anchor=north west][inner sep=0.75pt]  [font=\tiny]  {$z_{t} =z_{t}'$};
\draw (110.6,41.4) node [anchor=north west][inner sep=0.75pt]  [font=\small]  {\scalebox{.9}{$\quad x$}};
\draw (166.1,41.4) node [anchor=north west][inner sep=0.75pt]  [font=\small]  {\scalebox{.9}{$\quad z_{0}$}};
\draw (224.6,41.4) node [anchor=north west][inner sep=0.75pt]  [font=\small]  {\scalebox{.9}{$\quad z_{t}$}};
\draw (438.7,37.4) node [anchor=north west][inner sep=0.75pt]  [font=\small]  {\scalebox{.9}{$\quad x'$}};
\draw (380.7,37.4) node [anchor=north west][inner sep=0.75pt]  [font=\small]  {\scalebox{.9}{$\quad z_{0}'$}};
\draw (324.2,37.4) node [anchor=north west][inner sep=0.75pt]  [font=\small]  {\scalebox{.9}{$\quad z_{t}'$}};
\draw (145,68) node [anchor=north west][inner sep=0.75pt]  [font=\tiny]  {$\mathcal{E}( \cdot )$};
\draw (418.2,68) node [anchor=north west][inner sep=0.75pt]  [font=\tiny]  {$\mathcal{D}( \cdot )$};
\draw (189,68) node [anchor=north west][inner sep=0.75pt]  [font=\tiny]  {$p( z_{t} \ \mid z_{0})$};
\draw (348,68) node [anchor=north west][inner sep=0.75pt]  [font=\tiny]  {$G_{\theta }( \cdot ,t,c)$};

\end{tikzpicture}
\caption{
One step of the LATINO solver, a discretization of the Langevin SDE~\eqref{eq:Langevin} which targets the posterior $p(\vx|\vy, c)$. The current iterate $\vx_k$ is encoded by the VAE encoder and propagated forward via a noising diffusion kernel $p(\vz_t|\vz_0)$. This process is then reversed via the latent consistency model and the VAE decoder, followed by the proximal operator to involve the likelihood $p(\vy|\vx)$.
}
\label{fig:LATINO}
\end{figure*}


A recent paradigm to distill a LDM into a one-step generator is the Distribution Matching Distillation (DMD) \cite{Yin2023OneStepDW} in which a generator $G_\theta$ with the same architecture of the DM denoiser is trained. DMD distills many-step diffusion models into a one-step generator $G$ by minimizing the expectation over $t$ of approximate Kullback-Liebler (KL) divergences between the diffused latent target distribution $p_{\text {real }, t}$ and the diffused latent generator output distribution $p_{\text {fake }, t}$. Since DMD trains $G$ by gradient descent, it only requires the gradient of this loss, which can be computed as the difference of two score functions. 
DMD uses a frozen pre-trained DM $\mu_{\text {real }}$ (the teacher), and dynamically updates a fake DM $\mu_{\text {fake }}$ while training $G_\theta$, using a denoising score-matching loss on samples from the one-step generator, i.e., fake data.

DMD's improved version,  DMD2 \cite{Yin2024ImprovedDM}, adds during training a GAN-based loss term, which increases the stability and quality, allowing the obtained distilled model to beat his teacher one. Furthermore, the generator $G_\theta$ can be conditioned on $t$ leading to  $G_\theta(\vz_{t_i},t_i)$ that predicts the final $\vz_0$ given a noisy input $\vz_{t_i}$. By deploying the generator following the CM framework $\hat{\vz}_{0\mid t_i} = G_\theta(\vz_{t_i},t_i)$ and $\vz_{t_{i+1}} = \sqrt{\alpha_{t_{i+1}}}\hat{\vz}_{0\mid t_i} + \sqrt{1 - \alpha_{t_{i+1}}}\epsilon$, one obtains a 4-step sampler. In practice, what is done is to consider $G_\theta$ as the pre-trained SDXL U-Net \cite{Podell2023SDXLIL}, that is fine-tuned for the times $t_i \in \{ 999, 749, 499, 249 \}$, the 4 used by the CM. The final $\hat{\vz}$ is then passed through the decoder to get $\hat{\vx} = \mathcal{D}(\hat{\vz})$.


\section{LATINO}\label{sec:LATINO}
\subsection{A new Plug \& Play Langevin sampling method}

We propose a LAtent consisTency INverse sOlver (LATINO) to approximately draw samples from the posterior distribution $p(\vx|\vy,c)$, parametrized by the degraded observation $\vy$ and a text prompt $c$. Following a PnP philosophy, LATINO is constructed by combining an analytical likelihood function $p(\vy|\vx)$ with a prior distribution $p(\vx|c)$ that is implicitly encoded by an text-to-image CM, obtained by distilling a foundational SD model. However, unlike the conventional PnP approach that exploits a denoising operator in lieu of a gradient or proximal mapping of a clean image prior $p(\vx)$, LATINO pioneers a new PnP approach suitable for leveraging generative models such as CMs. In this  paradigm, the CM is first auto-encoded to define a Markov kernel that admits $p(\vx|c)$ as invariant distribution. The kernel is subsequently used in place of an SDE targeting $p(\vx|c)$ within a stochastic sampler for $p(\vx|\vy,c)$.

More precisely, consider an overdamped Langevin diffusion process to sample from $p(\vx|\vy,c)$, given by the SDE
\begin{equation}
    \textrm{d}\vx_s = \nabla\log p(\vy|\vx_s)\textrm{d}s + \nabla\log p(\vx_s|c)\textrm{d}s + \sqrt{2}\textrm{d}\bm{w}_s\,, \label{eq:Langevin}
\end{equation}
where $\bm{w}_s$ denotes a $n$-dimensional Brownian motion. Under mild regularity assumptions on $p(\vx|\vy,c)$~\cite{Durmus2017}, starting from an initial condition $\vx_0$, the process $\vx_s$ converges to $p(\vx|\vy,c)$ exponentially fast as $s$ increases ~\cite{Durmus2017}. While solving \eqref{eq:Langevin} exactly is not generally possible, it provides a powerful computational engine to derive samplers that approximately sample $p(\vx|\vy,c)$ by mimicking $\vx_s$. For example, an Euler-Maruyama approximation of \eqref{eq:Langevin} leads to the widely used unadjusted Langevin algorithm (ULA)~\cite{Durmus2017}. 

LATINO stems from the following approximation of \eqref{eq:Langevin}
\begin{align}
    {\vu} = \vx_k + \int^{\delta}_{0} &\nabla\log p(\tilde{\vx}_s|c)\textrm{d}s + \sqrt{2}\textrm{d}\bm{w}_s\,,\, \tilde{\vx}_{0} = \vx_k\,,\nonumber\\
    \vx_{k+1} & = {\vu} + \delta \nabla\log p(\vy|\vx_{k+1})\,, \label{eq:split-Langevin}
\end{align}
where we note the first half of the splitting, encoded in ${\vu}$, solves \eqref{eq:Langevin} exactly with likelihood term removed, and the second step involves the likelihood via an implicit Euler step. Note that \eqref{eq:split-Langevin} has two key advantages over ULA: it is potentially very accurate, as the first step introduces no discretisation bias; and it is numerically stable for all $\delta >0$, so it can be made to converge arbitrarily quickly by increasing $\delta$, at the expense of some bias. Conversely, ULA only integrates the Brownian term $\bm{w}_s$ exactly, it involves the drift via an explicit Euler step, and is explosive unless $\delta$ is sufficiently small. Also note that the implicit Euler step in \eqref{eq:split-Langevin} can be reformulated as an explicit proximal point step, i.e., $\tilde{\vx}_{k+1} = \operatorname{prox}_{\delta g_y}(\tilde{\vx}_{k+1})$, where $g_y : \vx \mapsto -\log p(\vy|\vx)$\cite{pereyra2016proximal}, which is computationally tractable for common inverse problems such as deblurring or super-resolution\footnote{Note that  {\scriptsize
$\operatorname{prox}_{\delta g_y}(\vx) = \left(\delta\mathcal{A}^t\mathcal{A} + \sigma_n^{2} \Id\right)^{-1}\left(  \delta^{1}A^t \vy +\sigma_n^{2}\vx\right)$}, which can be computed cheaply when we have access to the singular value decomposition of $\mathcal{A}$ (see, e.g.,~\cite{Zhang2020PlugandPlayIR}), or by using a specialised subiteration.}.

The computation of the intermediate step ${\vu}$ from $\vx_k$ in \eqref{eq:split-Langevin} is usually intractable. LATINO circumvents this difficulty by noting that this step defines a contractive Markov kernel with $p(\vx|c)$ as unique invariant distribution. Our proposed PnP approach replaces this kernel in \eqref{eq:split-Langevin} with a Stochastic Auto-Encoder (SAE) 
derived from a CM, which we design such that it also contracts random variables towards $p(\vx|c)$.

\subsection{Auto-encoding stable diffusion}

Consider a distilled SD model which samples from $p(\vx|c)$ by using a CM, $G_\theta$, operating on the latent space of a deterministic auto-encoder $(\mathcal{E},\mathcal{D})$. Assume $G_\theta$ solves the flow
\begin{equation}\label{eq:PF-ODE-Z}
    d \vz_t = \left[ -\frac{\beta_t}{2} \vz_t - \frac{\beta_t}{2} \nabla_{\vz_t} \log p_t(\vz_t) \right] dt\,,
\end{equation}
which generates a set of distributions $p_0(\vz_0|c)$ by pushing forward a latent normal random variable through $G_\theta(\cdot,T,c)$ for a sufficiently large $T$, which in turn produces the target distribution $p(\vx|c)$ through the action of the decoder $\mathcal{D}$.

Our aim is to use $G_\theta$ and 
$(\mathcal{E},\mathcal{D})$
to construct a SAE $(\mathfrak{E}_t, \mathfrak{D}_{t,c})$ such that it contracts random variables on $\mathbb{R}^n$ towards $p(\vx|c)$, its fixed point. In order to achieve this, we define the stochastic encoder parametrized by $t \in (0,T]$
$$
\mathfrak{E}_t:\quad \vz_t|\vx \sim \mathcal{N}(\sqrt{\alpha_t}\mathcal{E}(\vx),(1-\alpha_t)\Id_d)\,.
$$
$\mathfrak{E}$ maps $\vx$ to the output $\vz_t$ by first applying the deterministic encoder to compute $\vz_0 = \mathcal{E}(\vx)$ and subsequently drawing $\vz_t$ conditionally to $\vz_0$, as given by the SDE  $d\vz_t = -\frac{\beta_t}{2} \vz_t dt + \sqrt{\beta_t} d\vw^\prime$, which inverts the probability flow \eqref{eq:PF-ODE-Z}~\cite{Song2023ConsistencyM}. 

This stochastic encoder $\mathfrak{E}_t$ is paired with the decoder $\mathfrak{D}_{t,c}$, which takes as input $\vz^\prime_t = \vz_t$ and pushes forward to the ambient space through the deterministic mapping
$$
\mathfrak{D}_{t,c}:\quad \vx^\prime = \mathcal{D}(G_\theta(\vz^\prime_t,t,c))\,.
$$
The proposed AE architecture is summarized in Figure \ref{fig:LATINO}.

Assume that $\mathcal{E}$ inverts $\mathcal{D}$ exactly. If $\vx$ is distributed according to $p(\vx|c)$, then, for any $t >0$, $\vz_t = \mathfrak{E}_t(\vx)$ has distribution $p_t$ as described by the flow \eqref{eq:PF-ODE-Z}. Equally, if $\vz^\prime_t$ follows $p_t$, then $\vz^\prime_0 = G_\theta(\vz^\prime_t,t,c)$ follows $p_0$ and hence $\vx^\prime = \mathfrak{D}(\vz^\prime_t,t)$ follows $p(\vx|c)$, implying that $p(\vx|c)$ is a fixed point as required (in praxis, for numerical stability, $G_\theta$ targets $p_\tau(\vz_\tau|c) \approx p_0(\vz_0|c)$ for $\tau > 0$ small and $\mathcal{E}$ does not perfectly invert $\mathcal{D}$, leading to some small bias).

Alternatively, if $\vx$ is not distributed according to $p(\vx|c)$, $(\mathfrak{E}_{t},\mathfrak{D}_{t,c})$ will transport it towards $p(\vx|c)$. In this situation, the parameter $t$ plays a key role, analogous to a regularization parameter. Suppose that $t$ is very large, such that the encoded random variable $\vz_t \approx \mathcal{N}(0,\Id)$ regardless of the distribution of $\vx$. Then $\mathfrak{D}$  behaves as a standard SD generative model and output $\vx^\prime$ distributed according to $p(\vx|c)$. Conversely, now suppose that $t$ is very small. In that case, 
$(\mathfrak{E}_{t},\mathfrak{D}_{t,c})$ reduce to $(\mathcal{E},\mathcal{D})$. Because of the consistency property of $G_\theta$, we have $\vx^\prime\approx \mathcal{D}[\mathcal{E}(\vx)]$. This implies that $\vx^\prime \approx \vx$ if the distribution of $\vx$ is concentrated on the range of $\mathcal{D}$. For intermediate values of $t$, not infinitely large or infinitesimally small, the stochastic autoencoder $\vx$ is transported some extent towards $p(\vx|c)$. We conclude that $(\mathfrak{E},\mathfrak{D})$ has the desired fixed point and contraction properties, with $t$ controlling the contraction strength analogously to  the step-size or integration period $\delta$ in \eqref{eq:split-Langevin}. This is summarized in Figure \ref{fig:LATINO-tests} and by the recursion
\begin{align}
    \tilde{\vx}_{k+1} &= \mathfrak{D}_{t,c}\circ\mathfrak{E}_t(\vx_{k})\,,\nonumber\\
    {\vx}_{k+1} & = \operatorname{prox}_{\delta g_y}(\tilde{\vx}_{k+1})\,. \label{eq:pnp-split-Langevin}
\end{align}
Note that, because \eqref{eq:pnp-split-Langevin} approximates \eqref{eq:split-Langevin}, $t$ and $\delta$ must be in agreement, similarly to conventional PnP methods that need to balance the step-size and the denoiser's parameters~\cite{laumont2022bayesian}.

\begin{figure}
    \centering
    \includegraphics[width=\linewidth]{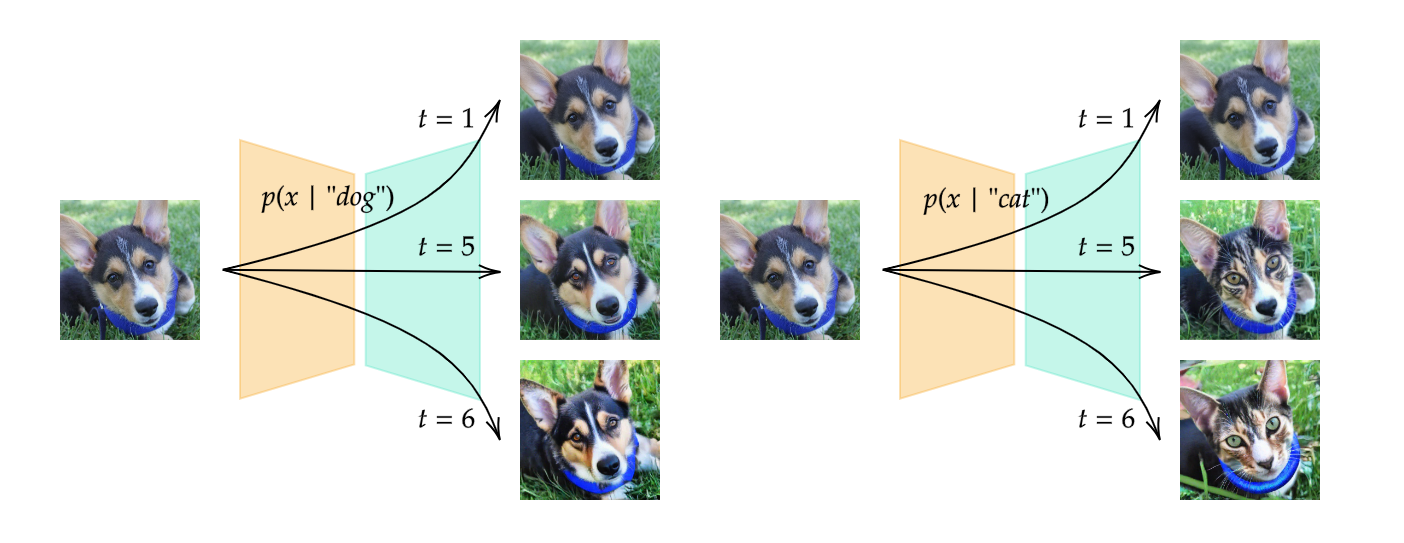}\vspace{-0.2cm}
    \caption{SAE applied to images in and out of distribution for different values of $t$, illustrating contraction towards $p(\vx|c)$.}
    \label{fig:LATINO-tests}
\end{figure}

\subsection{The LATINO algorithm} Algorithm~\ref{alg:lcm-pir} below summarizes the proposed LATINO sampling scheme, which is based on recursion \eqref{eq:pnp-split-Langevin} (see also Figure \ref{fig:LATINO}). We warm-start the algorithm by using the observation-informed initialization $\vx^{(0)} = \mathcal{A}^\dagger\vy$. For computational efficiency, we use an annealing strategy that involves iteration-dependent parameters $t_k,\delta_k$. We now introduce the choice of $t_k$ and leave to Appendix \ref{sec:hyperparam} the details on $\delta_k$. In our experiments, we find that $N=8$ suffice to deliver remarkably accurate samples when $G_\theta$ is given by the DMD2 \cite{Yin2024ImprovedDM}.

\begin{algorithm}
\caption{LATINO}
\label{alg:lcm-pir}
\begin{algorithmic}[1]
    \State \textbf{given} $\vx^{(0)} = \mathcal{A}^\dagger\vy$, text prompt $c$, number of step $N=4$ or $8$, latent consistency model $G_\theta$, latent space decoder $\mathcal{D}$, latent space encoder $\mathcal{E}$, sequences $\{t_k,\delta_k\}_{k=1}^N$.
    \For{$k = 1,\ldots,N$}
        \State $\boldsymbol{\epsilon} \sim \mathcal{N}(0, \text{Id})$
        \State $\vz_{t_k}^{(k)} \gets \sqrt{\alpha_{t_{k}}}\encoder(\vx^{(k-1)})+ \sqrt{1 - \alpha_{t_{k}}} \boldsymbol{\epsilon}$ \Comment{Encode}
        \State $\vu^{(k)} \gets \decoder(G_\theta(\vz_{t_k}^{(k)},t_k,c))$ \Comment{Decode}
        \State $\vx^{(k)} \gets \operatorname{prox}_{\delta_k g_y} (\vu^{(k)})$ \Comment{$g_{\vy}: \vx \mapsto -\log p(\vy|\vx)$}
    \EndFor
    
    \State \Return $\vx^{(N)}$
\end{algorithmic}
\end{algorithm}
\vspace{-0.2cm}
\paragraph{The LCM prior choice.}
We use DMD2 \cite{Yin2024ImprovedDM}, a pretrained model based on the SDXL architecture \cite{Podell2023SDXLIL}. That said, our method is agnostic to the choice of LCM; e.g., see appendix~\ref{sec:priors} for a comparison of our method using DMD2 and SD1.5-LoRA. We can either consider the original DMD2 4-step prior or an 8-step variant of DMD2, where the timesteps are $t_i \in \{ 999, 874, 749, 624, 499, 374, 249, 124\}$. We mix the 4 distilled steps $t_i \in \{ 999, 749, 499, 249\}$ with the 4 steps from the original SDXL model. This choice has been adopted to increase the quality of reconstructions by doubling the number of steps. Still, 4 distilled time steps gives excellent results, and we make use of this version in our LATINO-PRO scheme that self-calibrates the prompt $c$.

Interestingly, if instead of an LCM we construct $(\mathfrak{E}_t,\mathfrak{D}_{t,k})$ with a DM we recover the DM-based PnP-ULA scheme of~\cite{mbakam2024} as a special case, whereas if we use the stochastic denoiser ~\cite{Renaud2024} we recover a PnP-ULA~\cite{laumont2022bayesian}.

\section{LATINO-PRO: Prompt optimization via maximum marginal likelihood estimation}\label{sec:SOUL}

State-of-the-art generative models such as DMD2 provide a highly informative prior $p(\vx|c)$ that concentrates its probability mass in very narrow regions of the solution space. Conversely, in inverse problems that are ill-conditioned or ill-posed, the likelihood function $p(\vy|\vx)$ has identifiability issues and is only weakly informative as a result. Hence, the prior $p(\vx|c)$ dominates key aspects of the posterior $p(\vx|\vy,c)$. Consequently, it is essential to calibrate $p(\vx|c)$ by identifying a suitable text prompt $c$, as otherwise $p(\vx|\vy,c)$ exhibits strong bias. Unfortunately, setting $c$ a priori is difficult.

Adopting an empirical Bayesian approach, we propose to set $c$ by maximum marginal likelihood estimation (MMLE),
\begin{equation}\label{eq:MMLE}
    \hat{c}(\vy) = \arg\max_{c\in \mathbb{R}^k} \, p(\vy \mid c)\,,
\end{equation}
followed by inference with the empirical Bayesian posterior distribution $p(\vx\mid\vy,\hat{c}(\vy))$. The marginal likelihood
\begin{equation*}
p(\vy \mid c) = \textrm{E}_{\vx|c} [p(\vy|\vx)]\,
\end{equation*}
measures the model's goodness-of-fit to the data $\vy$.
A main challenge in solving \eqref{eq:MMLE} is that the likelihood $p(\vy \mid c)$ is computationally intractable, as it requires integration over the solution space. We address this difficulty by using LATINO to implement a stochastic approximation proximal gradient (SAPG) scheme which seeks to simultaneously solve \eqref{eq:MMLE} and draw samples from $p(\vx\mid\vy,\hat{c}(\vy))$.

More precisely, in a manner akin to \cite{Bortoli2019EfficientSO}, we consider a stochastic optimization scheme that mimics the following projected gradient algorithm to solve \eqref{eq:MMLE}
\begin{equation}\label{eq:PG}
c_{m+1} = \Pi_C [c_{m} + \gamma_m \nabla_c \log p(\vy \mid c_m)]
\end{equation}
where $\gamma_m$ is a sequence of decreasing positive step-sizes and $C \subset \mathbb{R}^k$ is a convex set of admissible values for $c$. While \eqref{eq:PG} is also intractable, from Fisher's identity, the gradient 
\begin{align*}
\nabla_c \log p(\vy \mid c) &= \textrm{E}_{\vx|\vy,c}[\nabla_c \log p(\vy,\vx \mid c)]\,,
\\ &= \textrm{E}_{\vx|\vy,c}[\nabla_c \log p(\vx \mid c)]\,,
\end{align*}
which suggests implementing a stochastic variant of \eqref{eq:PG} with 
\begin{equation*}
\nabla_c \log p(\vy \mid c_m) \approx \nabla_c \log p(\vx^{(1)},\ldots,\vx^{(N)} \mid c_m),
\end{equation*}
where $\{\vx^{(k)}\}_{k=1}^N$ is a Markov chain targeting $p(\vx|\vy,c_m)$, as generated by LATINO (Alg. \ref{alg:lcm-pir}). The resulting scheme alternates between using LATINO to generate a batch $\{\vx^{(k)}\}_{k=1}^N$ of samples, and then updating $c_m$ through the recursion
\begin{equation}\label{eq:SAPG}
c_{m+1} = \Pi_C \left[c_{m} \hspace{-1pt}+\hspace{-1pt} \gamma_m \nabla_c \log p(\vx^{(1)}\hspace{-1pt},\ldots,\vx^{(N)}|c_m)\right].
\end{equation}
In praxis, the computation of \eqref{eq:SAPG} is tractable by automatic differentiation on the latent space (see Appendix \ref{sec:gradient_LATINOPRO} for details). LATINO with prompt optimization (LATINO-PRO), is summarized in Algorithm \ref{alg:latino-pro} below. Under some technical assumptions, \cite[Theorem 5]{Bortoli2019EfficientSO} guarantees that the iterates $c_m$ generated by SAPG schemes driven by Langevin samplers converge to $\hat{c}(\vy)$ as $m$ increases. LATINO-PRO automatically verifies the conditions of \cite[Theorem 5]{Bortoli2019EfficientSO} under simplifying assumptions on target $p(\vx|\vy,c)$ and $(\mathfrak{E}_t,\mathfrak{D}_{t,c})$ (e.g., smoothness and log-concavity of $p(\vx|\vy,c)$).  Extending \cite[Theorem 5]{Bortoli2019EfficientSO} to derive convergence guarantees for LATINO-PRO under realistic assumptions for CM-based priors is a main perspective for future work.

We recommend warm-starting LATINO-PRO by specifying an initial value for $c_0$ that describes the expected solution (e.g., ``a sharp photo of a dog''). We find that the quality of the samples generated by LATINO-PRO increases rapidly as we start to update $c_m$, and that stopping iterations early introduces some regularization, which is beneficial. In our experiments, we perform $M=15$ iterations of \eqref{eq:SAPG}, with $N=4$ sub-iterations of Algorithm \ref{alg:lcm-pir}, except in the final stage when we use $N=8$ iterations of Algorithm \ref{alg:lcm-pir} to obtain a higher quality sample from $p(\vx|\vy,\hat{c}(\vy))$.

\begin{algorithm}[htbp]
\caption{LATINO-PRO}
\label{alg:latino-pro}
\begin{algorithmic}[1]
    \State \textbf{given} $\vx^{(0)} = \mathcal{A}^\dagger\vy$, text prompt $c_0$ and admissible set $C$, number of step SAPG steps $M$, sub-iteration parameters $\{N_m,\gamma_m\}_{m=1}^M, \{t_k,\delta_k\}_{k=1}^{N_m}$, latent consistency model $G_\theta$, latent space decoder $\mathcal{D}$ and encoder $\mathcal{E}$.
    \For{$m = 1,\ldots,M$}
    \For{$k = 1,\ldots,N_m$} \Comment{LATINO}
        \State $\boldsymbol{\epsilon} \sim \mathcal{N}(0, \text{Id})$
        \State $\vz_{t_k}^{(k)} \gets \sqrt{\alpha_{t_{k}}}\encoder(\vx^{(k-1)})+ \sqrt{1 - \alpha_{t_{k}}} \boldsymbol{\epsilon}$ 
        \State $\vu^{(k)} \gets \decoder(G_\theta(\vz_{t_k}^{(k)},t_k,c_m))$ 
        \State $\vx^{(k)} \gets \operatorname{prox}_{\delta_k g_y} (\vu^{(k)})$
    \EndFor
    \State $h(c_m) \gets \nabla_c \log p(\vz_{t_1}^{(1)},\ldots,\vz_{t_{N_m}}^{(N_m)}|c_m)$
    \State $c_{m+1} = \Pi_C \left[c_{m} + \gamma_m h(c_m)\right]$ \Comment{SAPG}
    \State $\vx^{(0)} \gets \vx^{(N_{m})}$ \Comment{Carry state forward}
    \EndFor
    \State \Return $\vx^{(N_M)}$
\end{algorithmic}
\end{algorithm}
\section{Experiments}
\label{sec:experiments}

\begin{figure*}[!h]
\centering
\hspace*{-0.4cm}\begin{minipage}{0.03\textwidth}
    \centering
    \begin{tabular}{c}
        \rotatebox{90}{\textbf{\smaller SR $\times 16 \quad \ \ \ $}} \\[10.5mm]
        \rotatebox{90}{\textbf{\smaller Gaussian deblur}}
    \end{tabular}
\end{minipage}%
\begin{minipage}{0.135\textwidth}
    \centering \textbf{Measurement} \\
    \zoomedImage{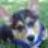}{0.15,0.1}{1.2,0.7} \\
    \zoomedImage{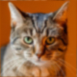}{-0.15,0.0}{1.2,0.7}
\end{minipage}%
\begin{minipage}{0.135\textwidth}
    \centering \textbf{GT} \\ 
    \zoomedImage{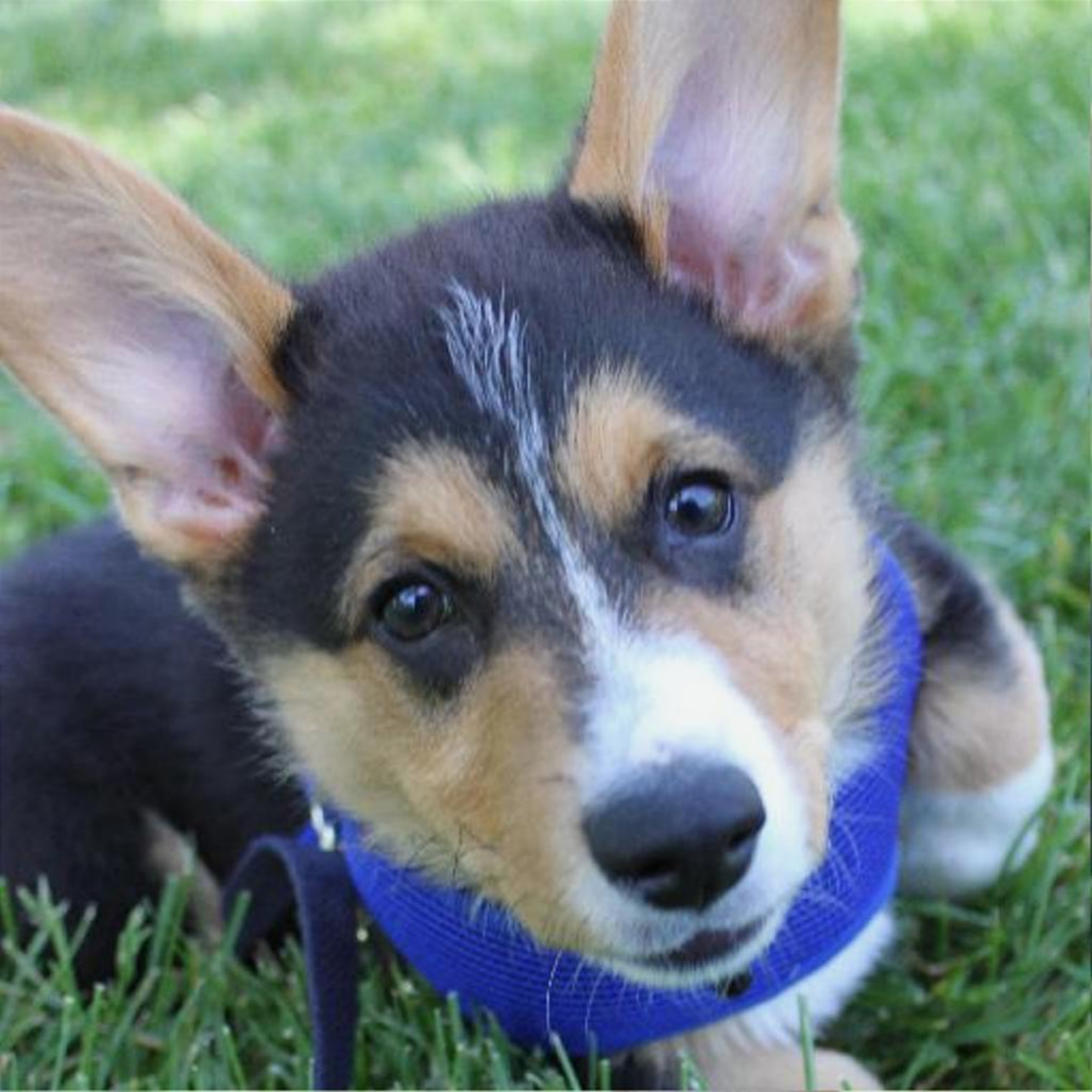}{0.15,0.1}{1.2,0.7} \\
    \zoomedImage{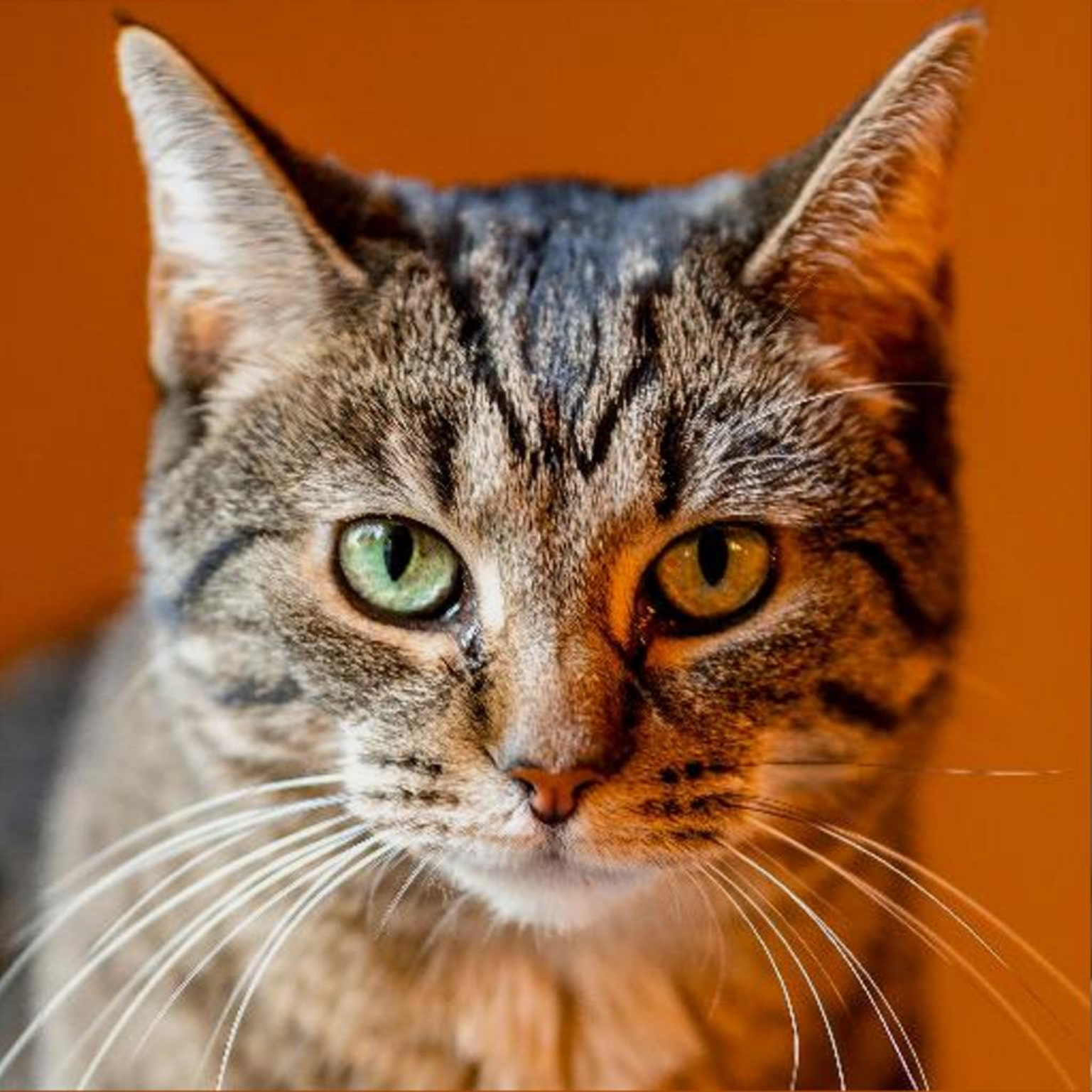}{-0.15,0.0}{1.2,0.7}
\end{minipage}%
\begin{minipage}{0.135\textwidth}
    \centering \textbf{LATINO} \\ 
    \zoomedImage{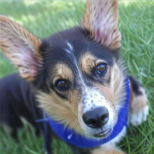}{0.15,0.1}{1.2,0.7} \\
    \zoomedImage{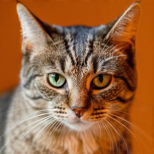}{-0.15,0.0}{1.2,0.7}
\end{minipage}%
\begin{minipage}{0.135\textwidth}
    \centering \textbf{LATINO-PRO} \\ 
    \zoomedImage{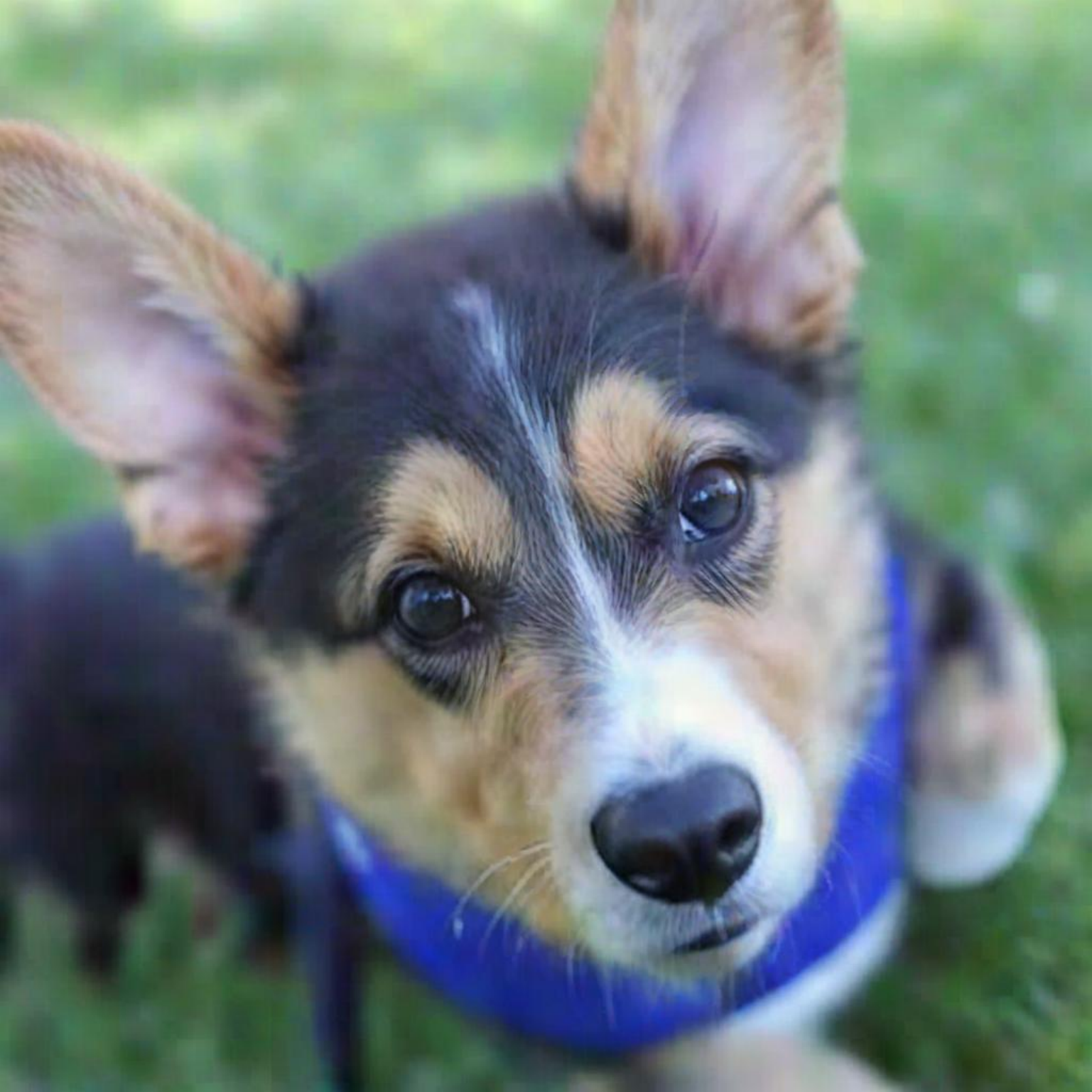}{0.15,0.1}{1.2,0.7} \\
    \zoomedImage{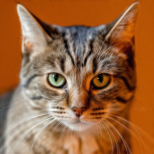}{-0.15,0.0}{1.2,0.7}
\end{minipage}%
\begin{minipage}{0.135\textwidth}
    \centering \textbf{TREG} \\ 
    \zoomedImage{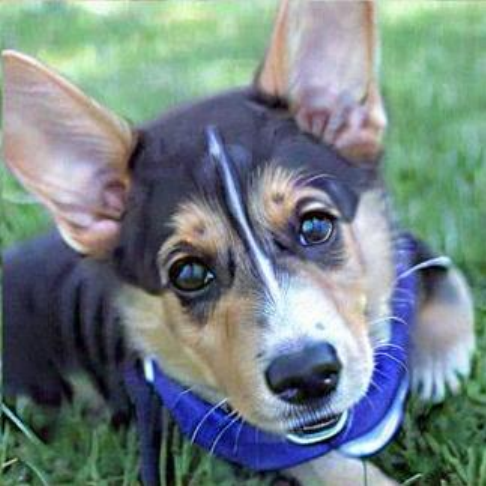}{0.15,0.1}{1.2,0.7} \\
    \zoomedImage{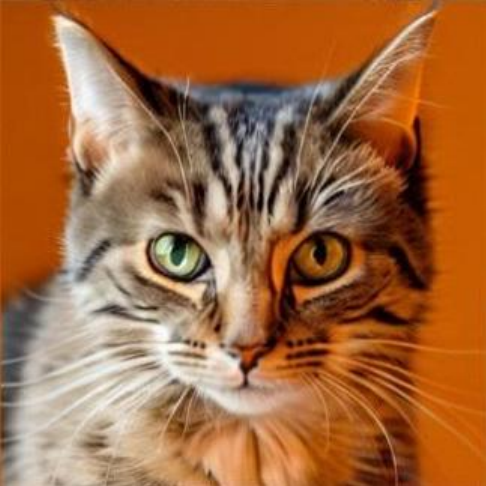}{-0.15,0.0}{1.2,0.7}
\end{minipage}%
\begin{minipage}{0.135\textwidth}
    \centering \textbf{P2L} \\ 
    \zoomedImage{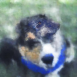}{0.15,0.1}{1.2,0.7} \\
    \zoomedImage{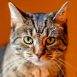}{-0.15,0.0}{1.2,0.7}
\end{minipage}%
\begin{minipage}{0.135\textwidth}
    \centering \textbf{PSLD} \\ 
    \zoomedImage{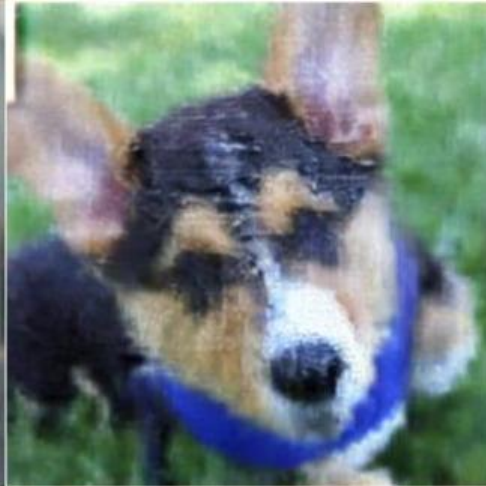}{0.15,0.1}{1.2,0.7} \\
    \zoomedImage{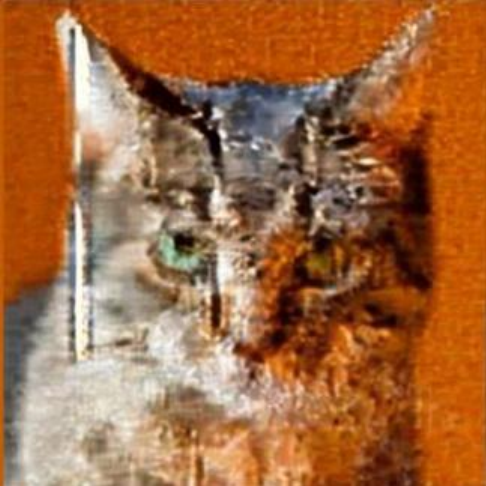}{-0.15,0.0}{1.2,0.7}
\end{minipage}\vspace*{-0.6cm}
\caption{Comparison of image restorations. Samples taken from AFHQ-512. Prompts: \texttt{a sharp photo of a dog} (resp.  \texttt{a cat}).}
\label{fig:qualitative_comparison_AFHQ}
\end{figure*}

\paragraph{Datasets and Models.}

We consider two high-quality (HQ) datasets to test our models: FFHQ \cite{Karras2018ASG} and AFHQ. For FFHQ, we consider both the $1024\times1024$ and $512\times512$ versions, depending on the objective of the tests conducted, and we take the first 1k test images, as done by \cite{Chung2023PrompttuningLD}. For AFHQ, we consider the $512\times512$ version, and we take the validation sets of \textit{dogs} and \textit{cats}, as done in \cite{Kim2023RegularizationBT}. The model used for our LATINO algorithm is the DMD2 \cite{Yin2024ImprovedDM} based on SDXL \cite{Podell2023SDXLIL}. SD1.5 \cite{Rombach2021} is instead the prior for all the other LDIS, like P2L, PSLD, LDIR, and LDPS. In Appendix \ref{sec:priors} we discuss why it is fair to compare SD1.5 with DMD2 in terms of prior quality, and we also explore an SD1.5-based CM, to show the universal applicability of our method.

\paragraph{Problems considered.}

The degradations considered are the same of \cite{Chung2023PrompttuningLD}: Gaussian deblurring with kernel of size $61\times61$ with $\sigma = 3.0$ for the tests on FFHQ and $\sigma = 5.0$ for those on AFHQ. Motion deblurring with kernel of size $61\times61$ randomly sampled with intensity $0.5$\footnote{\hyperlink{https://github.com/LeviBorodenko/motionblur}{https://github.com/LeviBorodenko/motionblur}}. We also consider various types of Super-Resolution tests, ranging from $\times 8$ upscaling with the average pooling kernel to $\times 16$ upscaling with the bicubic interpolation kernel. We add a white noise of intensity $\sigma_n = 0.01$ to all our tests. We also add in appendix \ref{sec:extreme} harder problem settings, providing visual results to show the effectiveness of our algorithms. Additional experiments are reported in Appendix~\ref{sec:Inpainting} and~\ref{sec:non-linear}.

\begin{table}[t]
\centering \footnotesize\setlength{\tabcolsep}{5pt}
\begin{tabular}{lccccc}
 \toprule
 & & \multicolumn{2}{c}{\textbf{Deblur (Gaussian)}} & \multicolumn{2}{c}{\textbf{SR$\times16$}} \\
 \cmidrule(lr){3-4} \cmidrule(lr){5-6}
  \textbf{Method} &  \textbf{NFE↓} & \textbf{FID↓} & \textbf{PSNR↑} & \textbf{FID↓} & \textbf{PSNR↑} \\    
  \hline
\textbf{LATINO-PRO } & \underline{68} & \textbf{18.37} & \textbf{26.82}  & \textbf{30.40} &  \textbf{21.52}     \\ 
\textbf{LATINO}  & \textbf{8} & \underline{20.03} &     \underline{26.25}   & 42.14 & \underline{20.05}   \\ \hline
P2L \cite{Chung2023PrompttuningLD}   &     2000     & 85.80 & 20.96   & 121.7 & 19.99      \\
TReg \cite{Kim2023RegularizationBT}   &     200     & 35.47 & 21.13   &   \underline{37.13}  &     19.60   \\
LDPS   &     1000     & 64.88 & 22.60  & 101.13 & 17.34        \\
PSLD \cite{Rout2023}    &  1000  &    125.5     & 20.52   &    113.4   &  16.48 \\
\bottomrule
\end{tabular}
\caption{Results for Gaussian Deblurring with $\sigma = 5.0$, and $\times 16$ super-resolution, both with noise $\sigma_y = 0.01$ on the AFHQ-512 val dataset. Our LATINO and LATINO-PRO models are compared to recent state-of-the-art methods. Prompts: \texttt{a sharp photo of a dog} (resp.  \texttt{a cat}) 
\textbf{Bold}:  best, \underline{underline}: second best.
}
\label{tab:512_comparison_AFHQ}
\end{table}

\begin{table*}[h]
\centering \setlength{\tabcolsep}{4pt}

\begin{tabular}{lcccccccccc}
 \toprule
 &  & \multicolumn{3}{c}{\textbf{Deblur (Gaussian)}} & \multicolumn{3}{c}{\textbf{Deblur (Motion)}} & \multicolumn{3}{c}{\textbf{SR$\times8$}} \\ 
  \cmidrule(lr){3-5} \cmidrule(lr){6-8} \cmidrule(lr){9-11}
\textbf{Method} & \textbf{NFE↓} & \textbf{FID↓} & \textbf{PSNR↑} & \textbf{LPIPS↓} & \textbf{FID↓} & \textbf{PSNR↑} & \textbf{LPIPS↓} & \textbf{FID↓} & \textbf{PSNR↑} & \textbf{LPIPS↓}\\ \hline
\textbf{LATINO-PRO} &  \underline{68}  &    \underline{31.98}    &    \textbf{29.11}    & \textbf{0.292}  & \underline{27.80}     &   \underline{27.14}       & \textbf{0.301}        & 40.95     &   26.58        & 0.355           \\ 
\textbf{LATINO}       &  \textbf{8}   & 33.94 &    \underline{28.95}                  & \underline{0.296} &  29.17 &   26.88  & 0.318   & 37.13     &   26.22      &    0.356     \\ \hline
P2L \cite{Chung2023PrompttuningLD} &  2000    &  \textbf{30.62}  &   26.97              & 0.299  &  28.34  & \textbf{27.23}            & \underline{0.302}   & \textbf{31.23}     &   \underline{28.55}       &    \textbf{0.290}    \\ 
LDPS & 1000 &  45.89  &  27.82   & 0.334  &   58.66   &   26.19      & 0.382   & 36.81     &   \textbf{28.78}       &   \underline{0.292}     \\ 
PSLD \cite{Rout2023}       &  1000   &   41.04  & 28.47   & 0.320    &   47.71   & 27.05   & 0.348   & 36.93   & 26.62   &   0.335     \\ 
LDIR \cite{he2024faststablediffusioninverse}   &  1000   &   35.61   &      25.75                  & 0.341    &    \textbf{24.40}     & 24.40        & 0.376     & \underline{36.04}   & 25.79      & 0.345     \\ 
\bottomrule
\end{tabular}
\caption{Results for Gaussian deblurring with $\sigma = 3.0$, motion deblurring, and $\times 8$ super-resolution, all with noise $\sigma_y = 0.01$ on the FFHQ-512 val dataset. Our LATINO and LATINO-PRO models are compared to recent state-of-the-art methods. Prompt: \texttt{a sharp photo of a face}. \textbf{Bold}:  best, \underline{underline}: second best.}
\label{tab:512_comparison_FFHQ}
\end{table*}

\begin{figure*}[h]
\centering
\hspace*{-0.4cm}\begin{minipage}{0.03\textwidth}
    \centering
    \begin{tabular}{c}
        \rotatebox{90}{\textbf{SR $\times 8 \quad \ $}} \\[10mm]
        \rotatebox{90}{\textbf{Gaussian deblur}} \\[3mm]
        \rotatebox{90}{\textbf{Motion deblur}}
    \end{tabular}
\end{minipage}%
\begin{minipage}{0.135\textwidth}
    \centering \textbf{Measurement} \\
    \zoomedImage{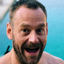}{-0.15,-0.60}{1.2,0.7} \\    \zoomedImage{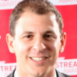}{0.15,-0.65}{1.2,0.7} \\
    \zoomedImage{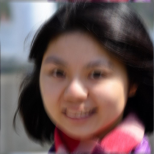}{0.15,0.10}{1.2,0.7}
\end{minipage}%
\begin{minipage}{0.135\textwidth}
    \centering \textbf{GT} \\
    \zoomedImage{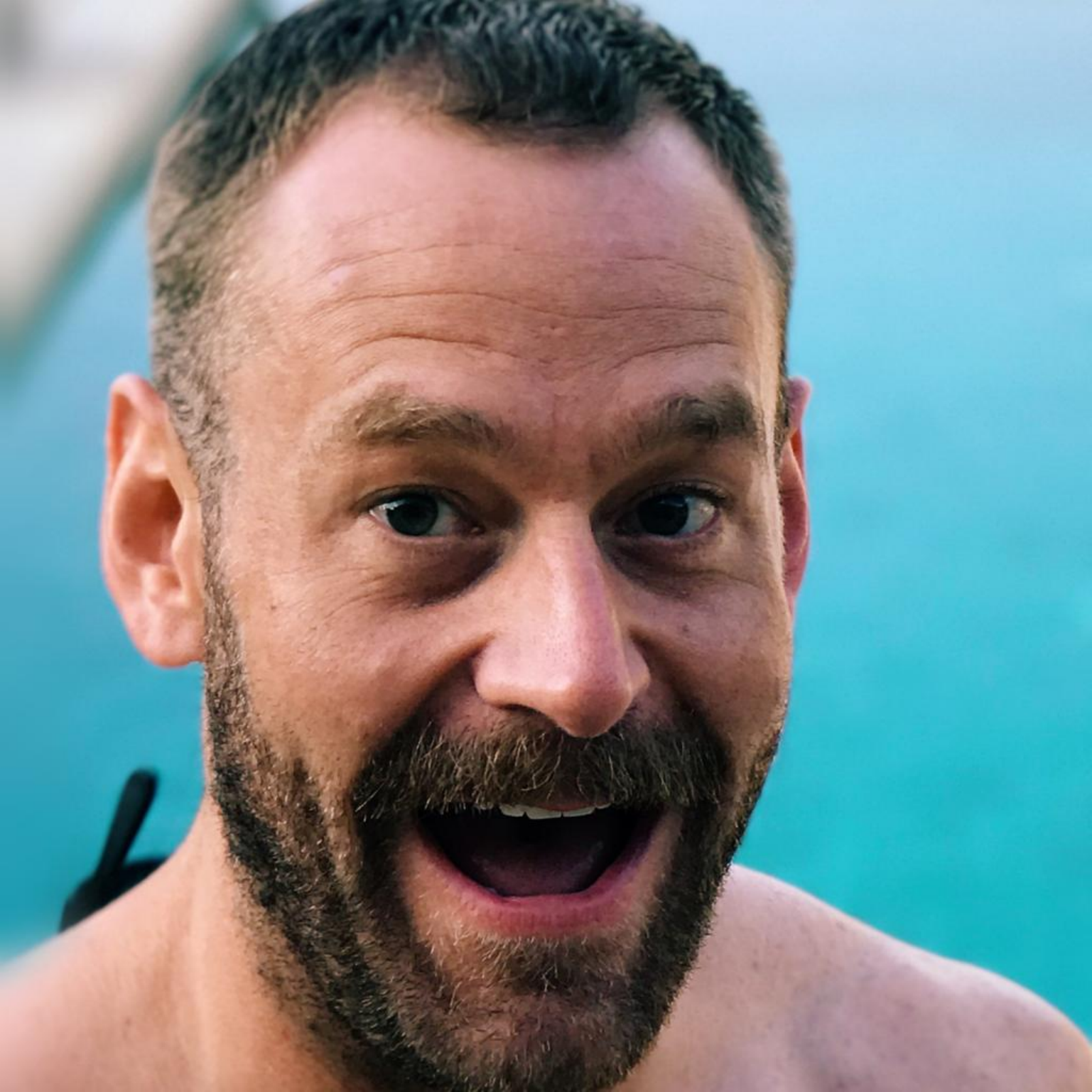}{-0.15,-0.60}{1.2,0.7} \\
    \zoomedImage{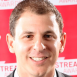}{0.15,-0.65}{1.2,0.7} \\
    \zoomedImage{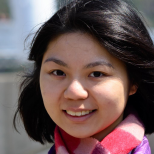}{0.15,0.10}{1.2,0.7}
\end{minipage}%
\begin{minipage}{0.135\textwidth}
    \centering \textbf{LATINO} \\
    \zoomedImage{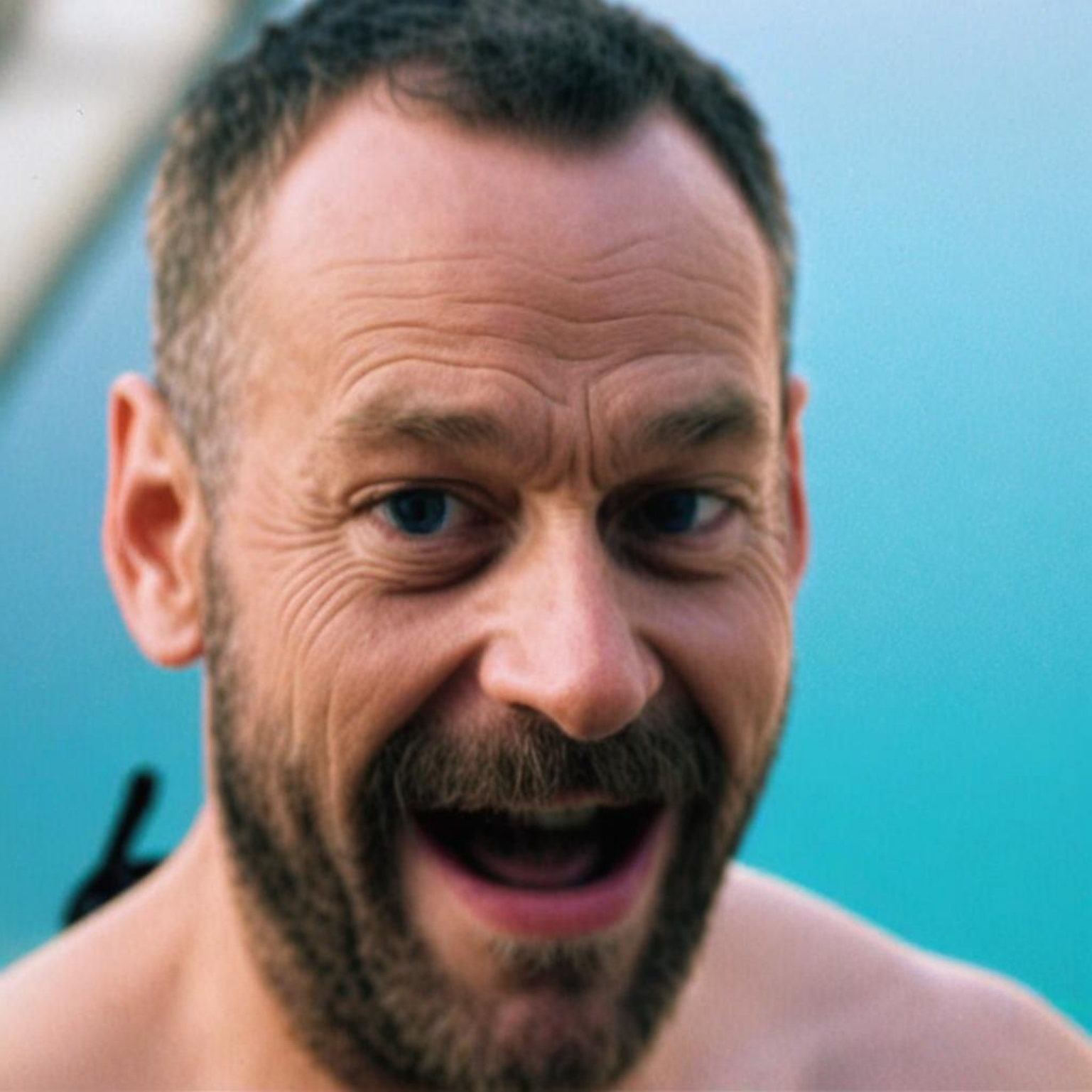}{-0.15,-0.60}{1.2,0.7} \\    \zoomedImage{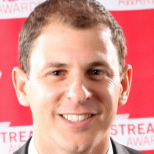}{0.15,-0.65}{1.2,0.7} \\
    \zoomedImage{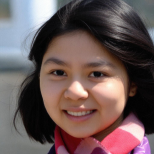}{0.15,0.10}{1.2,0.7}
\end{minipage}%
\begin{minipage}{0.135\textwidth}
    \centering \textbf{LATINO-PRO} \\ 
    \zoomedImage{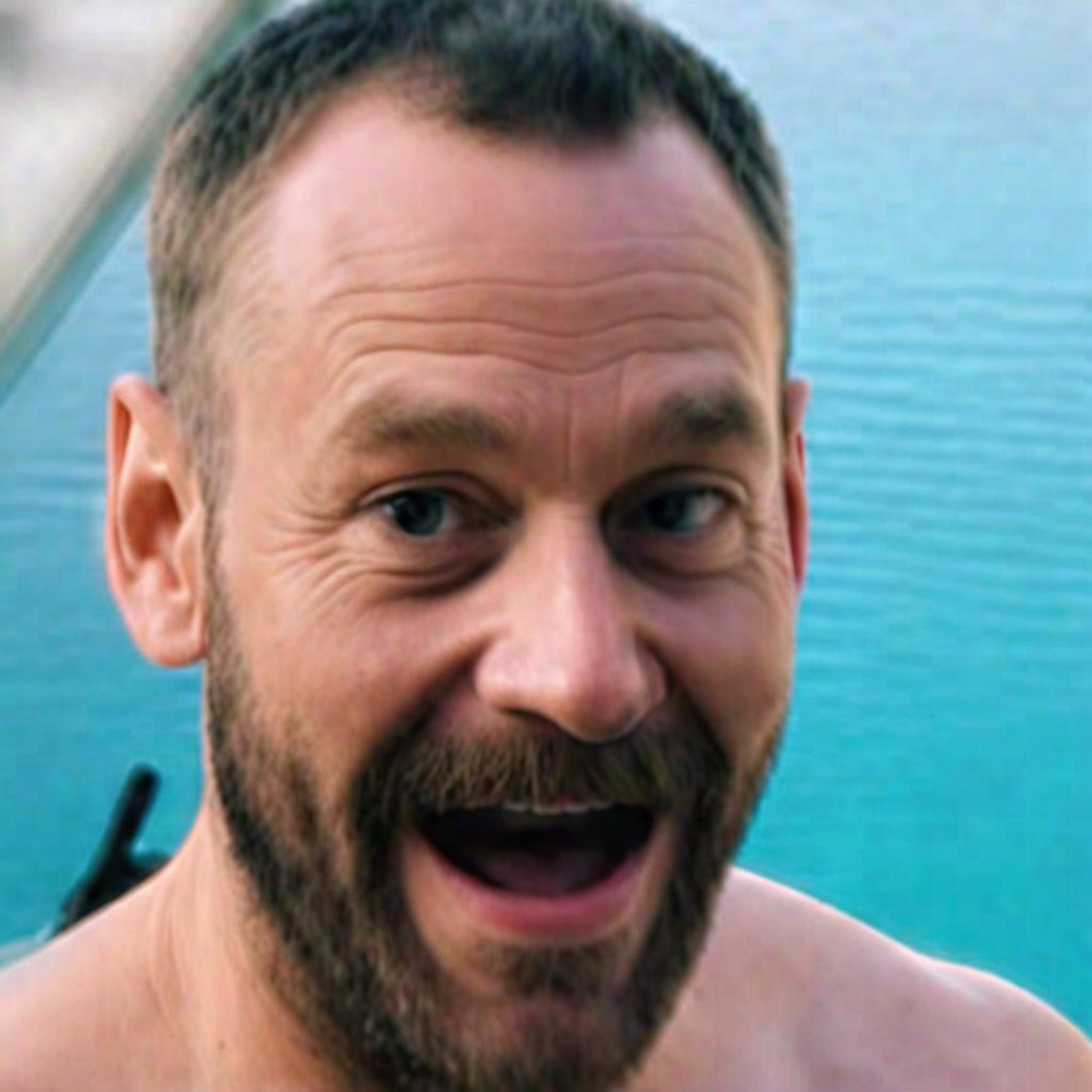}{-0.15,-0.60}{1.2,0.7} \\    \zoomedImage{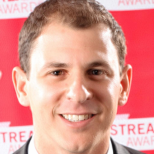}{0.15,-0.65}{1.2,0.7} \\
    \zoomedImage{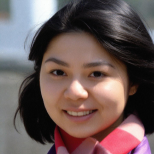}{0.15,0.10}{1.2,0.7}
\end{minipage}%
\begin{minipage}{0.135\textwidth}
    \centering \textbf{P2L} \\ 
    \zoomedImage{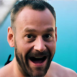}{-0.15,-0.60}{1.2,0.7} \\    \zoomedImage{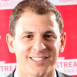}{0.15,-0.65}{1.2,0.7} \\
    \zoomedImage{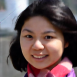}{0.15,0.10}{1.2,0.7}
\end{minipage}%
\begin{minipage}{0.135\textwidth}
    \centering \textbf{LDPS} \\ 
    \zoomedImage{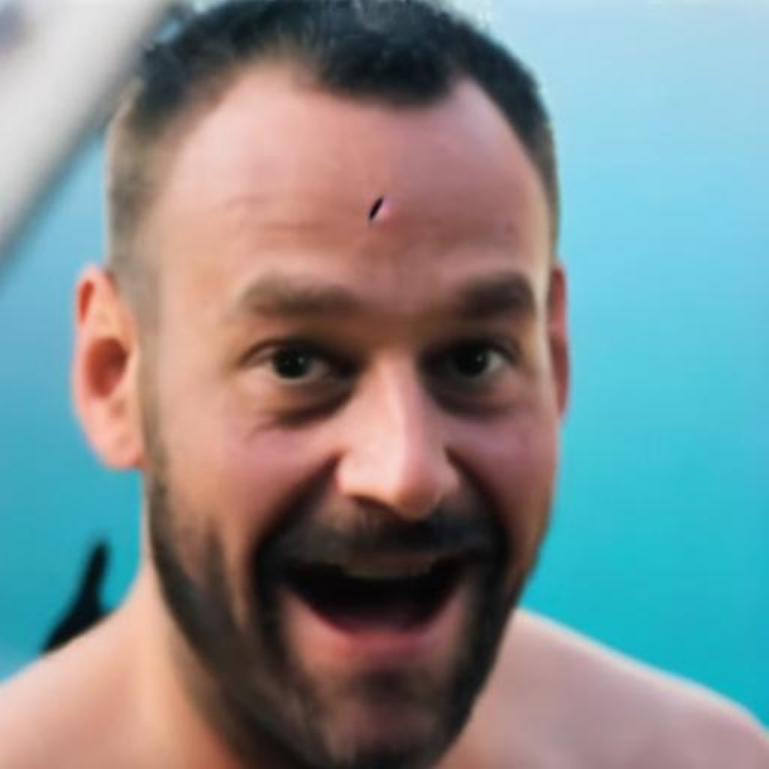}{-0.15,-0.60}{1.2,0.7} \\    \zoomedImage{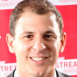}{0.15,-0.65}{1.2,0.7} \\
    \zoomedImage{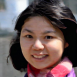}{0.15,0.10}{1.2,0.7}
\end{minipage}%
\begin{minipage}{0.135\textwidth}
    \centering \textbf{PSLD} \\ 
    \zoomedImage{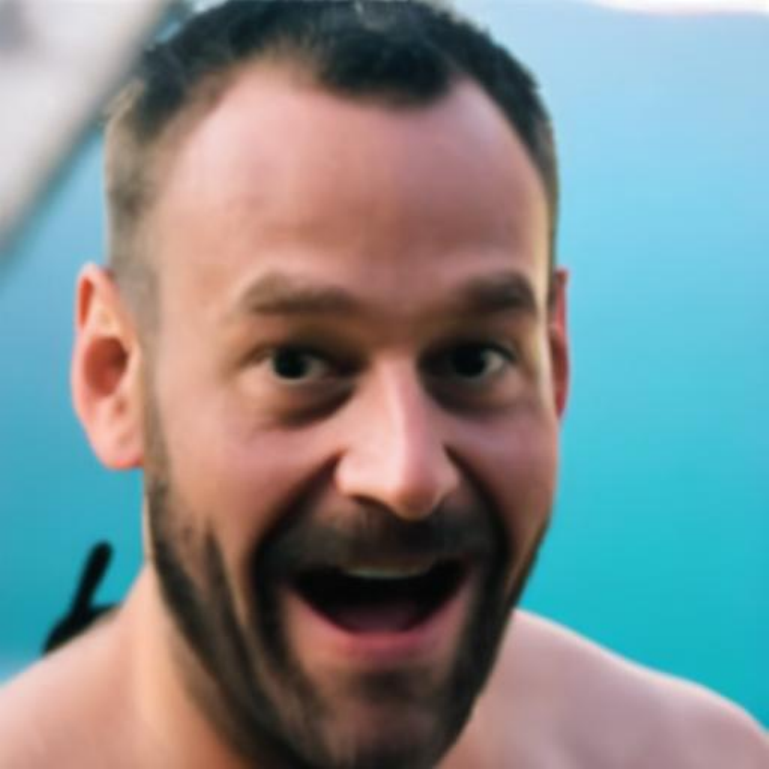}{-0.15,-0.60}{1.2,0.7} \\    \zoomedImage{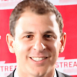}{0.15,-0.65}{1.2,0.7} \\
    \zoomedImage{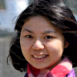}{0.15,0.10}{1.2,0.7}
\end{minipage}\vspace*{-0.6cm}
\caption{Qualitative comparison of image restoration results. Samples taken from FFHQ-512. Prompt: \texttt{a sharp photo of a face}.}
\label{fig:qualitative_comparison_FFHQ}
\end{figure*}

\paragraph{Evaluation on inverse problems tasks.}
Since the current SOTA methods that employ LDMs to solve inverse problems work only with $512\times512$ resolution, whereas the pretrained model we use works at $1024\times 1024$ resolution, we must adapt our model for fair comparison purposes. Indeed, when comparing with P2L, PSLD, LDIR and LDPS we adapt the inverse problems as follows:
\begin{enumerate}
    \item Super-resolution $\times8$ becomes $\times16$ ($\times16$ becomes $\times32$), so the image fed to the algorithm always has the same size, i.e. $64\times 64$. The output of our model is then downsampled to $512\times512$. \hbox{More details in Appendix \ref{sec:super-res}.}
    \item The Deblurring task is converted into a simultaneous super-resolution and Deblurring problem since we first have to downsample the clean image $\vx$ to $512\times512$ and then apply the actual blur operator $\mathcal{A}$. The formal details of this specific case are illustrated in Appendix \ref{sec:deblur}.\\
\end{enumerate}
In Appendix \ref{sec:additional} we also provide visual results and FID, PSNR, and LPIPS metrics for the FFHQ1024 case, allowing future work that will handle this resolution to be compared with our results. 
We can see from Table \ref{tab:512_comparison_AFHQ} and Table \ref{tab:512_comparison_FFHQ} how both our LATINO and LATINO-PRO models have similar performances compared to current SOTA, if not even better in terms of PSNR and LPIPS. Especially on the AFHQ dataset, we can beat all the SOTA methods in all the metrics considered.
On the FFHQ dataset we can beat SOTA in most cases. Importantly, we can see a huge gain in terms of NFEs. Furthermore, LATINO requires around only $ 13$ Gb of GPU memory to run, thanks to the absence of any gradient computation in the steps (more details on memory and time consumption in Appendix \ref{sec:Memory}). Considering that the DMD2 prior takes around $10.7$Gb, the overhead is minimum. Figures \ref{fig:qualitative_comparison_AFHQ} and \ref{fig:qualitative_comparison_FFHQ} show a qualitative comparison of the methods proposed against common latent DIS like LDPS, PSLD and P2L.
We refer to the TReg original work \cite{Kim2023RegularizationBT} to compare on the AFHQ dataset with both algorithms, see Appendix \ref{sec:TReg}.

\paragraph{Prompt Optimization.}

As discussed in Section \ref{sec:SOUL}, we can use the LATINO-PRO SAPG scheme to optimize the text prompt $c$. This greatly improves accuracy, both when the prompt is already partially aligned with the image and when it is misleading, as shown in Table \ref{tab:512_comparison_AFHQ_prompt} in the Appendix \ref{sec:prompt-exp}. We initialize the prompt $c$ by concatenating two strings, a fixed string \texttt{a sharp photo of} whose tokens will remain frozen, plus a descriptive string, e.g. \texttt{a dog}, which is updated via SAPG updates. The hyperparameters used are $M=15$, $C = B(c_0,15)$, and $\gamma_m = 0.1\cdot0.9^{\max(0,m - 10)}$. 

Tables \ref{tab:512_comparison_AFHQ} and \ref{tab:512_comparison_FFHQ} show that LATINO-PRO can significantly outperform LATINO, with a computational cost that remains $15\times$ lower than competing SOTA methods. Although LATINO-PRO is not gradient-free due to the terms $\nabla_c$, these gradients are computed in the latent space and do not require backpropagation through $\decoder$ or $\encoder$. We provide in Table \ref{tab:gpu_time_comparison} in Appendix \ref{sec:Memory} an extensive comparison of the memory consumption and running times of different IR algorithms compared to ours. In Appendix~\ref{sec:Inpainting}, we provide inpainting experiments highlighting the capacity of LATINO-PRO to produce diverse image restorations with different prompts.
\section{Conclusion}
\label{sec:conclusion}
We introduced LATINO, a fast, light, zero-shot, prompt-guided IS that can leverage SOTA latent consistency models such as DMD2 \cite{Yin2024ImprovedDM}. Its effectiveness has been compared to current SOTA methods, which achieve comparable quality but require two to three orders of magnitude more NFEs. We also presented LATINO-PRO, which self-calibrates the text prompt by maximum likelihood estimation. LATINO-PRO outperforms SOTA on AFHQ512 and FFHQ512 datasets on multiple inverse problems. 

Future work will seek to analyze the theoretical properties of LATINO and LATINO-PRO, with special attention to non-asymptotic convergence results. The development of strategies to automatically adjust the parameters of LATINO and LATINO-PRO is another main perspective for future work. Future work will also explore strategies for decoding the prompt embedding to reveal the optimized text prompt.

\section*{Acknowledgments and Disclosure of Funding}

MP acknowledges support 
by UKRI Engineering and Physical Sciences Research Council (EPSRC) (EP/V006134/1, EP/Z534481/1). AS, AA, and NP acknowledge support from the France 2030 research program on artificial intelligence via the PEPR PDE-AI grant (ANR-23-PEIA-0004).
JP acknowledges support by the ``Chaire IA Sherlock’' held by P. Chainais, which includes the ANR project  ANR-20-CHIA-0031-01, the national support within the {\em programme d'investissements d'avenir} ANR-16-IDEX-0004 ULNE and Région HdF.
HPC resources provided by GENCI-IDRIS Jean-Zay (Grant 2024-AD011014557).
{
    \small
    \bibliographystyle{ieeenat_fullname}
    \bibliography{main}
}

\clearpage
\setcounter{page}{1}
\maketitlesupplementary
\appendix

\section{Technical details about diffusion ISs}
\label{sec:related_work}

\paragraph{DPS \cite{Chung2022}:} Diffusion Posterior Sampling (DPS) follows this update rule:
\begin{equation*} \vx_{t-1} = \operatorname{DDIM}\left(\vx_{t}\right) - \eta \nabla_{\vx_{t}}\|\vy - \mathcal{A} \hat{\vx}_0\|_2^2, \end{equation*}
where $\operatorname{DDIM}(\cdot)$ represents a single update step of the DDIM sampling \cite{Song2020DenoisingDI}, defined as: 
\begin{equation*} \vx_{t-1} = \sqrt{\alpha_{t-1}} \hat{\vx}_0 - (1 - \alpha_{t-1}) s_\theta(\vx_t,t),
\end{equation*}
where $\hat{\vx}_0$ is estimated from $\vx_t$, and $s_\theta$ is the predicted score at time $t$. The optimal step size $\eta$ is dynamically set as $\eta = \frac{1}{\|\vy - \mathcal{A} \hat{\vx}_0\|_2^2}$, ensuring adaptive scaling of the likelihood gradient.

\paragraph{LDPS:} Latent Diffusion Posterior Sampling (LDPS) can be seen as a direct extension of the image-domain DPS approach proposed by Chung et al. \cite{Chung2022}. The update rule for LDPS is given by:
\begin{equation*}
\vz_{t-1}=\operatorname{DDIM}\left(\vz_{t}\right)-\rho \nabla_{\vz_{t}}\left\|\vy-\mathcal{A} \mathcal{D}\left(\hat{\vz}_{0}\right)\right\|_{2}
\end{equation*}
where $\rho$ denotes the step size, and $\operatorname{DDIM}(\cdot)$ represents a single step of DDIM sampling. A static step size of $\rho=1$ is employed, as is commonly adopted in the literature.

\paragraph{LDIR \cite{he2024faststablediffusioninverse}} modifies LDPS by introducing a momentum-based gradient update mechanism inspired by Adam. A single iteration of the algorithm follows:
\begin{align*}
\vg_{t} & =\nabla_{\vz_{t}}\left\|\vy-\mathcal{A} \mathcal{D}\left(\hat{\vz}_{0}\right)\right\|\\
\hat{\vm}_{t} & =\left(\beta_{1} \vm_{t-1}+\left(1-\beta_{1}\right) \vg_{t}\right) /\left(1-\beta_{1}\right)\\
\hat{\vv}_{t} & =\left(\beta_{2} \vv_{t-1}+\left(1-\beta_{2}\right)\left(\vg_{t} \circ \vg_{t}\right)\right) /\left(1-\beta_{2}\right)\\
\vz_{t-1} & =\operatorname{DDIM}\left(\vz_{t}\right)-\rho \frac{\hat{\vm}_{t}}{\sqrt{\hat{\vv}_{t}}+\varepsilon}
\end{align*}
where $\circ$ denotes element-wise multiplication, and $\beta_{1}, \beta_{2}, \varepsilon$ are hyperparameters of the method. The momentum-based approach in LDIR leads to smoother gradient updates. The parameters are set as $\beta_{1}=0.9, \beta_{2}=0.999, \varepsilon=1e-8$. The step size $\rho$ is set to be $0.05$.

\paragraph{GML-DPS, PSLD \cite{Rout2023}:} GML-DPS introduces a constraint to ensure that the estimated clean latent $\hat{\vz}_{0}$ remains stable after encoding and decoding. The update rule is:
\begin{align*}
\vz_{t-1}&=\operatorname{DDIM}\left(\vz_{t}\right)\\
&-\rho \nabla_{\vz_{t}}\left(\left\|\vy-\mathcal{A} \mathcal{D}\left(\hat{\vz}_{0}\right)\right\|_{2}+\gamma\left\|\hat{\vz}_{0}-\mathcal{E}\left(\mathcal{D}\left(\hat{\vz}_{0}\right)\right)\right\|_{2}\right)
\end{align*}
PSLD  refines this approach by incorporating an orthogonal projection step onto the subspace defined by $\mathcal{A}$ between the decoding and encoding stages to enforce fidelity:
\begin{align*}
\vz_{t-1}&=\operatorname{DDIM}\left(\vz_{t}\right)-\rho \nabla_{\vz_{t}}\left\|\vy-\mathcal{A D}\left(\hat{\vz}_{0}\right)\right\|_{2} \\
&-\gamma\nabla_{\vz_{t}}\left\|\hat{\vz}_{0}-\mathcal{E}\left(\mathcal{A}^{\top} \vy+\left(\Id-\mathcal{A}^{\top} \mathcal{A}\right) \mathcal{D}\left(\hat{\vz}_{0}\right)\right)\right\|_{2}.
\end{align*}
A static step size of $\rho=1$ is applied,and we set $\gamma=0.1$. These methods aim at guiding latents toward the natural manifold, enforcing their stability after autoencoding.

\paragraph{P2L \cite{Chung2023PrompttuningLD}:}

The P2L algorithm alternates between two main update steps: optimizing the text embedding $c$ and refining the latent variable $\vz_t$. 

The first step focuses on updating the text embedding $c$ to align it with the measurement $\vy$ and the current diffusion estimate $\vz_t$. This is done by maximizing the posterior $p(c \mid \vz_t, \vy)$, leading to the gradient update:
\begin{align*}
    \nabla_c \log p(c \mid \vz_t, \vy) \approx & \nabla_c \| \mathcal{A} \mathcal{D} (\mathbb{E}[\vz_0 \mid \vz_t, c]) - \vy \|^2_2.
\end{align*}
This optimization uses stochastic optimizers like Adam \cite{Kingma2014AdamAM}. 

In the second step, the latent variable $\vz_t$ is refined using the optimized text embedding $c^*_t$ obtained from the first step. This update aims at maximizing $p(\vz_t \mid \vy, c^*_t)$, resulting in the following gradient expression:
\begin{align*}
    \nabla_{\vz_t} \log p(\vz_t \mid \vy, c^*_t) \approx &s^*_\theta(\vz_t, c^*_t)\\
    + &\rho_t \nabla_{\vz_t} \| \mathcal{A} \mathcal{D} (\mathbb{E}[\vz_0 \mid \vz_t, c^*_t]) - \vy \|^2_2,
\end{align*}
where $s^*_\theta(\vz_t, c^*_t)$ is the score function from the diffusion model and $\rho_t$ is a step size that balances the influence of the likelihood term.

\paragraph{TReg \cite{Kim2023RegularizationBT}:}
The TReg algorithm  solves the following proximal optimization problem in an ADMM \cite{Boyd2011DistributedOA} style:
\begin{align*}
    \min_{\vx,\vz} l_{\text{MAP}}(\vz) + \gamma l_{\text{TReg}}(\vz) &= l_{\text{MAP}}(\vz) + \| \vz - \hat{\vz}_{0\mid t}\|_2^2\\
    \text{s.t.} \quad \vx &= \mathcal{D}(\vz)
\end{align*}
where the objective of the maximum a posteriori (MAP) problem is defined as
\begin{align*}
    \ell_{\text{MAP}}(\vz) &= -\log p(\vz \mid \mathcal{D}(\vz), \vy) - \log p(\vy \mid \mathcal{D}(\vz))\\
    &=
    \frac{\|\vz - \mathcal{E}(\mathcal{D}(\vz)) \|_2^2}{2\sigma_{\mathcal{E}}^2} 
    + \frac{\| \vy - \mathcal{A}(\mathcal{D}(\vz)) \|_2^2}{2\sigma^2},
\end{align*}
where $ \sigma_{\mathcal{E}}$ is the encoder variance. 
First is solved
\begin{equation*}
    \hat{\vx}_{0}(\vy) = \min_{\vx}\frac{\|\vy - \mathcal{A}(\vx)\|_2^2}{2\sigma^2} + \lambda \|\vx - \mathcal{D}(\hat{\vz}_{0\mid t})\|_2^2,
\end{equation*}
where $\vx = \mathcal{D}(\vz)$, and then:
\begin{align}
    \hat{\vz}_{0}^{ema} &= \operatorname{argmin}_\vz \zeta \| \vz - \mathcal{E}(\hat{\vx}_{0}(\vy))\|_2^2 + \gamma \|\vz - \hat{\vz}_{0\mid t} \|_2^2 \nonumber\\
    &= \alpha_{t-1}\mathcal{E}(\hat{\vx}_{0}(\vy)) + (1 - \alpha_{t-1})\hat{\vz}_{0\mid t} \label{eq:TReg}
\end{align}
where $\zeta, \gamma$ are empirically chosen to satisfy $\alpha_{t-1} = \zeta / (\zeta + \gamma)$ in order to give the second equality in \eqref{eq:TReg}.

After these two steps, a DDIM step is run and eventually the null prompt is optimized through what is called "Adaptive Negation", i.e.:
\begin{equation*}
    c_{\emptyset} \gets c_{\emptyset} - \eta \nabla_{\emptyset} (\mathcal{T}_{\text{img}}(\hat{\vx}_{0}(\vy)),c_{\emptyset})
\end{equation*}
where $\eta$ is a fixed learning rate and $\mathcal{T}_{\text{img}}$ denotes the CLIP image encoder.

\section{Hyperparameters tuning}
\label{sec:hyperparam}

As deeply studied in the theoretical derivation of our method in Section \ref{sec:LATINO}, we introduce the hyperparameter $\delta_k$ as it represents the implicit Euler step size. We will now show the values of $\delta_k$ used for each task.
\begin{enumerate}
    \item \textbf{Gaussian Deblurring:}\vspace{.1cm}
    \begin{itemize}
        \item For $k \geq 5$: 
        $\delta_k = 2\cdot10^{-5} (1 - \alpha_{t_k}){\|\mathcal{A}\vu^{(k)} - \vy \|}{}/{\sigma_n}$\vspace{.1cm}
        \item Otherwise:
       $ \delta_k = 4\cdot10^{-5} (1 - \alpha_{t_k}){\|\mathcal{A}\vu^{(k)} - \vy \|}{}/{\sigma_n}$\\
    \end{itemize}

    \item \textbf{Motion Deblurring:}\vspace{.1cm}
    \begin{itemize}
        \item For $k \geq 5$: 
       $ \delta_k = 4\cdot10^{-6} (1 - \alpha_{t_k}){\|\mathcal{A}\vu^{(k)} - \vy \|}{}/{\sigma_n}$\vspace{.1cm}
        \item Otherwise:
       $ \delta_k = 2\cdot10^{-6} (1 - \alpha_{t_k}){\|\mathcal{A}\vu^{(k)} - \vy \|}{}/{\sigma_n}$\\
    \end{itemize}

    \item \textbf{Super Resolution $\times 8$:}\vspace{.1cm}
    \begin{itemize}
        \item For $k \geq 6$:
        $\delta_k = 6\cdot10^{-3} (1 - \alpha_{t_k}){\|\mathcal{A}\vu^{(k)} - \vy \|}{}/{\sigma_n}$\vspace{.1cm}
        \item Otherwise:
        $\delta_k = 3\cdot10^{-3} (1 - \alpha_{t_k}){\|\mathcal{A}\vu^{(k)} - \vy \|}{}/{\sigma_n}$\\
    \end{itemize}

    \item \textbf{Super Resolution $\times 16$:}\vspace{.1cm}
    \begin{itemize}
        \item For $k \geq 6$: 
        $\delta_k = 2\cdot10^{-2} (1 - \alpha_{t_k}){\|\mathcal{A}\vu^{(k)} - \vy \|}{}/{\sigma_n}$\vspace{.1cm}
        \item Otherwise:
        $\delta_k = 9\cdot10^{-3} (1 - \alpha_{t_k}){\|\mathcal{A}\vu^{(k)} - \vy \|}{}/{\sigma_n}$\\
    \end{itemize}

    \item \textbf{Box Inpainting:}\vspace{.1cm}
    \begin{itemize}
        \item For $k \geq 5$: $\delta_k = (1 - \alpha_{t_k})$\vspace{.1cm}
        \item Otherwise: $\delta_k = 0.5 (1 - \alpha_{t_k})$\\
    \end{itemize}
\end{enumerate}

These choices can be motivated in the following way: the normalized $L^2$ norm acts as a regularizer that strengthens the data-fidelity term when the reconstruction is poor (i.e. big $L^2$ norm) and gives more freedom to the prior otherwise. In particular, we expect the norm to be big in the first steps, when we need to prevent the prior from deviating from the observation, and small in the final steps when it is more important to be able to generate detailed high-frequency features that cannot be recovered from the noisy observation. A similar reasoning leads to the addition of the $1-\alpha_{t_k}$ term.\\

The PSNR/LPIPS performance of our method is quite robust to the choice of $\delta_k \in (0.1\delta_k^*,10\delta_k^*)$
as shown below for gaussian deblurring:
\textbf{LATINO} uses the optimal $\delta^*_k$ as defined above;
\textbf{LATINO-s} uses $\delta_k=0.1\delta_k^*$;
\textbf{LATINO-b} uses $\delta_k=10\delta_k^*$.

\newlength{\lblw}
\setlength{\lblw}{0.03\columnwidth}
\newlength{\colimgwidth}
\setlength{\colimgwidth}{0.19\columnwidth}

\begin{figure}[htbp]
  \centering
  \noindent
  \begin{minipage}[b]{\lblw}\centering
    \rotatebox{90}{\scriptsize\bfseries Gaussian deblur}
  \end{minipage}%
  \begin{minipage}[b]{\colimgwidth}\centering
    \footnotesize\textbf{Meas.}\\[0.5ex]
    \includegraphics[width=\linewidth]{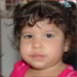}
  \end{minipage}%
  \begin{minipage}[b]{\colimgwidth}\centering
    \footnotesize\textbf{GT}\\[0.5ex]
    \includegraphics[width=\linewidth]{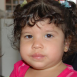}
  \end{minipage}%
  \begin{minipage}[b]{\colimgwidth}\centering
    \footnotesize\textbf{LATINO}\\[0.5ex]
    \includegraphics[width=\linewidth]{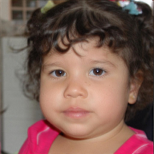}
  \end{minipage}%
  \begin{minipage}[b]{\colimgwidth}\centering
    \footnotesize\textbf{LATINO‑b}\\[0.5ex]
    \includegraphics[width=\linewidth]{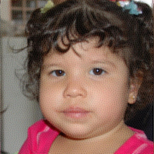}
  \end{minipage}%
  \begin{minipage}[b]{\colimgwidth}\centering
    \footnotesize\textbf{LATINO‑s}\\[0.5ex]
    \includegraphics[width=\linewidth]{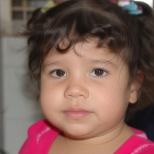}
  \end{minipage}

  \caption{Qualitative comparison on Gaussian deblurring with different $\delta_k$ schedules. LATINO uses $\delta^*_k$, LATINO‑b uses $10\delta^*_k$, and LATINO‑s uses $0.1\delta^*_k$.}
  \label{fig:qualitative_gaussian_deblur}
\end{figure}

\begin{figure}[htbp]
  \centering
  \begin{minipage}[b]{0.48\columnwidth}
    \centering
    \includegraphics[width=\linewidth]{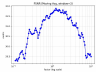}\\[\smallskipamount]
    \textbf{PSNR vs.\ $\delta_k$}
  \end{minipage}\hfill
  \begin{minipage}[b]{0.48\columnwidth}
    \centering
    \includegraphics[width=\linewidth]{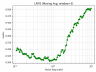}\\[\smallskipamount]
    \textbf{LPIPS vs.\ $\delta_k$}
  \end{minipage}

  \caption{Robustness of LATINO’s choice of $\delta_k$: performance curves on FFHQ‑1024 Gaussian deblurring.}
  \label{fig:performance_gaussian_deblur}
\end{figure}

\section{LATINO as a split-step Langevin sampler}
As previously noted, computing exact solutions to the Langevin diffusion process \eqref{eq:Langevin} is generally not possible. Therefore, solutions are usually obtained by using a discrete-time numerical integrator whose accuracy and cost are controlled by the size of integration time step. LATINO employs a split-step discretization of the Langevin diffusion process \eqref{eq:Langevin} in which the Brownian motion and the drift term associated with the prior density are approximately integrated via the stochastic auto-encoding step (\ref{eq:split-Langevin}.a). The likelihood term is handled via an implicit (backwards Euler) or proximal step (\ref{eq:split-Langevin}.b), hence the iterate $x_{k+1}$ appears on both sides of the second row, resulting in improved stability properties that permit larger step sizes \cite{Blanes2024}. The Langevin SDE is a time-homogeneous process, hence $g_y: x\mapsto -\log p(y|x)$ is the exact likelihood or data fidelity term, a key advantage w.r.t. DPS, $\Pi$GDM, DiffPIR, etc., which require approximations. Indeed, the use of the Langevin SDE allows employing the likelihood of $y$ w.r.t. the (noise-less) image $x$, which is usually tractable. In contrast, strategies such as DPS or $\Pi$GDM seek to embed the likelihood of $y$ w.r.t. a noisy version of $x$ within a time-inhomogeneous reverse diffusion process; such likelihoods are often intractable and require approximations. Also note that the iteration index $k$ is related to the time of the Langevin diffusion \eqref{eq:Langevin}-\eqref{eq:split-Langevin}, which goes forward as the algorithm iterations progress. It is \underline{not} the time of the diffusion SDE \eqref{eq:VP-SDE} which is encapsulated into (\eqref{eq:split-Langevin}, top row). 

With regards to convergence properties of LATINO, known theoretical convergence results for PnP Langevin sampling suggest that when $t$ is mall, LATINO should converge under a wide class of probability metrics towards a biased approximation of the posterior distribution of interest~\cite{laumont2022bayesian}. Empirically, we observe that LATINO converges very quickly, especially when $t$ is large, allowing to generate samples in very few steps. A theoretical analysis of the convergence of LATINO for large $t$ is a main perspective for future work.

\section{LATINO-PRO: gradient computation}
\label{sec:gradient_LATINOPRO}
As discussed in Section \ref{sec:SOUL}, the key step of our LATINO-PRO Algorithm \ref{alg:latino-pro} is the computation of the following quantity
\begin{equation}\label{eq:SAPG-2}
c_{m+1} = \Pi_C \left[c_{m} + \gamma_m \nabla_c \log p(\vx^{(1)},\ldots,\vx^{(N)}|c_m)\right]\,,
\end{equation}
where $\{\vx^{(k)}\}_{k=1}^N$ is a Markov chain targeting $p(\vx|\vy,c_m)$. This requires running a full iteration of our LATINO algorithm \ref{alg:lcm-pir}, and in particular, we are interested in storing the latent realizations $\{\vz_{t_k}^{(k)}\}_{k=1}^{N}$, as this leads to tractable computations by automatic differentiation (to simplify notation, we henceforth use use $\vz_{t_k} \equiv\vz_{t_k}^{(k)}$ and $c \equiv c_m$). During the optimization steps in Algorithm \ref{alg:latino-pro}, we consider $N=4$, so the computations become:
\begin{align}\label{eq:prompt_opt}
&\log p(\vz_{t_1},\vz_{t_2},\vz_{t_3},\vz_{t_4} \mid c) = \log p(\vz_{t_4} \mid \vz_{t_3}, c) +\nonumber \\ 
& + \log p(\vz_{t_3} \mid \vz_{t_2}, c) + \log p(\vz_{t_2} \mid \vz_{t_1}, c) + \log p(\vz_{t_1} \mid c).
\end{align}
All the terms can be computed through the definition of the latent part of our stochastic auto-encoder, i.e.,
\small
\begin{align*}
\vz_{t_{i+1}} = \sqrt{\alpha_{t_{i+1}}}G_\theta(\vz_{t_{i}},t_{i}, c) + \sqrt{1 - \alpha_{t_{i+1}}} \boldsymbol{\epsilon}, \quad \boldsymbol{\epsilon}\sim\mathcal{N}(0,\Id)\,,
\end{align*}
\normalsize
and hence
\begin{align*}
\vz_{t_{i+1}} \mid \vz_{t_i}, c \sim \mathcal{N}(G_\theta(\vz_{t_{i}},t_{i}, c), (1-\alpha_{t_{i+1}})\Id)\,,
\end{align*}
so
\small
\begin{equation*}
- \nabla_c \log p(\vz_{t_{i+1}}|\vz_{t_{i}},c) = \frac{\nabla_c \| \vz_{t_{i+1}} - \sqrt{\alpha_{t_{i+1}}} G_\theta(\vz_{t_{i}},t_{i}, c)\|^2}{2(1-\alpha_{t_{i+1}})}.
\end{equation*}
\normalsize
This holds for all terms in (\ref{eq:prompt_opt}), including $\log p(\vz_{t_1} \mid c)$, for which we simply have a dependence on the starting $\vz_0 \sim \mathcal{N}(\mathcal{A}^\dagger\vy, (1 - \alpha_{t_0})\Id)$ in the equation. Also, we do not include $p(\vz_{t_4} \mid \vz_{t_3}, c)$ as $\vz_{t_4}$ is deterministically determine from $\vz_{t_3}$. Instead, $\vz_{t_4}$ initializes the next iteration of LATINO within the SAPG scheme. In conclusion, \eqref{eq:SAPG-2} becomes
\small
\begin{align*}\label{eq:SAPG-extended}
&c_{m+1} =\\
&\Pi_C \left[c_{m} + \gamma_m \nabla_c \sum_{i=0}^2 \frac{\nabla_c \| \vz_{t_{i+1}} - \sqrt{\alpha_{t_{i+1}}} G_\theta(\vz_{t_{i}},t_{i}, c)\|^2}{2(1-\alpha_{t_{i+1}})}\right]\,,
\end{align*}
\normalsize

\begin{table*}[!h]
\centering \footnotesize
\begin{tabular}{lccccc}
 \toprule
 & & \multicolumn{2}{c}{\textbf{Deblur (Gaussian)}} & \multicolumn{2}{c}{\textbf{SR$\times16$}} \\ 
  \cmidrule(lr){3-4} \cmidrule(lr){5-6}
  \textbf{Method} &  \textbf{NFE↓} & \textbf{FID↓} & \textbf{PSNR↑} & \textbf{FID↓} & \textbf{PSNR↑} \\ \hline
\textbf{LATINO-PRO} & \underline{68} & \textbf{18.37} & \textbf{26.82}  & \textbf{30.40} &  \textbf{21.52}     \\ 
\textbf{LATINO}  & \textbf{8} & \underline{20.03} &     \underline{26.25}   & \underline{42.14} & \underline{20.05}   \\ 
\textbf{LATINO-LoRA} & \textbf{8} & 57.96 &   23.02   & 76.53 & 17.82    \\ \bottomrule
\end{tabular}
\caption{Results for Gaussian Deblurring with $\sigma = 5.0$, and $\times 16$ super-resolution, both with noise $\sigma_y = 0.01$ on the AFHQ-512 val dataset. Our LATINO, LATINO-PRO, and LATINO-LoRA models are compared. Prompt: \texttt{a sharp photo of a dog} or \texttt{a sharp photo of a cat}. \textbf{Bold}:  best, \underline{underline}: second best.}
\label{tab:512_comparison_AFHQ_LoRA}
\end{table*}

\begin{table*}[h]
\centering
\footnotesize
\begin{tabular}{lcccccccccc}
\toprule
 &  & \multicolumn{3}{c}{\textbf{Deblur (Gaussian)}} & \multicolumn{3}{c}{\textbf{Deblur (Motion)}} & \multicolumn{3}{c}{\textbf{SR$\times8$}} \\ 
  \cmidrule(lr){3-5} \cmidrule(lr){6-8}\cmidrule(lr){9-11}
\textbf{Method} & \textbf{NFE↓} & \textbf{FID↓} & \textbf{PSNR↑} & \textbf{LPIPS↓} & \textbf{FID↓} & \textbf{PSNR↑} & \textbf{LPIPS↓} & \textbf{FID↓} & \textbf{PSNR↑} & \textbf{LPIPS↓}\\ \hline
\textbf{LATINO-PRO } &  \underline{68}  &    \textbf{31.98}    &    \textbf{29.11}    & \textbf{0.292}  & \textbf{27.80}     &   \textbf{27.14}       & \textbf{0.301}        & \underline{40.95}     &   \textbf{26.58}        & \textbf{0.355}           \\ 
\textbf{LATINO}       &  \textbf{8}   & 33.94 &    \underline{28.95}                  & \underline{0.296} &  \underline{29.17} &   \underline{26.88}  & \underline{0.318}   & \textbf{37.13}     &   \underline{26.22}      &    \underline{0.356}     \\ 
\textbf{LATINO-LoRA} &  \textbf{8} &  \underline{33.70}  &   28.20           & 0.340   & 40.66         & 24.83            & 0.407            & 50.89   &   25.80            & 0.428           \\ \bottomrule
\end{tabular}
\caption{Results for Gaussian deblurring with $\sigma = 3.0$, motion deblurring, and $\times 8$ super-resolution, all with noise $\sigma_y = 0.01$ on the FFHQ-512 val dataset. Our LATINO, LATINO-PRO, and LATINO-LoRA models are compared. Prompt: \texttt{a sharp photo of a face}. \textbf{Bold}:  best, \underline{underline}: second best.\vspace{.0cm}}
\label{tab:512_comparison_FFHQ_LoRA}
\end{table*}
\begin{figure*}[!h]
\centering
\begin{minipage}{0.03\textwidth}
    \centering
    \begin{tabular}{c}
        \rotatebox{90}{\textbf{SR $\times 16 \quad \ \ $}} \\[11mm]
        \rotatebox{90}{\textbf{Gaussian deblur}}
    \end{tabular}
\end{minipage}%
\begin{minipage}{0.16\textwidth}
    \centering \textbf{Measurement} \\
    \zoomedImage{images/Tests/TReg_comparisons/degraded5.pdf}{0.2,0.12}{1.4,0.9} \\
    \zoomedImage{images/Tests/TReg_comparisons/degraded9.pdf}{-0.2,0.0}{1.4,0.9}
\end{minipage}%
\begin{minipage}{0.16\textwidth}
    \centering \textbf{GT} \\ 
    \zoomedImage{images/Tests/TReg_comparisons/clean5.pdf}{0.20,0.12}{1.4,0.9} \\
    \zoomedImage{images/Tests/TReg_comparisons/clean9.pdf}{-0.20,0.0}{1.4,0.9}
\end{minipage}%
\begin{minipage}{0.16\textwidth}
    \centering \textbf{LATINO} \\ 
    \zoomedImage{images/Tests/TReg_comparisons/restored5_LATINO.pdf}{0.20,0.12}{1.4,0.9} \\
    \zoomedImage{images/Tests/TReg_comparisons/restored9_LATINO.pdf}{-0.20,0.0}{1.4,0.9}
\end{minipage}%
\begin{minipage}{0.16\textwidth}
    \centering \textbf{LATINO-PRO} \\ 
    \zoomedImage{images/Tests/TReg_comparisons/restored5.pdf}{0.20,0.12}{1.4,0.9} \\
    \zoomedImage{images/Tests/TReg_comparisons/restored9.pdf}{-0.20,0.0}{1.4,0.9}
\end{minipage}%
\begin{minipage}{0.16\textwidth}
    \centering \textbf{LATINO-LoRA} \\ 
    \zoomedImage{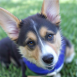}{0.20,0.12}{1.4,0.9} \\
    \zoomedImage{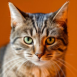}{-0.20,0.0}{1.4,0.9}
\end{minipage}%
\vspace*{-0.4cm}
\caption{Qualitative comparison of image restoration results. Samples taken from AFHQ-512. Prompt: \texttt{a sharp photo of a dog} or \texttt{a sharp photo of a cat}.}
\label{fig:qualitative_comparison_AFHQ_LoRA}
\end{figure*}

\section{Adaptation to non-linear operators}\label{sec:non-linear}

When the pseudoinverse is not accessible, the proximal operator can be computed with Conjugate Gradient in the linear case. For nonlinear operators, a direct least squares method can be adopted, as already done in \cite{Kim2023RegularizationBT}, using the Adam optimizer with learning rate $1\mathrm{e}{-3}$ and $\beta_1 = 0.9, \, \beta_2 = 0.999$ for $300$ iterations to obtain the solution of
$$
\min_{\vx} \ \frac{\| \vy - \mathcal{A}(\vx) \|_2^2}{2\sigma^2} + \lambda \| \vx - \hat{\vx}_0 \|_2^2.
$$
In Fig.~\ref{fig:phase_retrieval}, we tackle a non-linear phase retrieval task 
$$
  \vy = \bigl|\operatorname{DFT}(\vx)\bigr| + \vn
$$
on FFHQ-512, and compare LATINO-PRO with TReg and LDPS (P2L and PSLD only work for linear problems). Our method is around \(\times 4\) faster than TReg and can handle tougher cases like images with complex backgrounds, which cause failures in TReg (bottom row). We stress the fact that a key strength of our method is the possibility to use various discretization schemes in place of the implicit proximal in \ref{eq:split-Langevin} (implicit–explicit, Runge–Kutta) and even off‑the‑shelf NN to approximate \(\operatorname{prox}_{\delta g_y}\) when a closed form is not available.

\newlength{\imgw}
\setlength{\imgw}{0.2\columnwidth}

\begin{figure}[H]
  \centering

  \noindent
  \begin{minipage}[b]{\imgw}\centering \textbf{Meas.} \end{minipage}%
  \begin{minipage}[b]{\imgw}\centering \textbf{GT}    \end{minipage}%
  \begin{minipage}[b]{\imgw}\centering \textbf{LDPS}  \end{minipage}%
  \begin{minipage}[b]{\imgw}\centering \textbf{LATINO}\end{minipage}%
  \begin{minipage}[b]{\imgw}\centering \textbf{TReg}  \end{minipage}

  \vspace{0.5ex}

  \noindent
  \begin{minipage}[b]{\imgw}\centering
    \includegraphics[width=\linewidth]{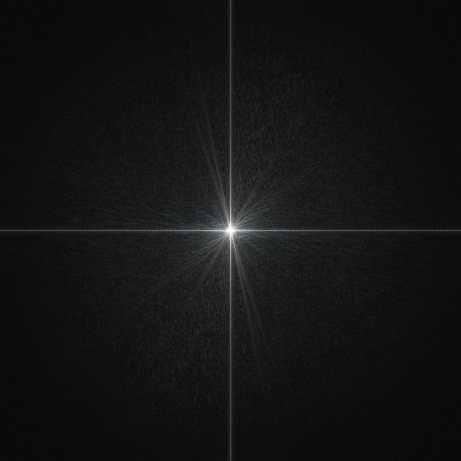}%
  \end{minipage}%
  \begin{minipage}[b]{\imgw}\centering
    \includegraphics[width=\linewidth]{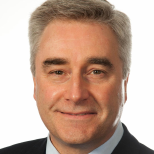}%
  \end{minipage}%
  \begin{minipage}[b]{\imgw}\centering
    \includegraphics[width=\linewidth]{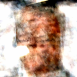}%
  \end{minipage}%
  \begin{minipage}[b]{\imgw}\centering
    \includegraphics[width=\linewidth]{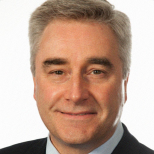}%
  \end{minipage}%
  \begin{minipage}[b]{\imgw}\centering
    \includegraphics[width=\linewidth]{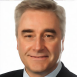}%
  \end{minipage}

  \vspace{0.5ex}

  \noindent
  \begin{minipage}[b]{\imgw}\centering
    \includegraphics[width=\linewidth]{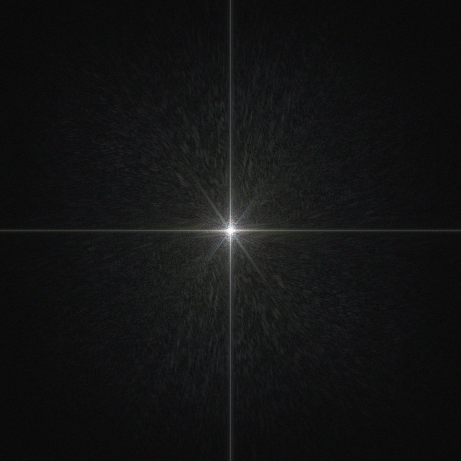}%
  \end{minipage}%
  \begin{minipage}[b]{\imgw}\centering
    \includegraphics[width=\linewidth]{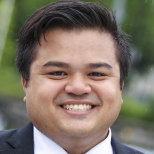}%
  \end{minipage}%
  \begin{minipage}[b]{\imgw}\centering
    \includegraphics[width=\linewidth]{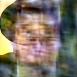}%
  \end{minipage}%
  \begin{minipage}[b]{\imgw}\centering
    \includegraphics[width=\linewidth]{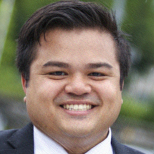}%
  \end{minipage}%
  \begin{minipage}[b]{\imgw}\centering
    \includegraphics[width=\linewidth]{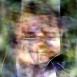}%
  \end{minipage}

  \caption{Nonlinear phase retrieval. Top row: Example 1; bottom row: Example 2.}
  \label{fig:phase_retrieval}
\end{figure}

\section{Ablation study: prompt choice}

Table~\ref{tab:latino_vs_pro} highlights the robustness of the reconstruction quality to slight semantic variations of the initial prompt. In particular, we observe that less informative prompts often yield better metrics than those that include information about the degradation operator. LATINO-PRO is more robust to variations in the prompt initialization, as it seems that the optimization scheme converges towards an optimal prompt in all three cases.

\begin{table}[H]
\centering
\resizebox{\columnwidth}{!}{%
\begin{tabular}{lcccccc}

\multirow{2}{*}{\textbf{Prompt}} & \multicolumn{3}{c}{\textbf{LATINO}} & \multicolumn{3}{c}{\textbf{LATINO-PRO}} \\
\cline{2-7}
& \textbf{LPIPS} ↓ & \textbf{PSNR} ↑ & \textbf{FID} ↓ & \textbf{LPIPS} ↓ & \textbf{PSNR} ↑ & \textbf{FID} ↓ \\
\hline
\texttt{"a photo of"} & \textbf{0.312} & \textbf{26.93} & 29.24 & \textbf{0.299} & \textbf{27.25} & 28.39 \\
\texttt{"a high resolution photo of"} & 0.319 & \underline{26.85} & \underline{29.22} & \underline{0.301} & 27.19 & \textbf{27.59} \\
\texttt{"a sharp photo of"} & \underline{0.318} & 26.88 & \textbf{29.17} & \underline{0.301} & \underline{27.14} & \underline{27.80} \vspace*{-.5cm}
\end{tabular}%
}
\caption{Performance of LATINO and LATINO-PRO on FFHQ-1024 1k test dataset, motion blur task, under different prompts.}
\label{tab:latino_vs_pro}
\end{table}

\section{Ablation study: prior choice}
\label{sec:priors}

Alongside DMD2 \cite{Yin2024ImprovedDM}, other distilled models can be found in the literature, some of which are based on the SD1.5 \cite{Rombach2021} backbone. \cite{Luo2023LCMLoRAAU} introduces a LoRA fine-tuning of SD1.5 that allows a few-step sampling following the CM scheme. We decided to also try this model to show the universal adaptability of our model.\\

We consider the 8-step version of SD1.5-LoRA, and we tried to solve the same inverse problems as done in Section \ref{sec:experiments}: Gaussian deblurring with $\sigma=5.0$ and SR$\times16$ on the AFHQ-512 1k val dataset, Gaussian deblurring with $\sigma=3.0$ and SR$\times8$ on the FFHQ-512 1k val dataset; in all cases $\sigma_n = 0.01$. Table \ref{tab:512_comparison_AFHQ_LoRA} and Table \ref{tab:512_comparison_FFHQ_LoRA} sum up these results, while Figure \ref{fig:qualitative_comparison_AFHQ_LoRA} and Figure \ref{fig:qualitative_comparison_FFHQ_LoRA} show an extended visual comparison. We call this version of our algorithm LATINO-LoRA.\\

To better understand the difference in performances between the SD1.5-LoRA \cite{Luo2023LCMLoRAAU} and the DMD2 \cite{Yin2024ImprovedDM}, we provide in Figure \ref{fig:priors} some prior-generated images from the same prompts used during the reconstructions, focusing on the faces case. It is evident how SD1.5-LoRA tends to generate cartoonish features that are good-looking but unrealistic and that this increases the perceptual distance during the reconstruction process. At the same time, we show how the original SD1.5 can generate quite realistic faces as well, comparable to the DMD2 ones.\\

To further show that the performances are not related to the improved capacities of the LCM prior, we compare LATINO with LDPS, PSLD, and P2L using SDXL, the LDM from which DMD2 \cite{Yin2024ImprovedDM} was distilled, as the prior. Since SDXL uses $50$ time steps, while the other methods use $1000$, we report in Figure~\ref{fig:qualitative_comparison_FFHQ_SDXL} results on a Gaussian deblurring example on FFHQ-1024 for both settings.

\begin{figure}[!h]
    \centering
    \makebox[0.32\linewidth]{\centering \textbf{DMD2}}
    \hfill
    \makebox[0.32\linewidth]{\centering \textbf{SD1.5-LoRA}}
    \hfill
    \makebox[0.32\linewidth]{\centering \textbf{SD1.5}}

    \vspace{2mm} 

    \begin{minipage}[H]{0.32\linewidth}
        \centering
        \includegraphics[width=\linewidth]{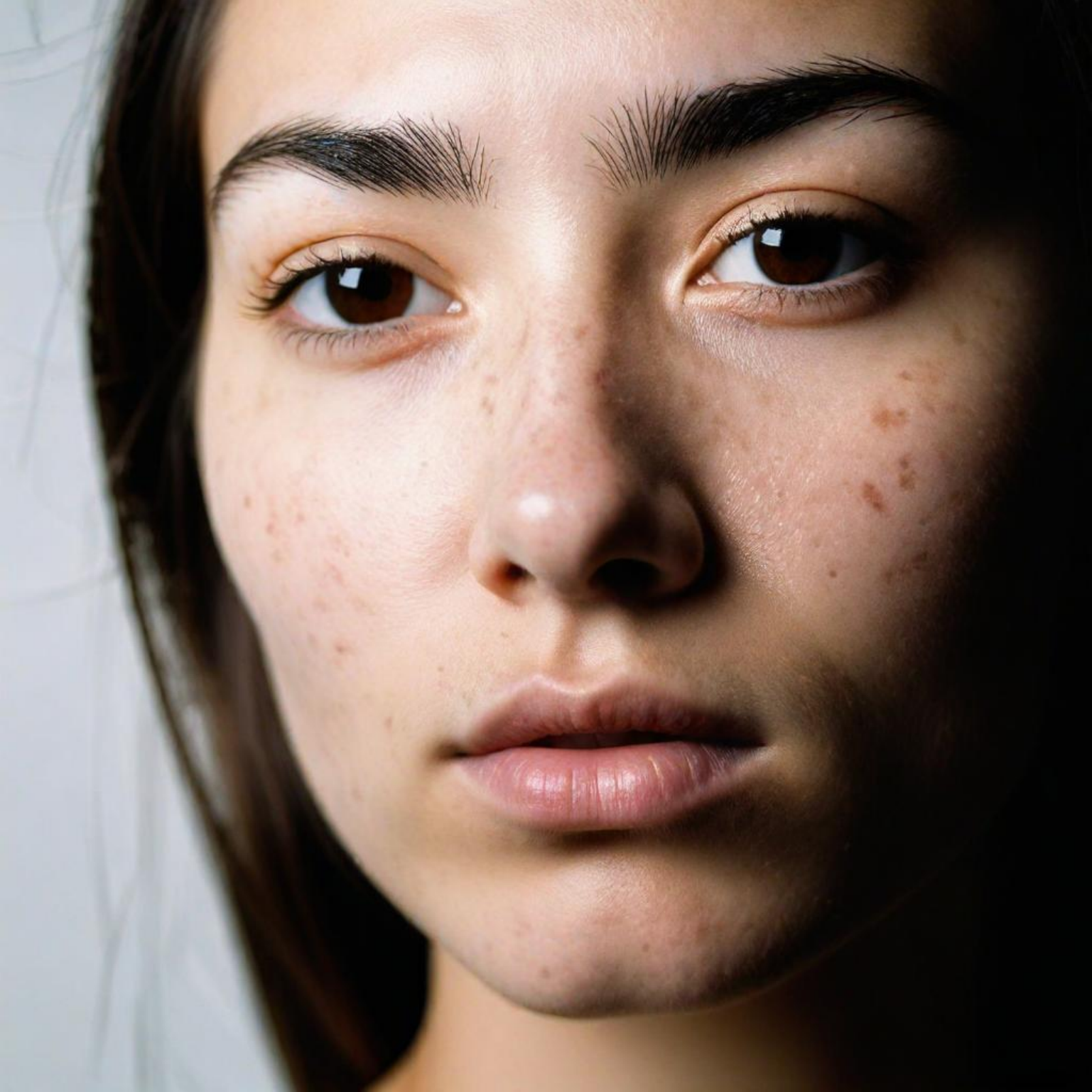}
    \end{minipage}
    \hfill
    \begin{minipage}[H]{0.32\linewidth}
        \centering
        \includegraphics[width=\linewidth]{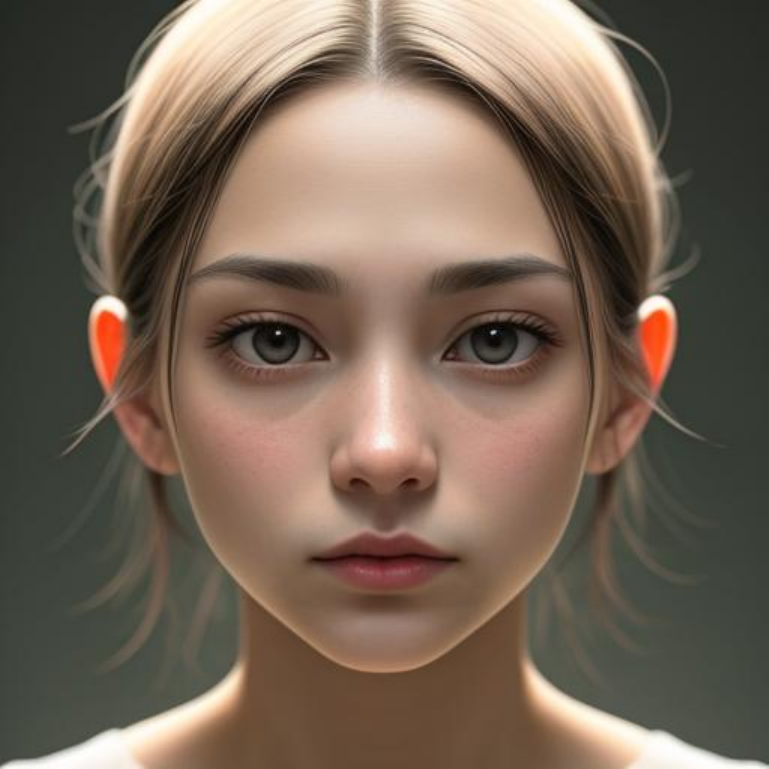}
    \end{minipage}
    \hfill
    \begin{minipage}[H]{0.32\linewidth}
        \centering
        \includegraphics[width=\linewidth]{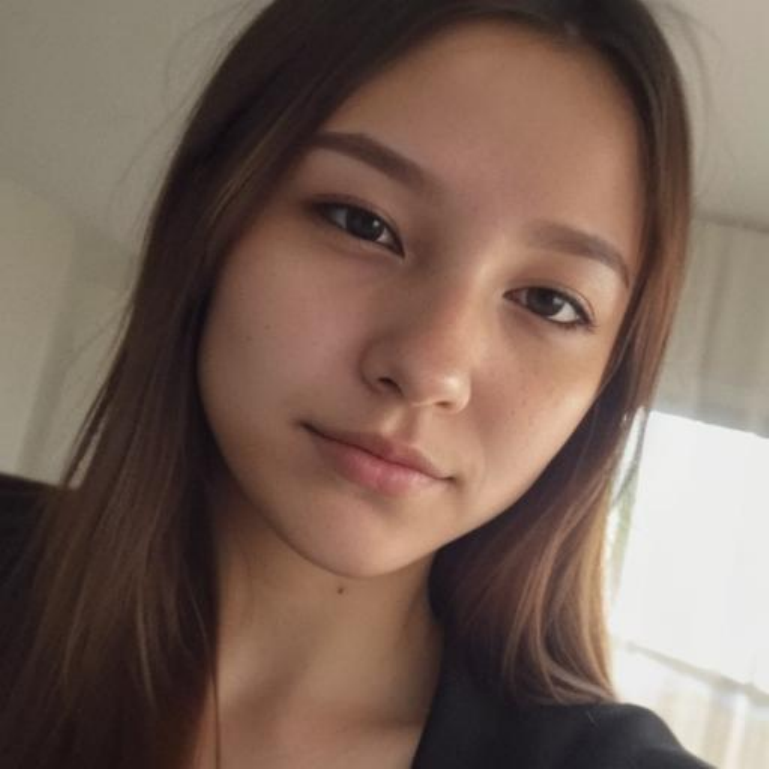}
    \end{minipage}

    \caption{Prior comparisons. Prompt: \texttt{a photo of a face}.}
    \label{fig:priors}
\end{figure}

\begin{figure*}[!h]
\centering
\resizebox{\textwidth}{!}{%
\begin{minipage}{0.03\textwidth}
    \centering
    \begin{tabular}{c}
        \rotatebox{90}{\textbf{Gaussian deblur}}
    \end{tabular}
\end{minipage}%
\begin{minipage}{0.14\textwidth}
    \centering \textbf{Measurement} \\
    \vspace{0.1cm}
    \includegraphics[width=\linewidth]{images/supp/fair/degraded.pdf}
\end{minipage}%
\begin{minipage}{0.14\textwidth}
    \centering \textbf{GT} \\ 
    \vspace{0.1cm}
    \includegraphics[width=\linewidth]{images/supp/fair/clean.pdf}
\end{minipage}%
\begin{minipage}{0.14\textwidth}
    \centering \textbf{LATINO} \\ 
    \vspace{0.1cm}
    \includegraphics[width=\linewidth]{images/supp/parameter/restored.pdf}
\end{minipage}%
\begin{minipage}{0.14\textwidth}
    \centering \textbf{LDPS-50} \\ 
    \vspace{0.1cm}
    \includegraphics[width=\linewidth]{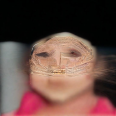}
\end{minipage}%
\begin{minipage}{0.14\textwidth}
    \centering \textbf{PSLD-50} \\ 
    \vspace{0.1cm}
    \includegraphics[width=\linewidth]{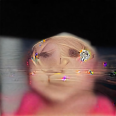}
\end{minipage}%
\begin{minipage}{0.14\textwidth}
    \centering \textbf{P2L-50} \\
    \vspace{0.1cm}
    \includegraphics[width=\linewidth]{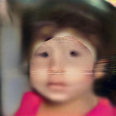}
\end{minipage}%
\begin{minipage}{0.14\textwidth}
    \centering \textbf{LDPS-1000} \\ 
    \vspace{0.1cm}
    \includegraphics[width=\linewidth]{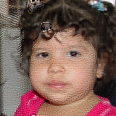}
\end{minipage}%
\begin{minipage}{0.14\textwidth}
    \centering \textbf{PSLD-1000} \\ 
    \vspace{0.1cm}
    \includegraphics[width=\linewidth]{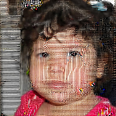}
\end{minipage}%
\begin{minipage}{0.14\textwidth}
    \centering \textbf{P2L-1000} \\
    \vspace{0.1cm}
    \includegraphics[width=\linewidth]{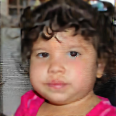}
\end{minipage}
} 
\caption{Comparison on a sample from FFHQ-1024 using SDXL as an LDM prior. Prompt: \texttt{a sharp photo of a face}.}
\label{fig:qualitative_comparison_FFHQ_SDXL}
\end{figure*}

\newcommand{\scaleYY}{0.16\textwidth}  
\newcommand{\sidecapp}[1]{
    \begin{sideways}
        \parbox{\scaleYY}{\centering #1}
    \end{sideways}
}

\begin{figure*}[!h]
\centering
\begin{minipage}{0.03\textwidth}
    \centering
    \begin{tabular}{c}
        \rotatebox{90}{\textbf{SR $\times 8 \quad \quad $}}\\[10mm]
        \rotatebox{90}{\textbf{Gaussian deblur}} \\[6mm]
        \rotatebox{90}{\textbf{Motion deblur}}
    \end{tabular}
\end{minipage}%
\begin{minipage}{0.16\textwidth}
    \centering \textbf{Measurement} \\
    \zoomedImage{images/Tests/SR_x16_bicubic/degraded2.png}{-0.15,-0.7}{1.4,0.9} \\    
    \zoomedImage{images/Tests/Deblurring_3.0_0.01_512/degraded_2_orig.pdf}{0.15,-0.75}{1.4,0.9} \\
    \zoomedImage{images/Tests/Deblurring_motion/degraded.pdf}{0.3,0.1}{1.4,0.9}
\end{minipage}%
\begin{minipage}{0.16\textwidth}
    \centering \textbf{GT} \\
    \zoomedImage{images/Tests/SR_x16_bicubic/clean2.pdf}{-0.15,-0.7}{1.4,0.9} \\
    \zoomedImage{images/Tests/Deblurring_3.0_0.01_512/clean_2_orig.pdf}{0.15,-0.75}{1.4,0.9} \\
    \zoomedImage{images/Tests/Deblurring_motion/clean.pdf}{0.3,0.1}{1.4,0.9}
\end{minipage}%
\begin{minipage}{0.16\textwidth}
    \centering \textbf{LATINO} \\
    \zoomedImage{images/Tests/SR_x16_bicubic/restored2.pdf}{-0.15,-0.7}{1.4,0.9} \\    
    \zoomedImage{images/Tests/Deblurring_3.0_0.01_512/restored_2_orig.pdf}{0.15,-0.75}{1.4,0.9} \\
    \zoomedImage{images/Tests/Deblurring_motion/restored.pdf}{0.3,0.1}{1.4,0.9}
\end{minipage}%
\begin{minipage}{0.16\textwidth}
    \centering \textbf{LATINO-PRO} \\ 
    \zoomedImage{images/Tests/SR_x16_bicubic/restored2_SOUL.pdf}{-0.15,-0.7}{1.4,0.9} \\    
    \zoomedImage{images/Tests/Deblurring_3.0_0.01_512/restored_2_SOUL_orig.pdf}{0.15,-0.75}{1.4,0.9} \\
    \zoomedImage{images/Tests/Deblurring_motion/restored_SOUL.pdf}{0.3,0.1}{1.4,0.9}
\end{minipage}%
\begin{minipage}{0.16\textwidth}
    \centering \textbf{LATINO-LoRA} \\ 
    \zoomedImage{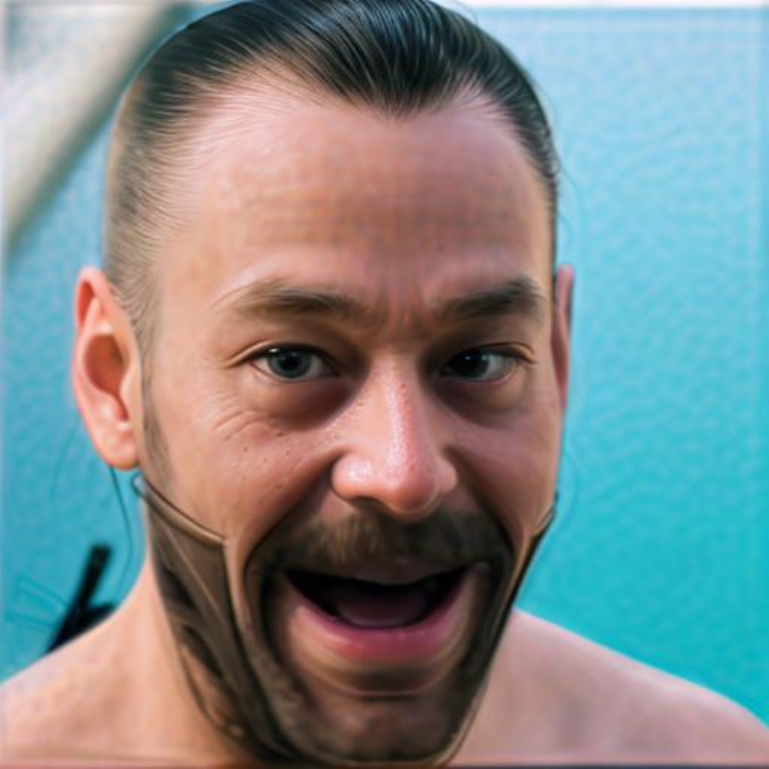}{-0.15,-0.7}{1.4,0.9} \\    
    \zoomedImage{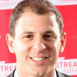}{0.15,-0.75}{1.4,0.9} \\
    \zoomedImage{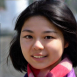}{0.3,0.1}{1.4,0.9}
\end{minipage}%
\vspace{-0.5cm}
\caption{Qualitative comparison of image restoration results. Samples taken from FFHQ-512. Prompt: \texttt{a sharp photo of a face}.\vspace{.2cm}}
\label{fig:qualitative_comparison_FFHQ_LoRA}
\end{figure*}

\section{Equivalent HR problems}\label{sec:equivalent-HR}

\subsection{The deblurring case}
\label{sec:deblur}
In order to precisely compare our deblurring results with the ones of methods that work with lower resolutions we need to transform our low-resolution problem into an equivalent high-resolution one, and go back to lower resolution.

\paragraph{LR problem:} Our original problem is the following deblurring one: Find $x$ from measurement $y$ obtained via~(1)
\begin{equation*}
(1) \quad y = x * h + n
\end{equation*}
 where $*$ denotes convolution, and $h$ is a low-resolution blur kernel.
\paragraph{HR problem:} Our proxy high-resolution problem is a joint deblurring and super-resolution problem: Find $X$ from measurements $y$ obtained via (2)
\begin{equation*}
(2) \quad y = S_s ( X * H ) + n
\end{equation*}
where $H$ is a high-resolution blur kernel. The output of our algorithm is $x=S_s(X)$, where $S_s$ is a downsampling operator to be defined below. In our case $s=2$ since our method works at resolution $1024\times 1024$ and the other algorithms work at resolution $512 \times 512$.\\

\begin{table*}[h]
\centering
\begin{tabular}{lcccccccccc}
\toprule
 &  & \multicolumn{3}{c}{\textbf{Deblur (Gaussian)}} & \multicolumn{3}{c}{\textbf{Deblur (Motion)}} & \multicolumn{3}{c}{\textbf{SR$\times16$}} \\ 
   \cmidrule(lr){3-5} \cmidrule(lr){6-8}\cmidrule(lr){9-11}
\textbf{Method} & \textbf{NFE↓} & \textbf{FID↓} & \textbf{PSNR↑} & \textbf{LPIPS↓} & \textbf{FID↓} & \textbf{PSNR↑} & \textbf{LPIPS↓} & \textbf{FID↓} & \textbf{PSNR↑} & \textbf{LPIPS↓}\\ \hline
\textbf{LATINO-PRO} &  68  &   31.98    &    28.76    & 0.372  & 27.80     &   26.89       &  0.423        & 40.95     &   25.81        & 0.445           \\ 
\textbf{LATINO}       &  8   & 33.94 &    28.41                 & 0.382 &  29.17 &   26.58  & 0.445   & 37.13     &   25.71      &    0.450    \\ \bottomrule
\end{tabular}
\caption{Results for Gaussian deblurring with $\sigma = 6.0$, motion deblurring, and $\times 16$ super-resolution, all with noise $\sigma_y = 0.01$ on the FFHQ-1024 val dataset. Our LATINO and LATINO-PRO are compared. Prompt: \texttt{a sharp photo of a face}.\vspace{1cm}}
\label{tab:1024_comparison_FFHQ}
\end{table*}

If we want problems (1) and (2) to be equivalent, we need to find a high-resolution kernel $H$ such that:
\begin{equation}\label{eq:HR-LR-equiv}
S_s(X) * h = S_s ( X * H ).    
\end{equation}    

The following definition establishes a family of subsampling operators $S_s$ for which condition \eqref{eq:HR-LR-equiv} is satisfied, as long as the HR kernel $H$ is chosen as shown in proposition~\ref{prop:alias-free-subsampling}.

\begin{definition} An alias-free subsampling operator $S_s$ is defined as:
\begin{equation*}
S_s(X) = \downarrow_s( h_s * X )
\end{equation*}
where $\downarrow_s$ is the decimation operator (which takes one sample every $s$ pixels without any filtering) and $h_s$ is a convolution kernel with spectral support in $[-\pi/s, \pi/s]$.
\end{definition}

The following choices provide alias-free subsampling operators (or approximations thereof)
\begin{enumerate}
    \item $h_s = \text{sinc}( \cdot / s ) = \text{sinc}_s$, i.e. standard Shannon subsampling.
    \item $h_s = \mathcal{F}^{-1} ( \varphi )$ where $\varphi$ is a smooth function with support in $[-\pi/s\, \pi/s]$. Avoids Gibbs artifacts that are commonly associated with Shannon subsampling.
    \item $h_s$ = kernel implicit in spline downsampling
    of order $k$. This is not exactly alias-free, but a good approximation of Shannon subsampling for sufficiently large $k$. In practice we use bicubic downsampling for $k=3$, which is a sufficiently good approximation.
\end{enumerate}

\begin{proposition}\label{prop:alias-free-subsampling}    
    If $S_s$ is an alias-free subsampling operator, then any $H$ satisfying
    \begin{equation}\label{eq:equivalent-kernel}
    h = \downarrow_s ( \text{sinc}_s * H )
    \end{equation}
    satisfies the equivalence condition~\eqref{eq:HR-LR-equiv}.
\end{proposition}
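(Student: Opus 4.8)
The plan is to verify the spatial identity~\eqref{eq:HR-LR-equiv} by passing to the (discrete-time) Fourier domain, where it collapses to a one-line computation once we record that an alias-free subsampling operator does not fold the spectrum. Write $B=[-\pi/s,\pi/s]$ for the passband and call a signal \emph{$B$-bandlimited} if its Fourier transform vanishes outside $B$; I normalize $\text{sinc}_s$ so that $\widehat{\text{sinc}_s}=s\,\mathbf{1}_{B}$ (equivalently, so that Shannon subsampling acts as the identity on $B$-bandlimited signals). The single ingredient is the following interchange lemma: if $u$ and $v$ are $B$-bandlimited then $(\downarrow_s u)*(\downarrow_s v)=\downarrow_s\!\big(\tfrac1s\,u*v\big)$.

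First I would prove this lemma from the aliasing formula $\widehat{\downarrow_s u}(\omega)=\tfrac1s\sum_{k=0}^{s-1}\hat u\big(\tfrac{\omega-2\pi k}{s}\big)$: for $\omega\in(-\pi,\pi)$ the shifted copies $B-2\pi k/s$ with $1\le k\le s-1$ are disjoint from $B$, so the argument $\tfrac{\omega-2\pi k}{s}$ lies in $B$ only for $k=0$ and lies outside $\operatorname{supp}\hat u$ otherwise; hence only the $k=0$ term survives and $\widehat{\downarrow_s u}(\omega)=\tfrac1s\hat u(\omega/s)$, and likewise for $v$. Since $\widehat{u*v}=\hat u\,\hat v$ is again supported in $B$, the same reasoning gives $\widehat{\downarrow_s(\frac1s u*v)}(\omega)=\tfrac1{s^2}\widehat{u*v}(\omega/s)$. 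Multiplying the first two identities shows both sides of the lemma have Fourier transform $\tfrac1{s^2}\hat u(\omega/s)\hat v(\omega/s)$ on the principal period, which proves it. Along the way I would record the elementary consequence $\text{sinc}_s*w=s\,w$ for every $B$-bandlimited $w$, since then $\widehat{\text{sinc}_s*w}=s\,\mathbf{1}_B\,\hat w=s\,\hat w$.

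Then I would conclude in three steps. (i) Apply the lemma with $u=h_s*X$, which is $B$-bandlimited because $\operatorname{supp}\widehat{h_s}\subseteq B$, and $v=\text{sinc}_s*H$, which is $B$-bandlimited for the same reason. (ii) By the definition of an alias-free subsampling operator $S_s(X)=\downarrow_s u$, while the standing hypothesis~\eqref{eq:equivalent-kernel} reads $h=\downarrow_s v$; hence $S_s(X)*h=(\downarrow_s u)*(\downarrow_s v)=\downarrow_s\!\big(\tfrac1s\,u*v\big)$. (iii) Simplify $\tfrac1s\,u*v=\tfrac1s\,(\text{sinc}_s*(h_s*X))*H=(h_s*X)*H=h_s*(X*H)$, using commutativity and associativity of convolution together with $\text{sinc}_s*(h_s*X)=s\,(h_s*X)$, valid since $h_s*X$ is $B$-bandlimited. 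Combining (ii) and (iii) yields $S_s(X)*h=\downarrow_s\big(h_s*(X*H)\big)=S_s(X*H)$, which is exactly~\eqref{eq:HR-LR-equiv}.

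I do not expect any genuine obstacle beyond bookkeeping: one must fix the normalizations of $\downarrow_s$ and $\text{sinc}_s$ consistently so that the two displayed identities ($\widehat{\downarrow_s u}=\tfrac1s\hat u(\cdot/s)$ on bandlimited signals, and $\text{sinc}_s*w=s\,w$) hold without stray factors, and be explicit that for the spline/bicubic choice (the third option listed) $S_s$ is only approximately alias-free, so~\eqref{eq:HR-LR-equiv} then holds only up to the corresponding spectral-leakage error, whereas for the Shannon and smooth-cutoff choices it is exact.
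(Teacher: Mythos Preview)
Your proposal is correct and follows essentially the same route as the paper: both arguments pass to the Fourier domain and exploit the single fact that subsampling a product of $[-\pi/s,\pi/s]$-bandlimited spectra factors (your interchange lemma $(\downarrow_s u)*(\downarrow_s v)=\downarrow_s(\tfrac1s\,u*v)$ is the spatial form of the paper's $P_s[AB]=P_s[A]\,P_s[B]$ for bandlimited $A,B$). The only cosmetic differences are that the paper works in the finite DFT/periodization setting rather than the DTFT/aliasing-formula setting, and that it inserts $\mathcal{F}(\text{sinc}_s)$ as a multiplicative identity on the passband rather than using your equivalent $\text{sinc}_s*w=s\,w$; your more careful tracking of normalizations is a plus.
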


\begin{proof}
    Let $N^2$ be the size of the image $X$. Let $P_s$ denote the periodization operator with period $N/s$. Then, using the fact that convolution (respectively s-sampling) becomes a product (respectively $N/s$-periodization), we have that
    \begin{align*}
    &\mathcal{F}(S_s(X * H)) \\
    =& \mathcal{F}(\downarrow_s(h_s * X * H)) \\
    =& P_s \left[ \mathcal{F}(h_s) \mathcal{F}(X) \mathcal{F}(H) \mathcal{F}(\text{sinc}_s) \right] \\
    =& P_s \left[ \mathcal{F}(h_s) \mathcal{F}(X) \right] P_s \left[ \mathcal{F}(H) \mathcal{F}(\text{sinc}_s) \right] \\
    =& \mathcal{F}( \downarrow_s(h_s * X) ) \mathcal{F}( \downarrow_s( H * \text{sinc}_s ) ) \\
    =& \mathcal{F}( S_s(X) ) \mathcal{F}( \downarrow_s( H * \text{sinc}_s ) ).
    \end{align*}
The third line is true because both $\mathcal{F}(h_s) \mathcal{F}(X)$ and $\mathcal{F}(H) \mathcal{F}(\text{sinc}_s)$ are supported in $[-\pi/s,\pi/s]$. Taking inverse fourier transform we have
$$ S_s(X*H) = S_s(X) * \downarrow_s (H*\text{sinc}_s) = S_s(X) * h $$
and the equivalence is established.
\end{proof}

\paragraph{Practical Considerations.}
When $S_s$ is Shannon subsampling, any $H$ such that $\hat{H}|_{[-\pi/s,\pi/s]^2} = \hat{h}$ satisfies condition \eqref{eq:equivalent-kernel}. In particular we can choose Shannon (zero-padding) upsampling, or a kernel $H$ that satisfies \eqref{eq:equivalent-kernel} and minimizes the total variation (to minimize Gibbs artifacts).\\

Similarly when $S_s$ is bicubic downsampling, we choose $H$ as bicubic upsampling of $h$. In this case we have an approximation of conditions \eqref{eq:HR-LR-equiv}~and~\eqref{eq:equivalent-kernel} that is good enough for our purposes.
\newpage
\subsection{The super-resolution case}
\label{sec:super-res}
To precisely compare our super-resolution results with the ones of methods that work with lower resolutions, we need to transform our low-resolution problem into an equivalent high-resolution one, and go back to lower resolution.
\paragraph{LR problem:} Our original problem is the following super-resolution problem: Find $x$ from the measurements $y$ obtained via (1)
\begin{equation*}
(1) \quad y = S_a(x) + n,
\end{equation*}
where $S_s$ is a downsampling operator of factor $s>0$.
\paragraph{HR problem:} Our proxy high-resolution problem is a super-resolution problem: Find $X$ from measurements $y$ obtained via (2):
\begin{equation*}
(2) \quad y = S_{ab} ( X ) + n.
\end{equation*}
The output of our algorithm is $x=S_b(X)$. In our examples $a=8$ and $b=2$ so that $ab=16$, or $a=16$ and $b=2$ so that $ab=32$.

If we want problems (1) and (2) to be equivalent, we need to find a subsampling operator such that:
\begin{equation}\label{eq:HR-LR-equiv-SR}
S_{ab}(X) = S_a ( S_b(X) ).    
\end{equation}

Any subsampling operator derived from a wavelet transform (including average pooling) satisfies condition~\eqref{eq:HR-LR-equiv-SR}. So does Shannon sub-sampling. Bicubic downsampling approximately satisfies condition~\eqref{eq:HR-LR-equiv-SR} since it is a good approximation of Shannon subsampling.

\begin{figure*}[!h]
    \centering
    \begin{center} 
        \begin{tabular}{c c c} 
            \centering
             \sidecap{Measurement}&\includegraphics[width=0.40\textwidth]{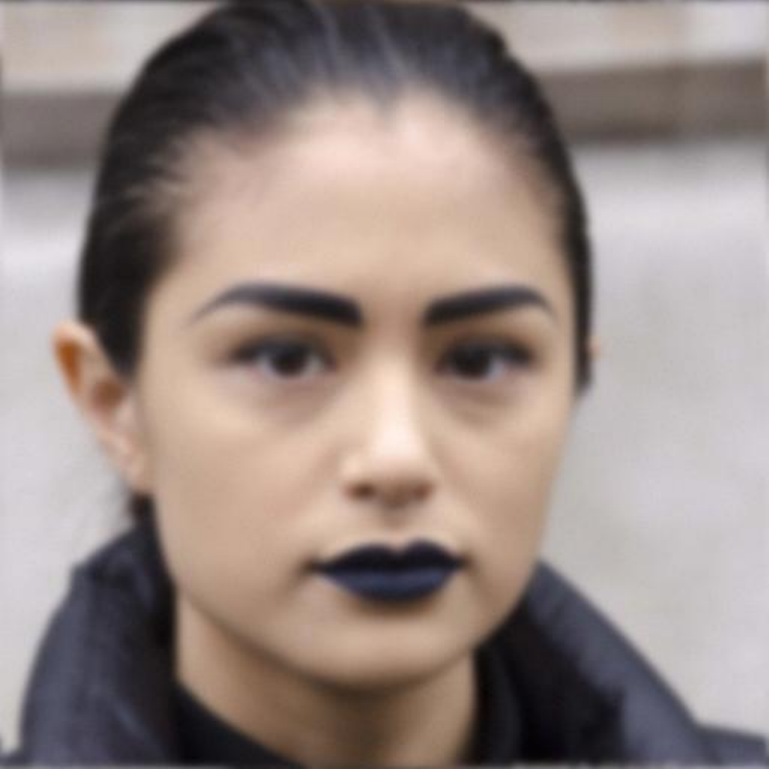} & 
            \includegraphics[width=0.40\textwidth]{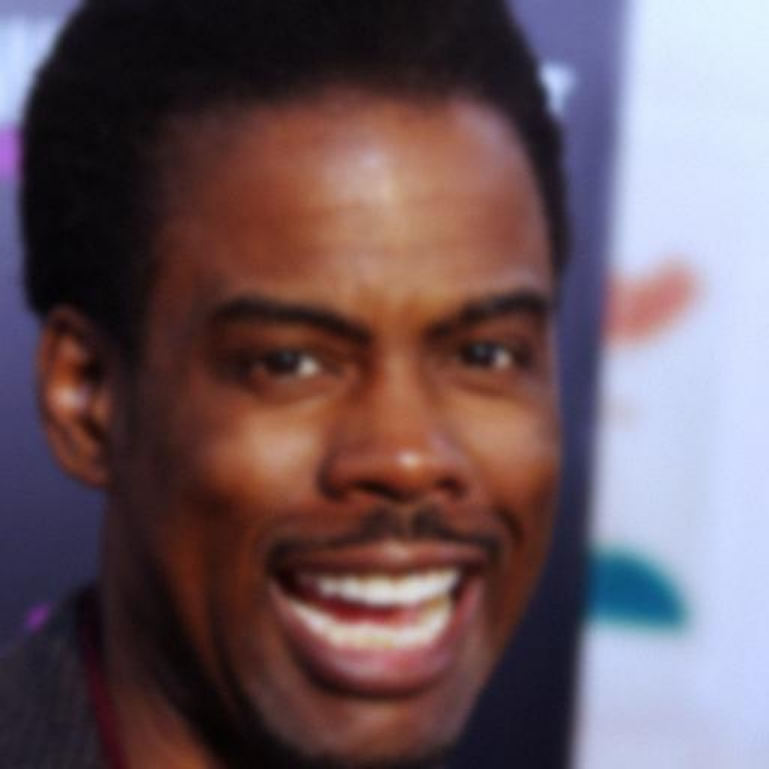} \\

            \sidecap{Ground truth}&\includegraphics[width=0.40\textwidth]{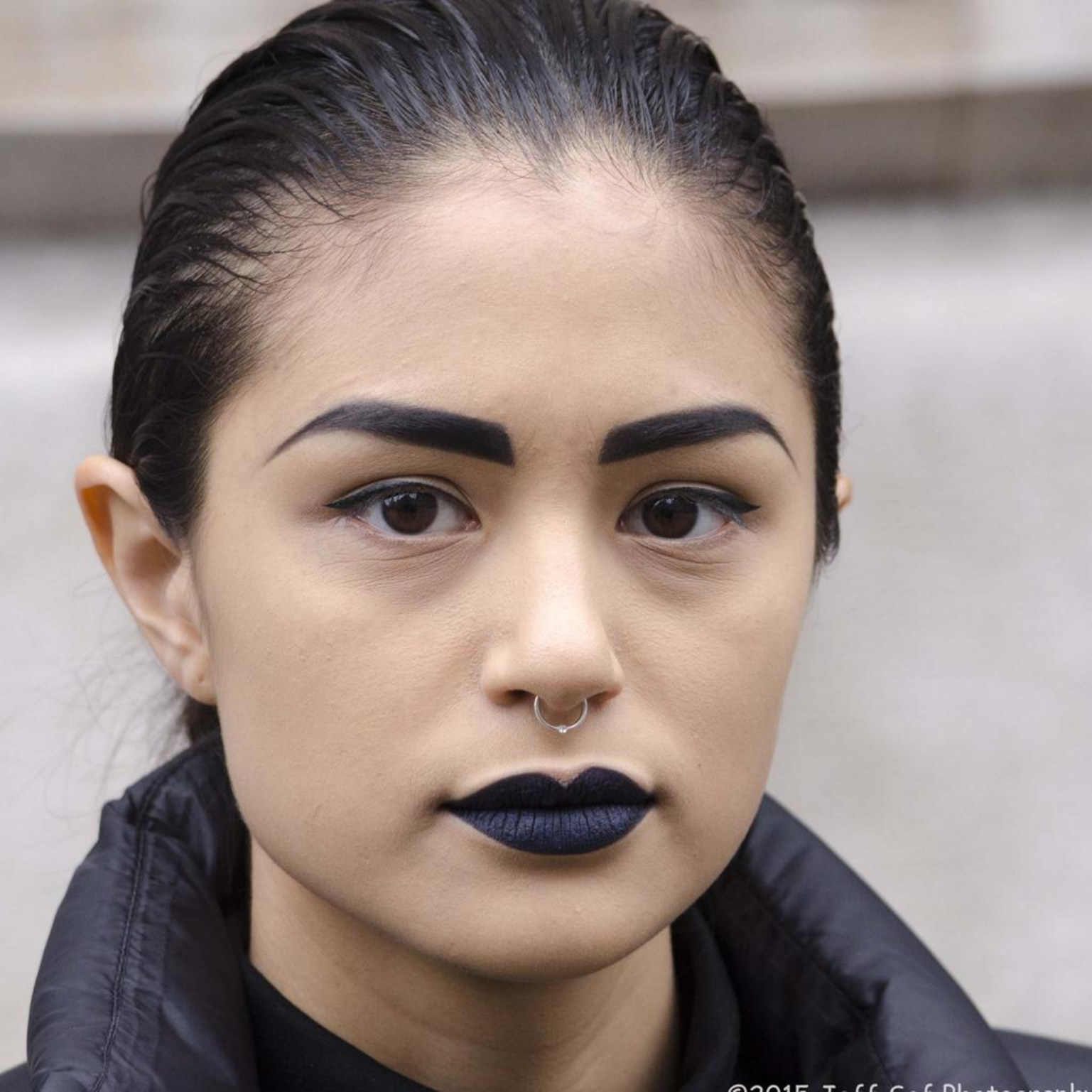} & 
            \includegraphics[width=0.40\textwidth]{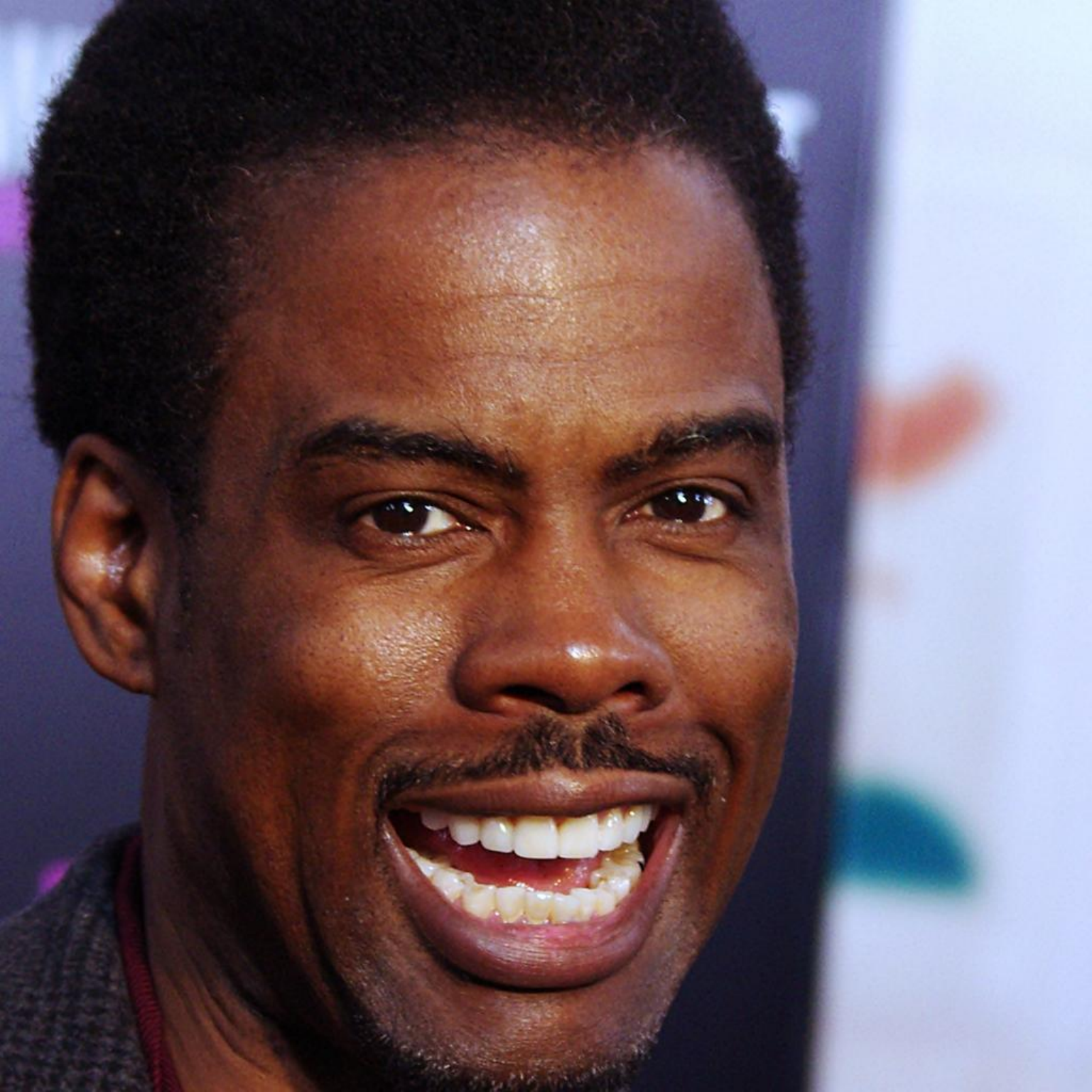} \\

            \sidecap{Restored}&\includegraphics[width=0.40\textwidth]{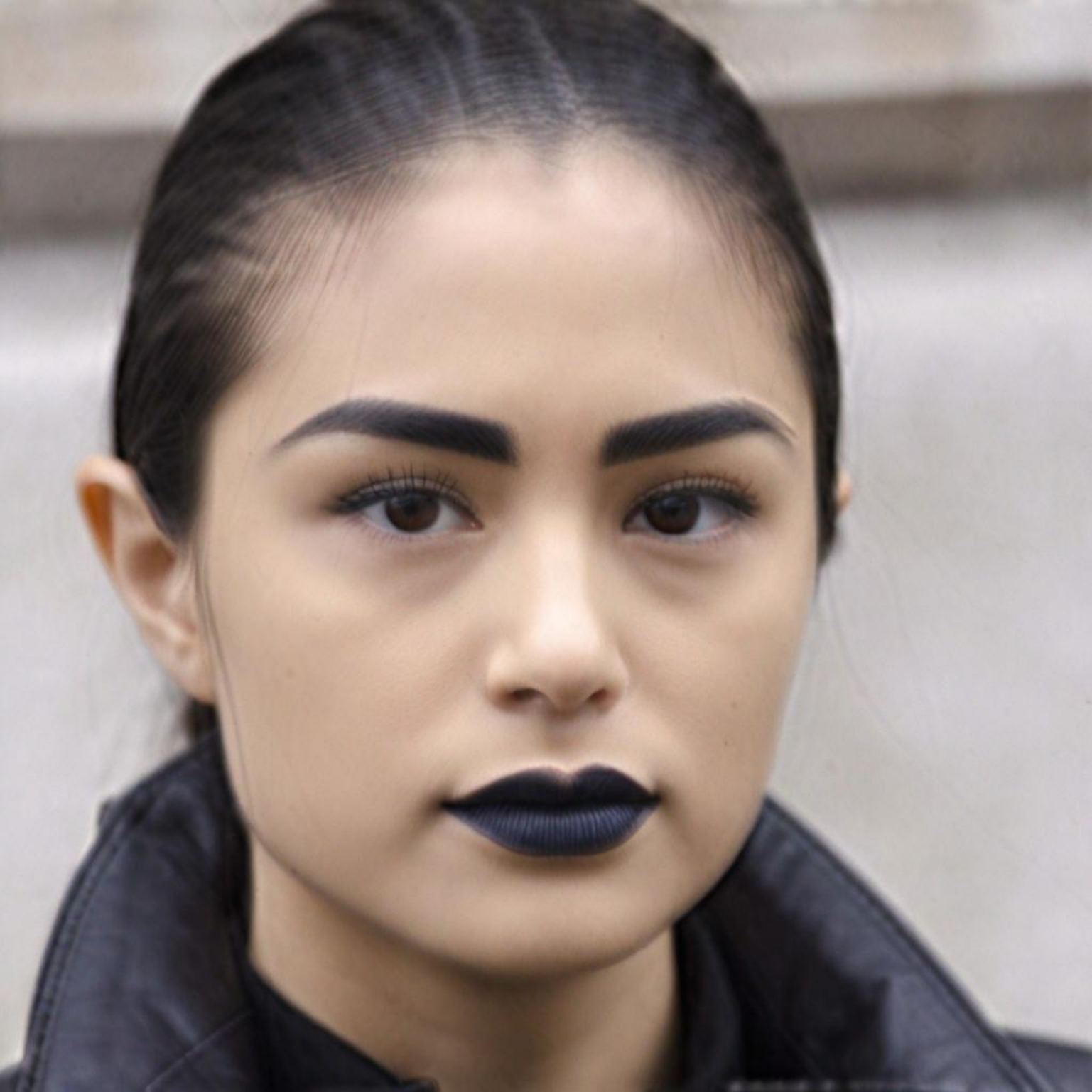} & 
            \includegraphics[width=0.40\textwidth]{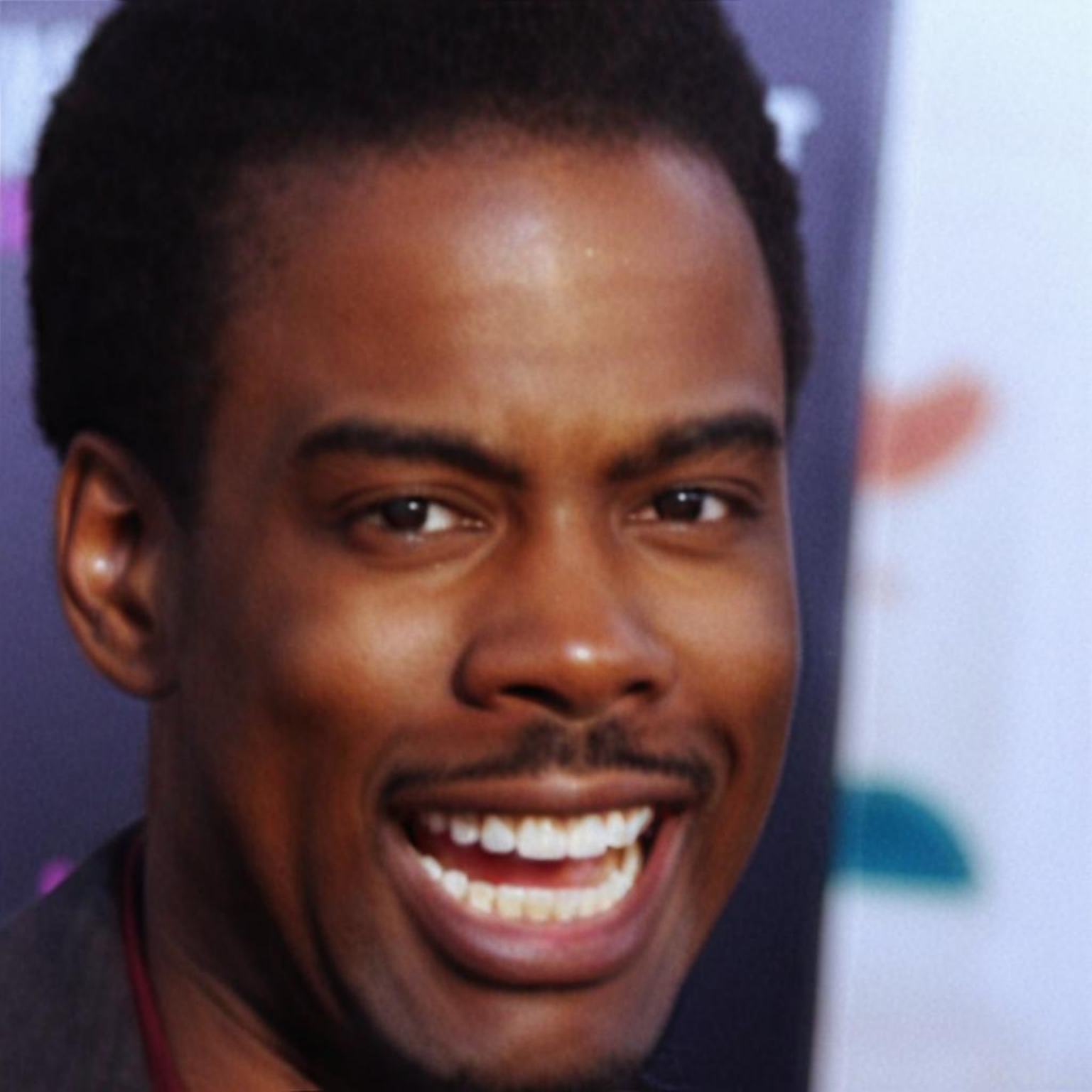} \\
        \end{tabular}
    \end{center}

    \caption{Gaussian deblur FFHQ-1024 LATINO-PRO.}
    \label{fig:FFHQ_Deblur}
\end{figure*}

\begin{figure*}[!h]
    \centering
    \begin{center} 
        \begin{tabular}{c c c} 
            \centering
             \sidecap{Measurements}&\includegraphics[width=0.40\textwidth]{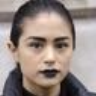} & 
            \includegraphics[width=0.40\textwidth]{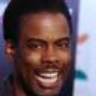} \\

             \sidecap{Ground truth}&\includegraphics[width=0.40\textwidth]{images/Tests/Additional/clean1.pdf} & 
            \includegraphics[width=0.40\textwidth]{images/Tests/Additional/clean0.pdf} \\

             \sidecap{Restored}&\includegraphics[width=0.40\textwidth]{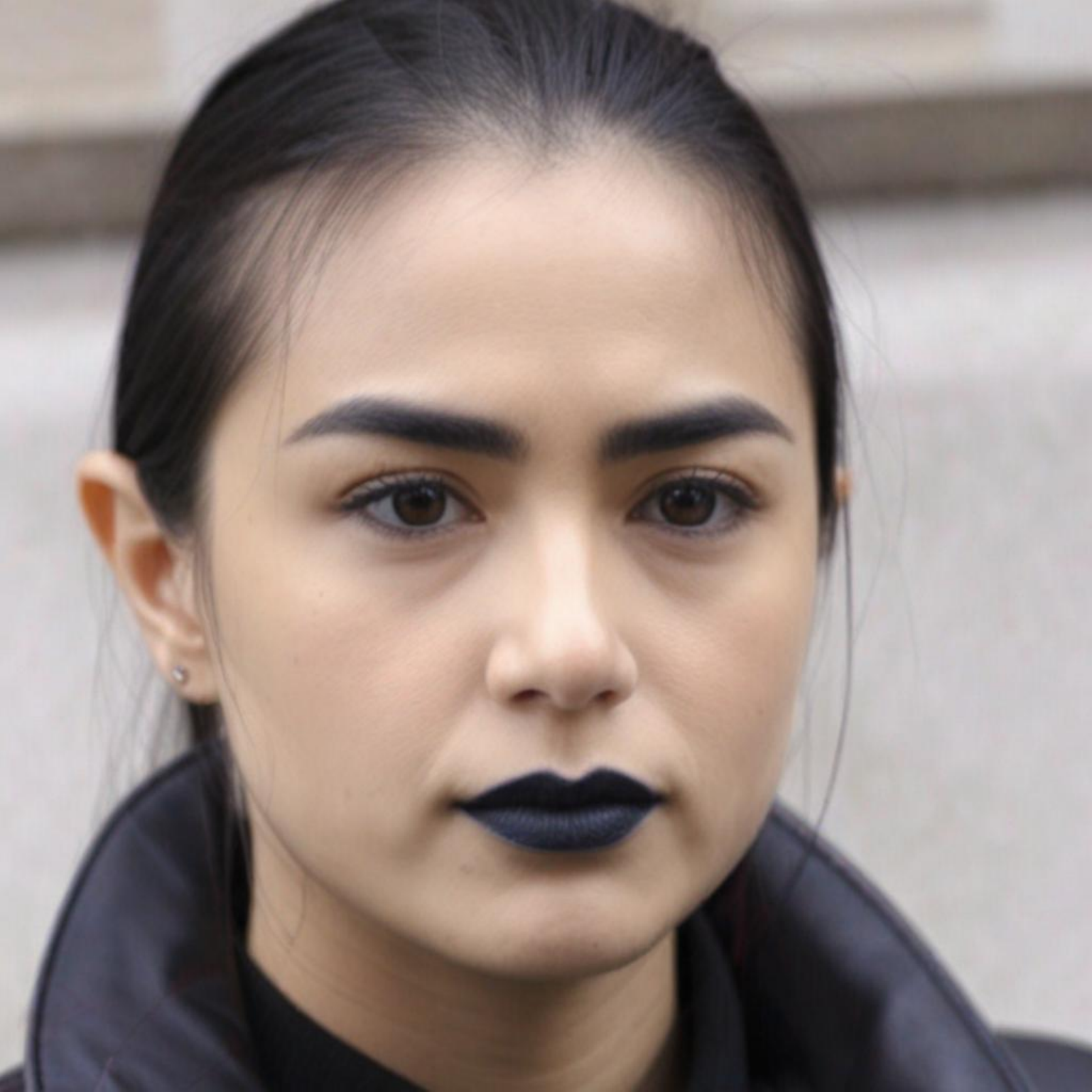} & 
            \includegraphics[width=0.40\textwidth]{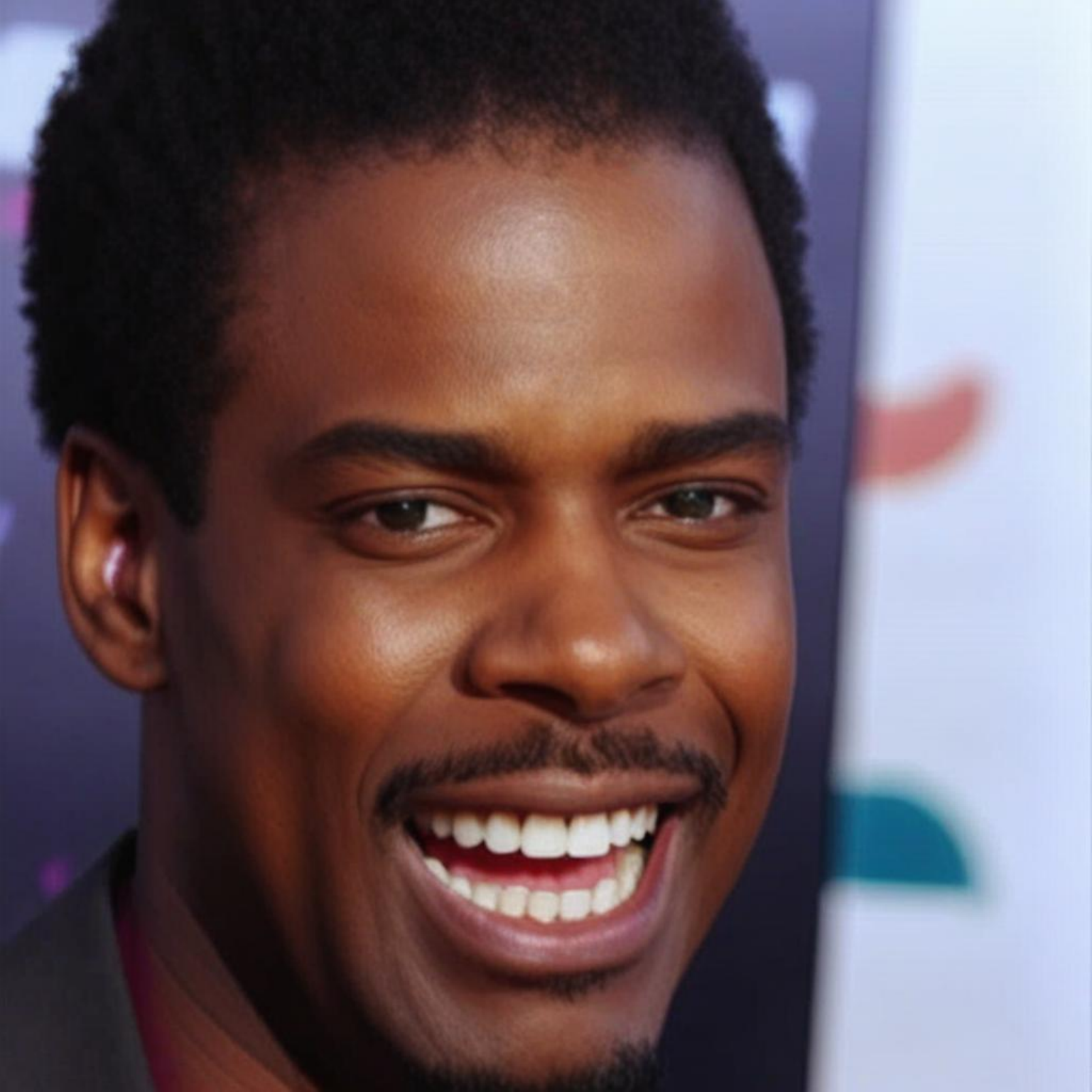} \\
        \end{tabular}
    \end{center}

    \caption{SR$\times16$ FFHQ-1024 LATINO-PRO.}
    \label{fig:FFHQ_SR_sx16}
\end{figure*}

\begin{figure*}[!h]
    \centering
    \begin{center} 
        \begin{tabular}{c c c} 
            \centering
            \sidecap{Measurements}&\includegraphics[width=0.40\textwidth]{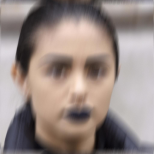} & 
            \includegraphics[width=0.40\textwidth]{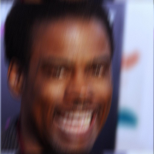} \\

            \sidecap{Ground truth}&\includegraphics[width=0.40\textwidth]{images/Tests/Additional/clean1.pdf} & 
            \includegraphics[width=0.40\textwidth]{images/Tests/Additional/clean0.pdf} \\

            \sidecap{Restored}&\includegraphics[width=0.40\textwidth]{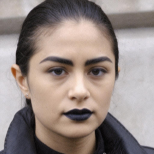} & 
            \includegraphics[width=0.40\textwidth]{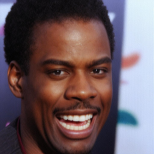} \\
        \end{tabular}
    \end{center}

    \caption{Motion Deblurring FFHQ-1024 LATINO-PRO.}
    \label{fig:FFHQ_motion}
\end{figure*}

\section{Additional results: FFHQ1024}
\label{sec:additional}

In Figures \ref{fig:FFHQ_Deblur}, \ref{fig:FFHQ_SR_sx16} and \ref{fig:FFHQ_motion} we show more tests at the $1024\times 1024$ resolution. We also provide in Table \ref{tab:1024_comparison_FFHQ} the metrics for the FFHQ-1024 1k val dataset, to allow comparisons with future works. The tasks considered are the same of Section \ref{sec:experiments}, adapted to the new resolution, as discussed in \ref{sec:equivalent-HR}.
\newpage

\section{Memory and time consumption}
\label{sec:Memory}

In Table \ref{tab:gpu_time_comparison} we provide an exhaustive comparison of our models with respect to current SOTA in terms of memory consumption and time needed. We implemented versions of P2L and TReg starting from the official codebases as described in \cite{Chung2023PrompttuningLD, Kim2023RegularizationBT}. For estimating the memory consumption and speed in the XL versions of these two methods, we adapted such codes to the SDXL prior. As an important remark, we highlight that SD provides float16 implementations (half precision) to speed up and reduce memory usage. However, this implementation does not allow taking gradients with respect to the score and decoder network, as it results in an integer overflow error during the \texttt{torch.autograd()} call. This forces all the implementations below (except for TReg, as it does not require such gradients) to make use of the float32 (full precision) implementation of SDXL, which explains the even bigger overhead in GPU usage and time.
\begin{table}[!h]
\centering
\small
\begin{tabular}{lccc}
\toprule
\textbf{Method} & \textbf{GPU (Gb)} & \textbf{Time (s)} & \textbf{Resolution}\\
\hline
\textbf{LATINO} & 13.6 & 5.53 & $1024^2$ \\

\textbf{LATINO-PRO} & 23.4 & 48.8 & $1024^2$ \\

\textbf{LATINO-LoRA} & 4.16 & 2.89 & $512^2$ \\
\hline
\textbf{TReg} & 7.75 & 60.5 & $512^2$ \\

\textbf{P2L} & 8.18 & 402 & $512^2$ \\

\textbf{LDPS} & 8.16 & 279 & $512^2$ \\

\textbf{PSLD} & 9.44 & 326 & $512^2$ \\

\textbf{LDPS-XL} & 56.6 & 1670 & $1024^2$ \\

\textbf{PSLD-XL} & 69.5 & 2200 & $1024^2$ \\

\textbf{TReg-XL} & 33.5 & 240 & $1024^2$ \\

\textbf{P2L-XL} & 57.1 & 6800 & $1024^2$ \\
\bottomrule
\end{tabular}
\caption{GPU Memory and Time consumption comparison}
\label{tab:gpu_time_comparison}
\end{table}

The NFEs considered for each algorithm are the same as shown in Table \ref{tab:512_comparison_AFHQ}. As a reference, we also provide in Table \ref{tab:gpu_priors} the GPU memory consumption of the respective priors, which can be considered as lower bounds. The times are those obtained by running the algorithms on a single Nvidia A100 GPU, averaging the times of the different inverse problems considered.

\begin{table}[!h]
\centering
\begin{tabular}{lcc}
\toprule
\textbf{Prior} \textbf{Method} & \textbf{GPU (Gb)} & \textbf{Resolution}\\
\hline
\textbf{DMD2} & 10.7 & $1024^2$ \\

\textbf{SD1.5} & 3.25 & $512^2$ \\

\textbf{SD1.5LoRA} & 3.84 & $512^2$ \\
\bottomrule
\end{tabular}
\caption{GPU Memory consumption when sampling an image with different generative models priors).}
\label{tab:gpu_priors}
\end{table}

\section{Comparison with TReg}
\label{sec:TReg}

One of the main strengths of SOTA algorithms such as TReg is the possibility of shifting the semantic domain of the reconstruction through the prompt $c$. As we described in Section \ref{sec:SOUL}, our algorithm can obtain the same type of results, providing a useful fast and light tool for image editing. To prove this, we performed experiments on the Food-101 dataset \cite{Bossard2014Food101M} as done in \cite{Kim2023RegularizationBT} as shown in Figure \ref{fig:Food}. The degradations used are Gaussian Deblurring of intensity $\sigma=5.0$ and super-resolution $\times 16$, both with an additional white noise of $\sigma_n=0.01$. All the images are at $512\times512$ resolution, meaning that, as for the AFHQ dataset, we have to rescale the problems to their equivalent $1024\times 1024$ resolution versions as seen in \ref{sec:deblur},\ref{sec:super-res}. In particular, the images obtained are at a higher resolution than the original ones, i.e. $1024\times 1024$.

To further show the improvements in the reconstruction quality with respect to both TReg and P2L, we provide a comparison of the results obtained on the AFHQ val dataset. Together with Table \ref{tab:512_comparison_AFHQ}, we show in Figure \ref{fig:AFHQ_dogs} and \ref{fig:AFHQ_cats} visual examples.

{
\begin{figure*}[!h]
    \centering
    \makebox[0.15\textwidth]{\textbf{Measurement}}
    \hfill
    \makebox[0.15\textwidth]{\textbf{GT}}
    \hfill
    \makebox[0.15\textwidth]{\textbf{Spaghetti}}
    \hfill
    \makebox[0.15\textwidth]{\textbf{Macarons}}
    \hfill
    \makebox[0.15\textwidth]{\textbf{Hamburger}}
    \hfill
    \makebox[0.15\textwidth]{\textbf{Fried rice}}
    
    \begin{minipage}{0.15\textwidth}
        \includegraphics[width=\textwidth]{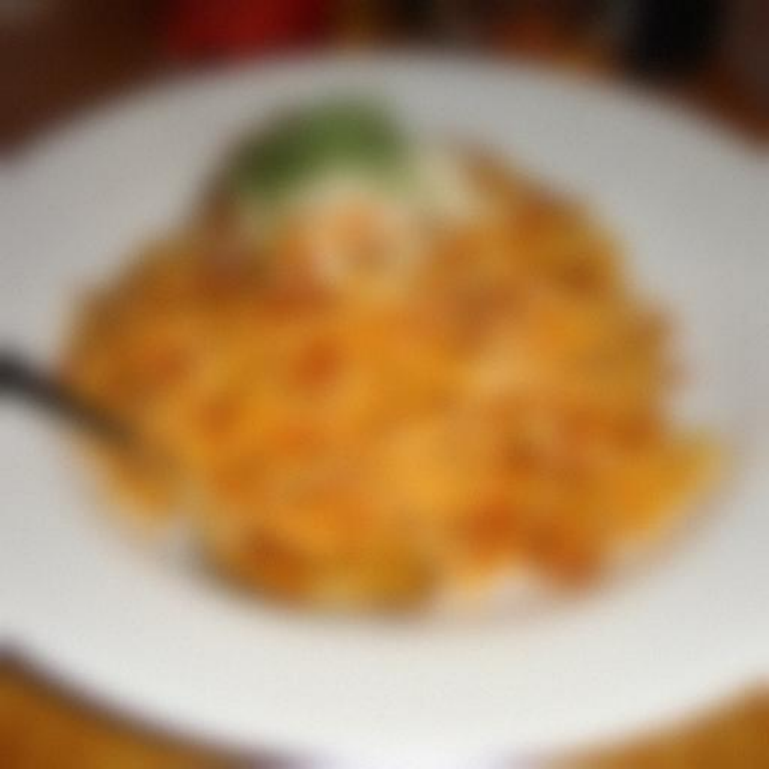}\\
        \includegraphics[width=\textwidth]{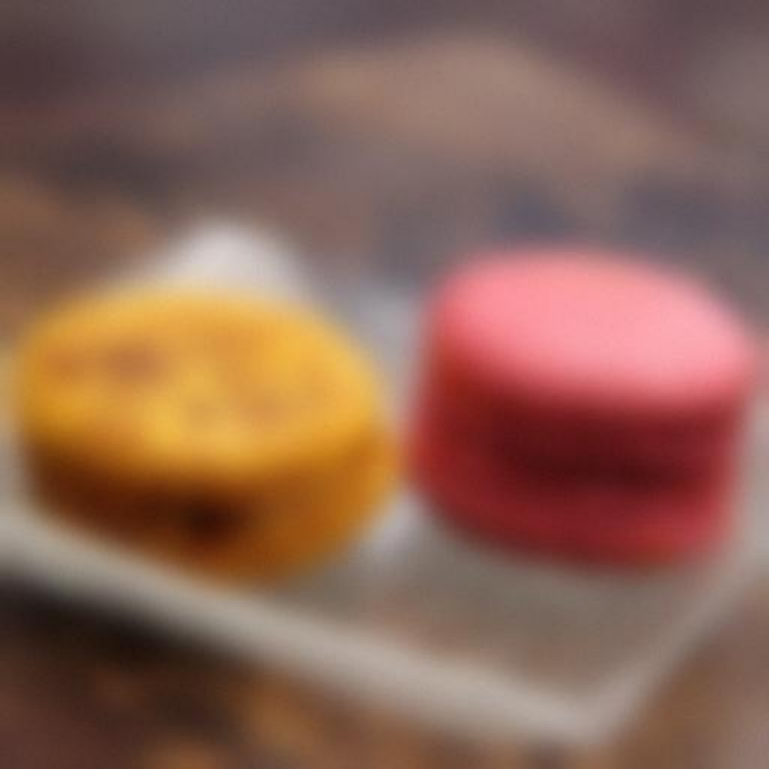}\\
        \includegraphics[width=\textwidth]{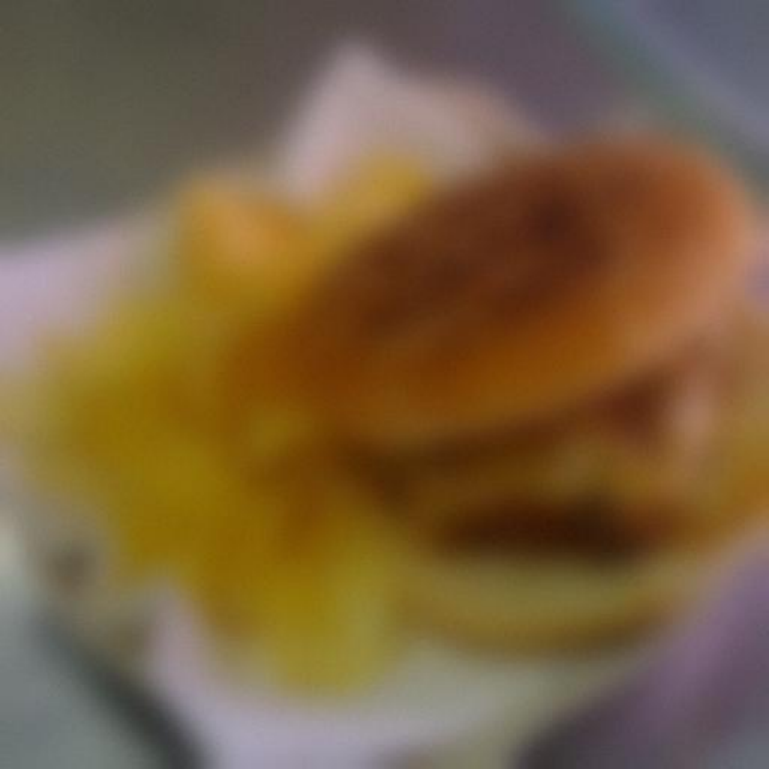}\\
        \includegraphics[width=\textwidth]{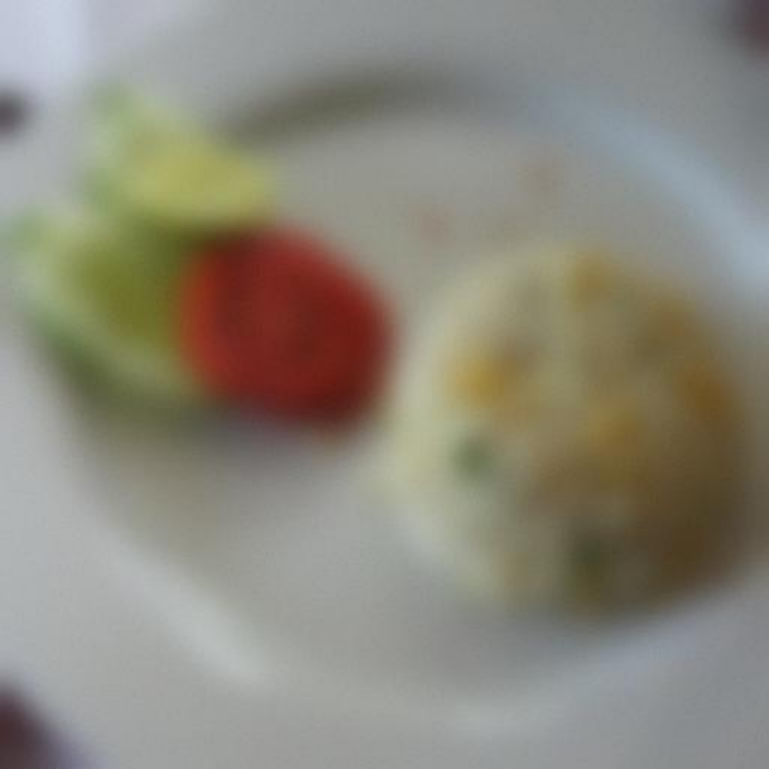}\\
        \includegraphics[width=\textwidth]{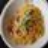}\\
        \includegraphics[width=\textwidth]{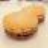}\\
        \includegraphics[width=\textwidth]{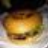}\\
        \includegraphics[width=\textwidth]{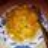}
    \end{minipage}
    \hfill
    \begin{minipage}{0.15\textwidth}
        \includegraphics[width=\textwidth]{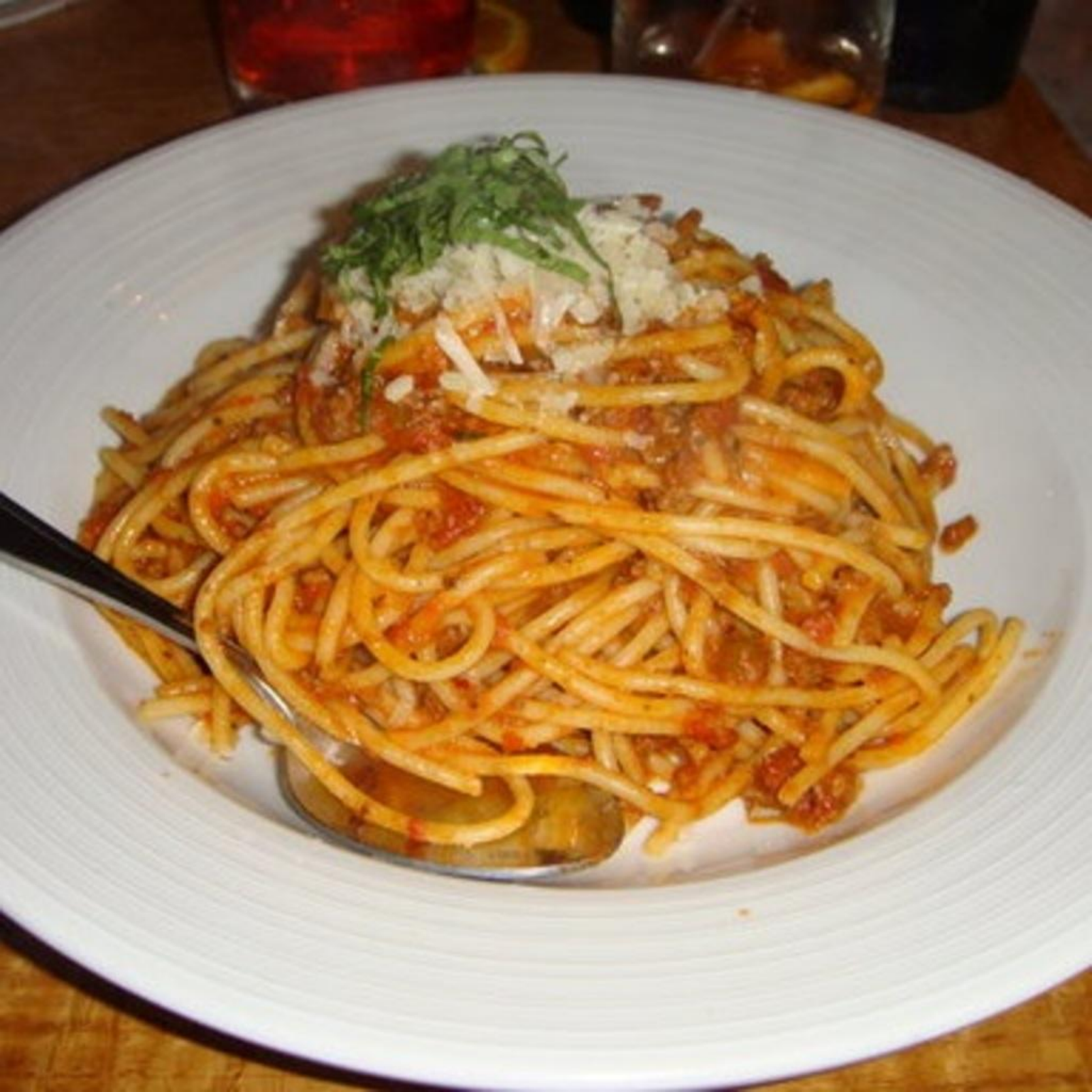} \\
        \includegraphics[width=\textwidth]{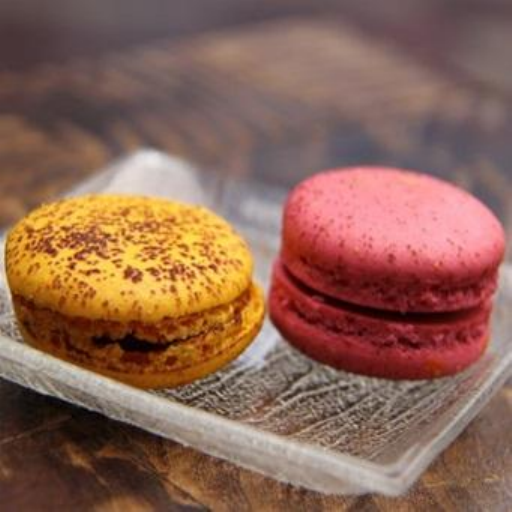} \\
        \includegraphics[width=\textwidth]{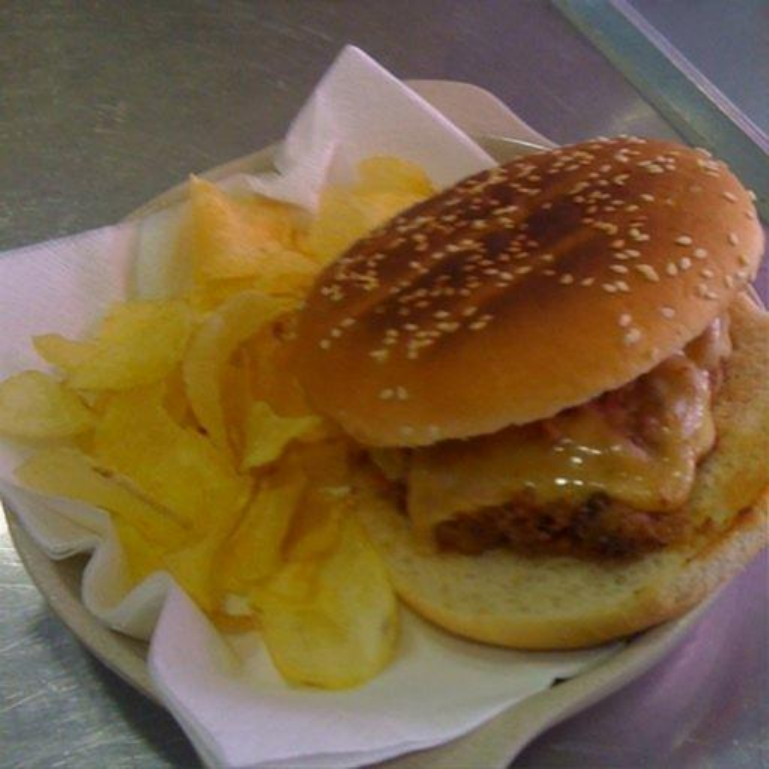} \\
        \includegraphics[width=\textwidth]{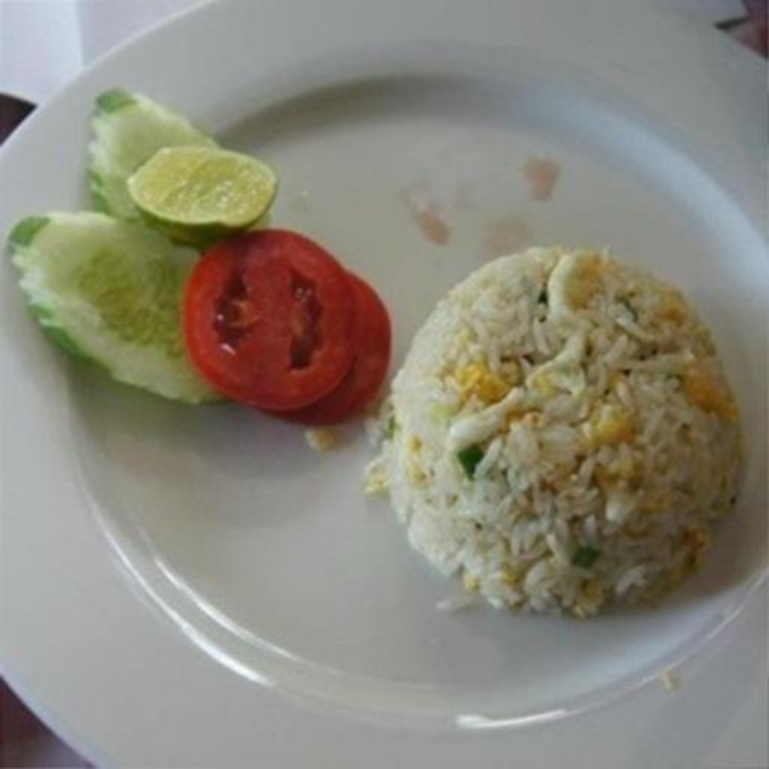} \\
        \includegraphics[width=\textwidth]{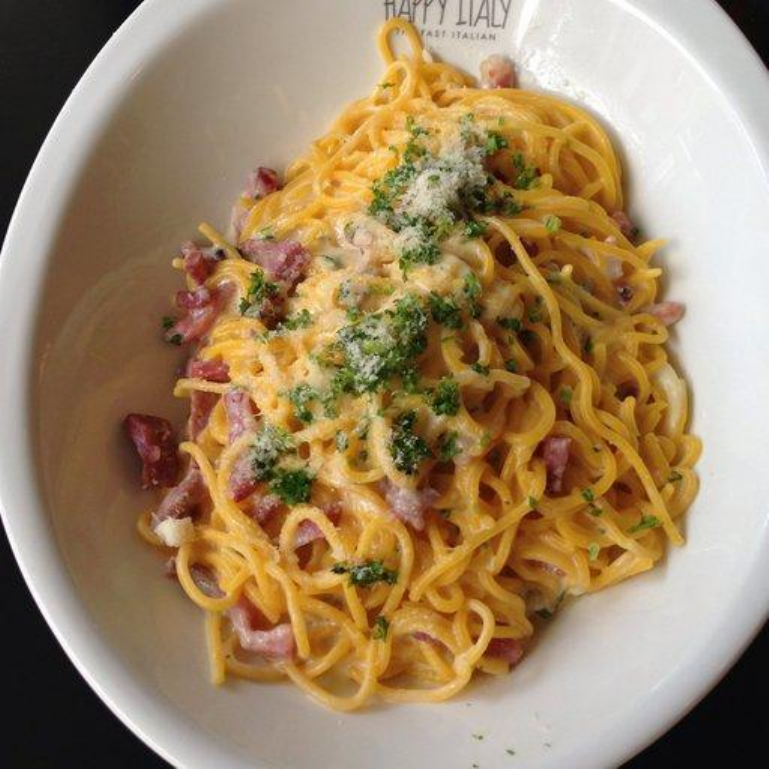} \\
        \includegraphics[width=\textwidth]{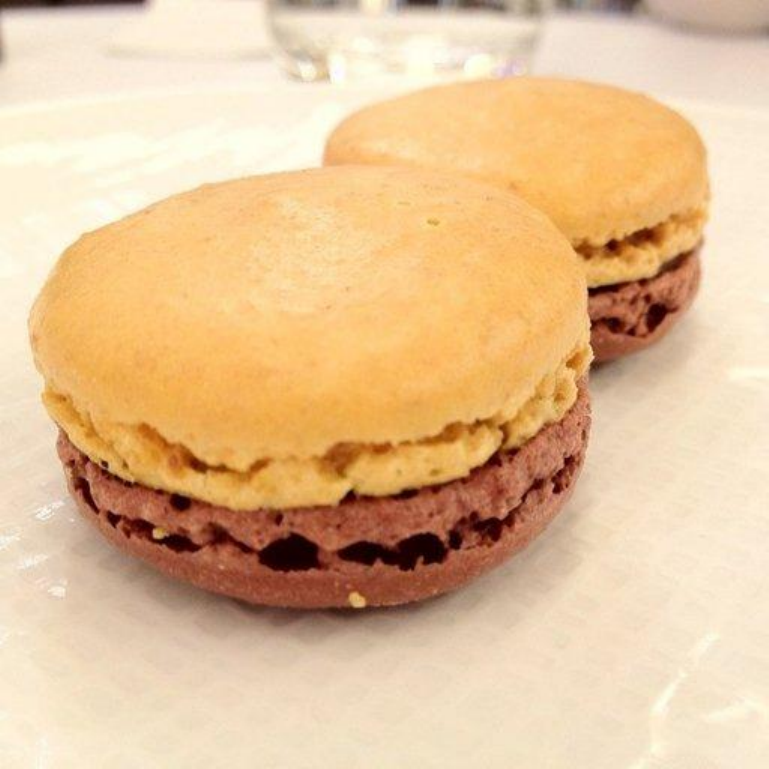} \\
        \includegraphics[width=\textwidth]{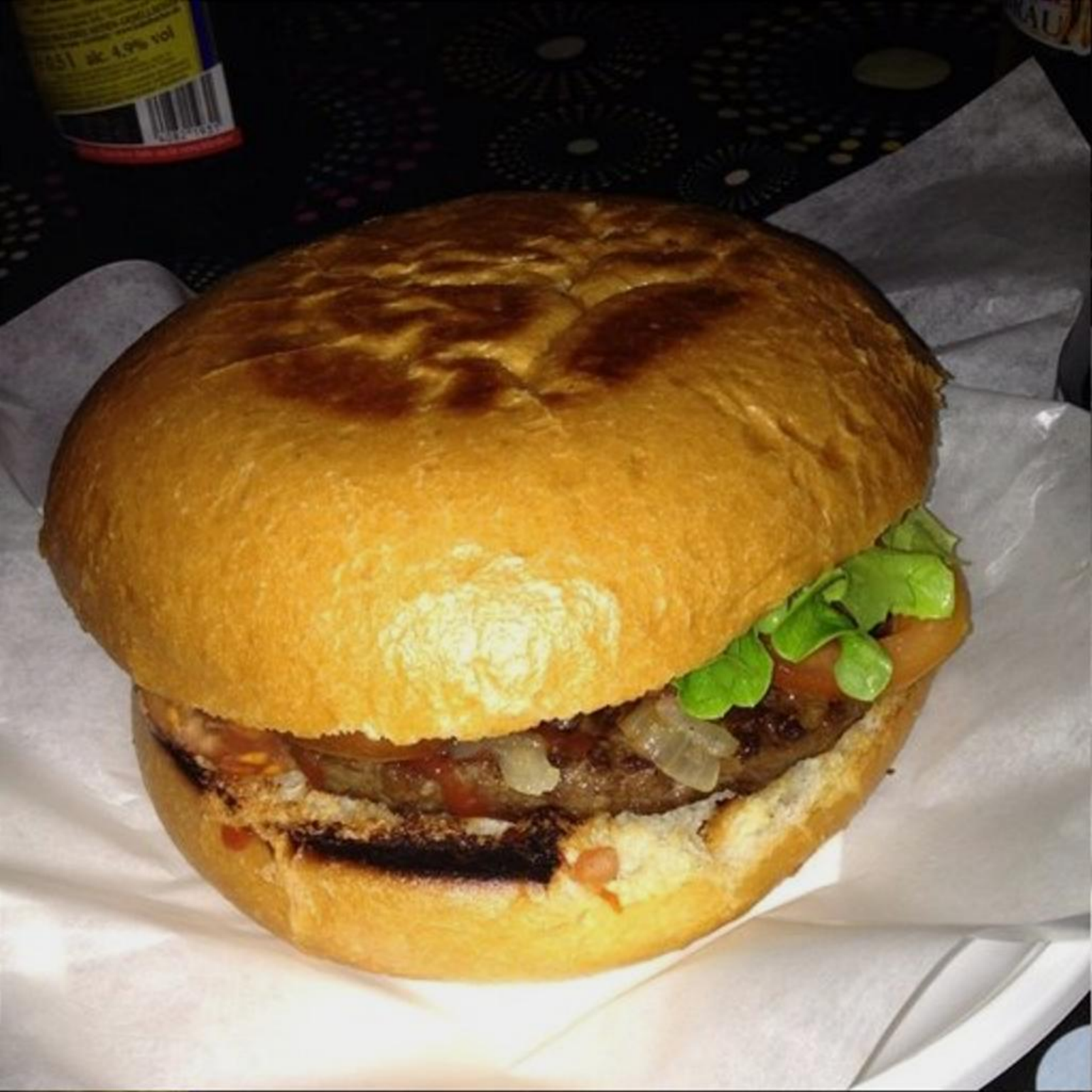} \\
        \includegraphics[width=\textwidth]{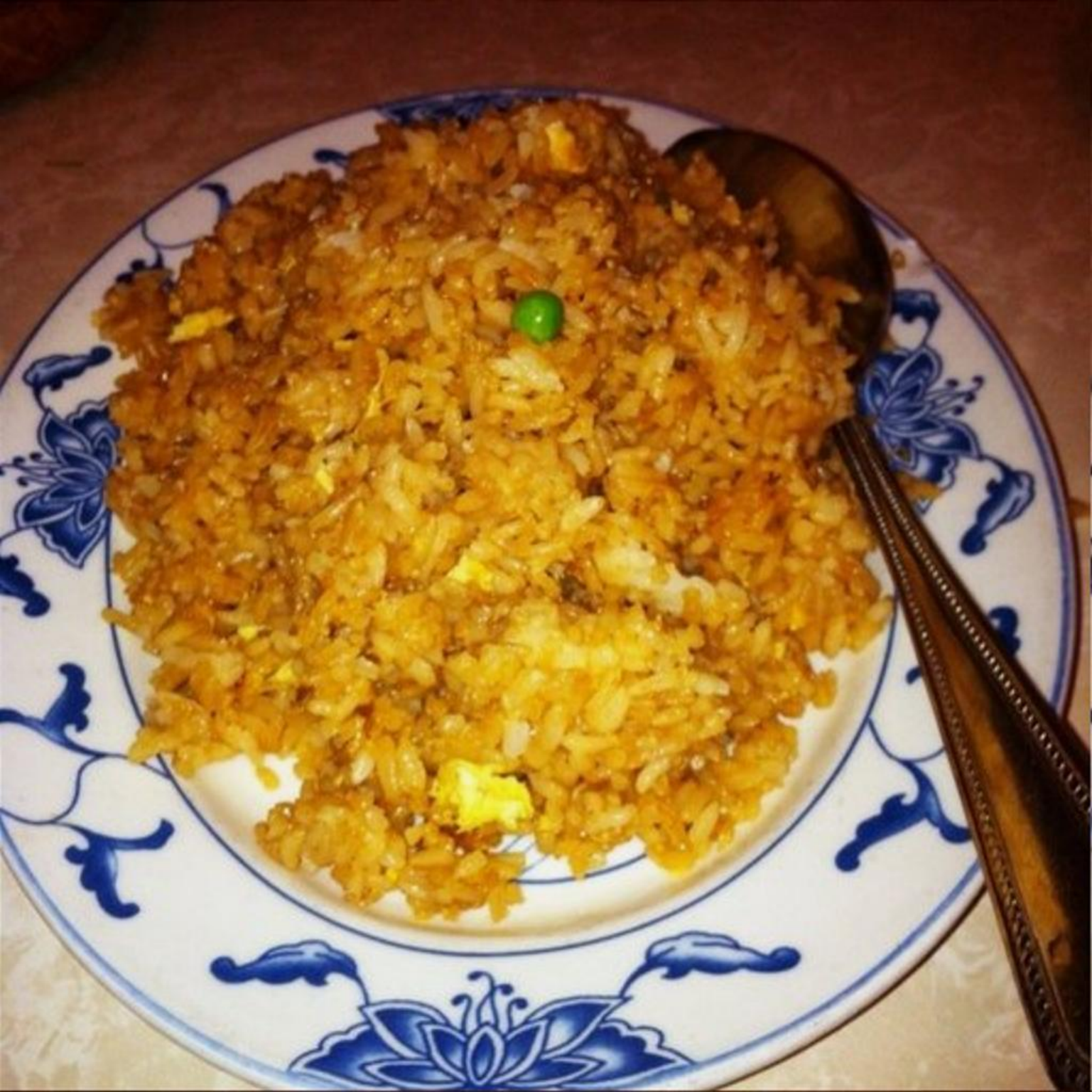}
    \end{minipage}
    \hfill
    \begin{minipage}{0.15\textwidth}
        \includegraphics[width=\textwidth]{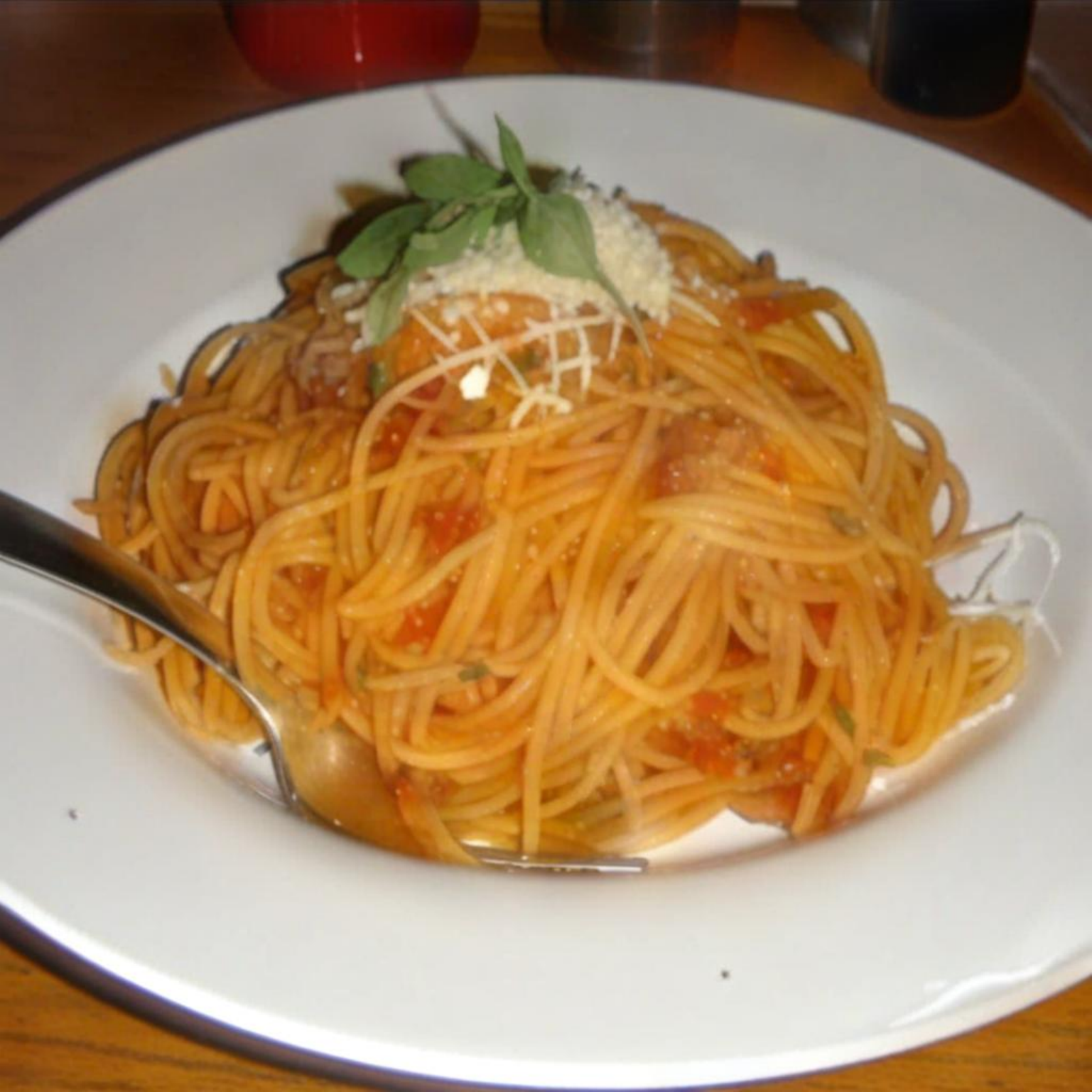} \\
        \includegraphics[width=\textwidth]{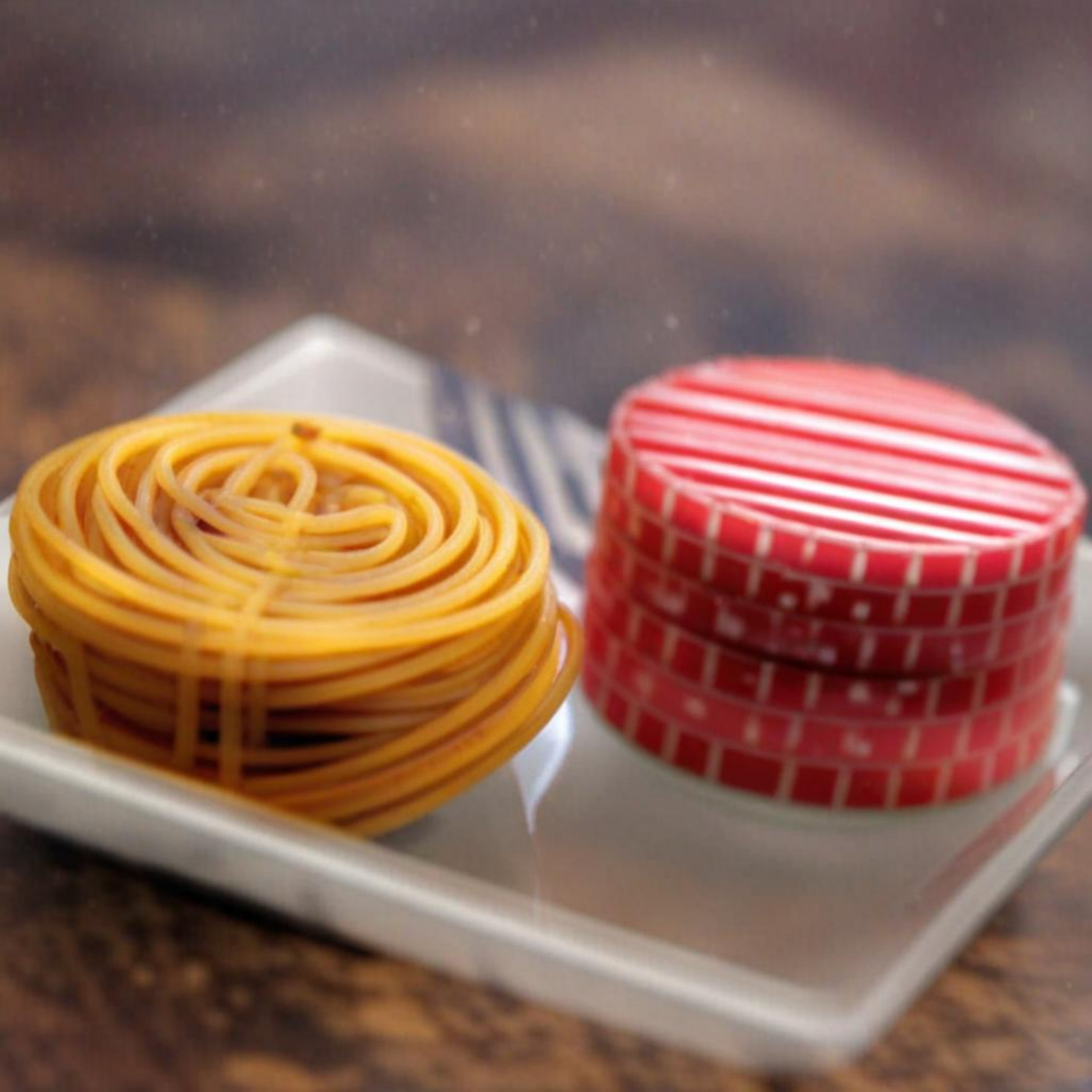} \\
        \includegraphics[width=\textwidth]{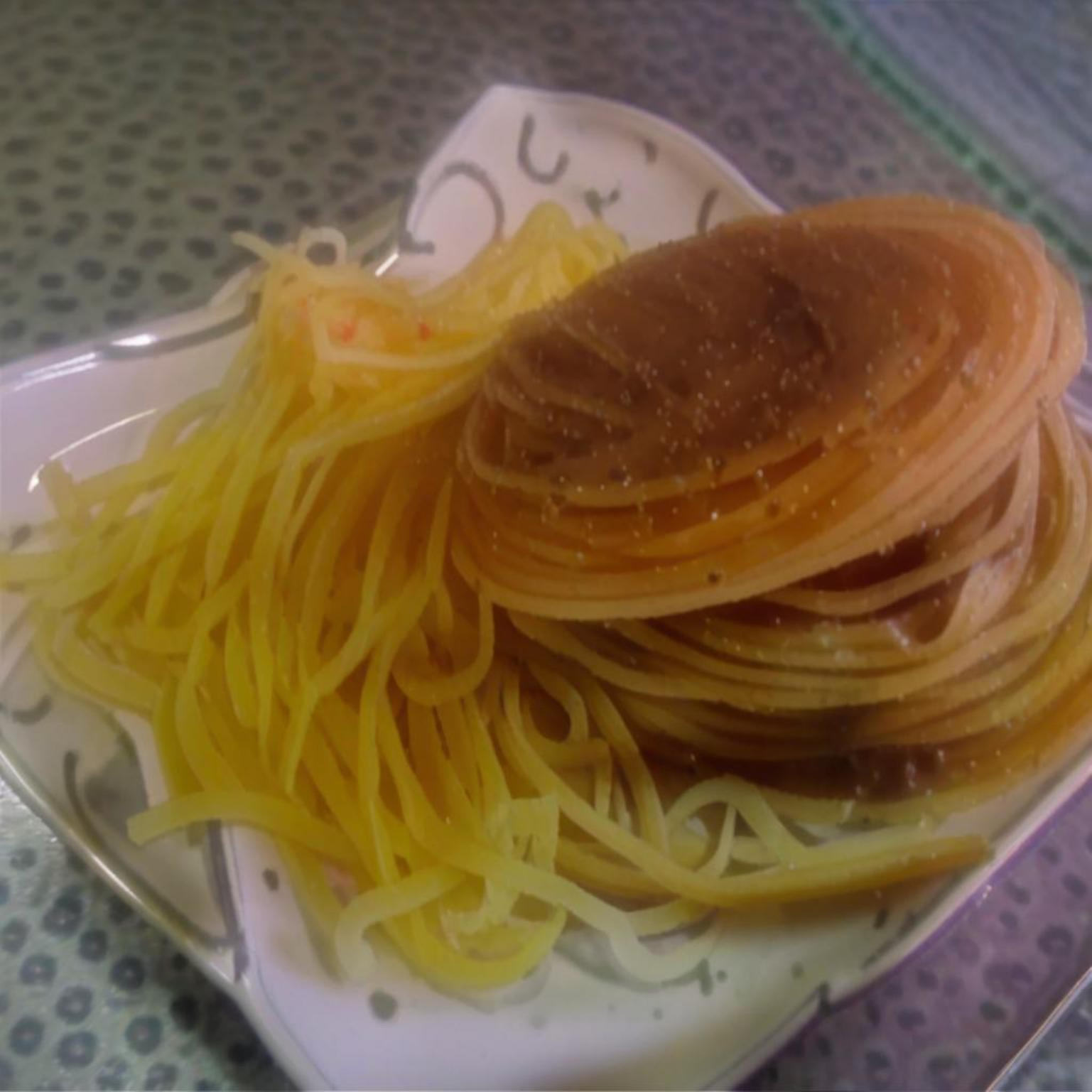} \\
        \includegraphics[width=\textwidth]{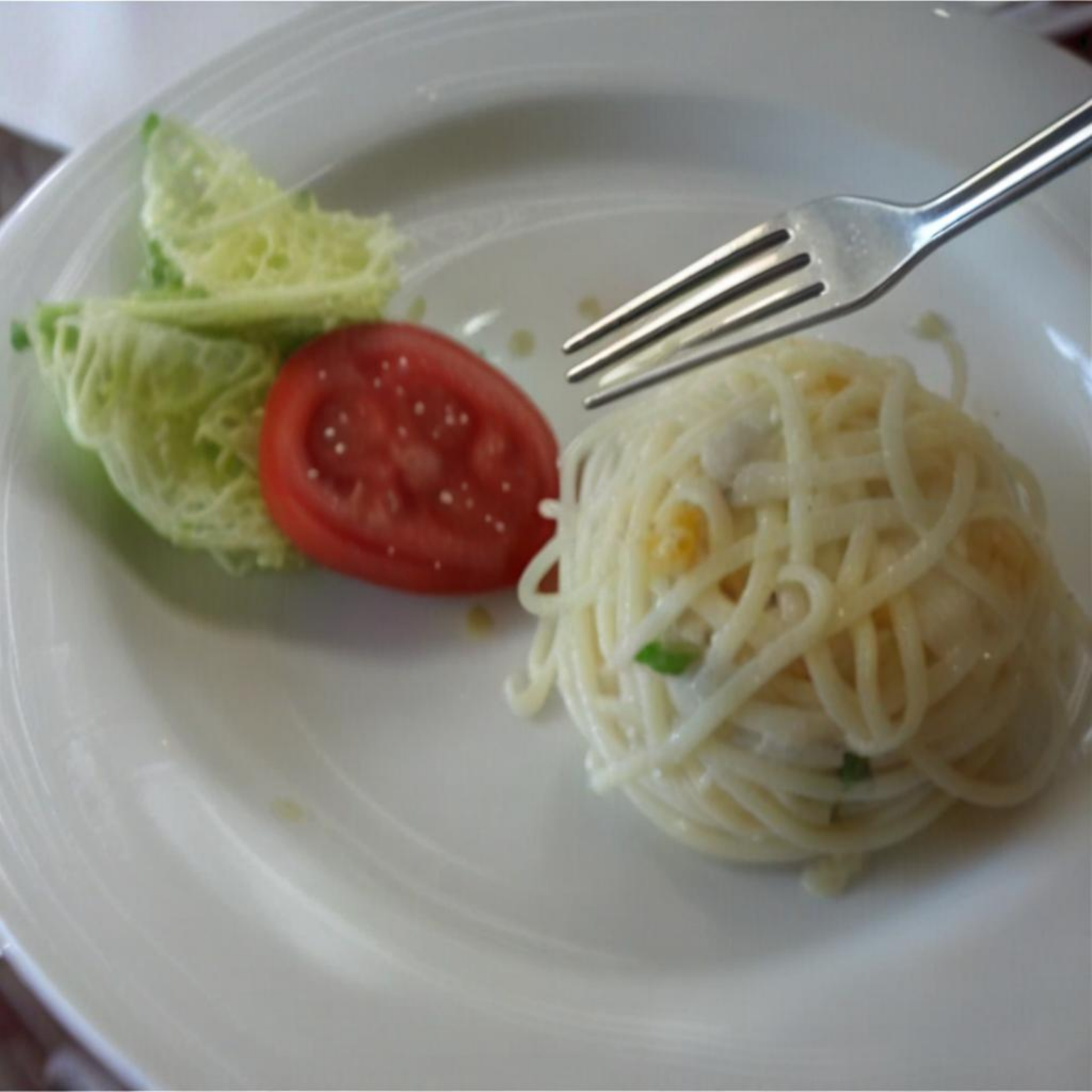} \\
        \includegraphics[width=\textwidth]{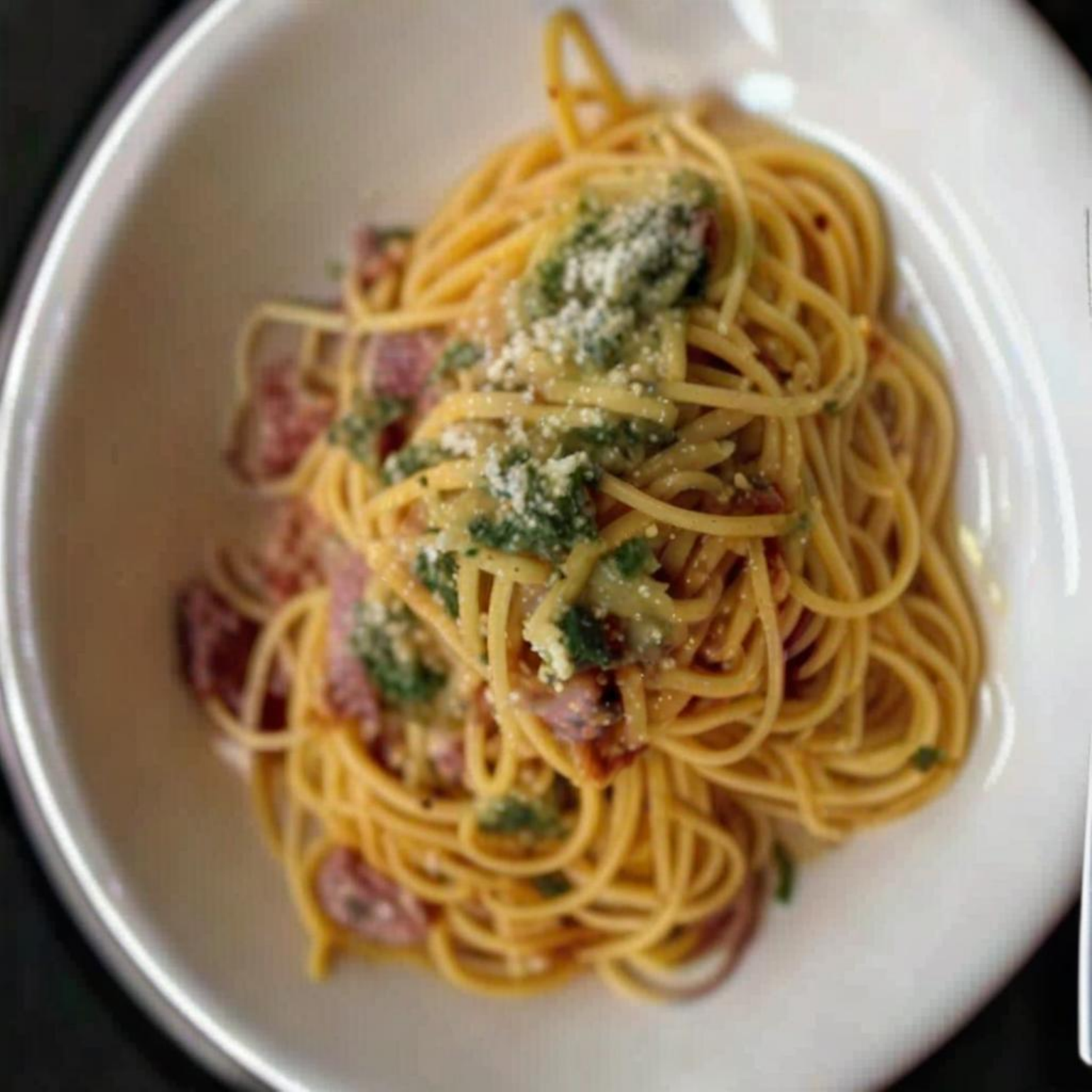}\\
        \includegraphics[width=\textwidth]{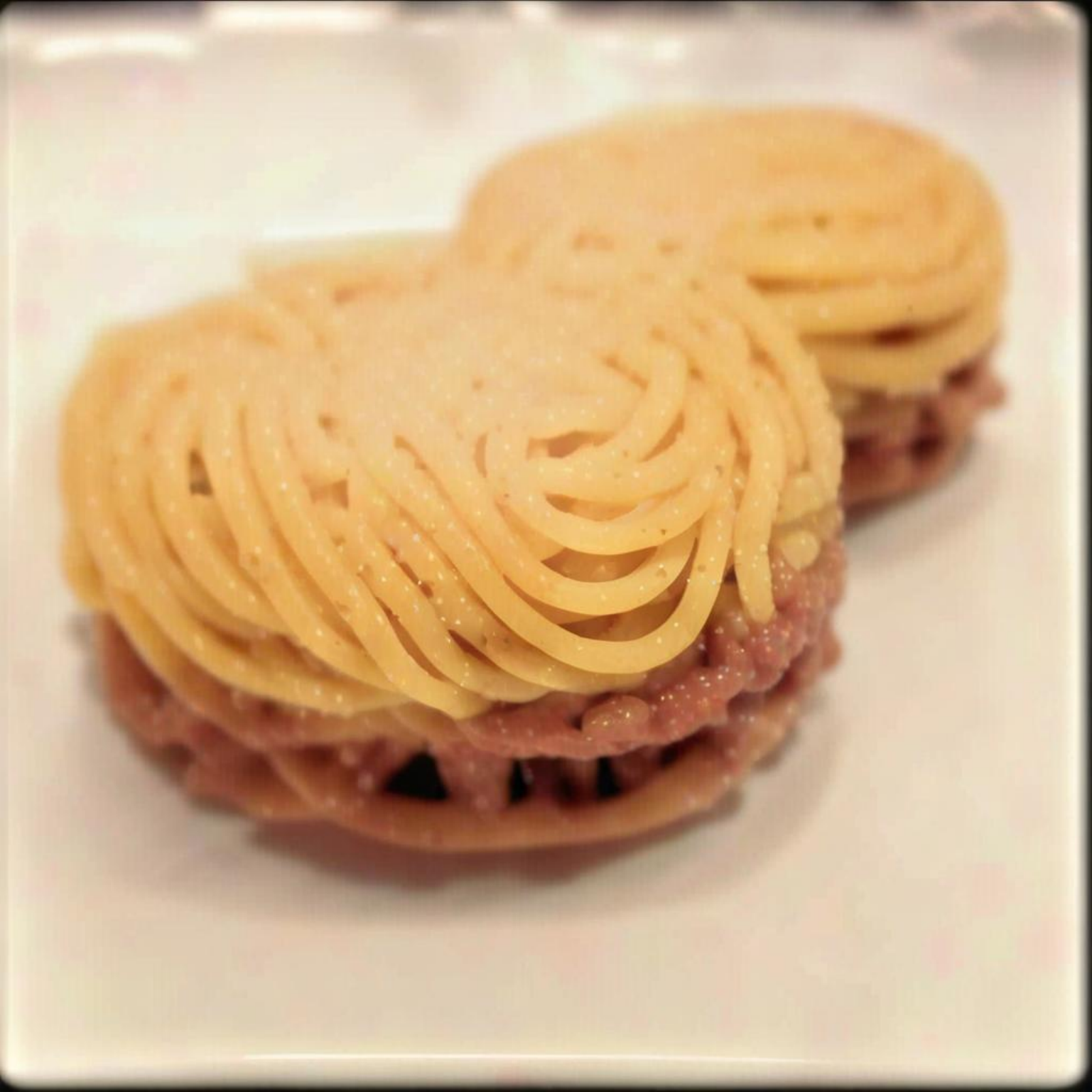}\\
        \includegraphics[width=\textwidth]{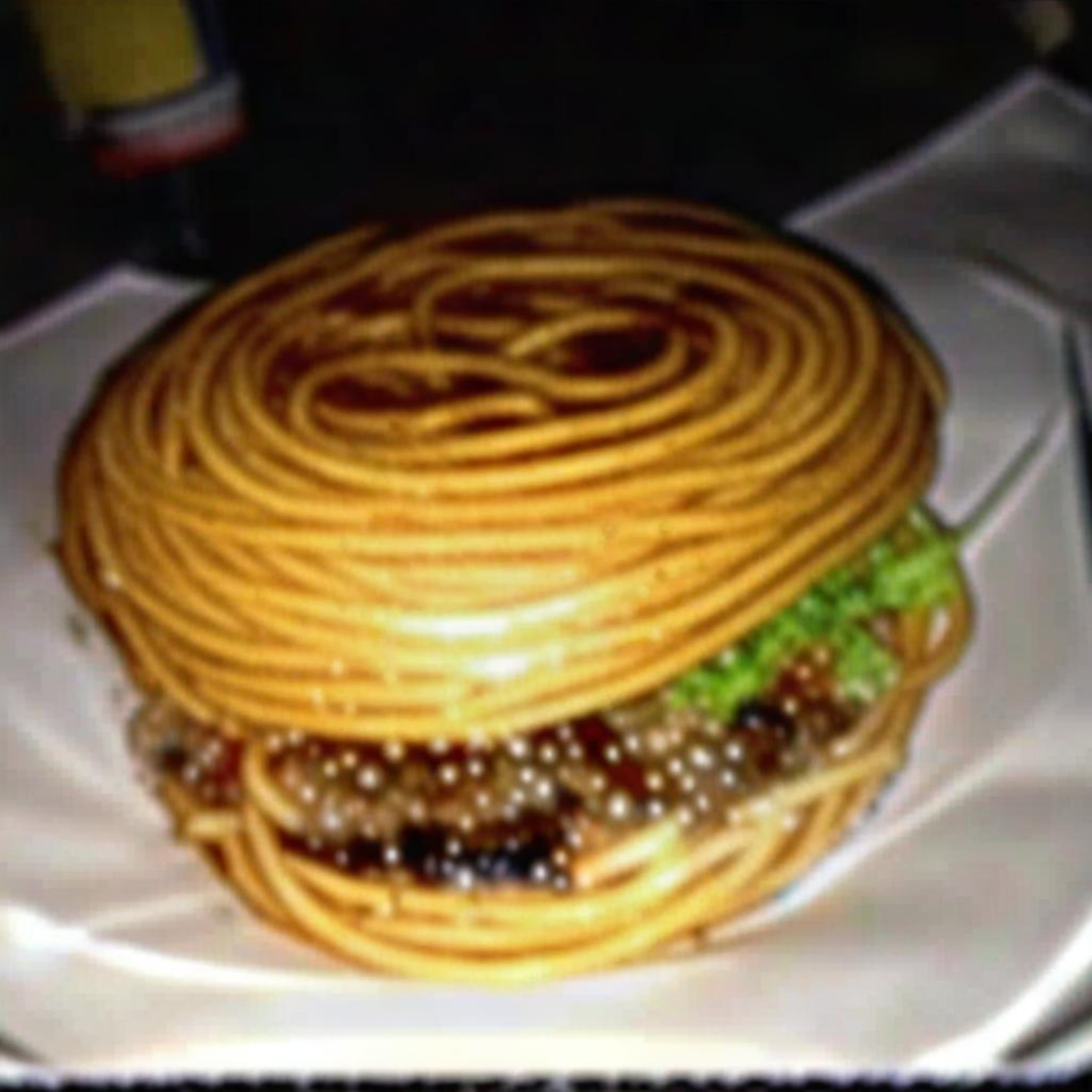}\\
        \includegraphics[width=\textwidth]{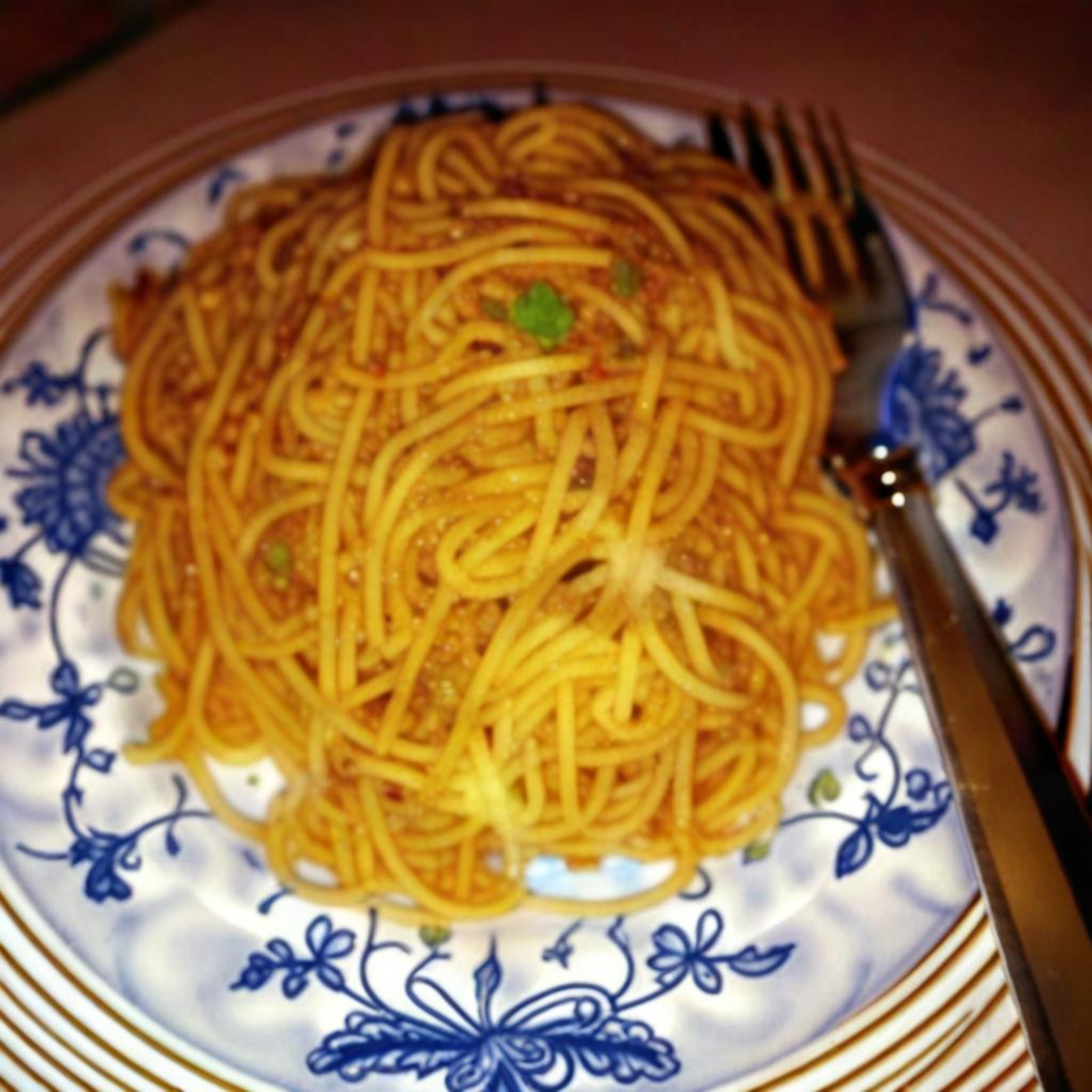}
    \end{minipage}
    \hfill
    \begin{minipage}{0.15\textwidth}
        \includegraphics[width=\textwidth]{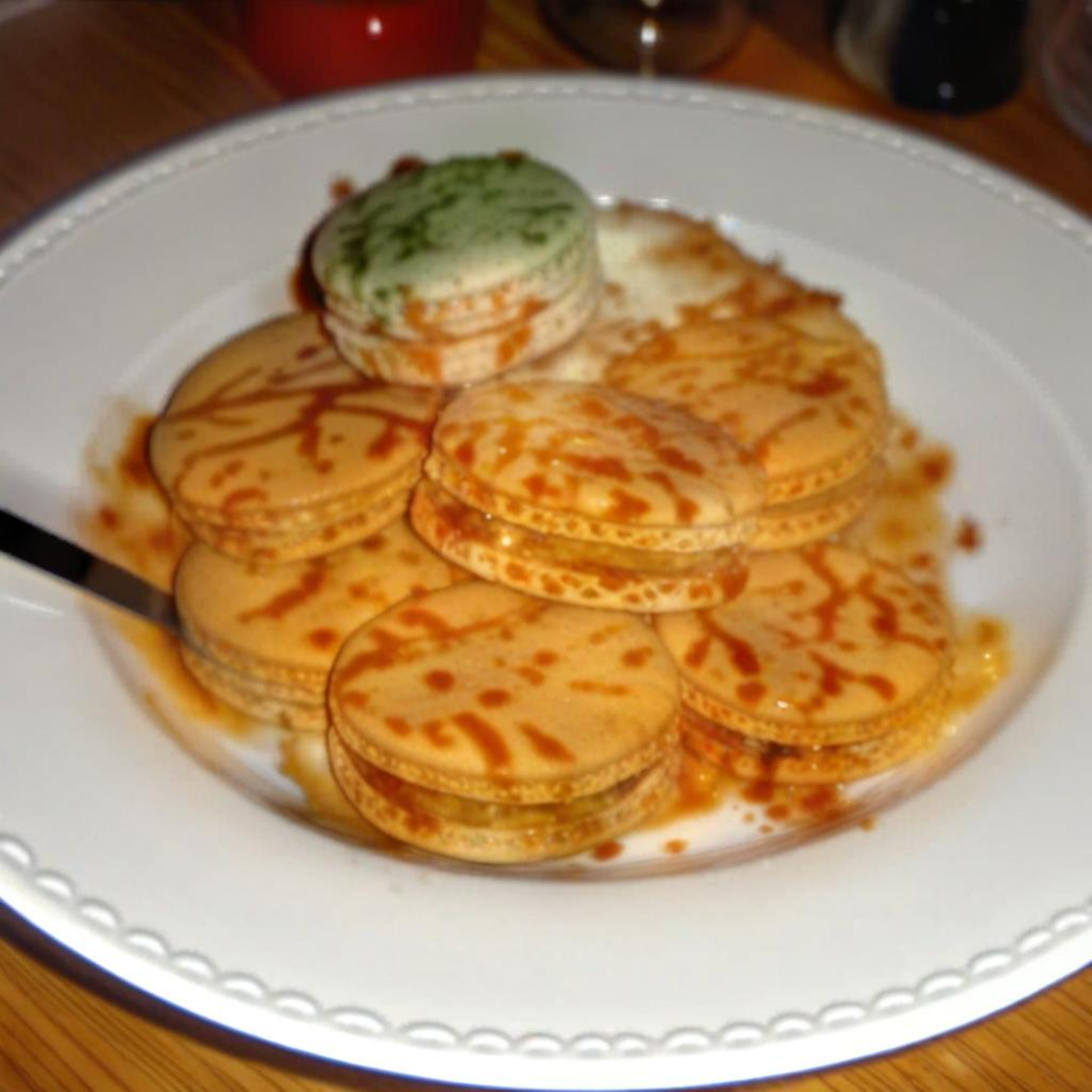}\\
        \includegraphics[width=\textwidth]{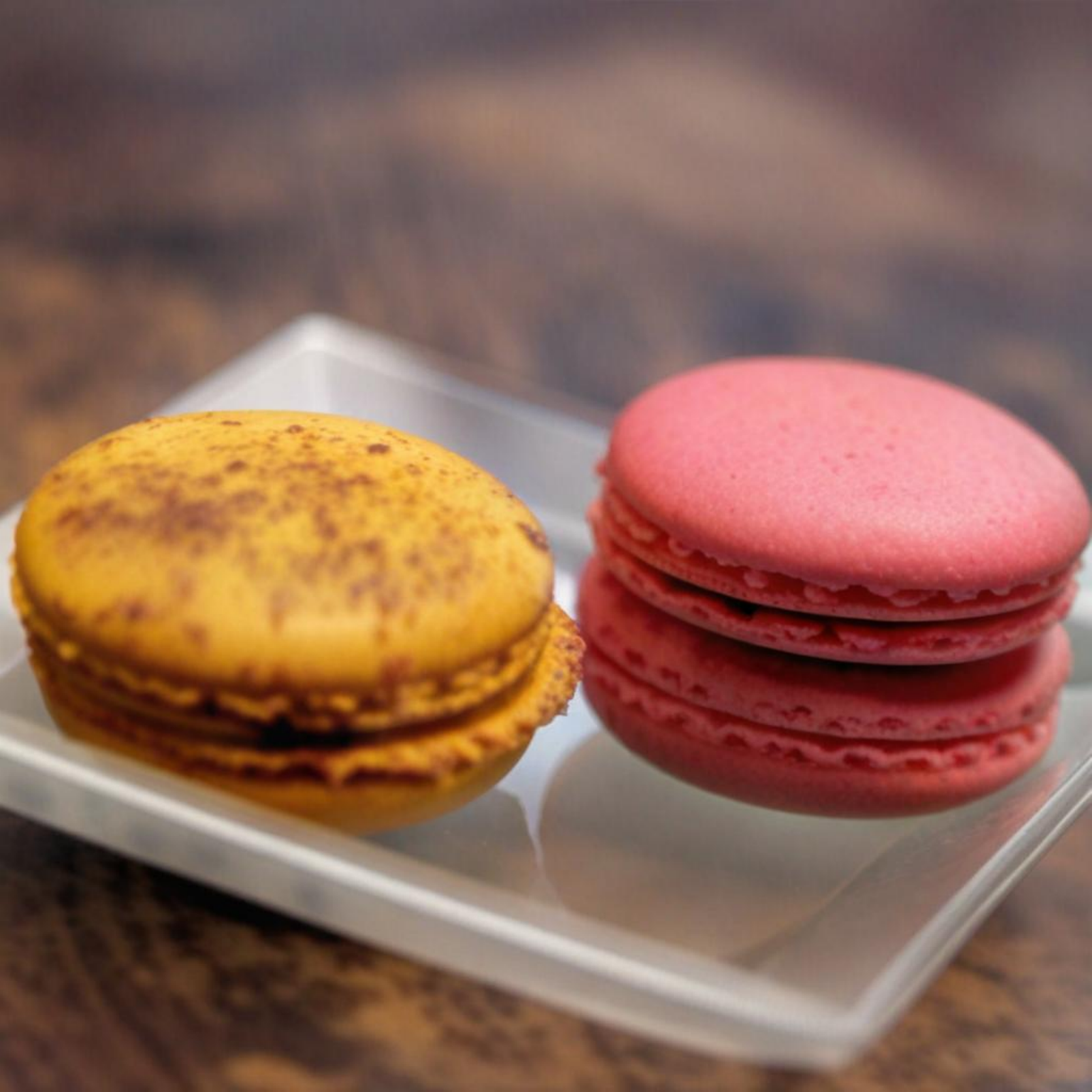}\\
        \includegraphics[width=\textwidth]{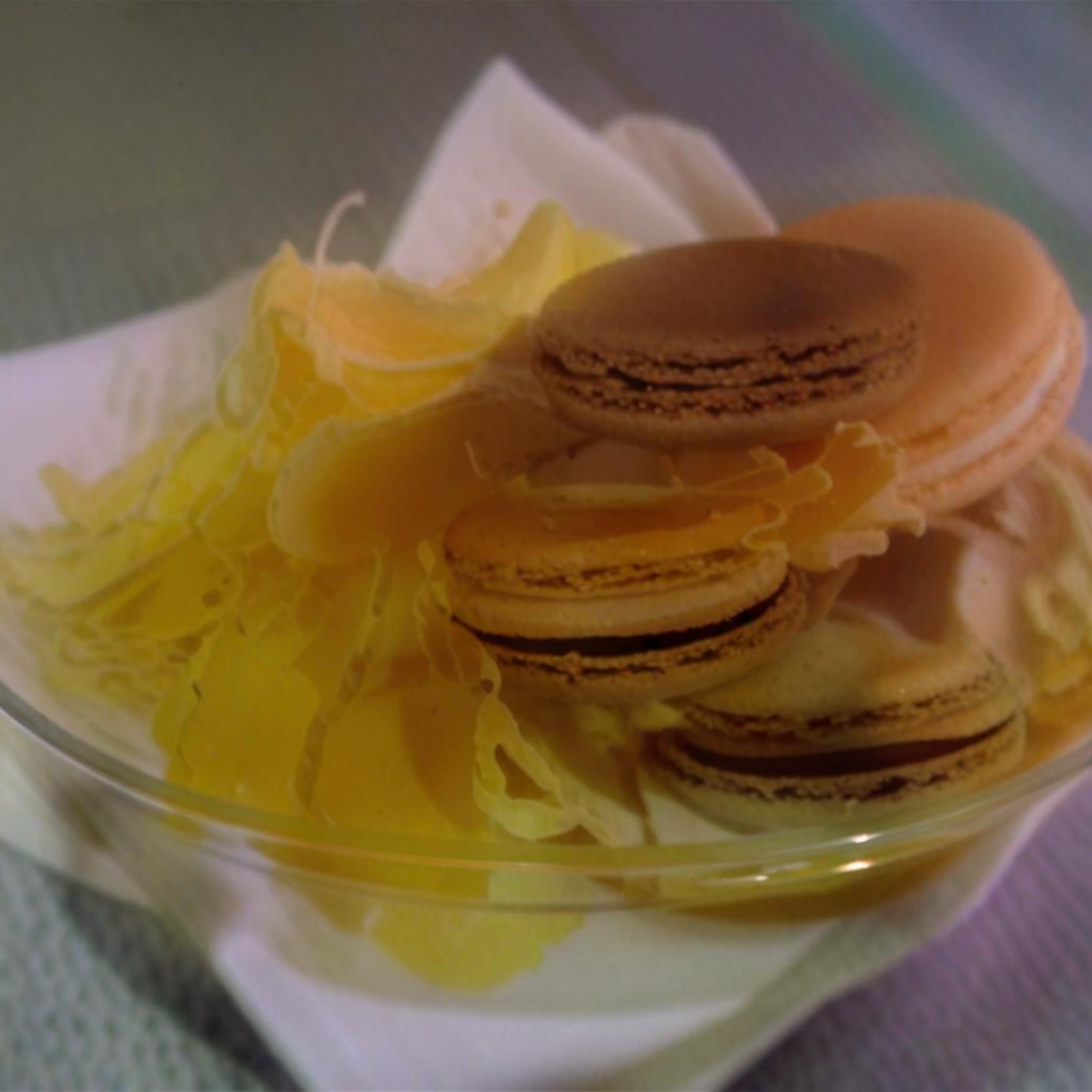}\\
        \includegraphics[width=\textwidth]{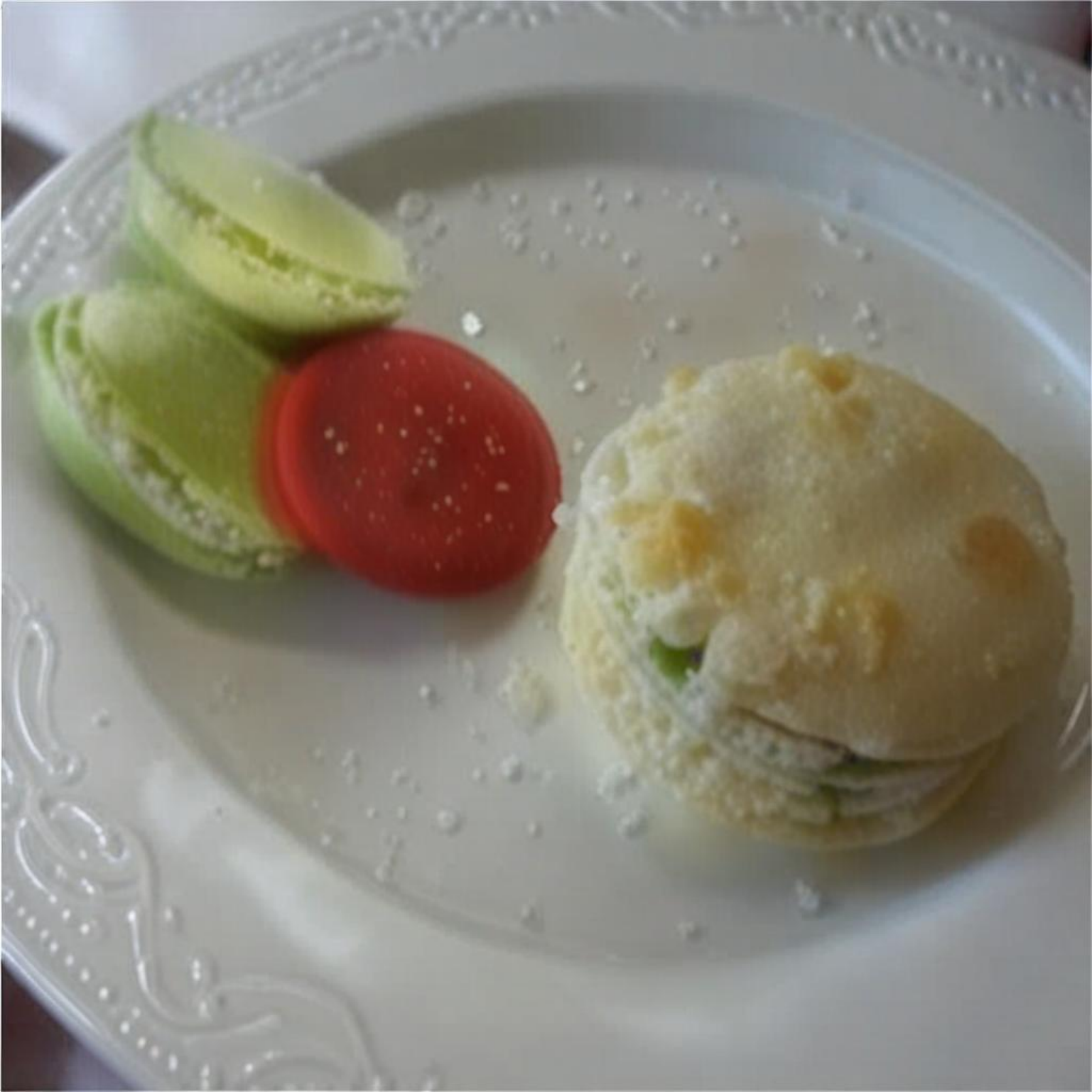}\\
        \includegraphics[width=\textwidth]{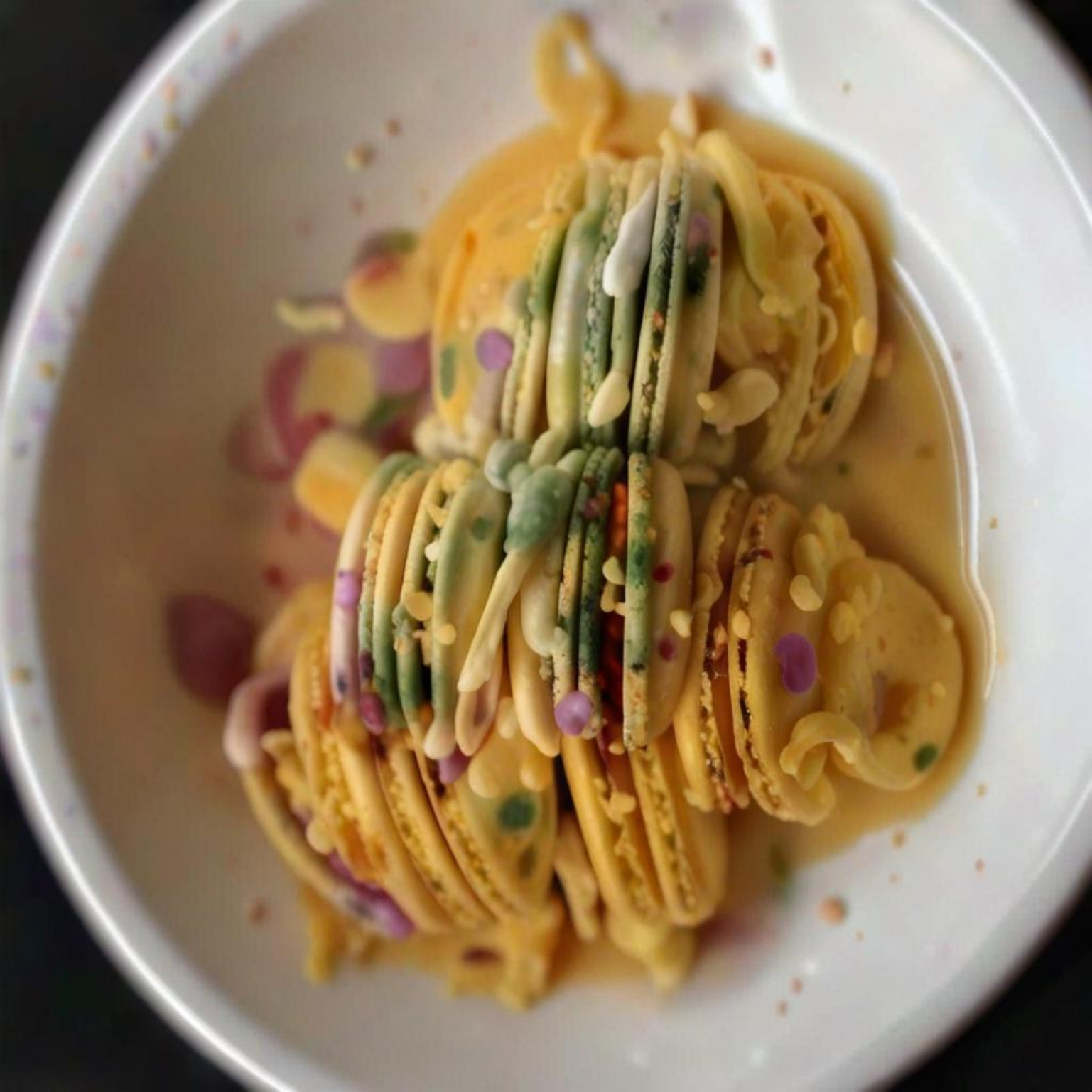}\\
        \includegraphics[width=\textwidth]{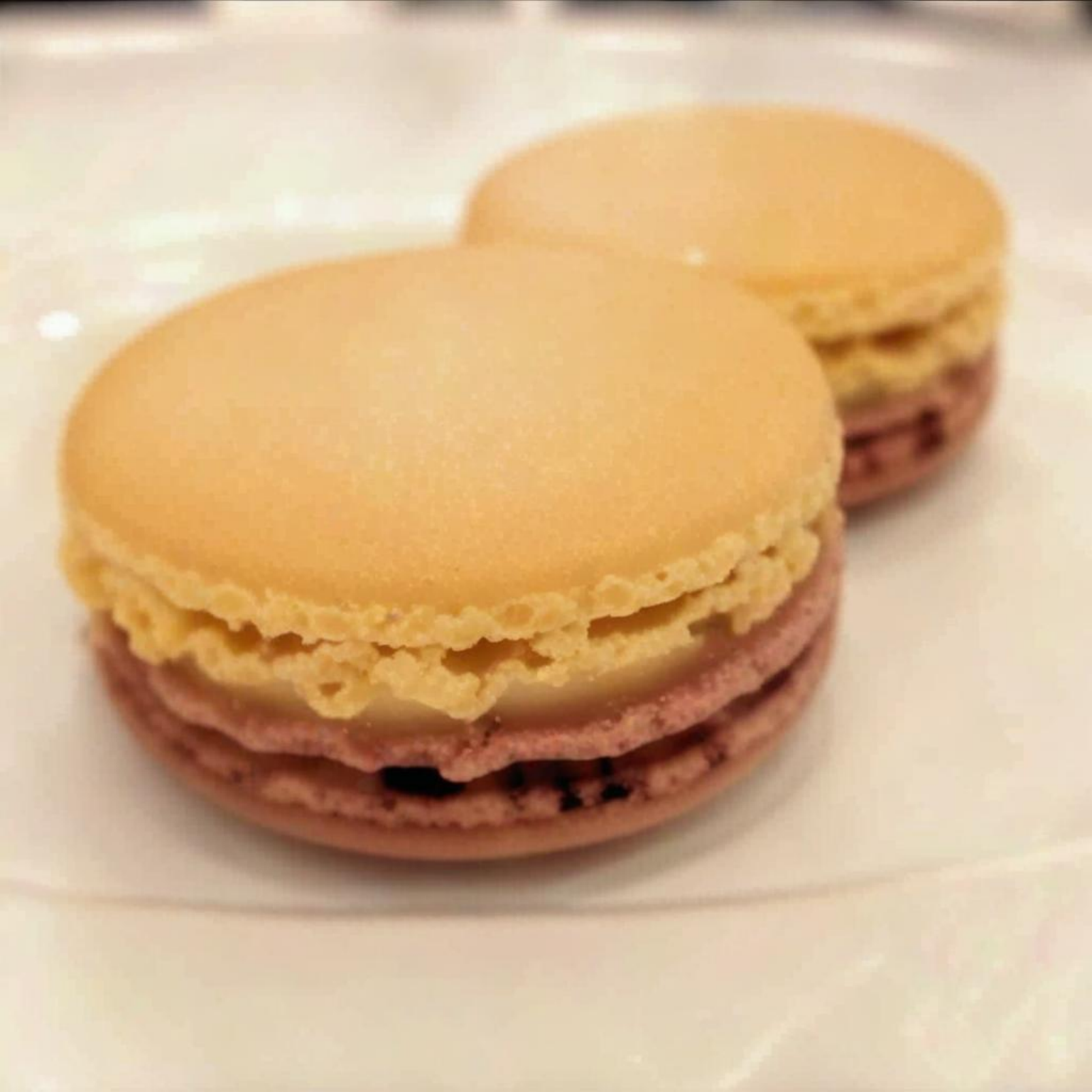}\\
        \includegraphics[width=\textwidth]{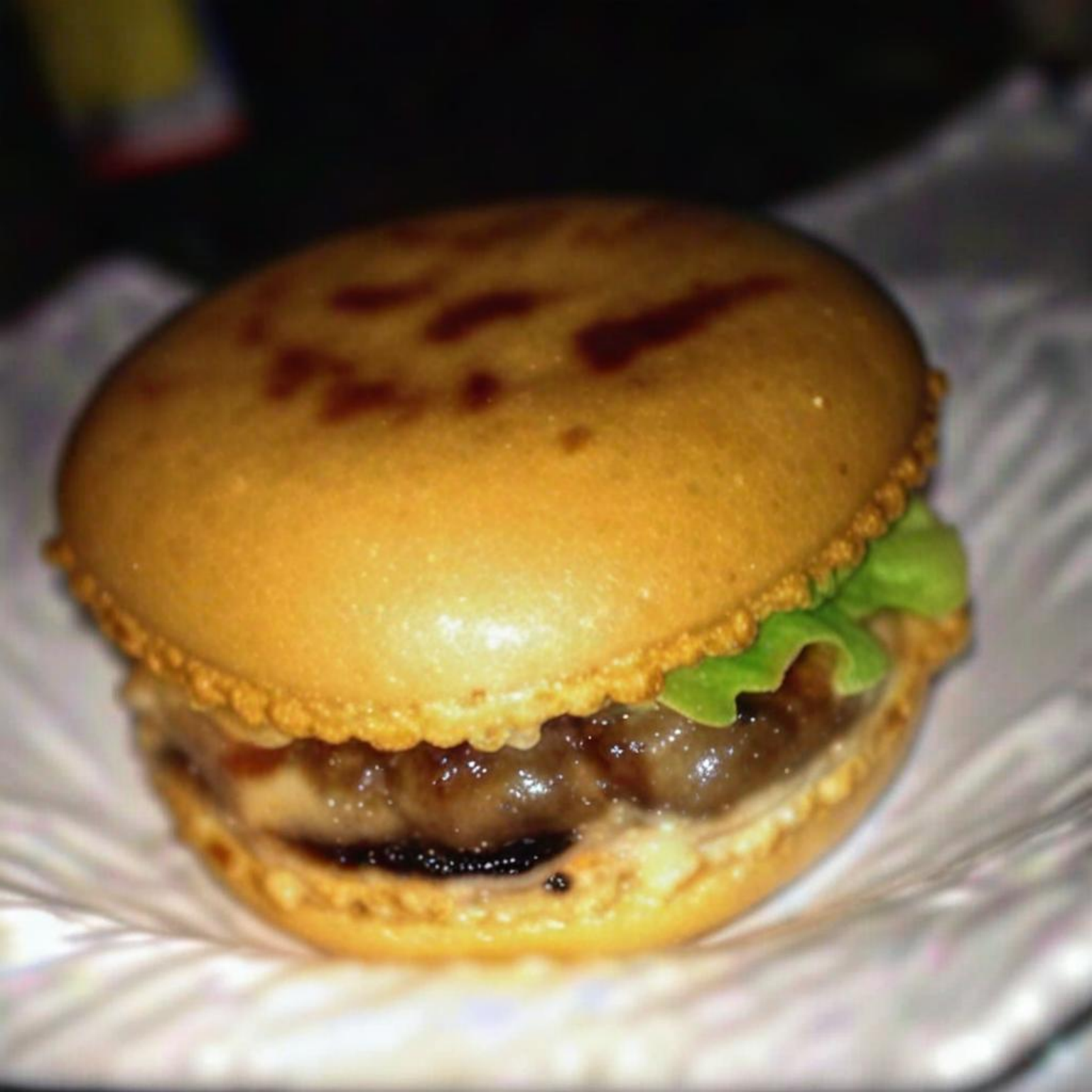}\\
        \includegraphics[width=\textwidth]{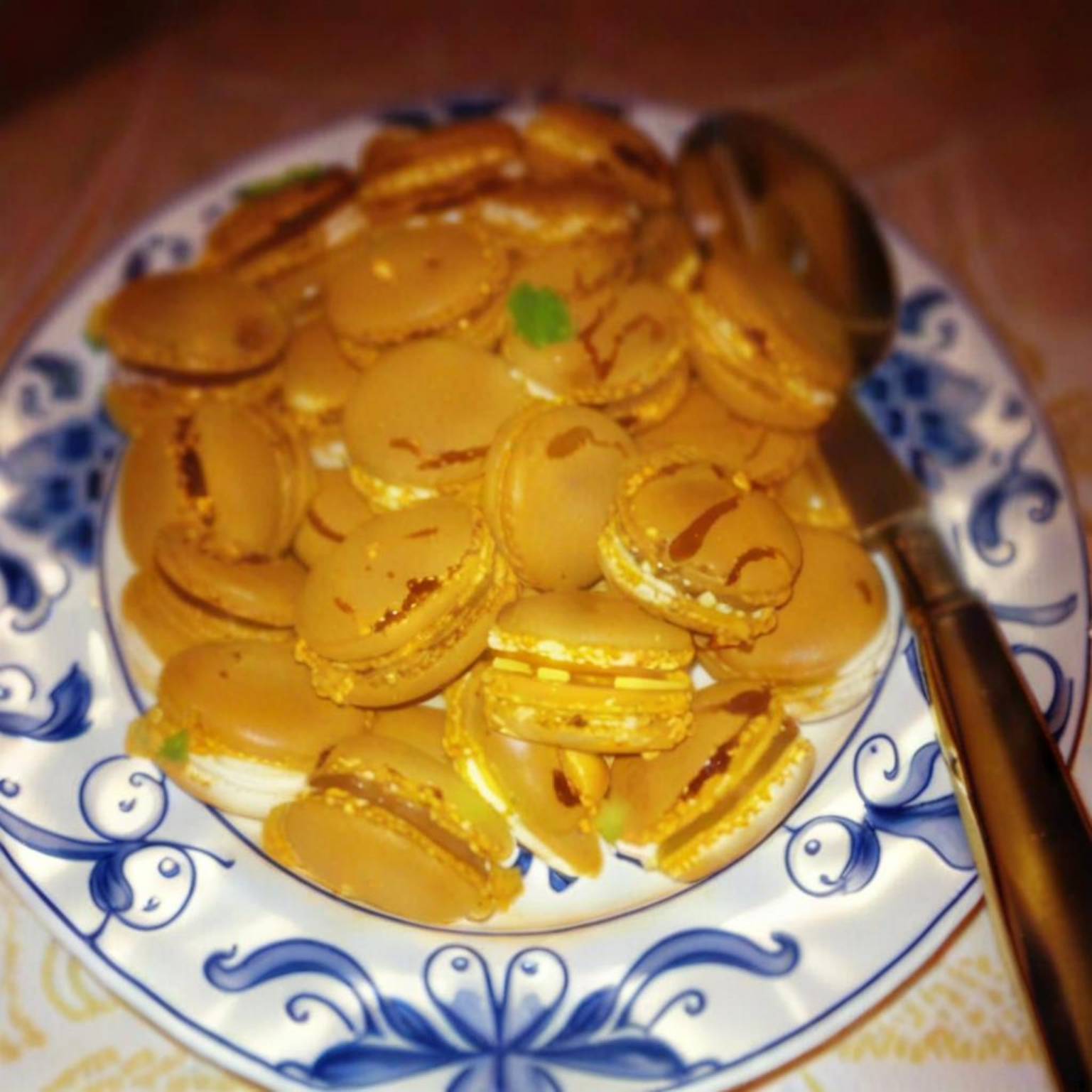}
    \end{minipage}
    \hfill
    \begin{minipage}{0.15\textwidth}
        \includegraphics[width=\textwidth]{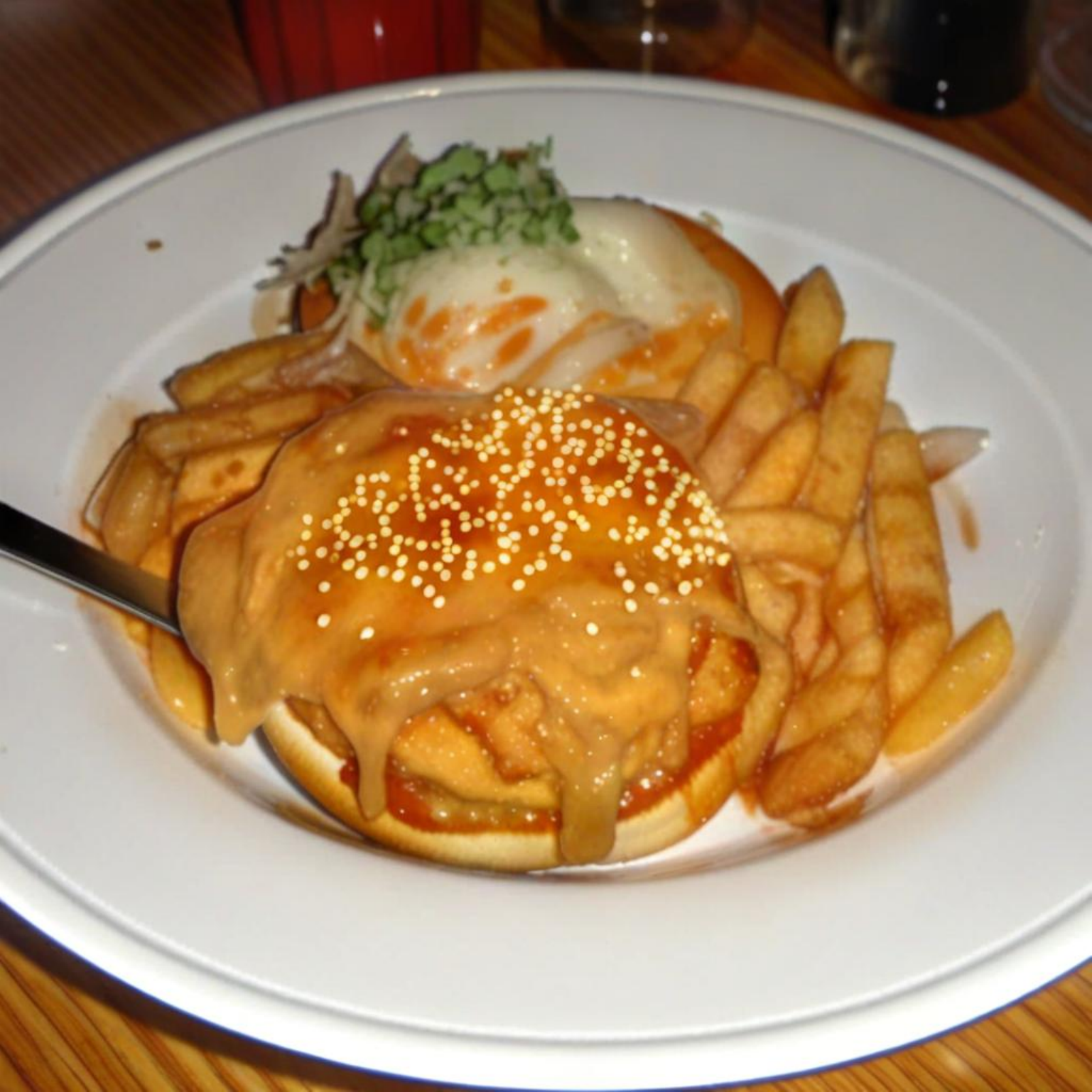}\\
        \includegraphics[width=\textwidth]{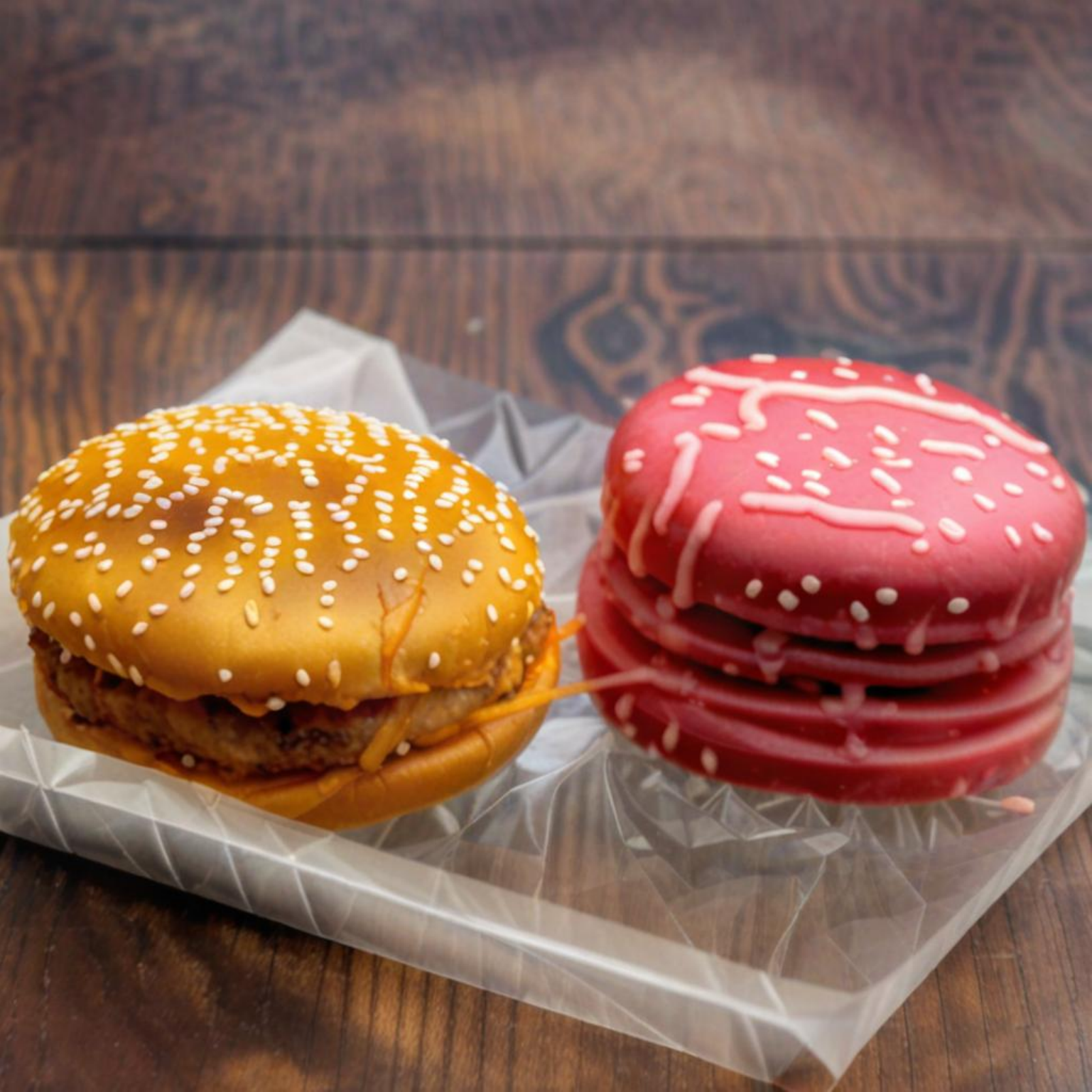}\\
        \includegraphics[width=\textwidth]{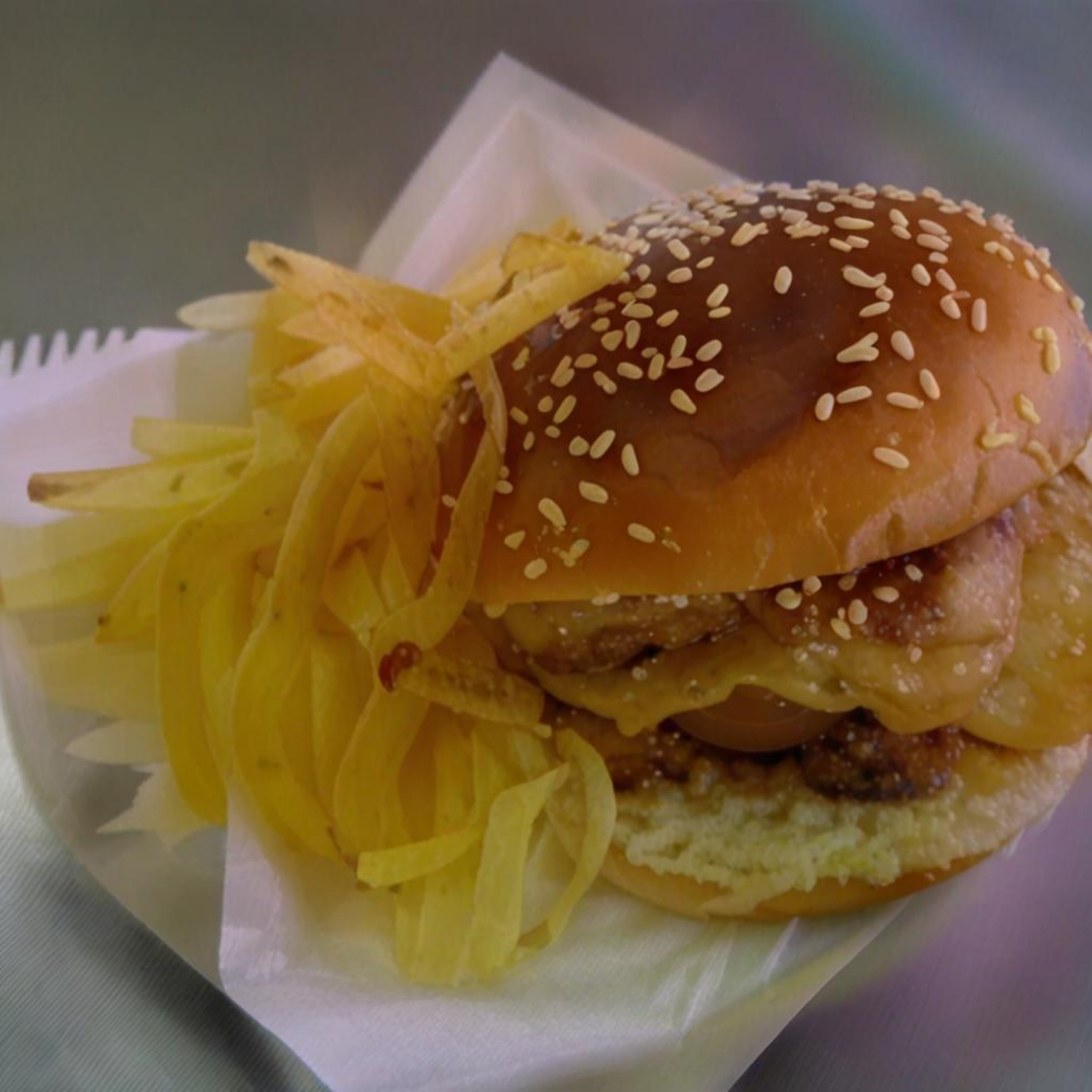}\\
        \includegraphics[width=\textwidth]{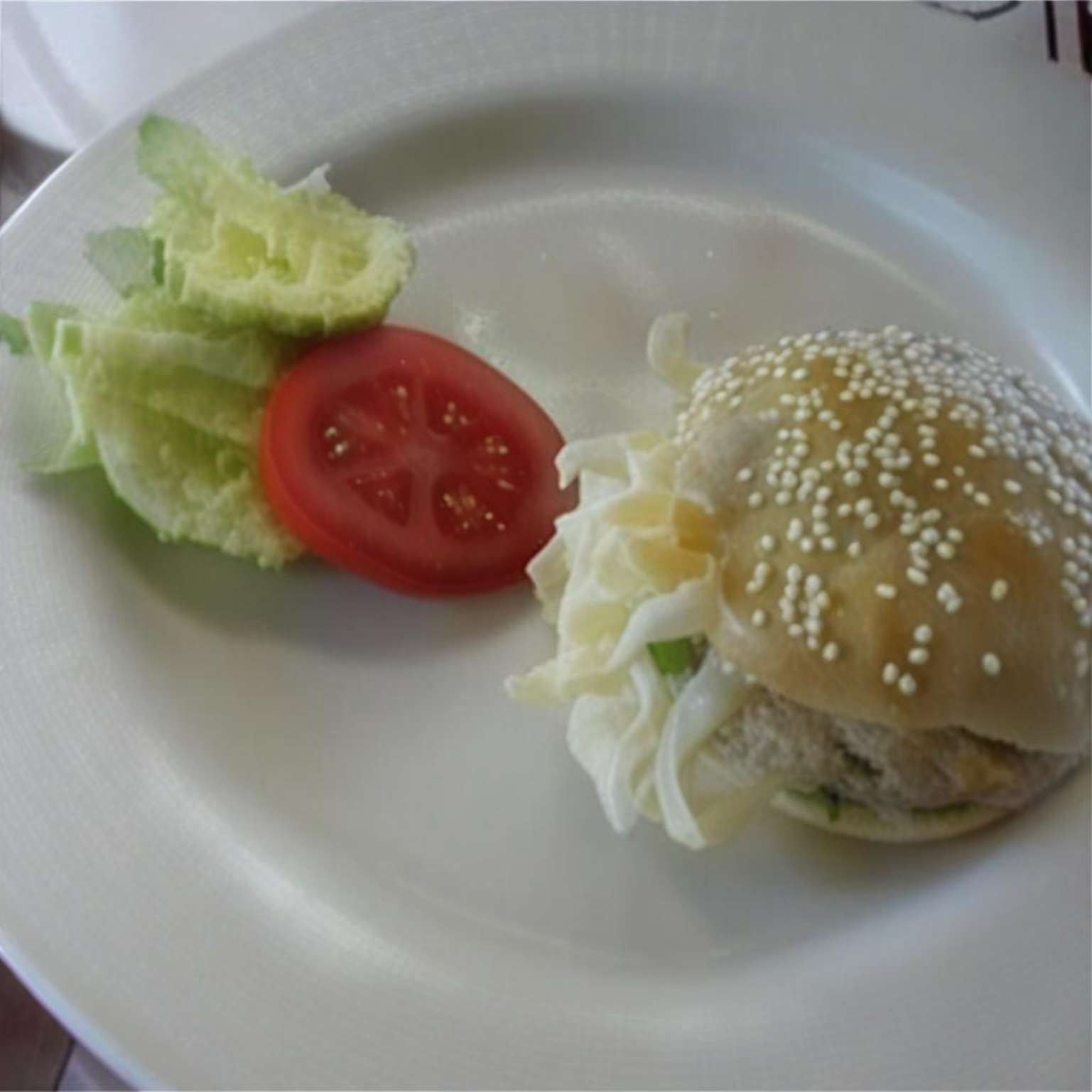}\\
        \includegraphics[width=\textwidth]{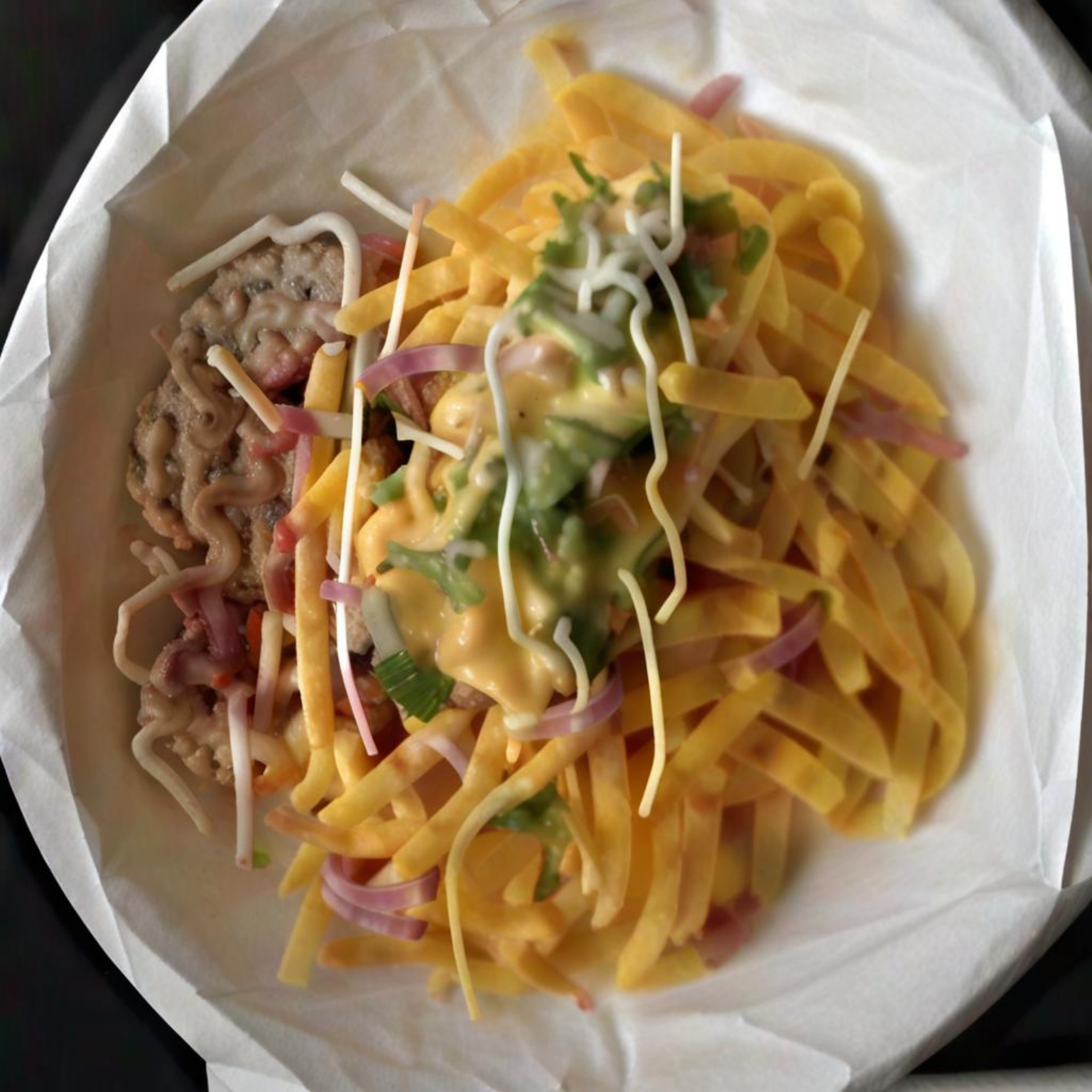}\\
        \includegraphics[width=\textwidth]{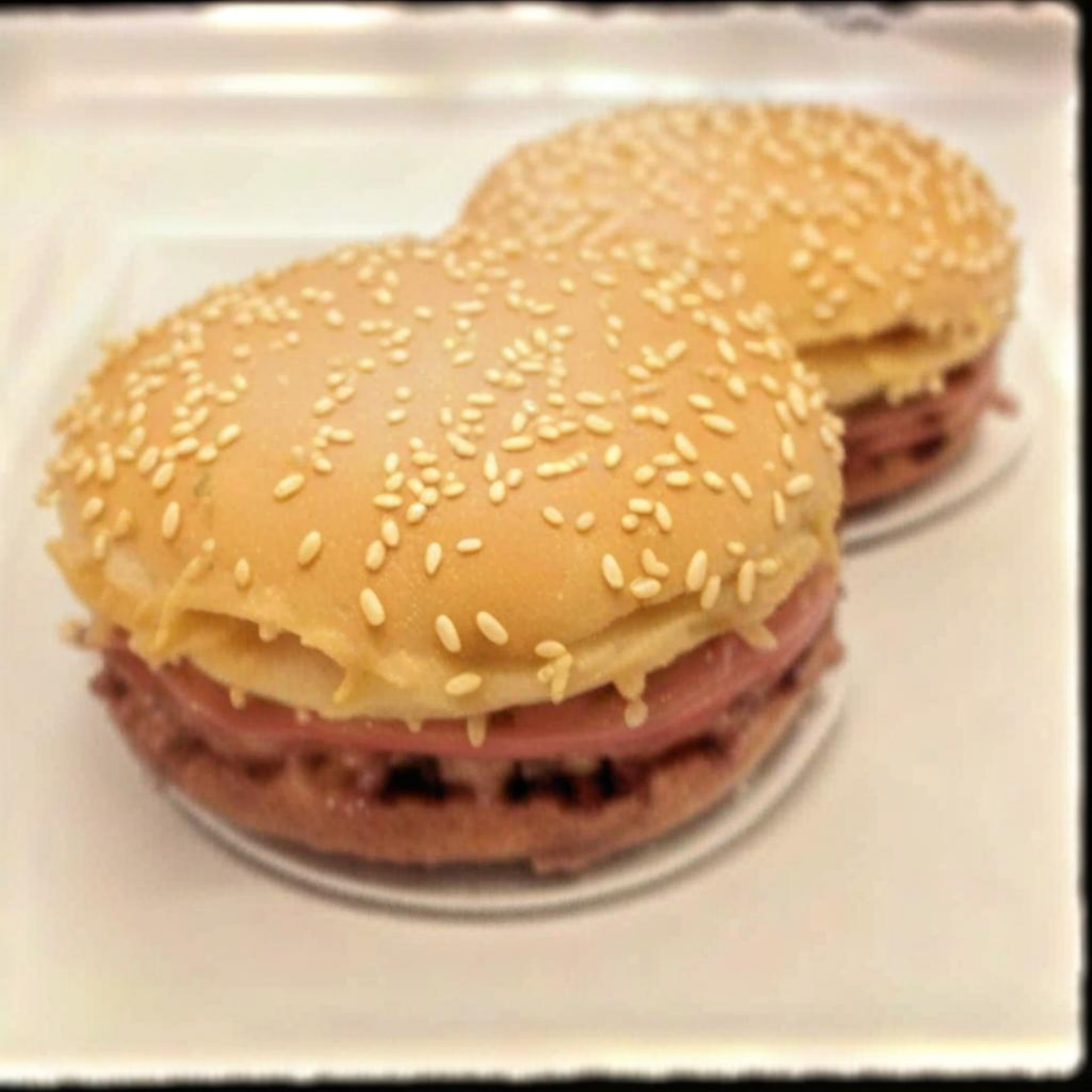}\\
        \includegraphics[width=\textwidth]{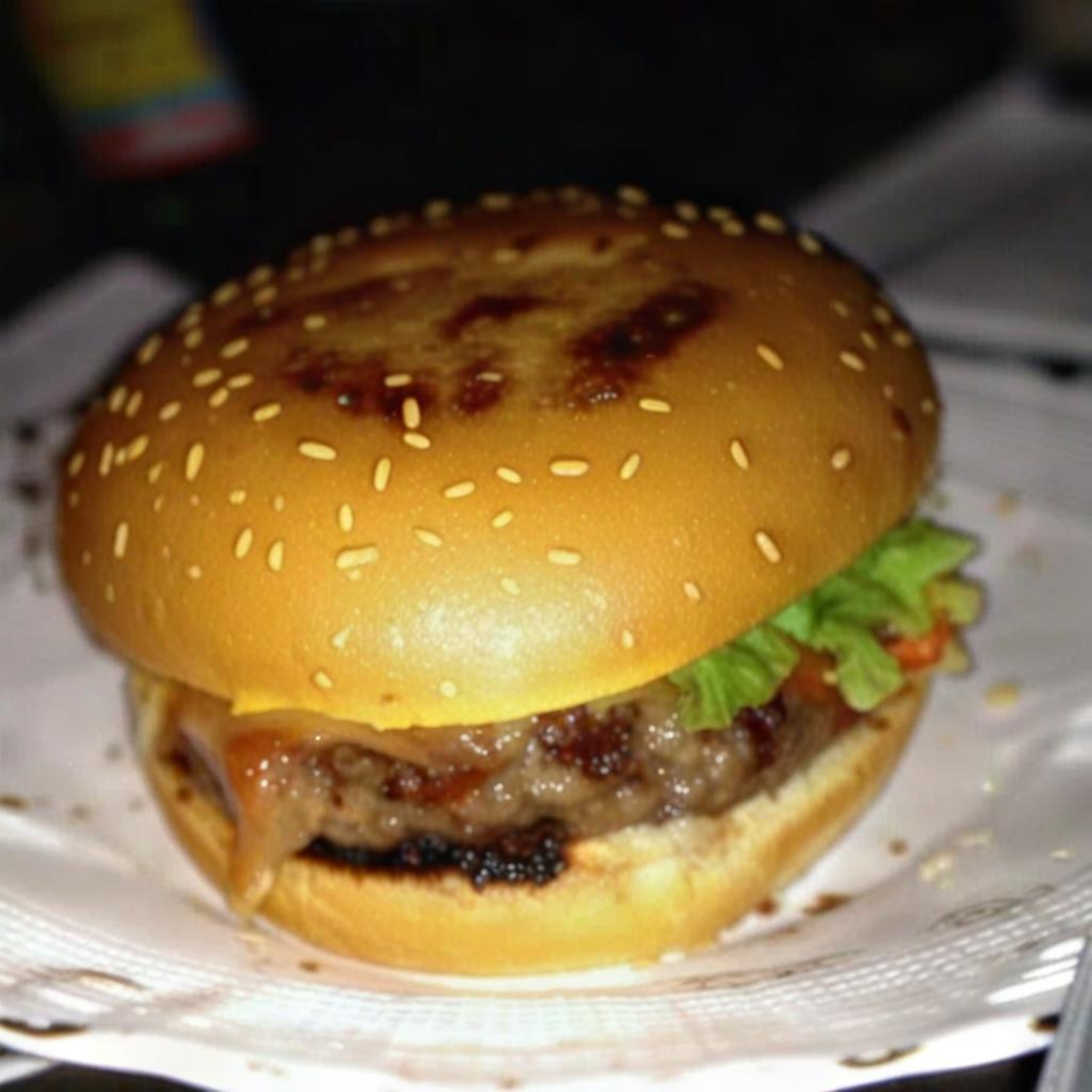}\\
        \includegraphics[width=\textwidth]{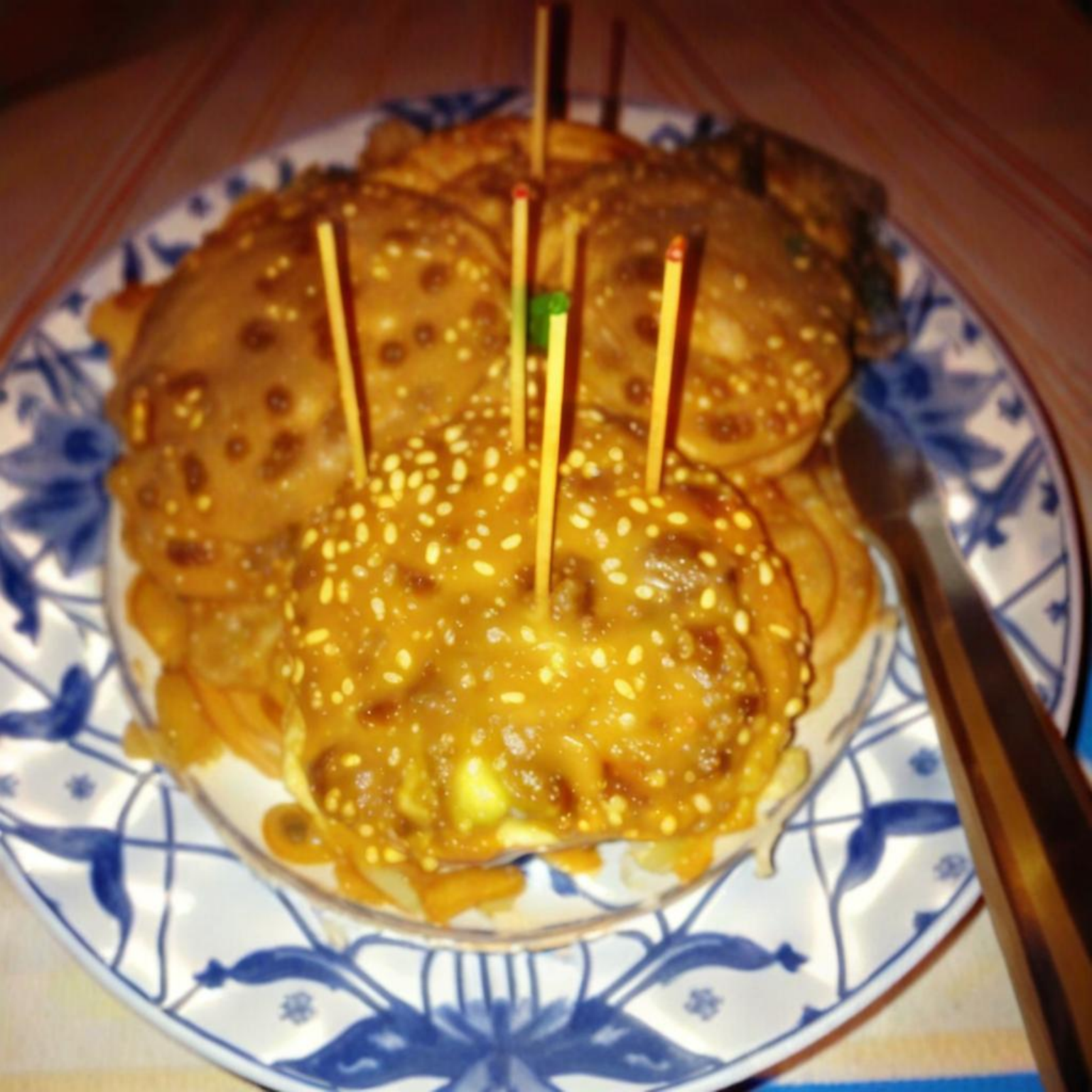}
    \end{minipage}
    \hfill
    \begin{minipage}{0.15\textwidth}
        \includegraphics[width=\textwidth]{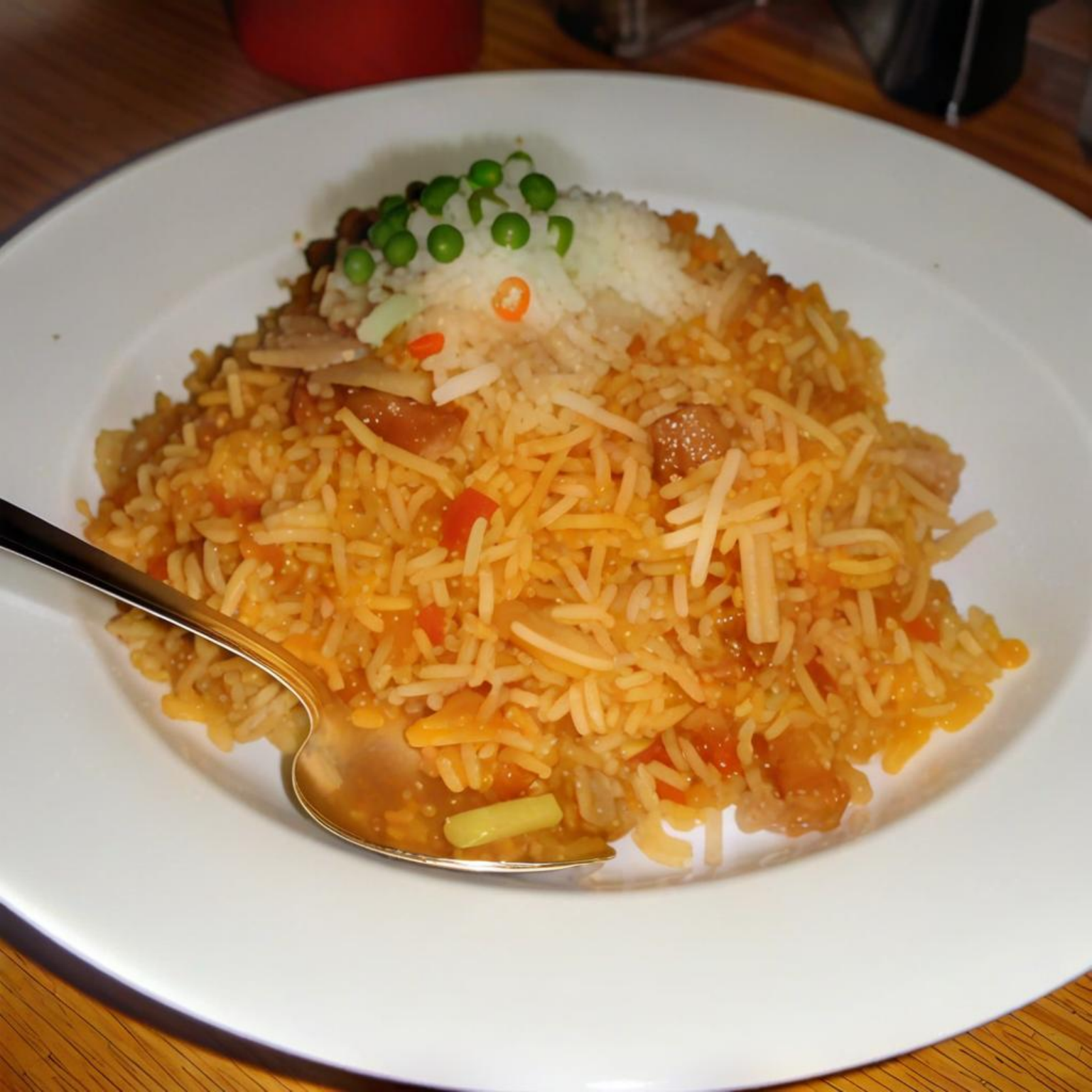}\\
        \includegraphics[width=\textwidth]{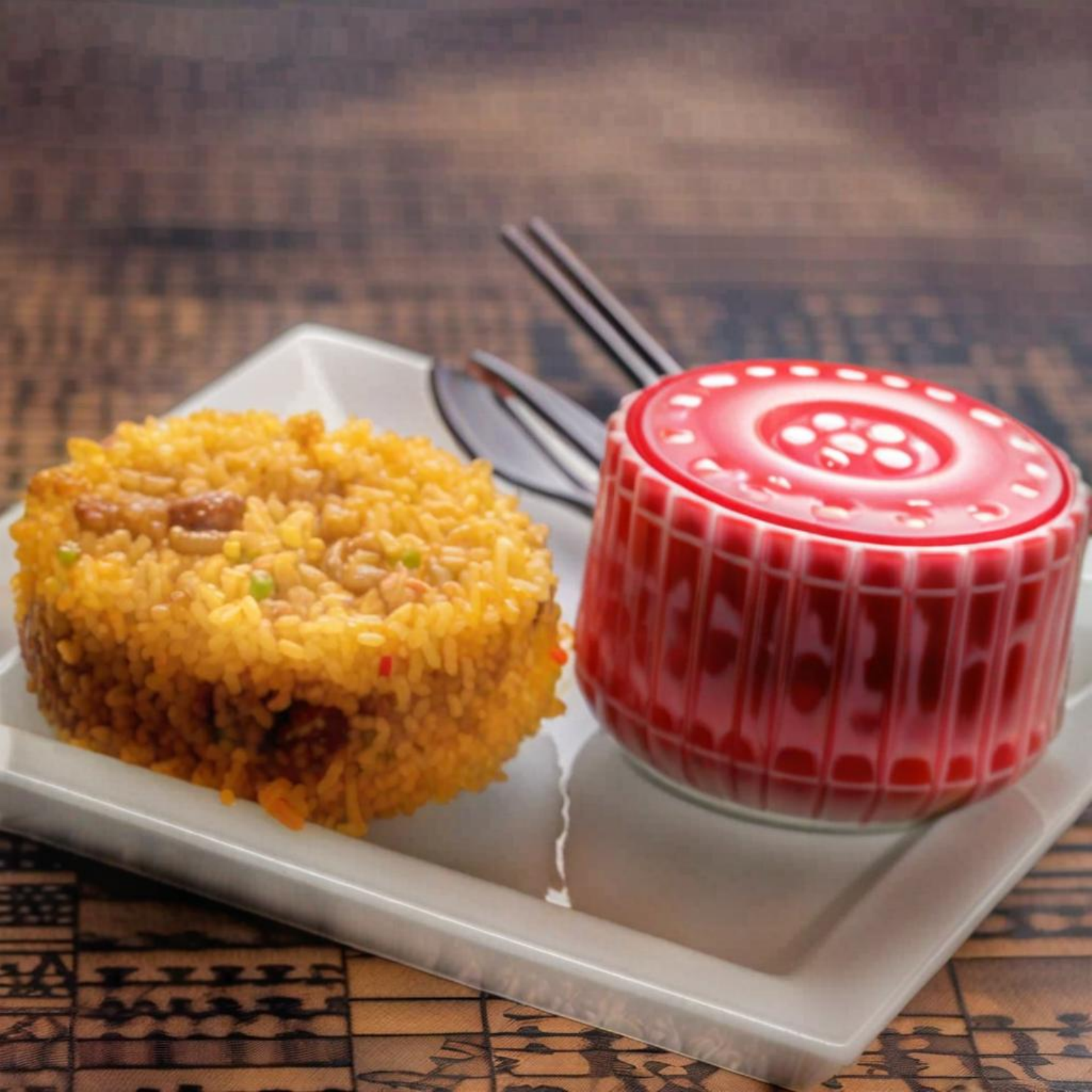}\\
        \includegraphics[width=\textwidth]{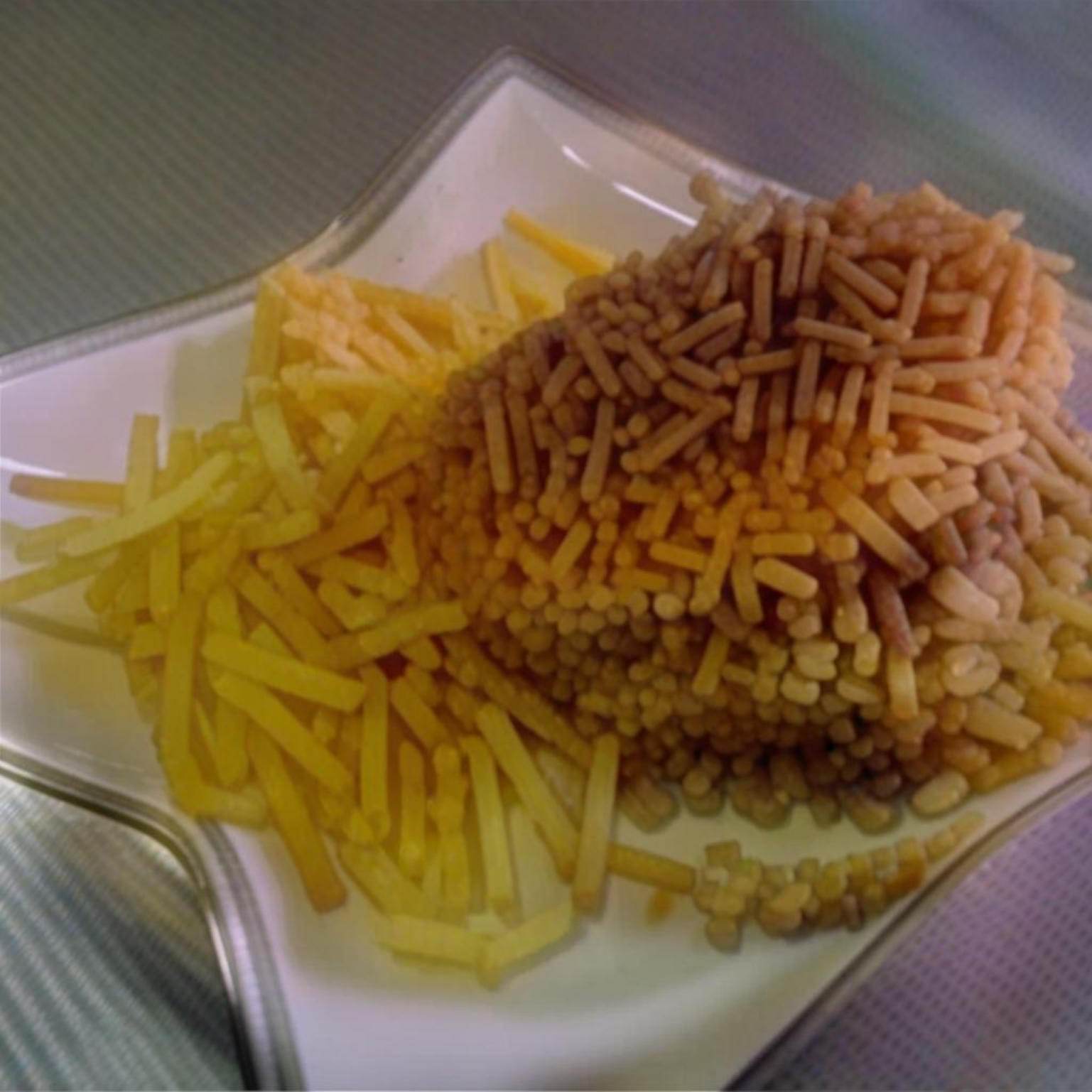}\\
        \includegraphics[width=\textwidth]{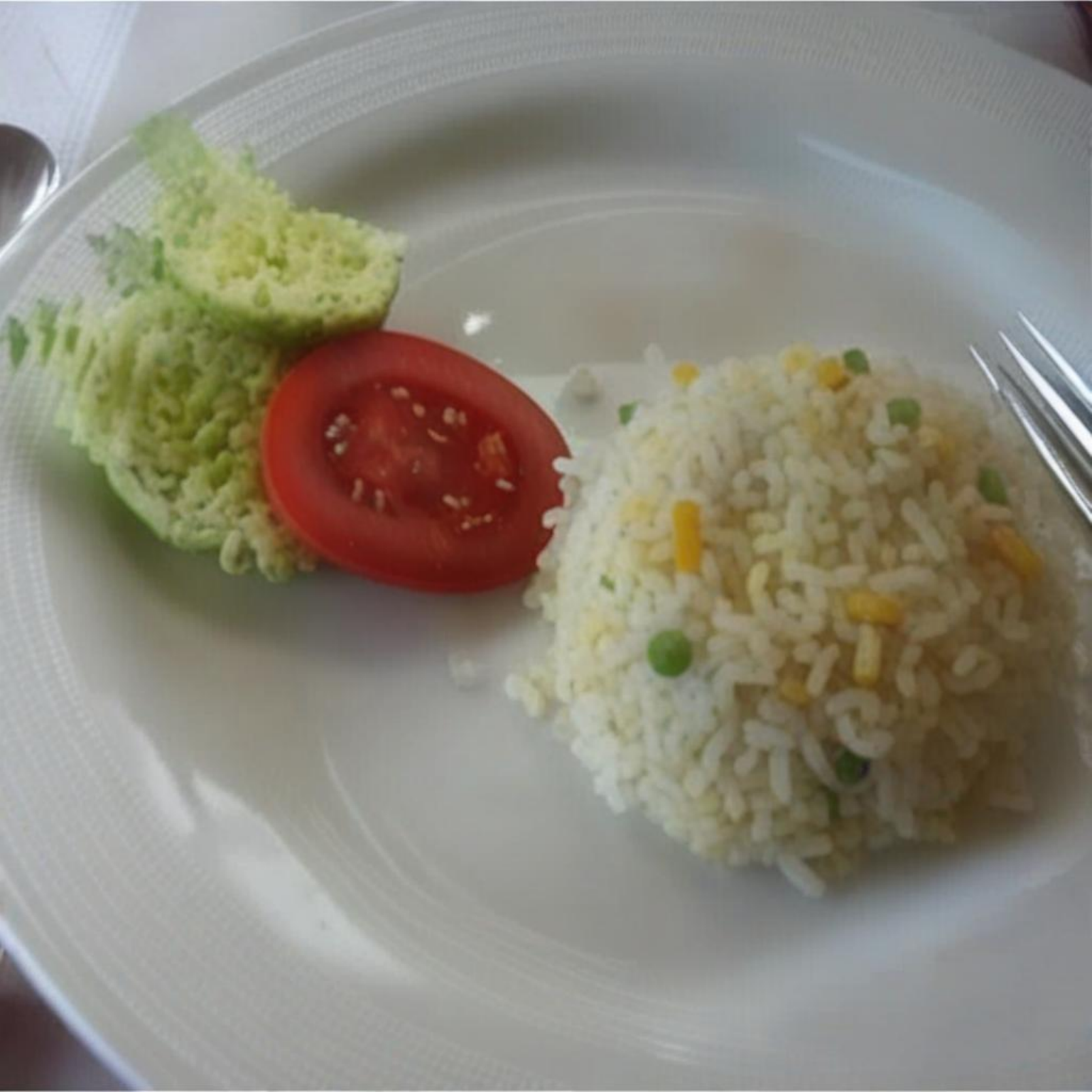}\\
        \includegraphics[width=\textwidth]{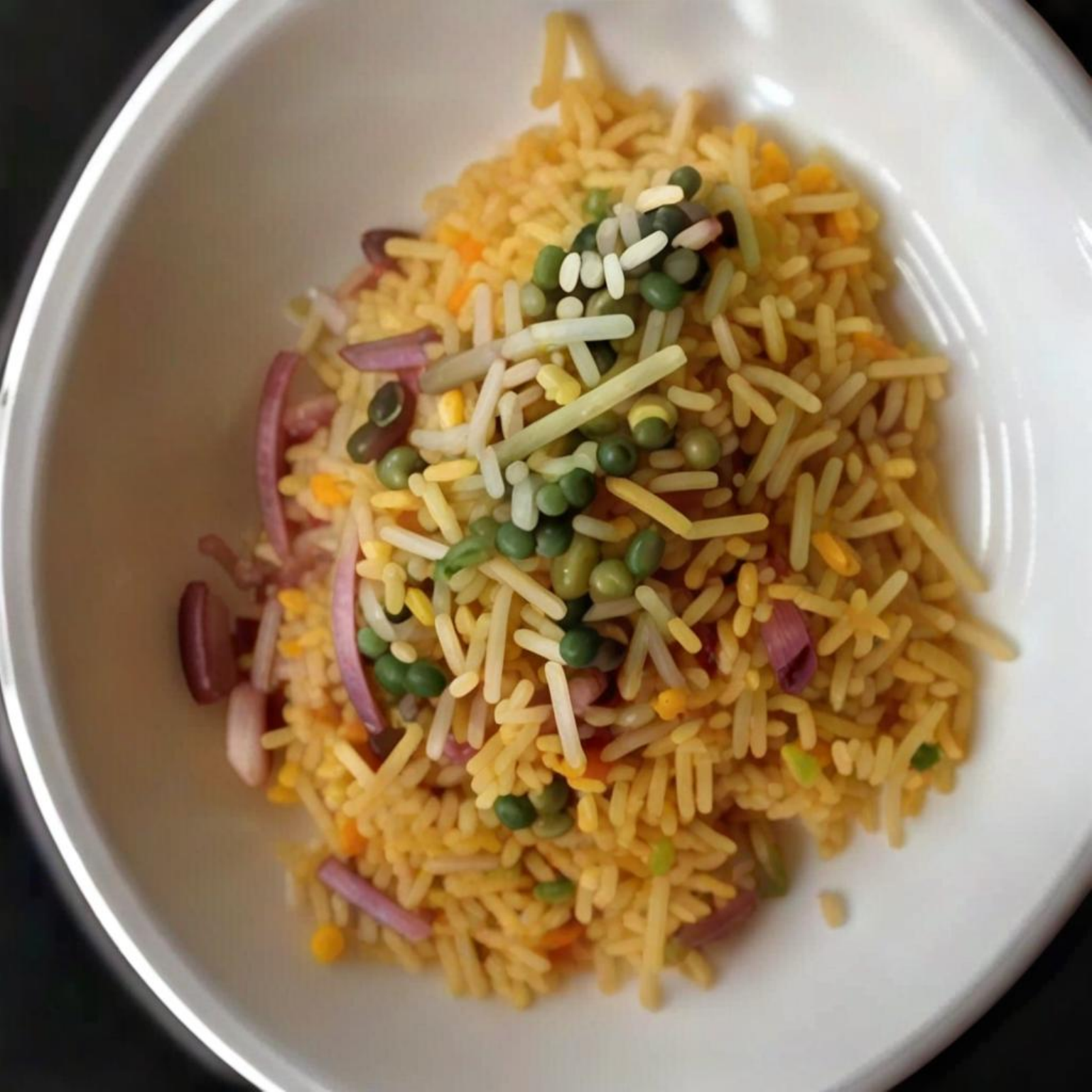}\\
        \includegraphics[width=\textwidth]{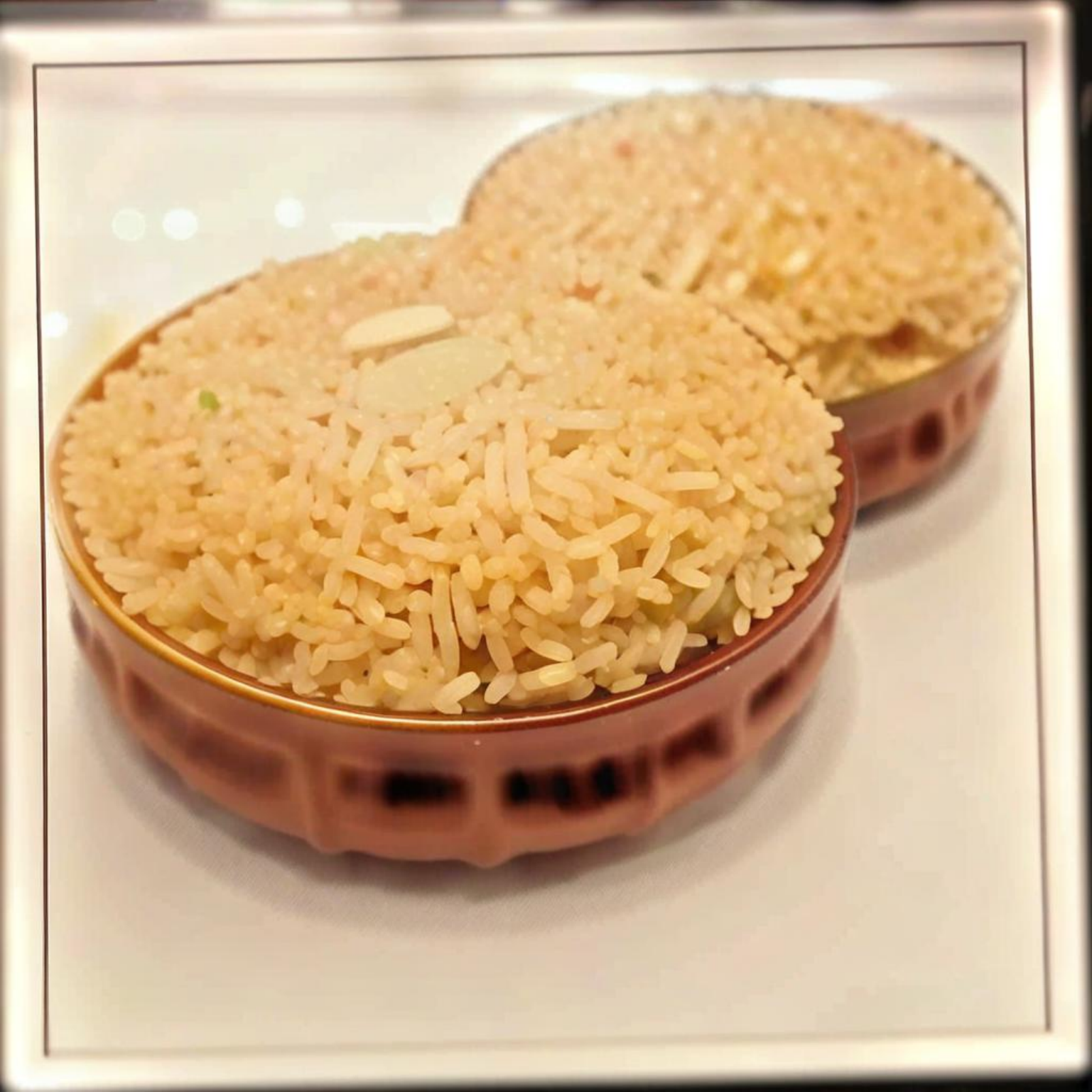}\\
        \includegraphics[width=\textwidth]{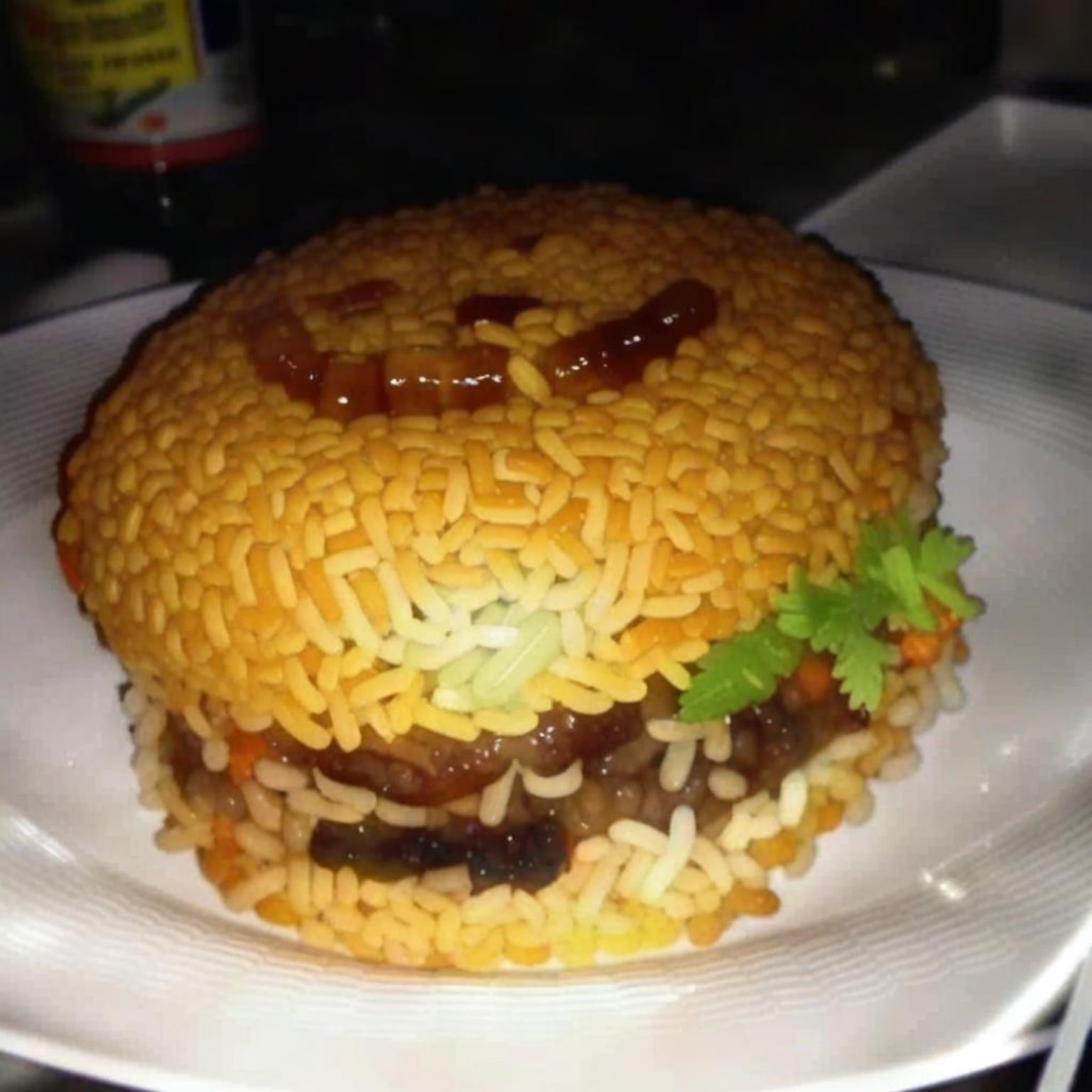}\\
        \includegraphics[width=\textwidth]{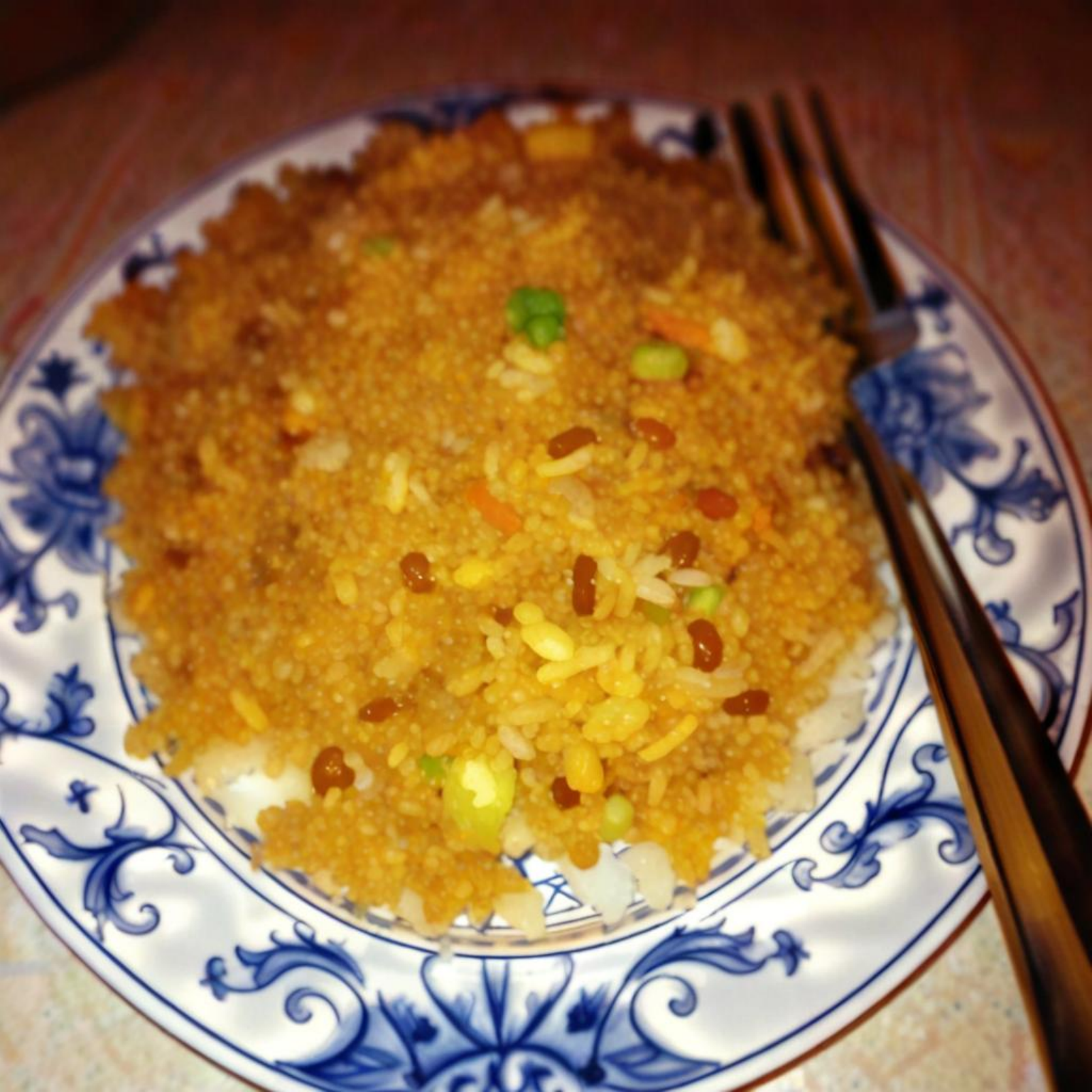}
    \end{minipage}

    \caption{Visual results of the 8-steps LATINO on Food101 dataset for semantic shift task.}
    \label{fig:Food}
\end{figure*}
}

\begin{figure*}[!h]
    \centering
    \makebox[0.16\textwidth]{\textbf{Measurement}}
    \hfill
    \makebox[0.16\textwidth]{\textbf{GT}}
    \hfill
    \makebox[0.16\textwidth]{\textbf{LATINO-PRO}}
    \hfill
    \makebox[0.16\textwidth]{\textbf{TREG}}
    \hfill
    \makebox[0.16\textwidth]{\textbf{P2L}}
    \hfill
    \makebox[0.16\textwidth]{\textbf{PSLD}}
    
    \vspace{2mm} 

    \begin{minipage}{0.16\textwidth}
        \includegraphics[width=\textwidth]{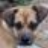}\\
        \includegraphics[width=\textwidth]{images/Tests/TReg_comparisons/degraded5.pdf}\\
        \includegraphics[width=\textwidth]{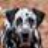}\\
        \includegraphics[width=\textwidth]{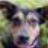}\\
        \includegraphics[width=\textwidth]{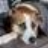}\\
        \includegraphics[width=\textwidth]{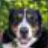}
    \end{minipage}
    \hfill
    \begin{minipage}{0.16\textwidth}
        \includegraphics[width=\textwidth]{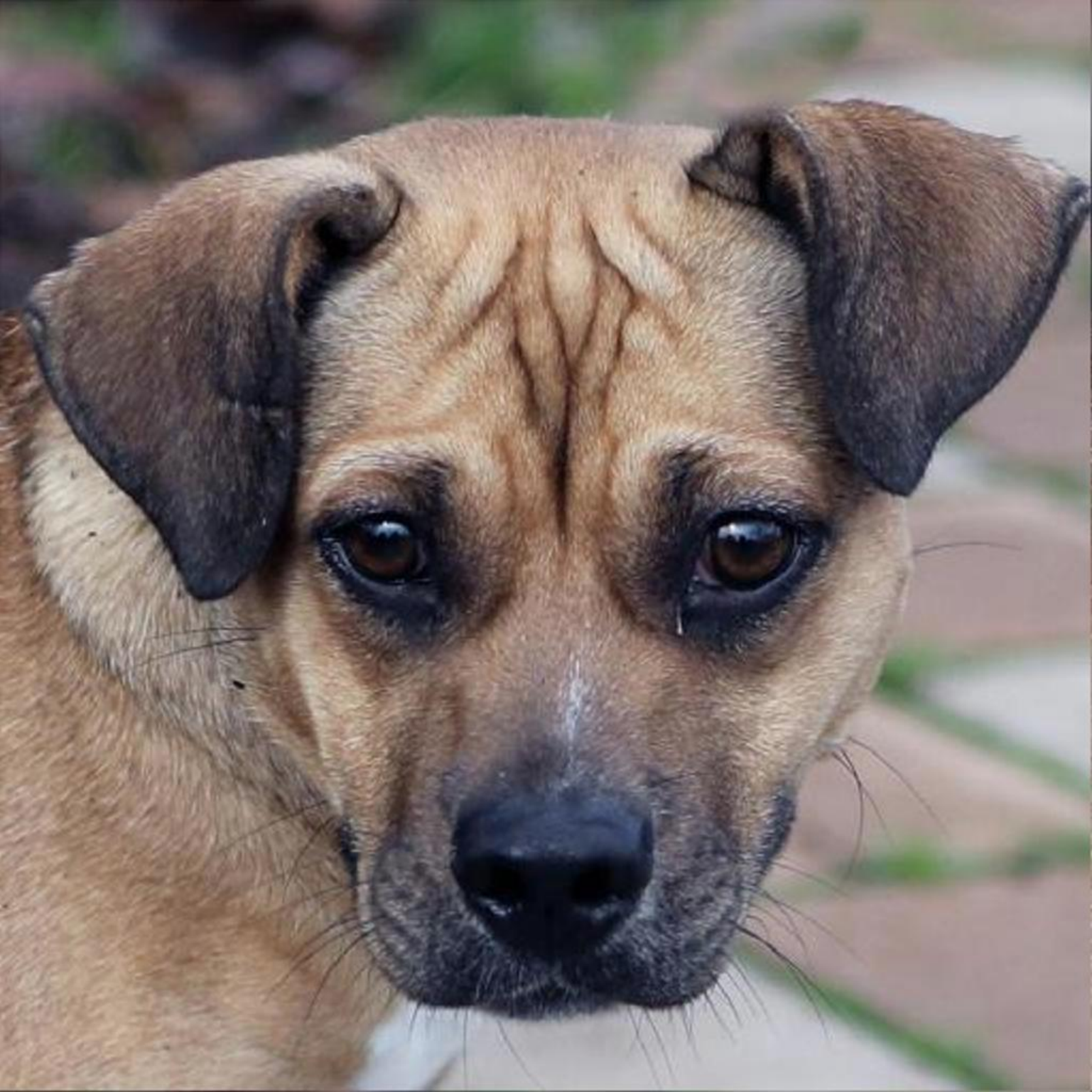} \\
        \includegraphics[width=\textwidth]{images/Tests/TReg_comparisons/clean5.pdf} \\
        \includegraphics[width=\textwidth]{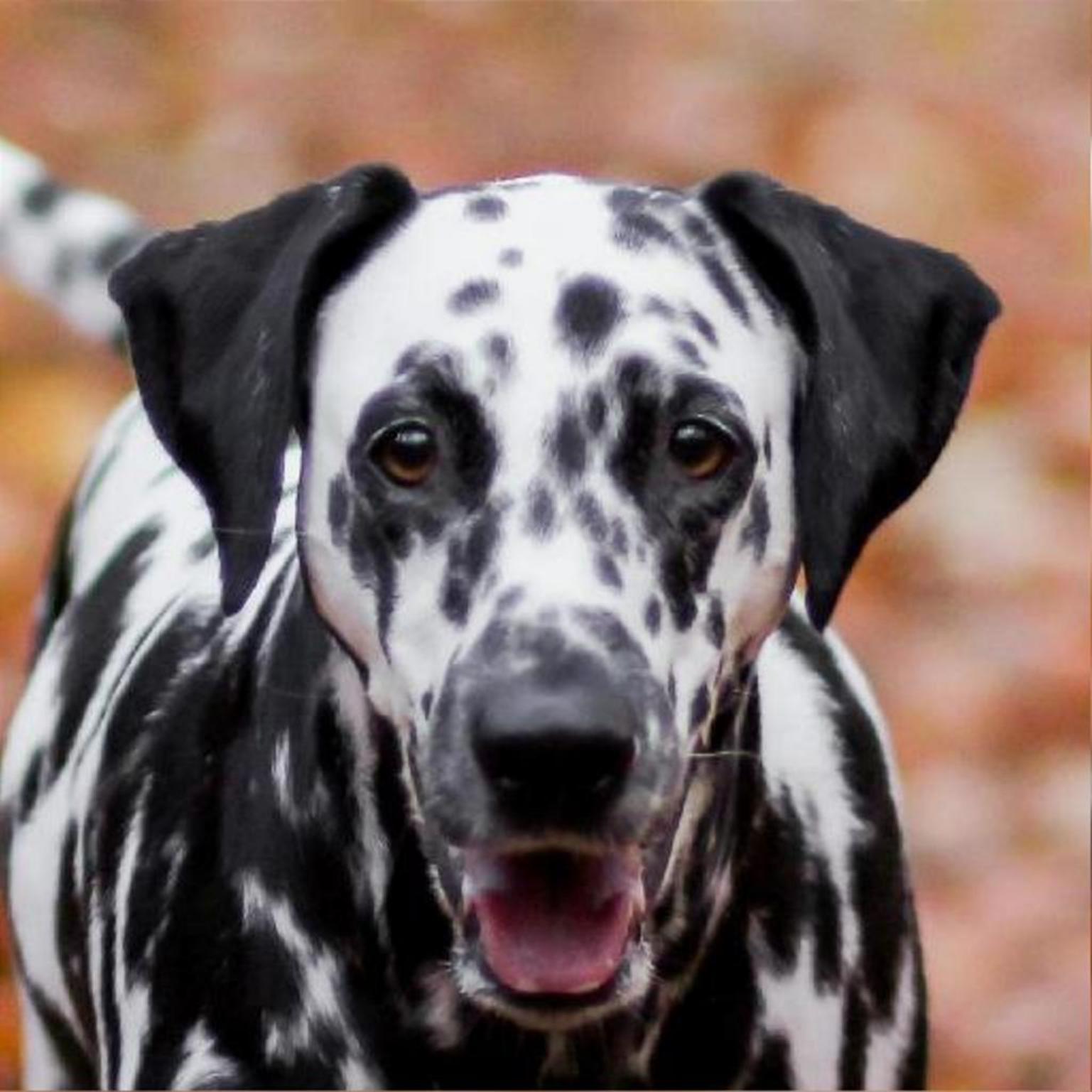} \\
        \includegraphics[width=\textwidth]{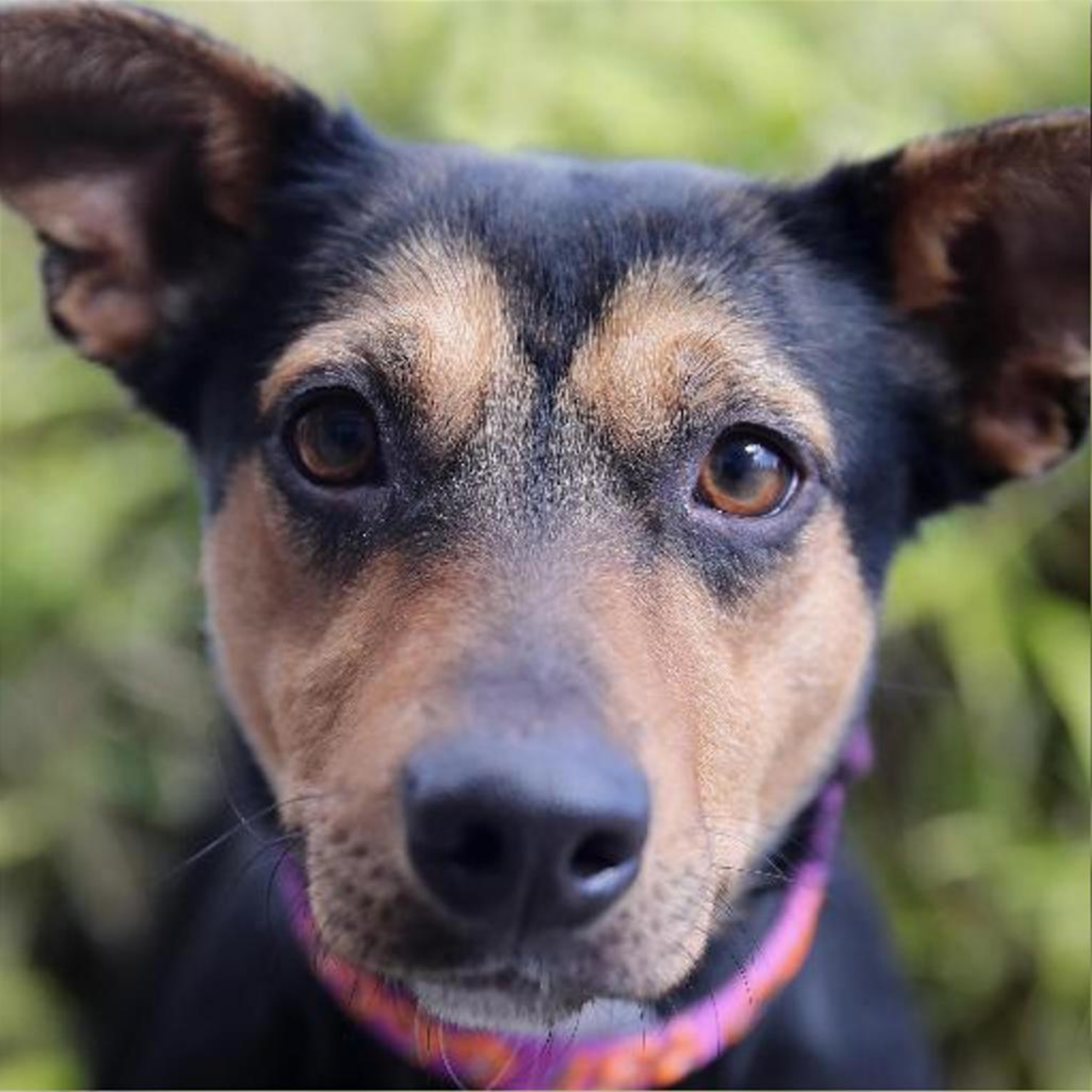} \\
        \includegraphics[width=\textwidth]{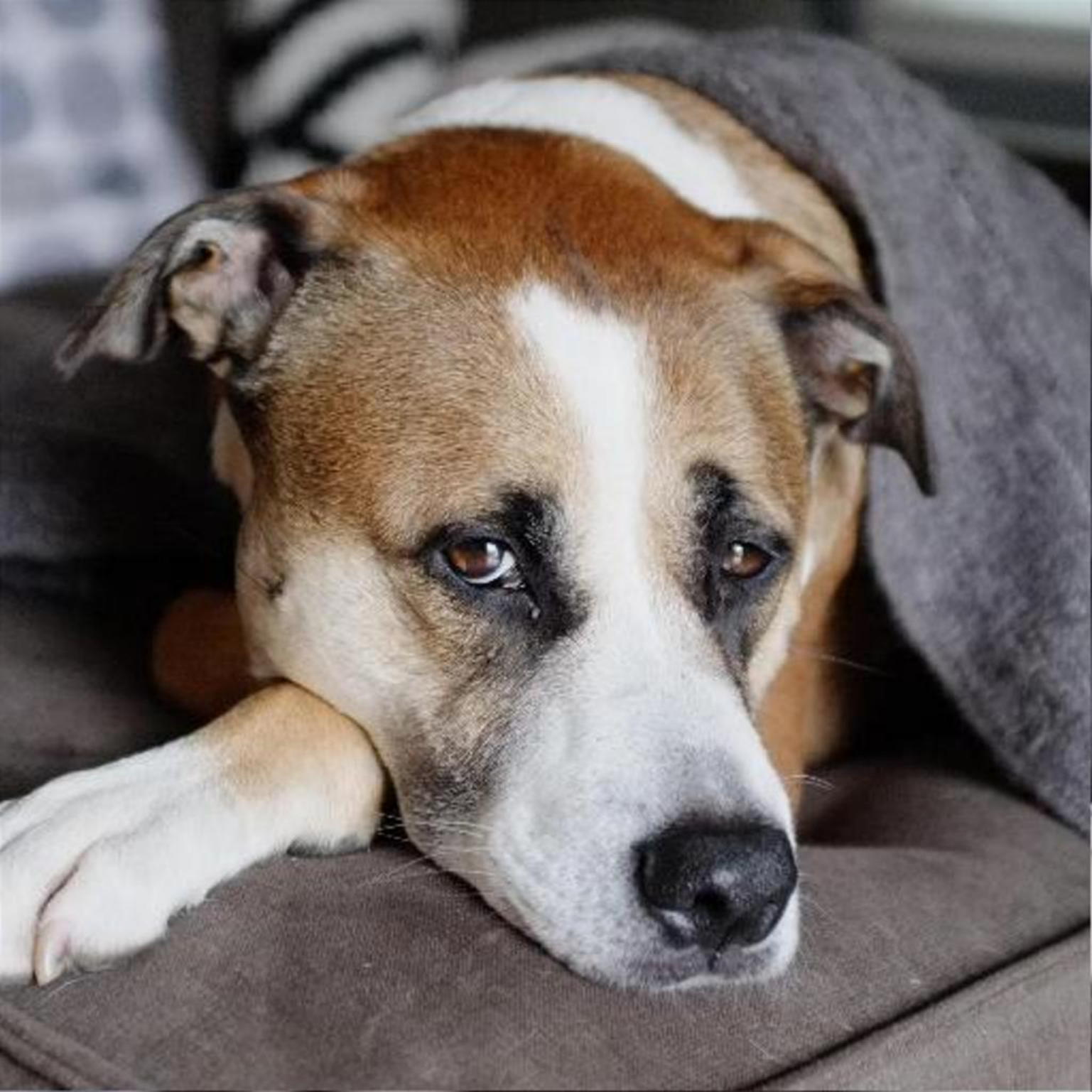} \\
        \includegraphics[width=\textwidth]{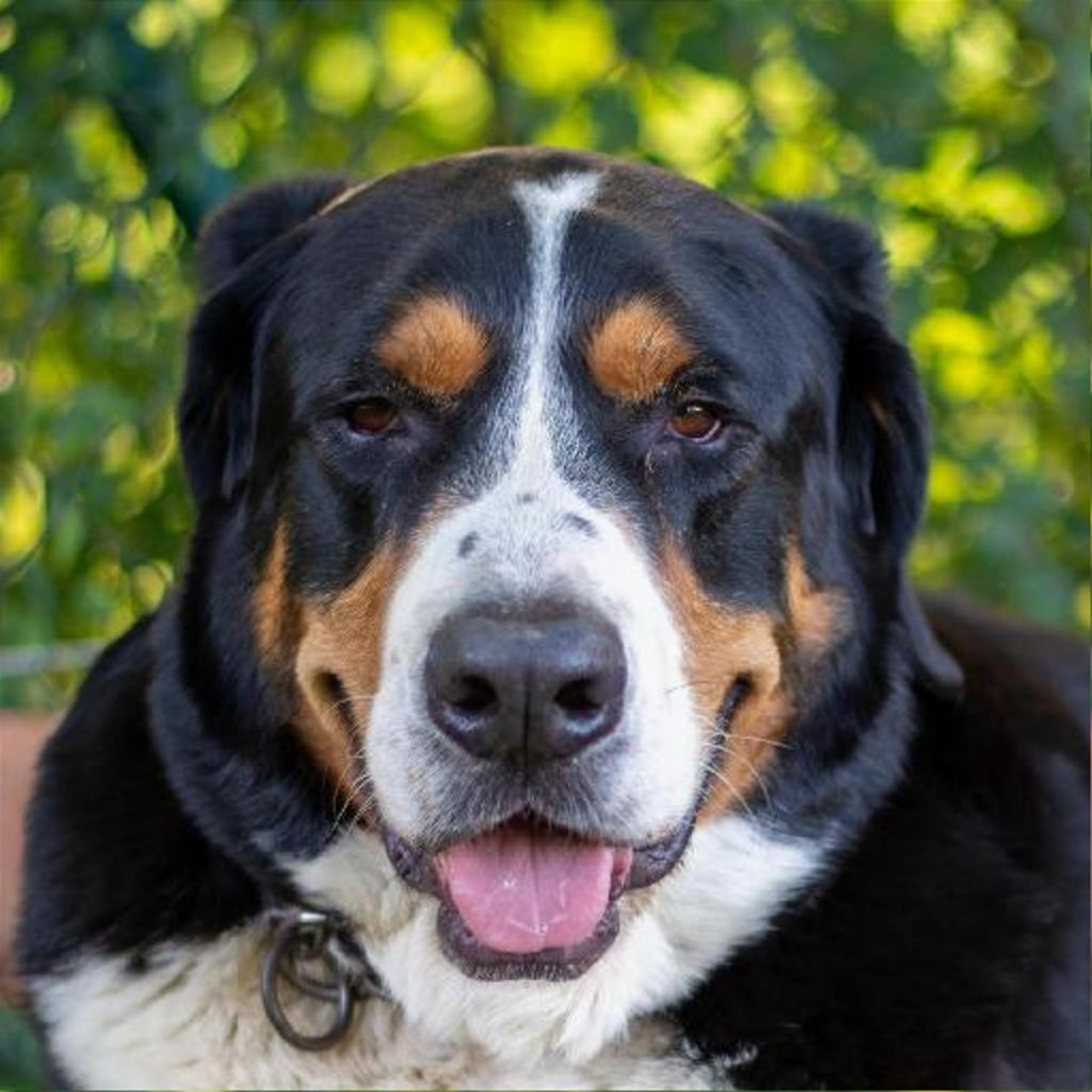}
    \end{minipage}
    \hfill
    \begin{minipage}{0.16\textwidth}
        \includegraphics[width=\textwidth]{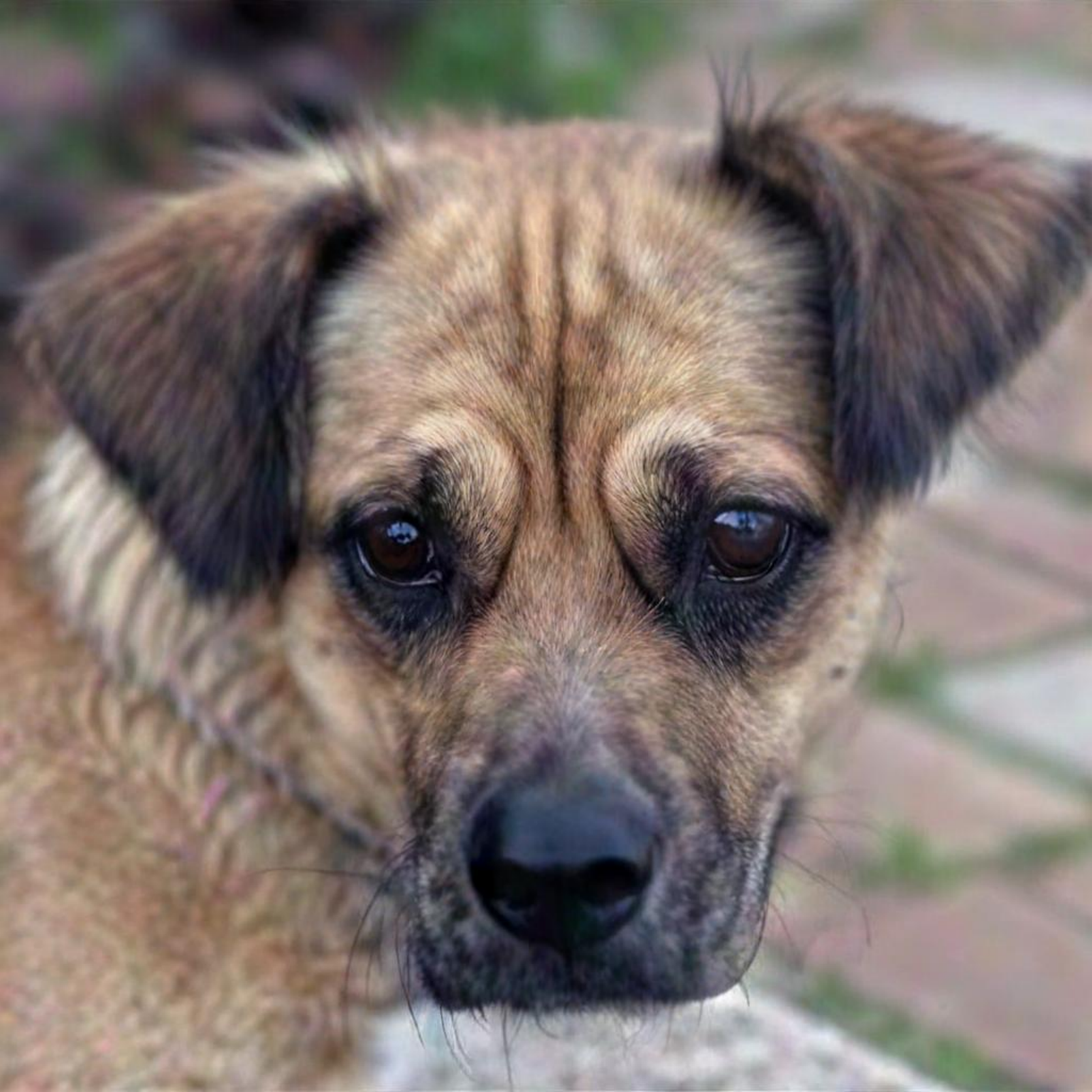} \\
        \includegraphics[width=\textwidth]{images/Tests/TReg_comparisons/restored5.pdf} \\
        \includegraphics[width=\textwidth]{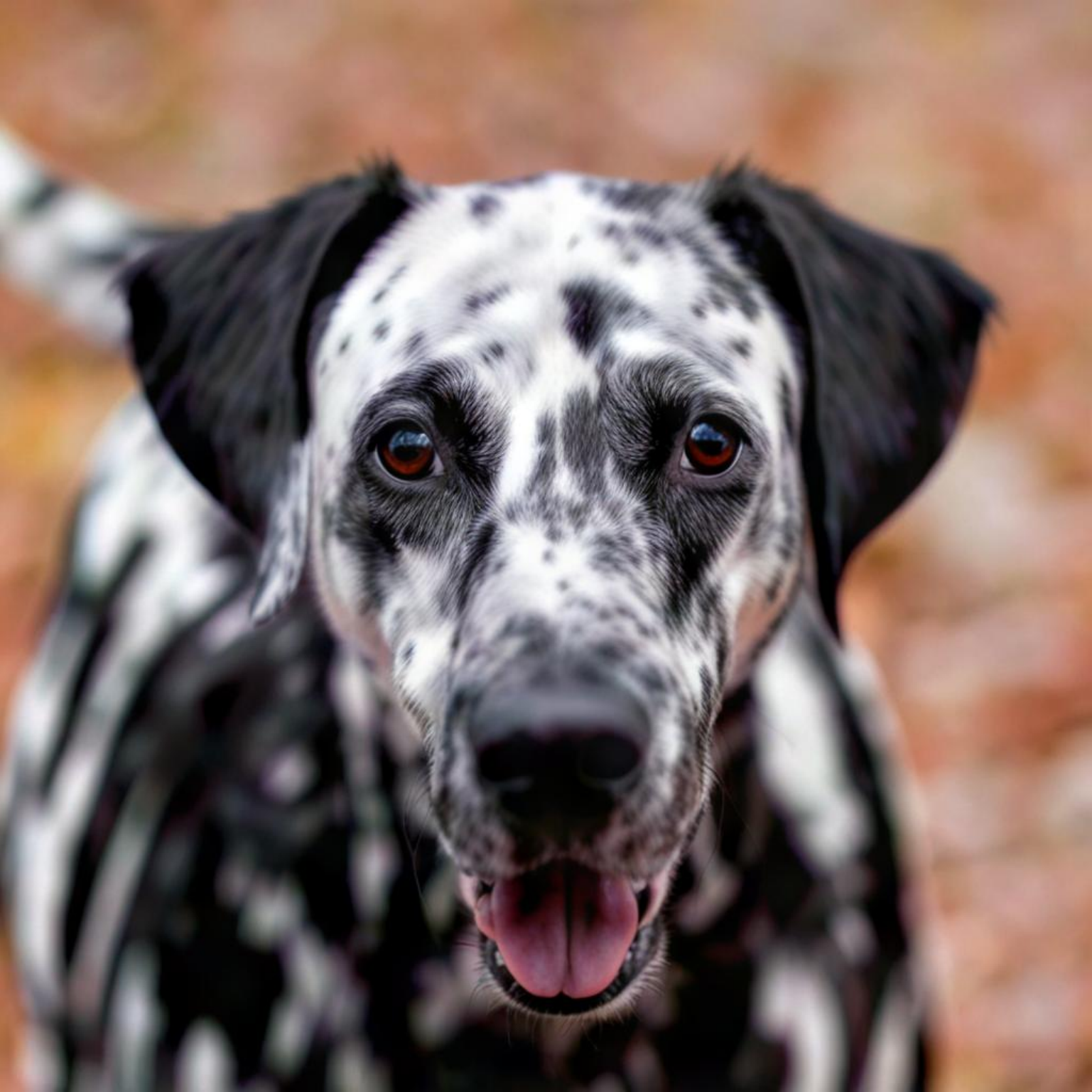} \\
        \includegraphics[width=\textwidth]{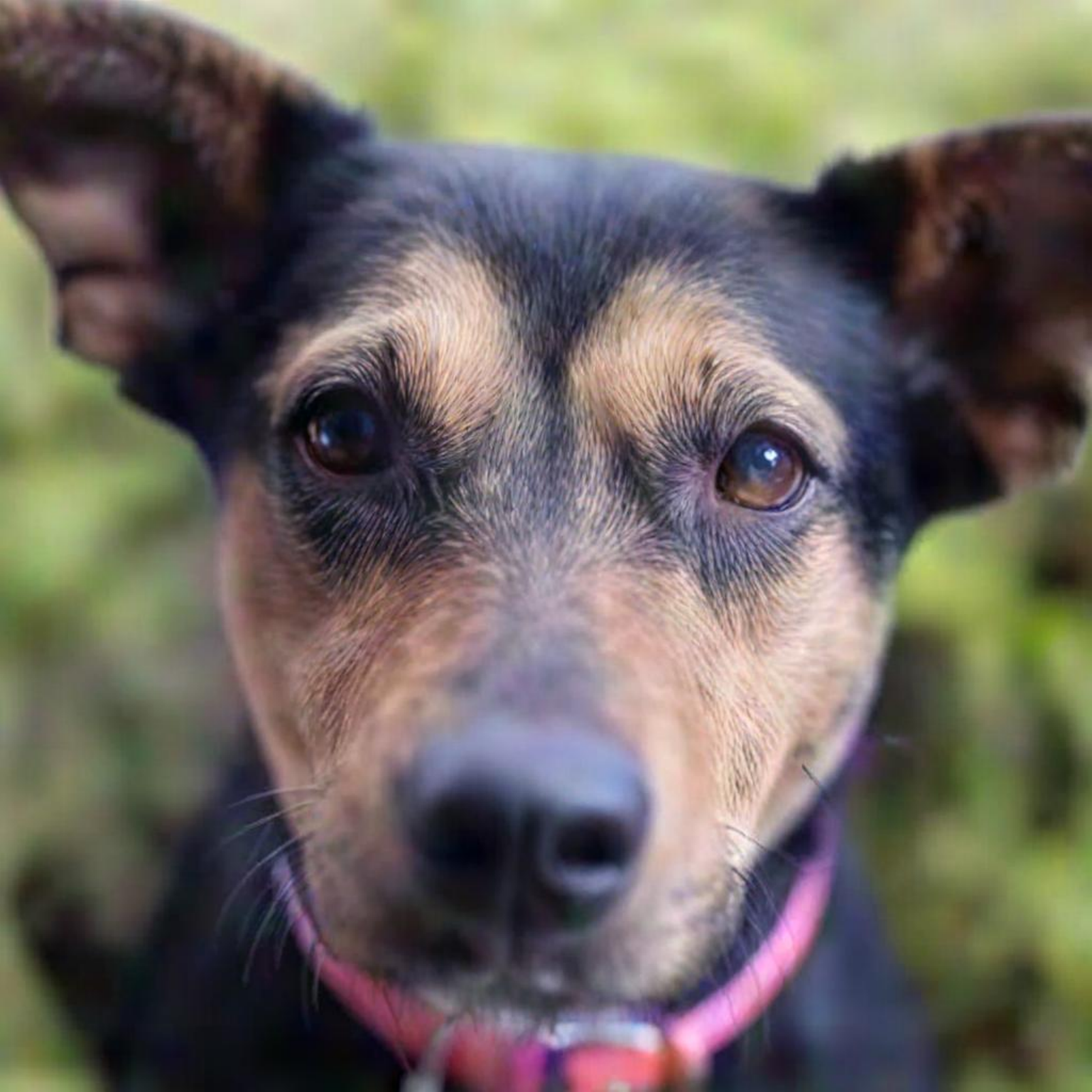} \\
        \includegraphics[width=\textwidth]{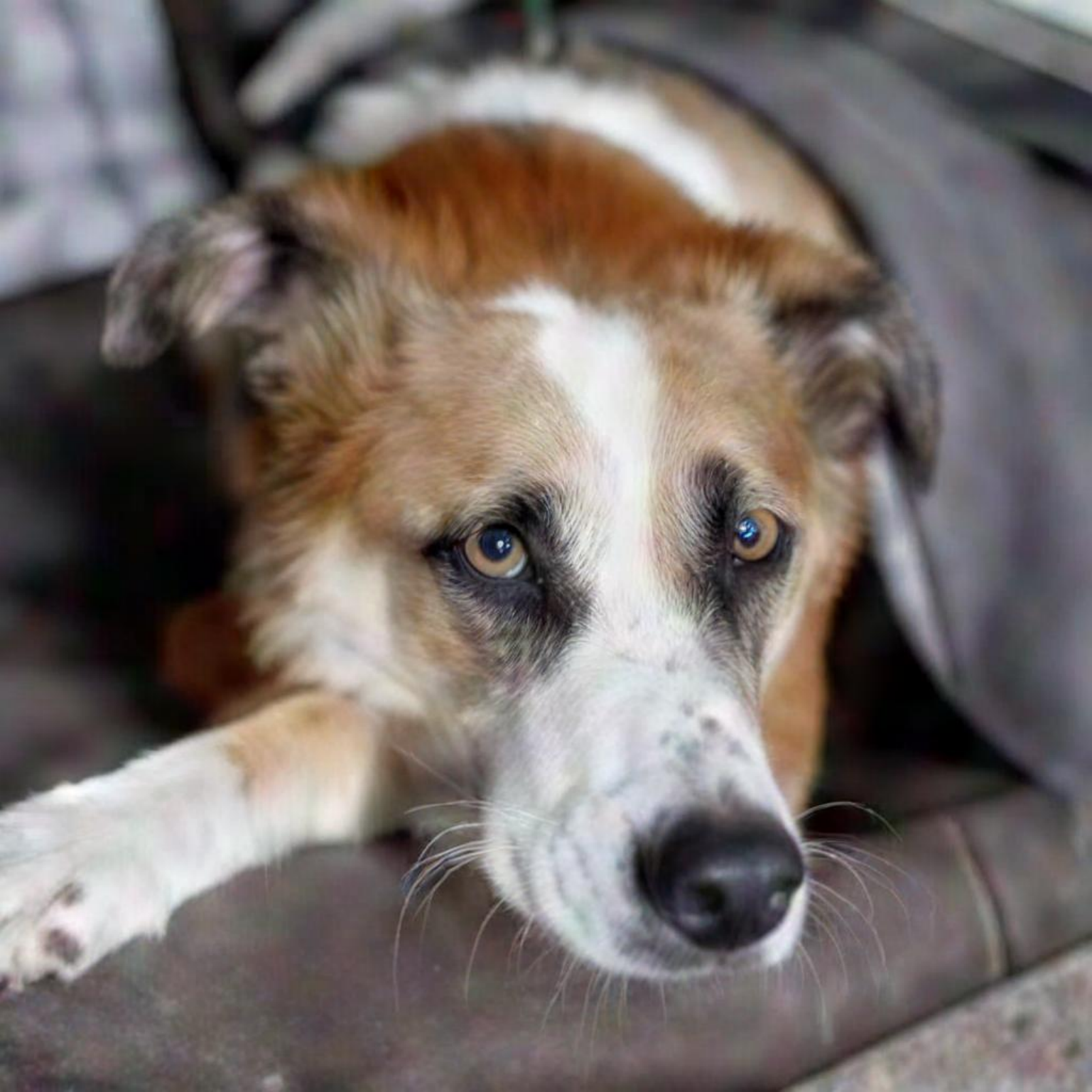} \\
        \includegraphics[width=\textwidth]{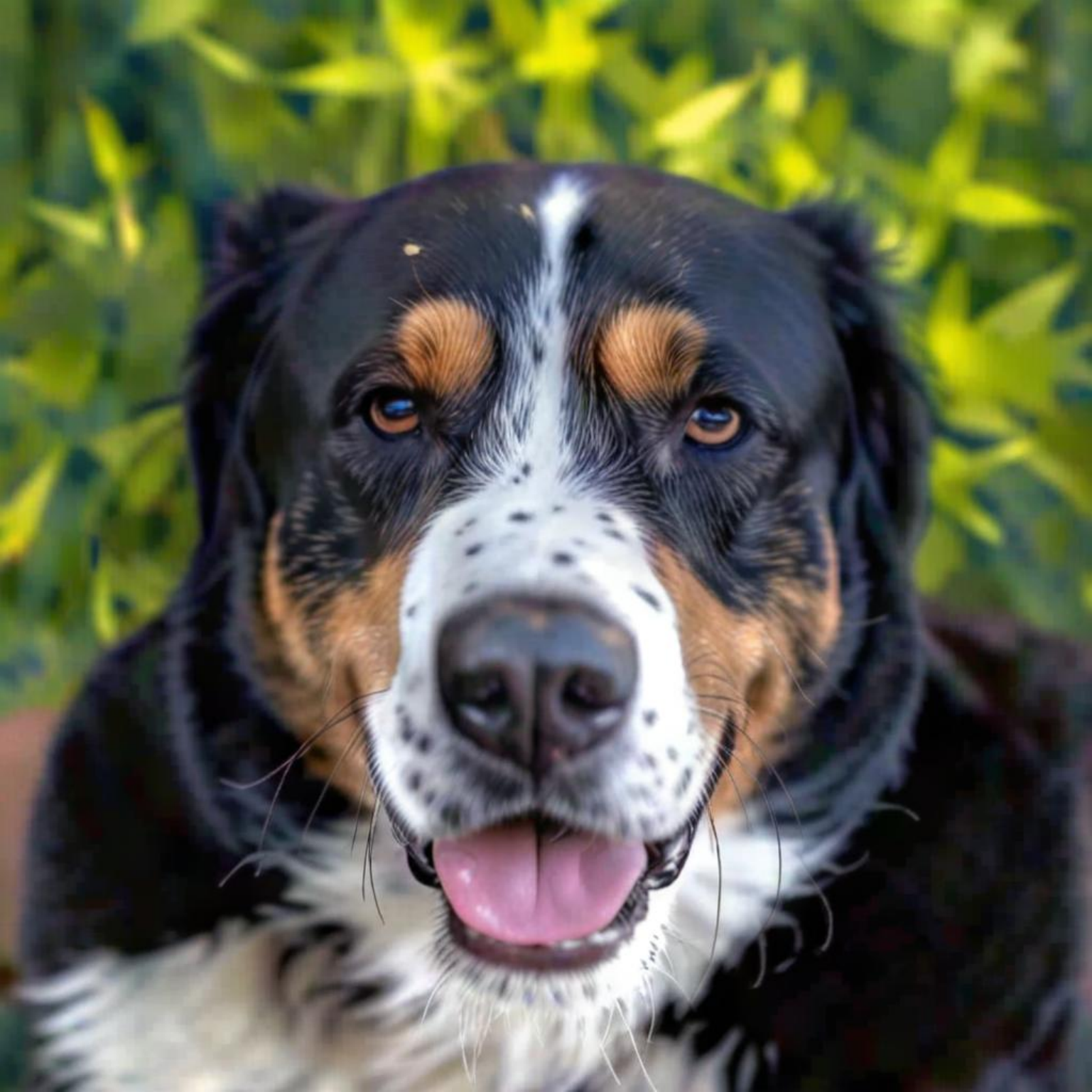}
    \end{minipage}
    \hfill
    \begin{minipage}{0.16\textwidth}
        \includegraphics[width=\textwidth]{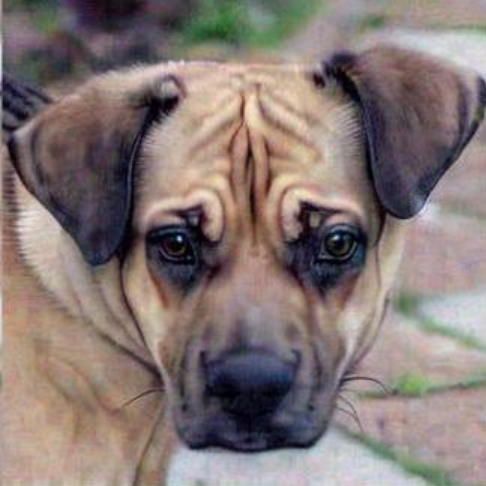}\\
        \includegraphics[width=\textwidth]{images/Tests/TReg_comparisons/dog_row2_col3.pdf}\\
        \includegraphics[width=\textwidth]{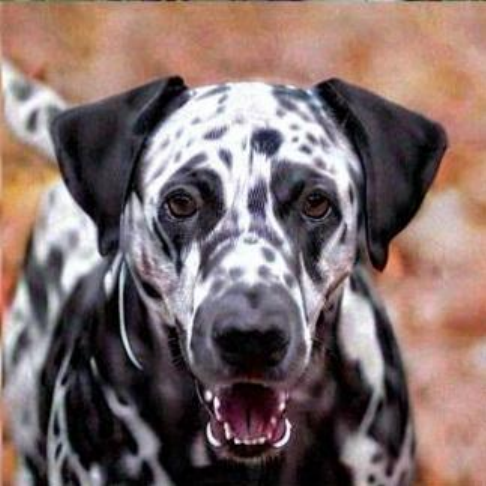}\\
        \includegraphics[width=\textwidth]{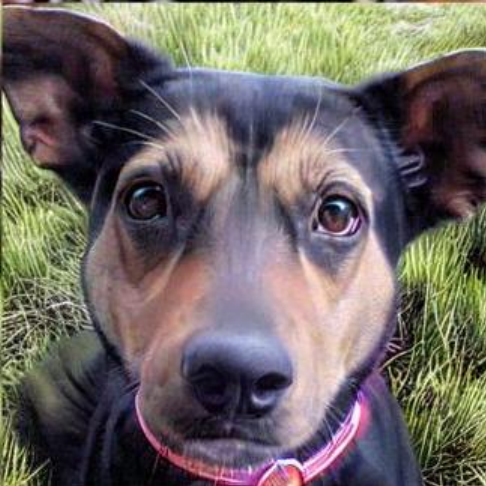}\\
        \includegraphics[width=\textwidth]{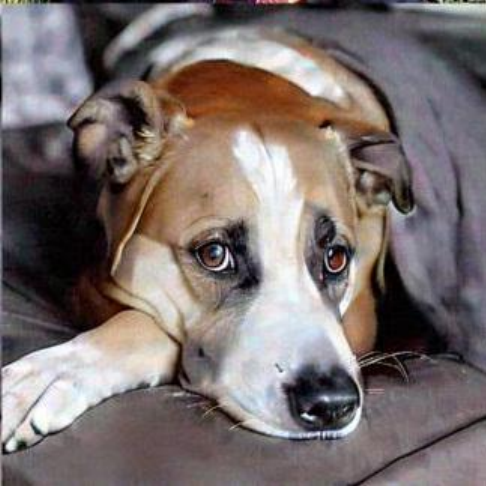}\\
        \includegraphics[width=\textwidth]{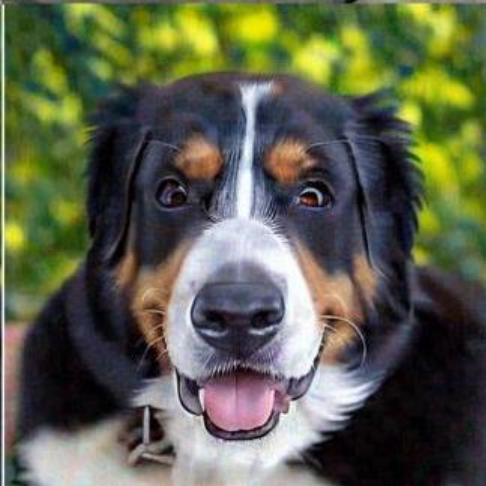}
    \end{minipage}
    \hfill
    \begin{minipage}{0.16\textwidth}
        \includegraphics[width=\textwidth]{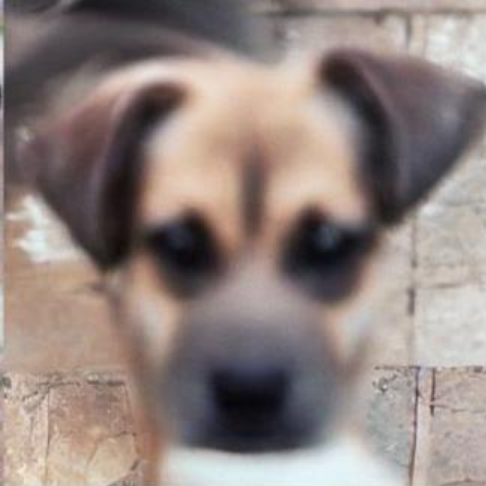}\\
        \includegraphics[width=\textwidth]{images/Tests/TReg_comparisons/dog_row2_col4.pdf}\\
        \includegraphics[width=\textwidth]{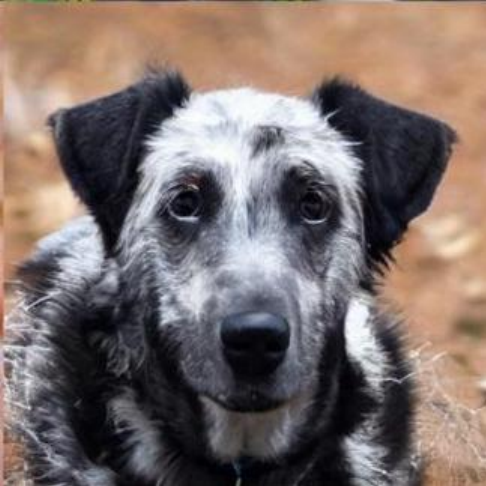}\\
        \includegraphics[width=\textwidth]{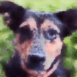}\\
        \includegraphics[width=\textwidth]{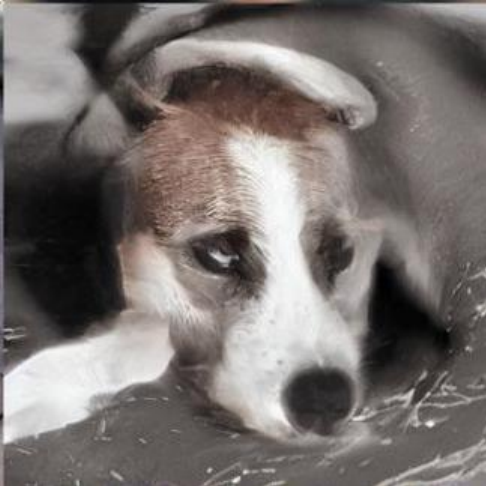}\\
        \includegraphics[width=\textwidth]{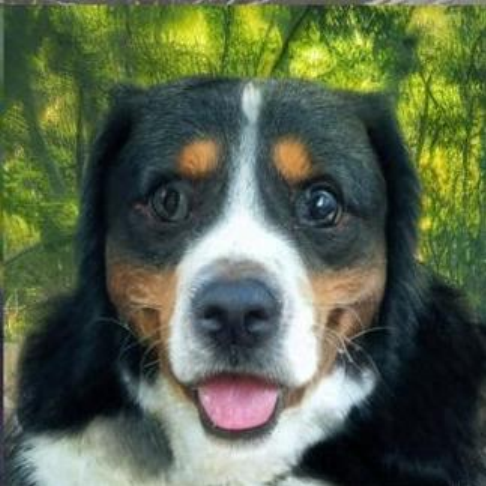}
    \end{minipage}
    \hfill
    \begin{minipage}{0.16\textwidth}
        \includegraphics[width=\textwidth]{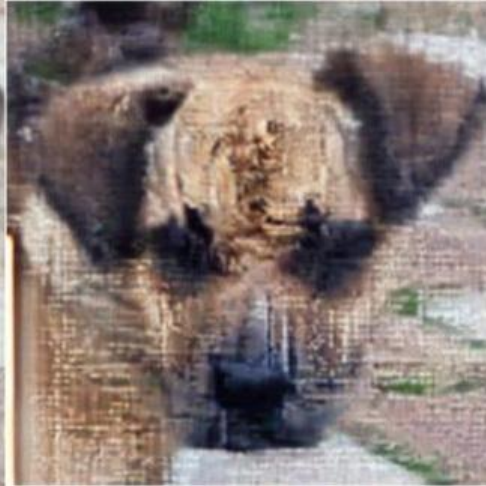}\\
        \includegraphics[width=\textwidth]{images/Tests/TReg_comparisons/dog_row2_col5.pdf}\\
        \includegraphics[width=\textwidth]{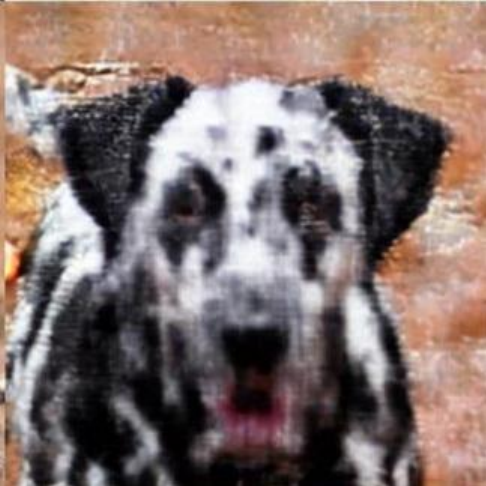}\\
        \includegraphics[width=\textwidth]{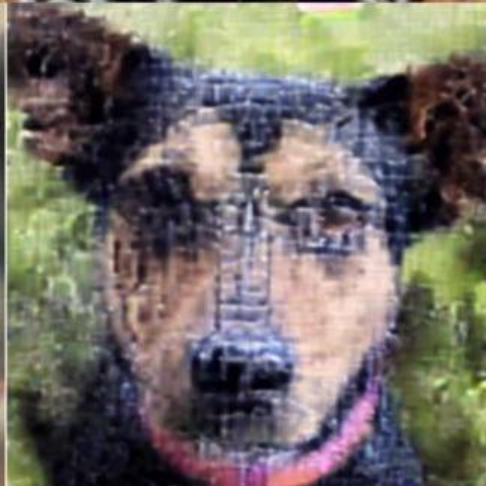}\\
        \includegraphics[width=\textwidth]{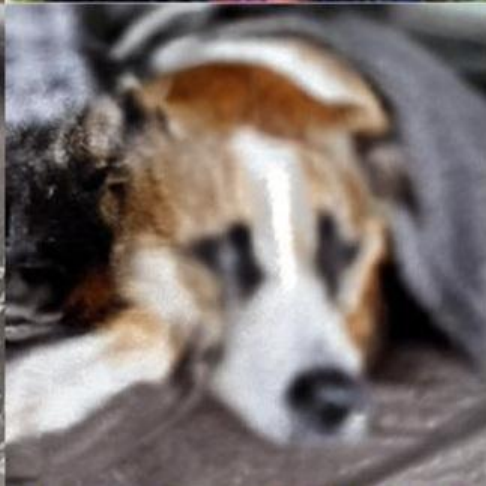}\\
        \includegraphics[width=\textwidth]{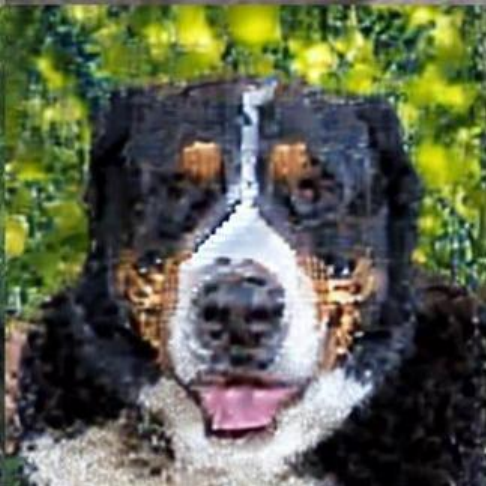}
    \end{minipage}

    \caption{Comparisons between LATINO-PRO, TReg and P2L.\vspace{4cm}}
    \label{fig:AFHQ_dogs}
\end{figure*}

\begin{figure*}[!h]
    \centering
    \makebox[0.16\textwidth]{\textbf{Measurement}}
    \hfill
    \makebox[0.16\textwidth]{\textbf{GT}}
    \hfill
    \makebox[0.16\textwidth]{\textbf{LATINO-PRO}}
    \hfill
    \makebox[0.16\textwidth]{\textbf{TREG}}
    \hfill
    \makebox[0.16\textwidth]{\textbf{P2L}}
    \hfill
    \makebox[0.16\textwidth]{\textbf{PSLD}}
    
    \vspace{2mm} 

    \begin{minipage}{0.16\textwidth}
        \includegraphics[width=\textwidth]{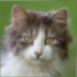}\\
        \includegraphics[width=\textwidth]{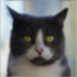}\\
        \includegraphics[width=\textwidth]{images/Tests/TReg_comparisons/degraded9.pdf}\\
        \includegraphics[width=\textwidth]{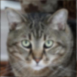}\\
        \includegraphics[width=\textwidth]{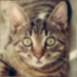}
    \end{minipage}
    \hfill
    \begin{minipage}{0.16\textwidth}
        \includegraphics[width=\textwidth]{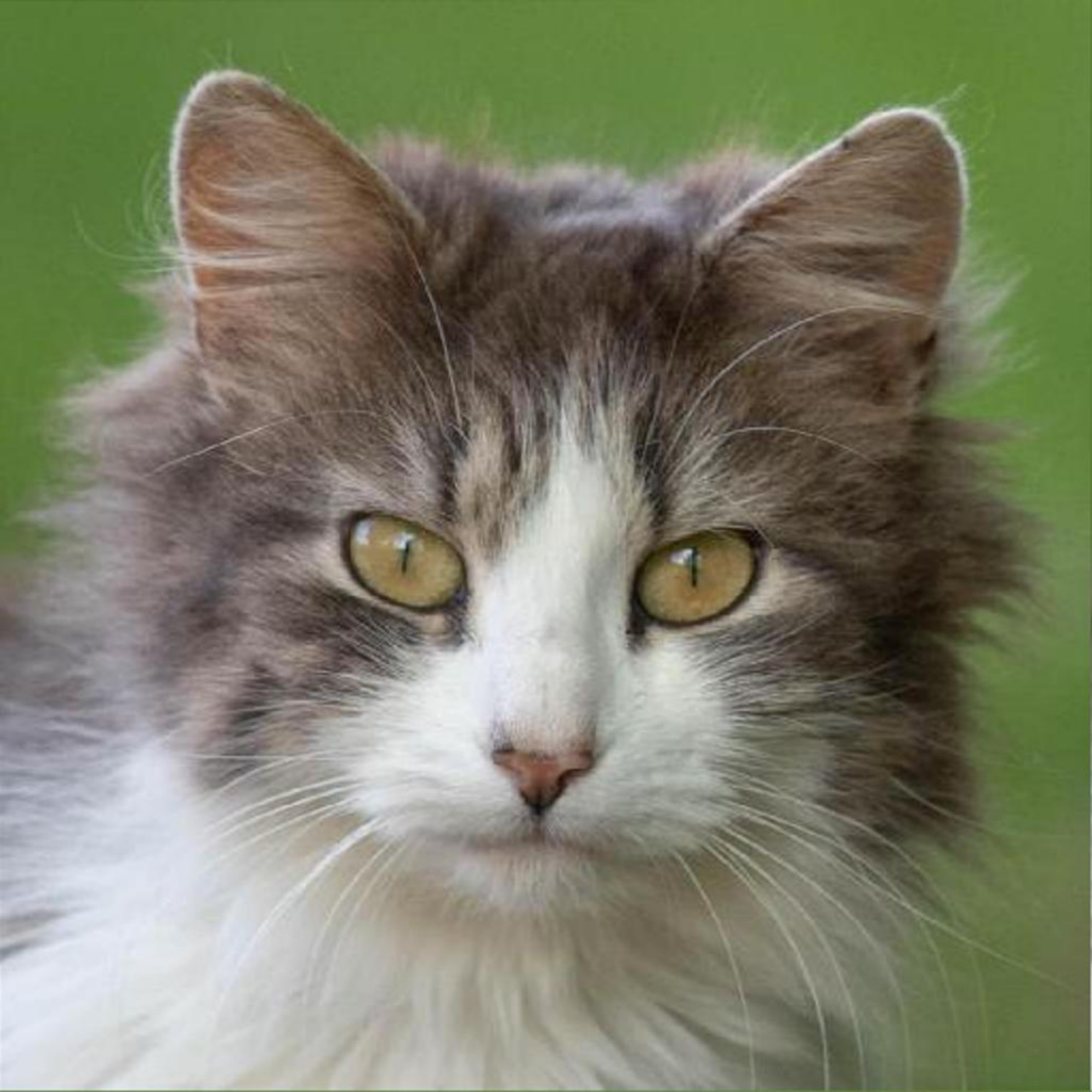} \\
        \includegraphics[width=\textwidth]{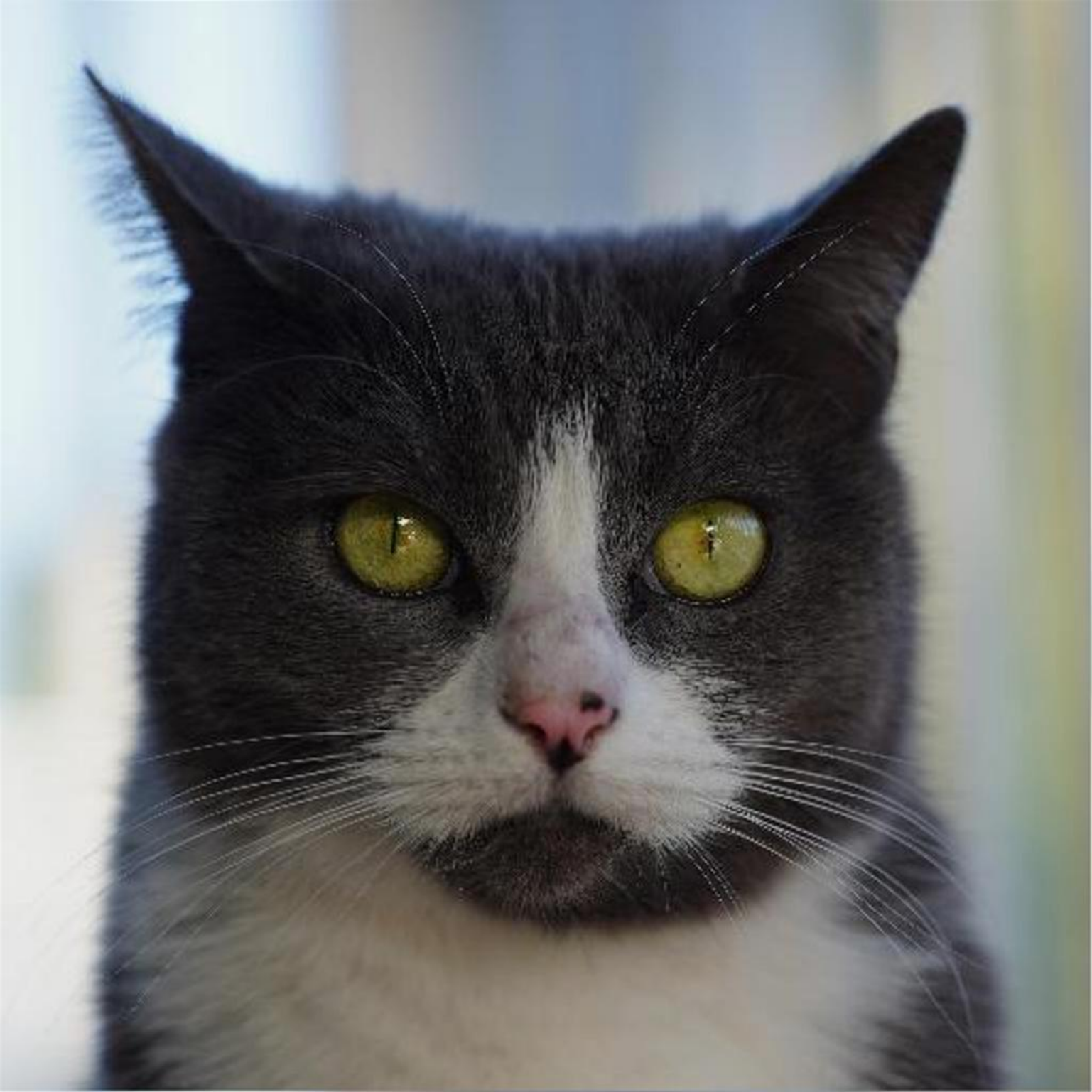} \\
        \includegraphics[width=\textwidth]{images/Tests/TReg_comparisons/clean9.pdf} \\
        \includegraphics[width=\textwidth]{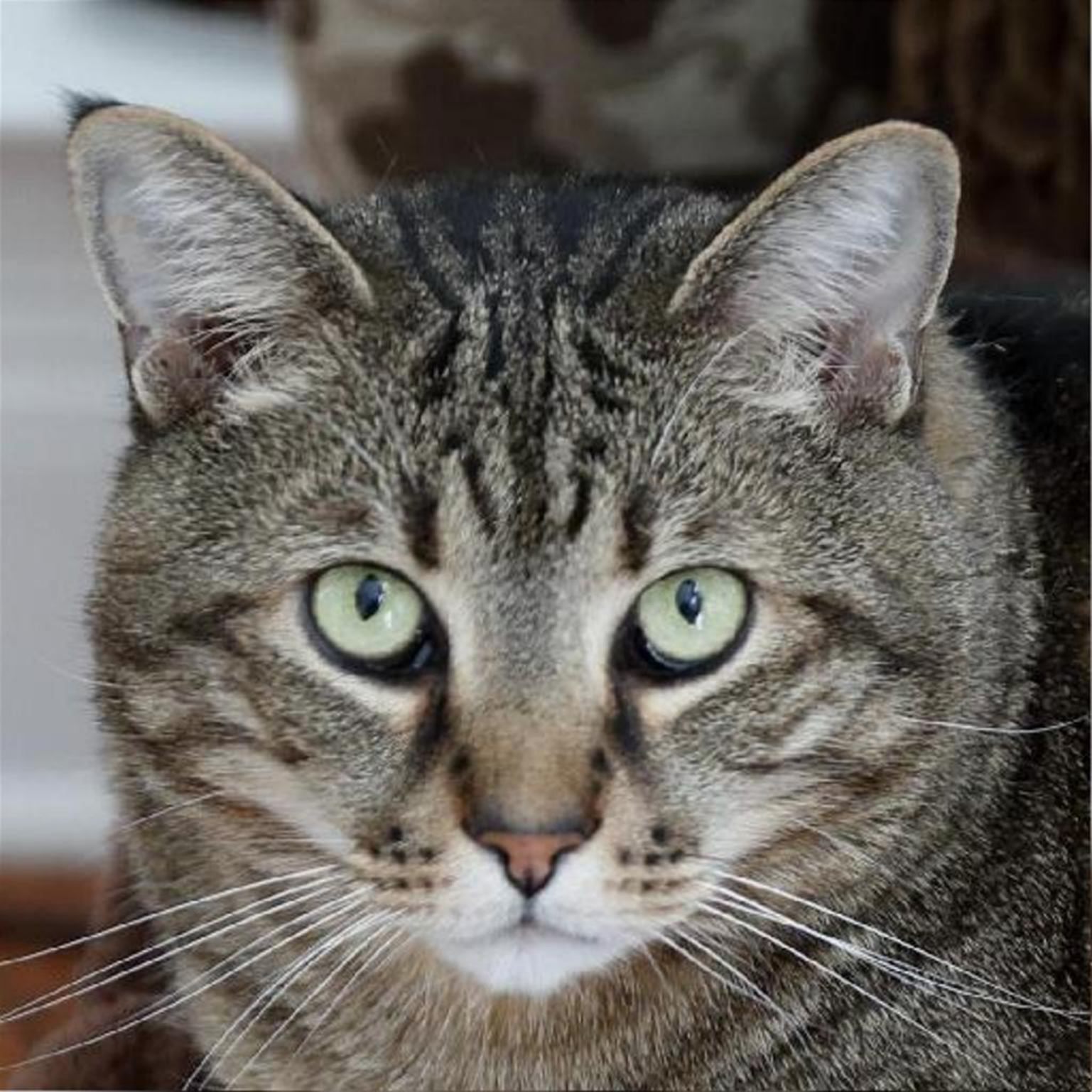} \\
        \includegraphics[width=\textwidth]{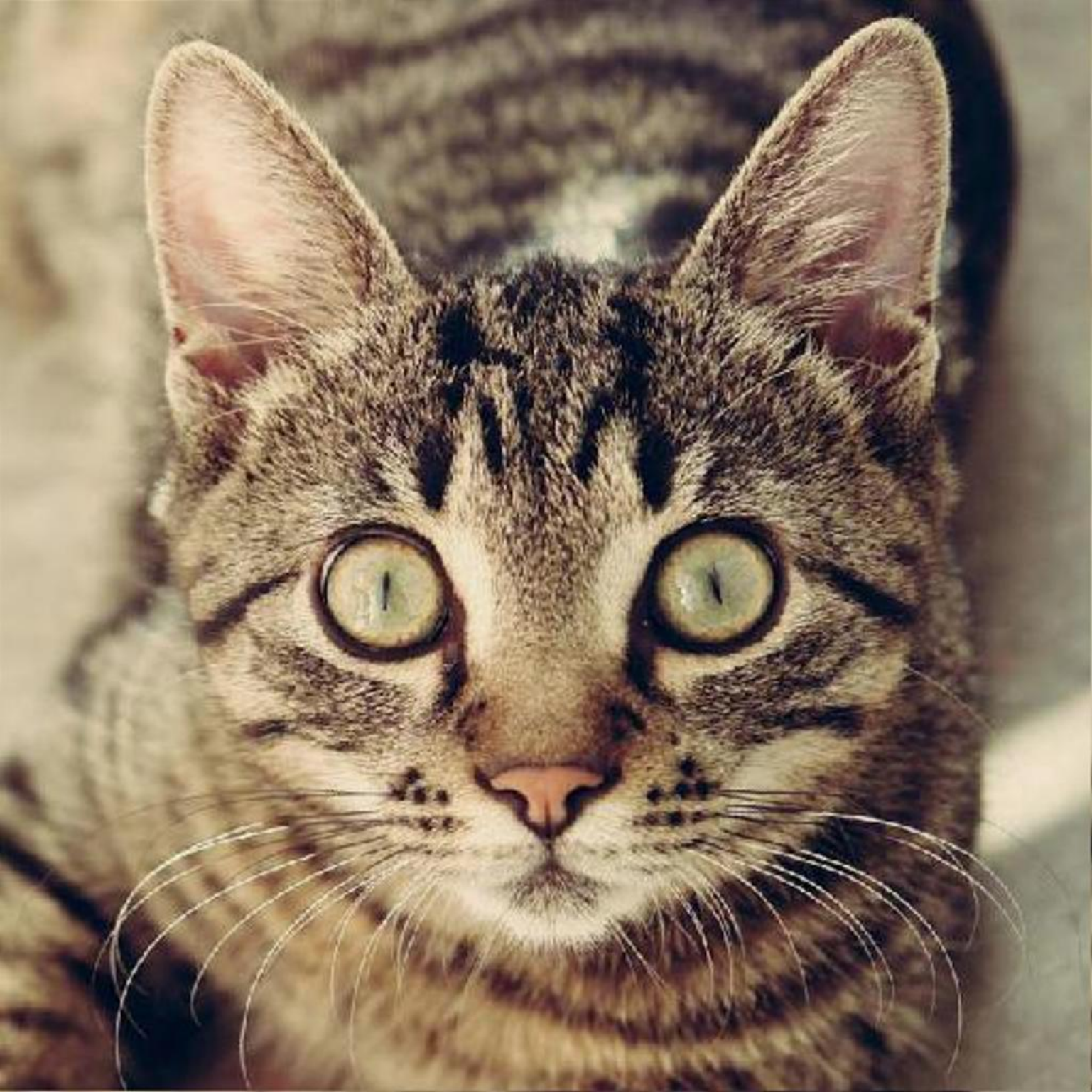}
    \end{minipage}
    \hfill
    \begin{minipage}{0.16\textwidth}
        \includegraphics[width=\textwidth]{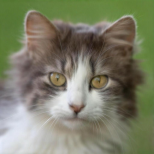} \\
        \includegraphics[width=\textwidth]{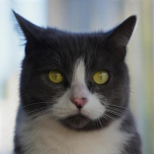} \\
        \includegraphics[width=\textwidth]{images/Tests/TReg_comparisons/restored9.pdf} \\
        \includegraphics[width=\textwidth]{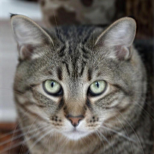} \\
        \includegraphics[width=\textwidth]{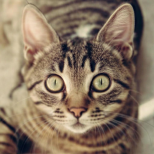}
    \end{minipage}
    \hfill
    \begin{minipage}{0.16\textwidth}
        \includegraphics[width=\textwidth]{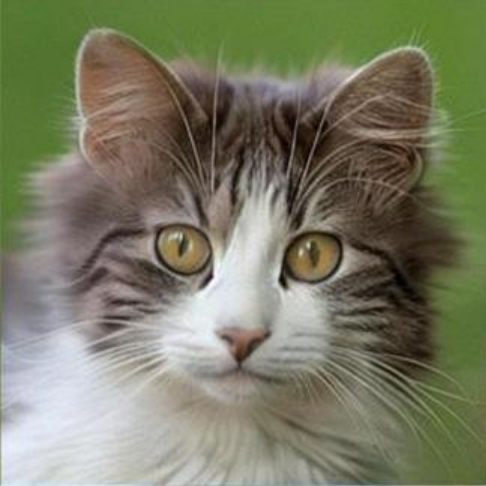}\\
        \includegraphics[width=\textwidth]{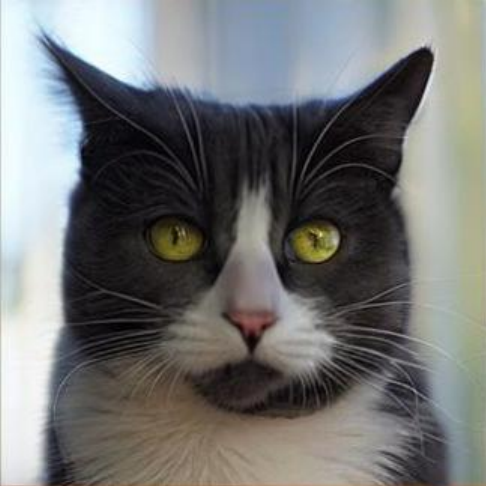}\\
        \includegraphics[width=\textwidth]{images/Tests/TReg_comparisons/cat_row3_col3.pdf}\\
        \includegraphics[width=\textwidth]{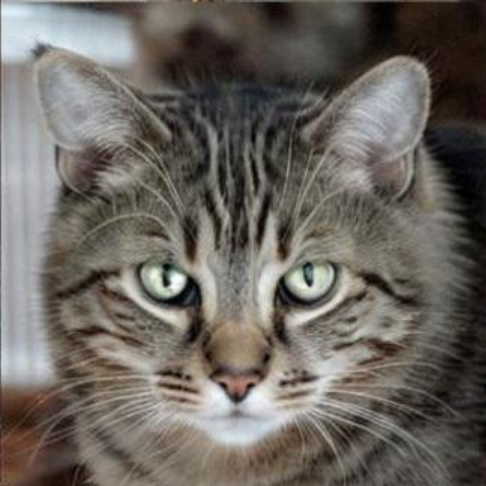}\\
        \includegraphics[width=\textwidth]{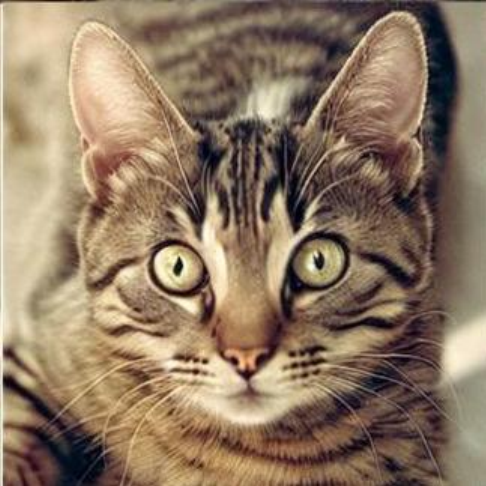}
    \end{minipage}
    \hfill
    \begin{minipage}{0.16\textwidth}
        \includegraphics[width=\textwidth]{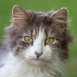}\\
        \includegraphics[width=\textwidth]{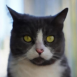}\\
        \includegraphics[width=\textwidth]{images/Tests/TReg_comparisons/cat_row3_col4.pdf}\\
        \includegraphics[width=\textwidth]{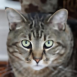}\\
        \includegraphics[width=\textwidth]{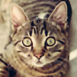}
    \end{minipage}
    \hfill
    \begin{minipage}{0.16\textwidth}
        \includegraphics[width=\textwidth]{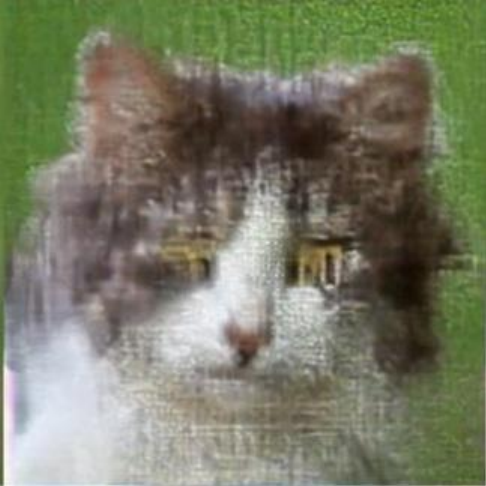}\\
        \includegraphics[width=\textwidth]{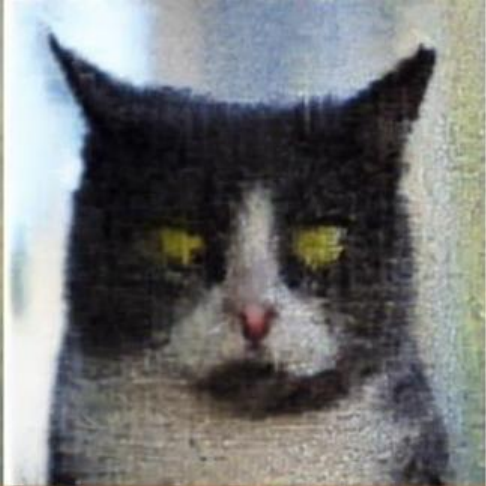}\\
        \includegraphics[width=\textwidth]{images/Tests/TReg_comparisons/cat_row3_col5.pdf}\\
        \includegraphics[width=\textwidth]{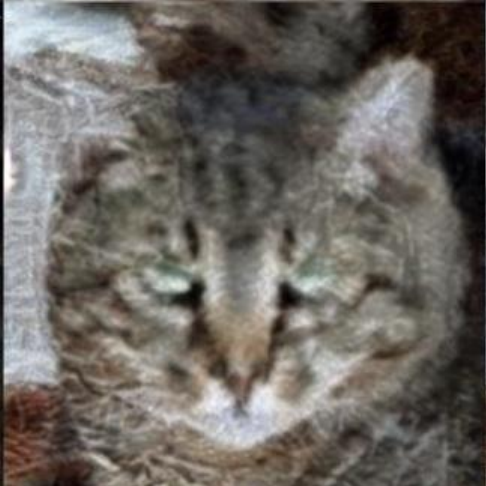}\\
        \includegraphics[width=\textwidth]{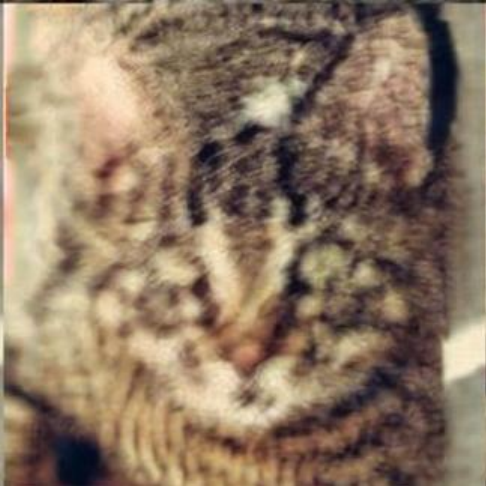}
    \end{minipage}

    \caption{Comparisons between LATINO-PRO, TReg and P2L.\vspace{1cm}}
    \label{fig:AFHQ_cats}
\end{figure*}

\begin{table*}[!h]\small
\centering 
\begin{tabular}{lccccccc}
\toprule

 & & \multicolumn{3}{c}{\textbf{Deblur (Gaussian)}} & \multicolumn{3}{c}{\textbf{SR$\times16$}} \\ 
  \cmidrule(lr){3-5} \cmidrule(lr){6-8}
  \textbf{Method} &  \textbf{NFE↓} & \textbf{FID↓} & \textbf{PSNR↑} & \textbf{LPIPS↓} & \textbf{FID↓} & \textbf{PSNR↑} & \textbf{LPIPS↓} \\ \hline
\textbf{LATINO-PRO}  & \underline{68} & \textbf{24.56} & \textbf{27.77} & \textbf{0.347} &  \textbf{39.85}   & \textbf{22.28} & \textbf{0.463} \\ 
\textbf{LATINO}   & \textbf{8}   & \underline{28.02}   & 27.21   & \underline{0.360} & \underline{59.03} & 20.84 & \underline{0.496} \\ \hline
\textbf{LATINO-PRO "a cat"}   & \underline{68}  & 48.93     & \underline{27.42}   & 0.382 & 125.63      & \underline{22.23} & 0.502 \\
\textbf{LATINO "a cat"}   & \textbf{8}  & 82.45    & 26.34   & 0.420 & 190.04         & 20.09    & 0.547             \\ \hline
LDPS   & 1000  & 81.18 &  24.86  & 0.502  & 154.3    & 18.26  & 0.667    \\ 
PSLD \cite{Rout2023}  & 1000 & 41.04  & 26.12 & 0.455  & 92.35  &  22.20  & 0.585 \\ \bottomrule
\end{tabular}
\caption{Results for Gaussian Deblurring with $\sigma = 5.0$, and $\times 16$ super-resolution, both with noise $\sigma_y = 0.01$ on the AFHQ-512 dogs val dataset. Base prompt when not specified: \texttt{a sharp photo of a dog}. \textbf{Bold}:  best, \underline{underline}: second best.\vspace{.3cm}}
\label{tab:512_comparison_AFHQ_prompt}
\end{table*}

{

\begin{figure*}[!h]
    \centering
    \makebox[0.19\textwidth]{\textbf{Prior sample at Step 0}}
    \hfill
    \makebox[0.19\textwidth]{\textbf{Optimized prior sample}}
    \hfill
    \makebox[0.19\textwidth]{\textbf{GT}}
    \hfill
    \makebox[0.19\textwidth]{\textbf{Measurement}}
    \hfill
    \makebox[0.19\textwidth]{\textbf{Restored}}
    
    \vspace{2mm}

    \begin{minipage}{0.19\textwidth}
        \includegraphics[width=\textwidth]{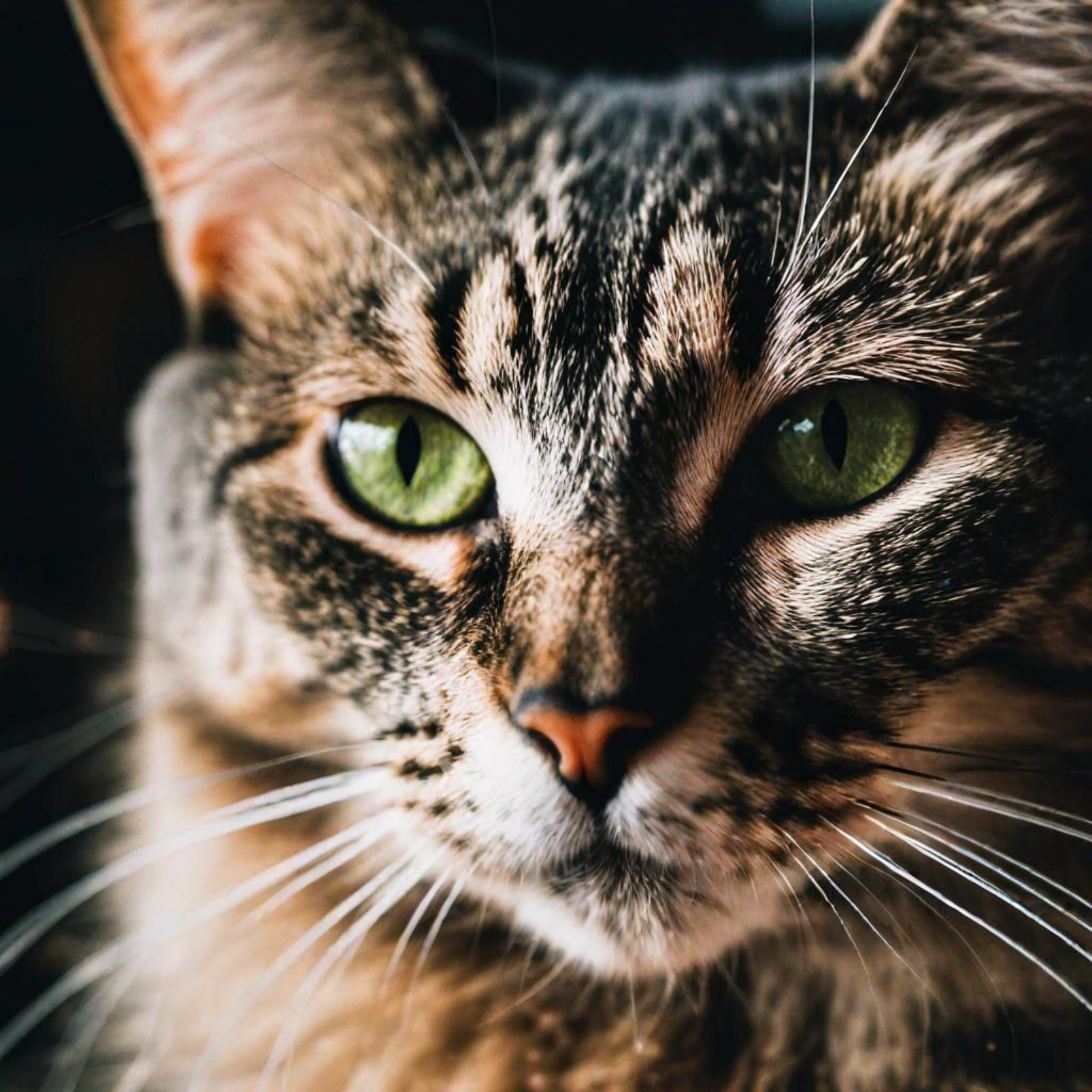}
    \end{minipage}
    \hfill
    \begin{minipage}{0.19\textwidth}
        \includegraphics[width=\textwidth]{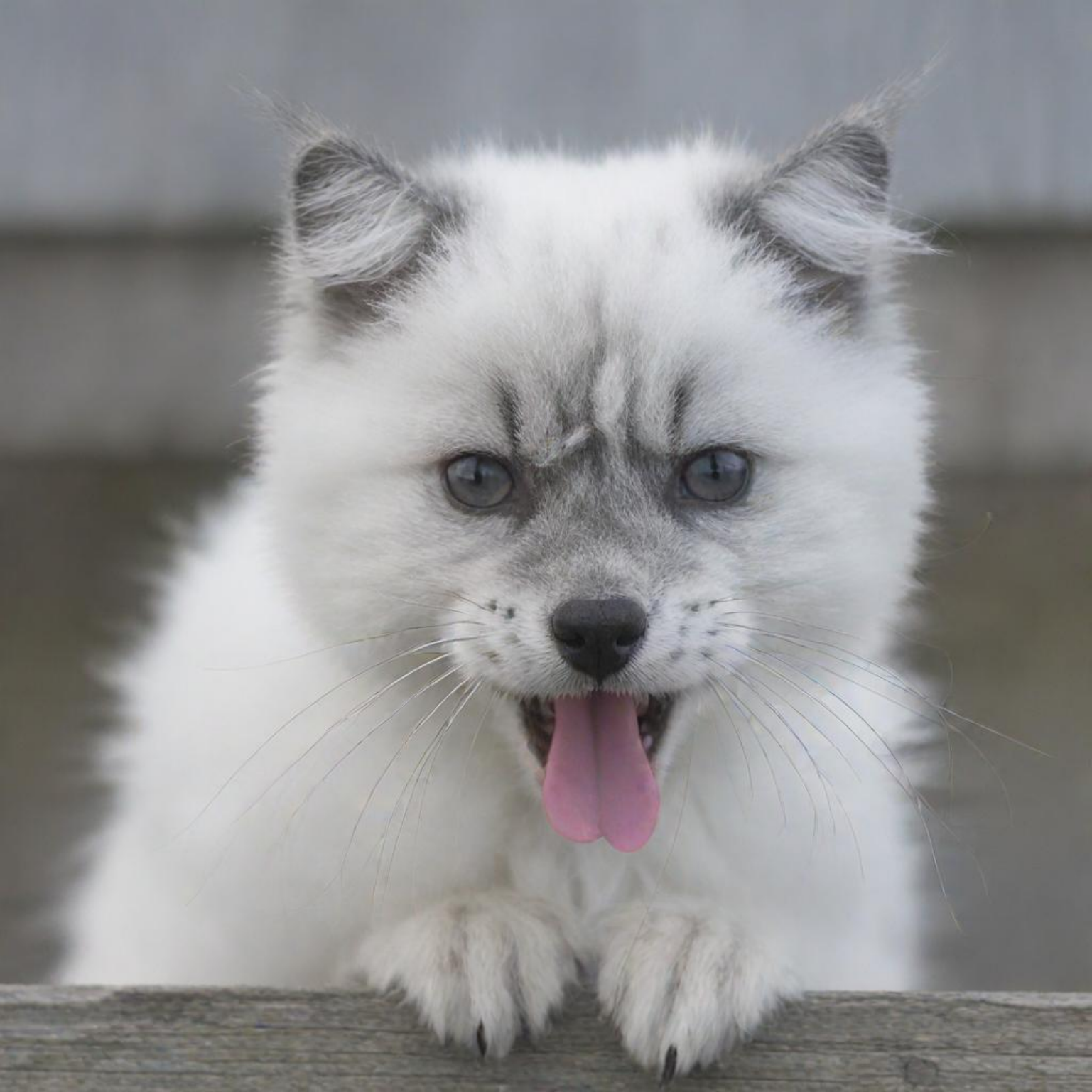}
    \end{minipage}
    \hfill
    \begin{minipage}{0.19\textwidth}
        \includegraphics[width=\textwidth]{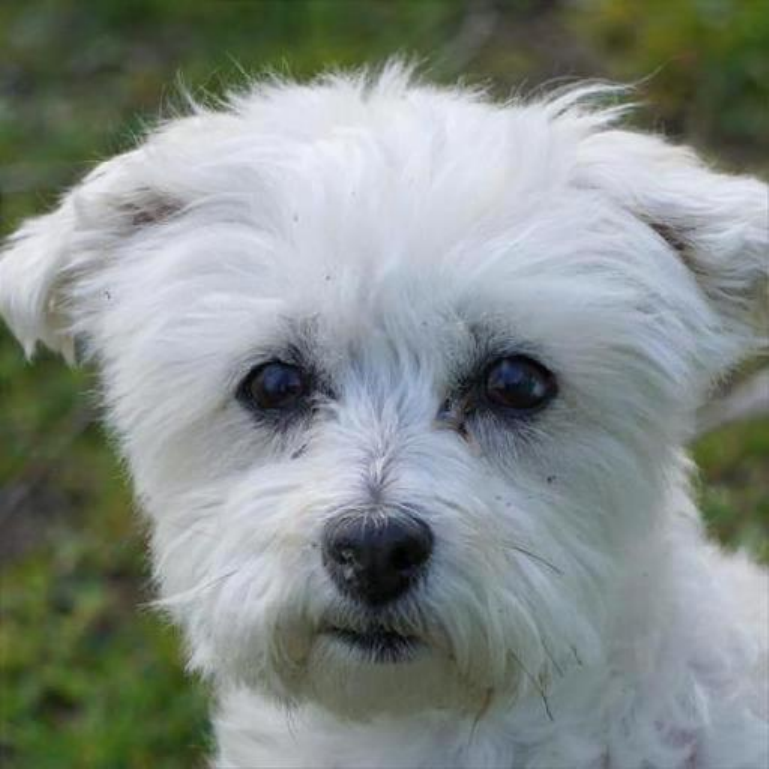}
    \end{minipage}
    \hfill
    \begin{minipage}{0.19\textwidth}
        \includegraphics[width=\textwidth]{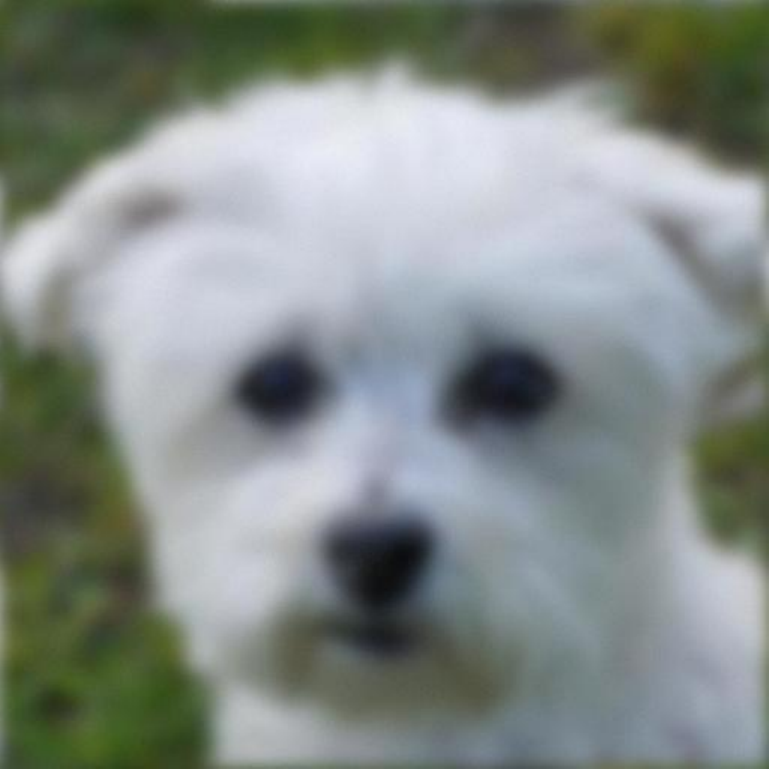}
    \end{minipage}
    \hfill
    \begin{minipage}{0.19\textwidth}
        \includegraphics[width=\textwidth]{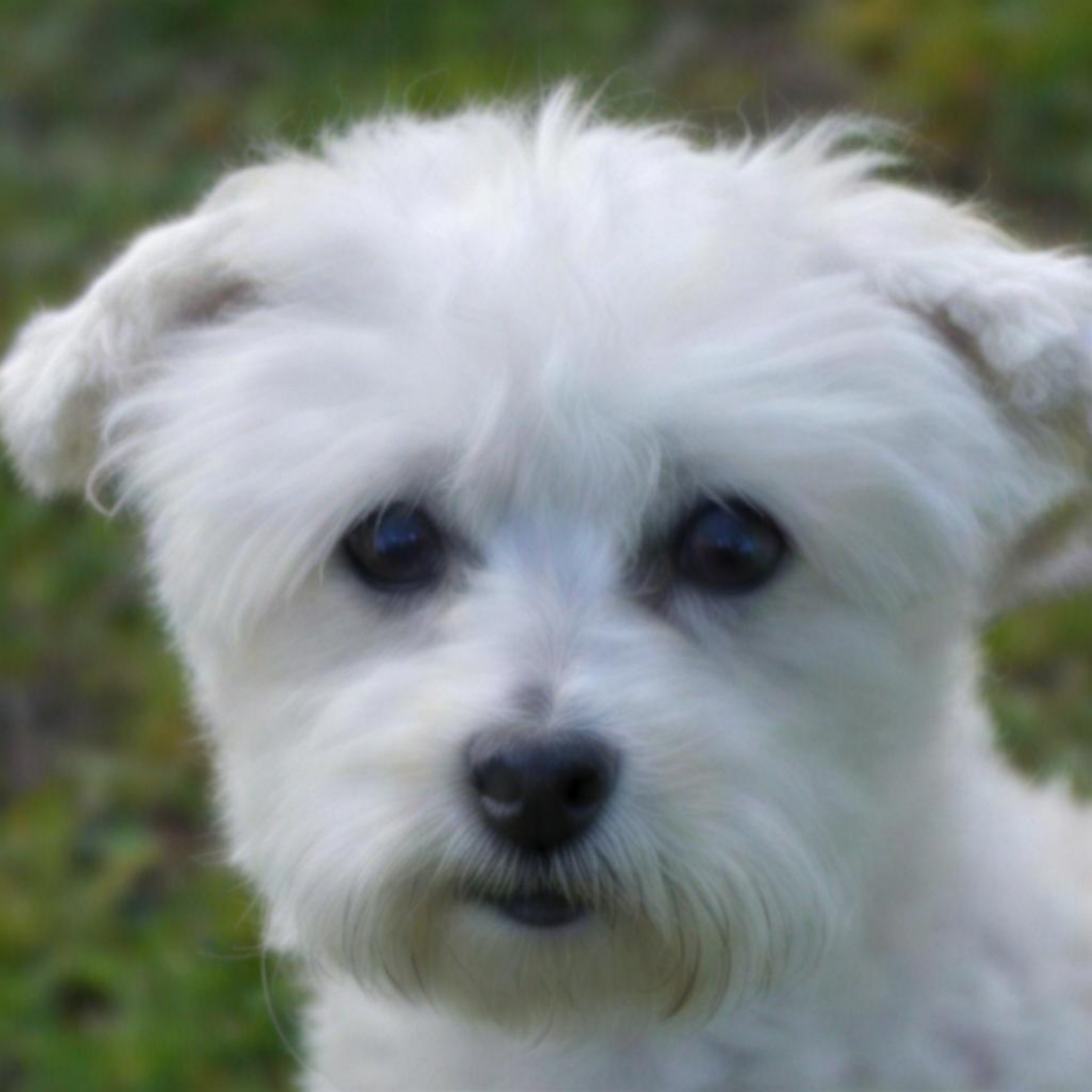}
    \end{minipage}

    \caption{Effect of prompt optimization on the AFHQ-dogs val dataset. Initial prompt: \texttt{a sharp photo of a cat}.\vspace{-0.2cm}}
    \label{fig:AFHQ_cat_dog}
\end{figure*}

\begin{figure*}[!h]
    \centering
    \begin{minipage}{0.47\textwidth}
        \includegraphics[width=\textwidth]{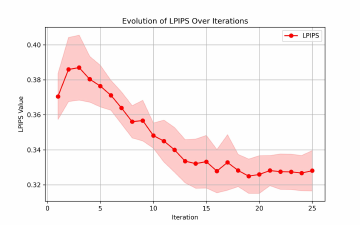}
    \end{minipage}
    \hfill
    \begin{minipage}{0.47\textwidth}
        \includegraphics[width=\textwidth]{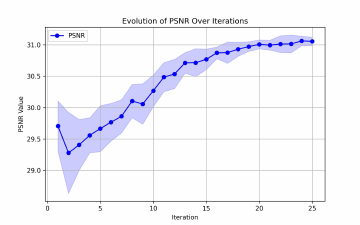}
    \end{minipage}
\vspace*{-0.2cm}
    \caption{Metrics evolution during LATINO-PRO iterations for the example in Figure \ref{fig:AFHQ_cat_dog}. Initial prompt: \texttt{a sharp photo of a cat}.\vspace{-0.2cm}}
    \label{fig:AFHQ_cat_dog_metrics}
\end{figure*}

\begin{figure*}[!h]
    \centering
    \makebox[0.19\textwidth]{\textbf{Prior sample at Step 0}}
    \hfill
    \makebox[0.19\textwidth]{\textbf{Optimized prior sample}}
    \hfill
    \makebox[0.19\textwidth]{\textbf{GT}}
    \hfill
    \makebox[0.19\textwidth]{\textbf{Measurement}}
    \hfill
    \makebox[0.19\textwidth]{\textbf{Restored}}
    
    \vspace{2mm}

    \begin{minipage}{0.19\textwidth}
        \includegraphics[width=\textwidth]{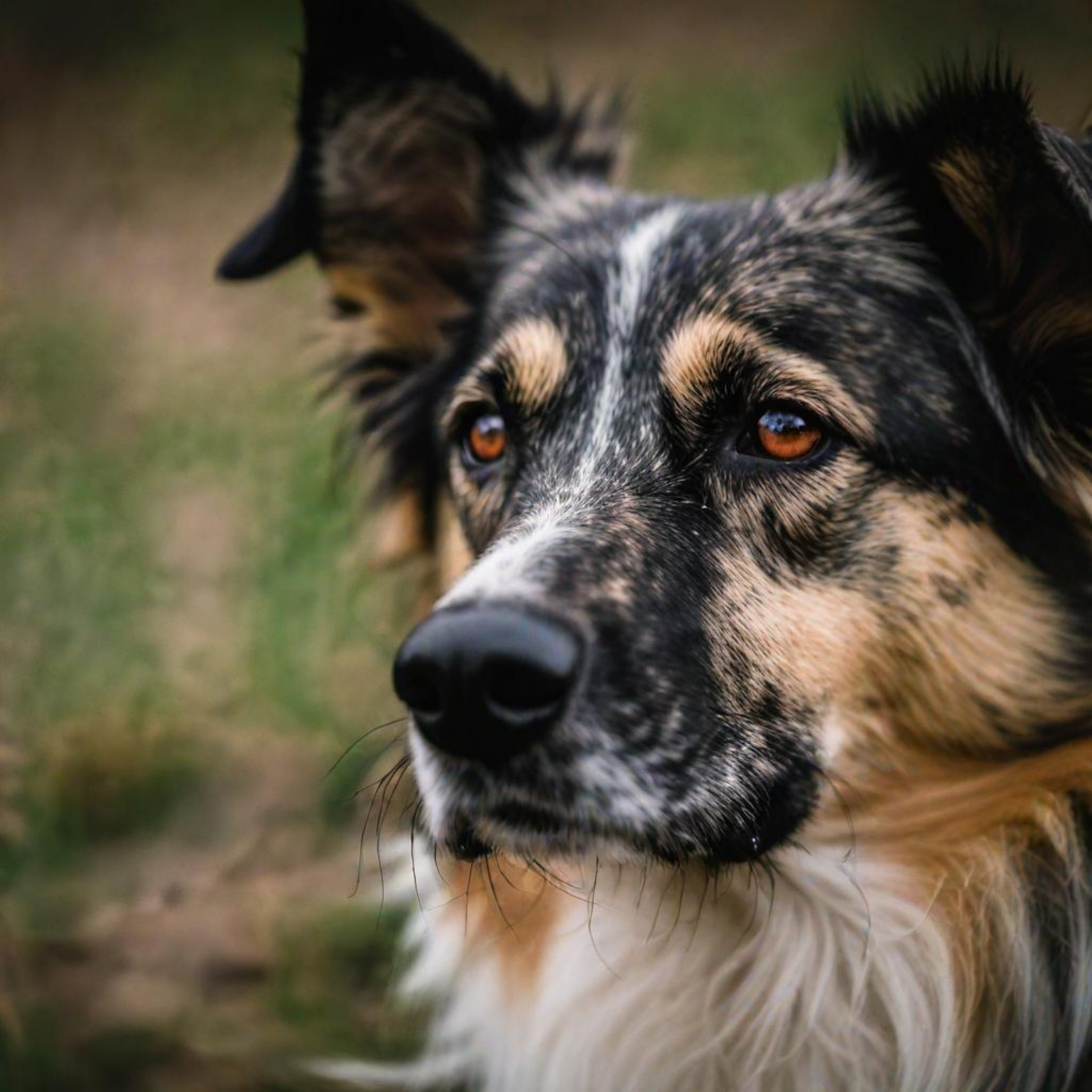}\\
        \includegraphics[width=\textwidth]{images/Tests/Deblurring_3.0_0.01_512_prompt/prior_0.pdf}\\
        \includegraphics[width=\textwidth]{images/Tests/Deblurring_3.0_0.01_512_prompt/prior_0.pdf}
    \end{minipage}
    \hfill
    \begin{minipage}{0.19\textwidth}
        \includegraphics[width=\textwidth]{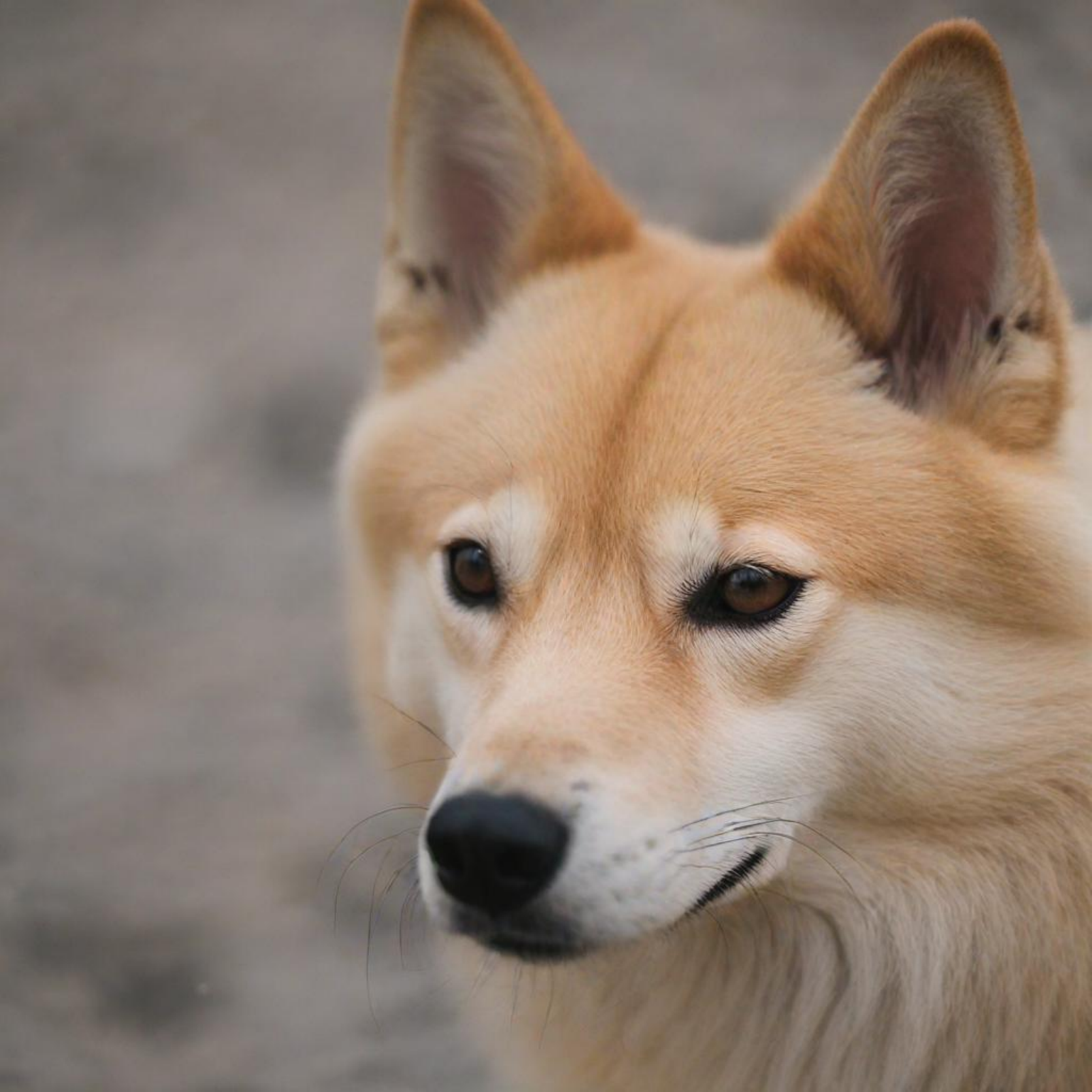} \\
        \includegraphics[width=\textwidth]{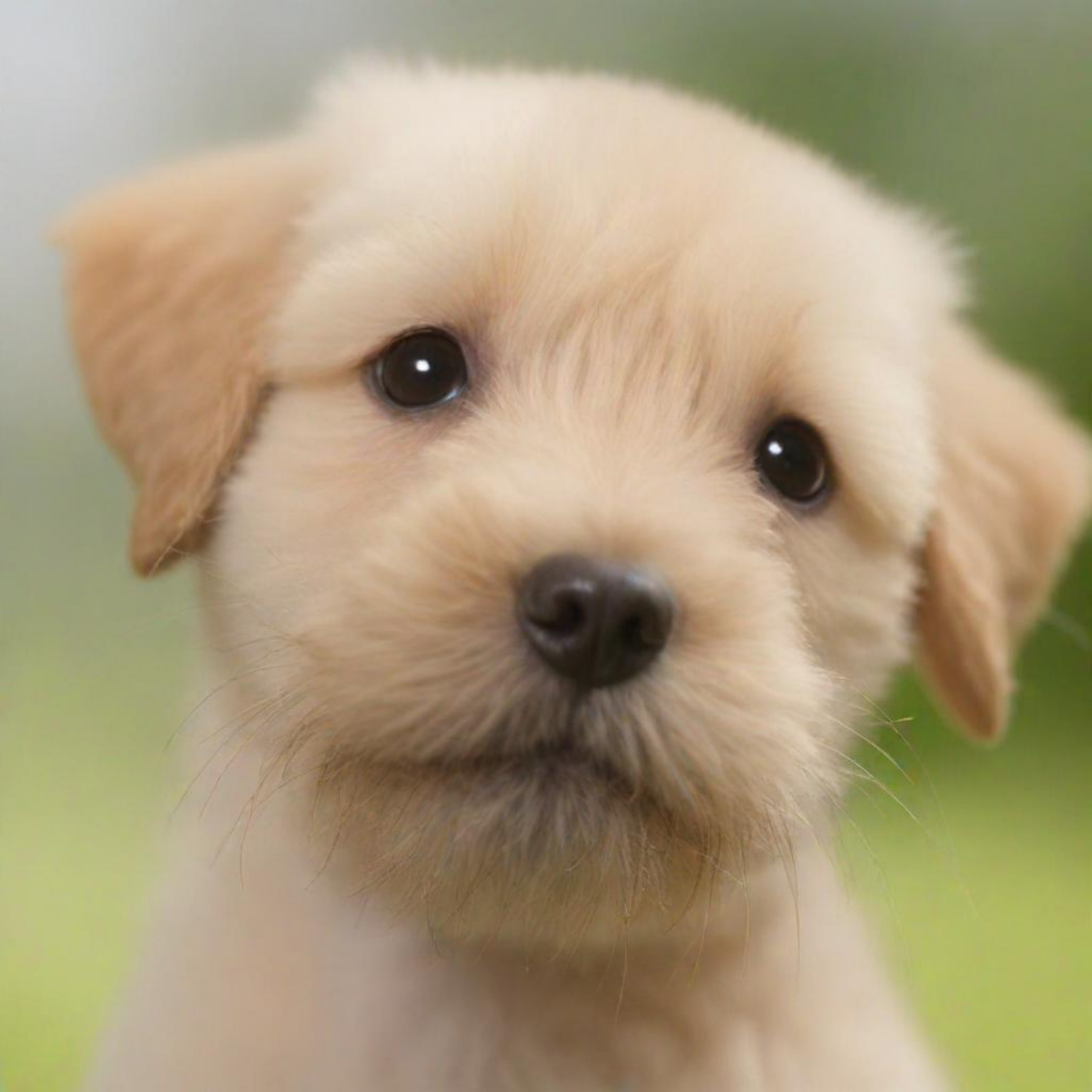} \\
        \includegraphics[width=\textwidth]{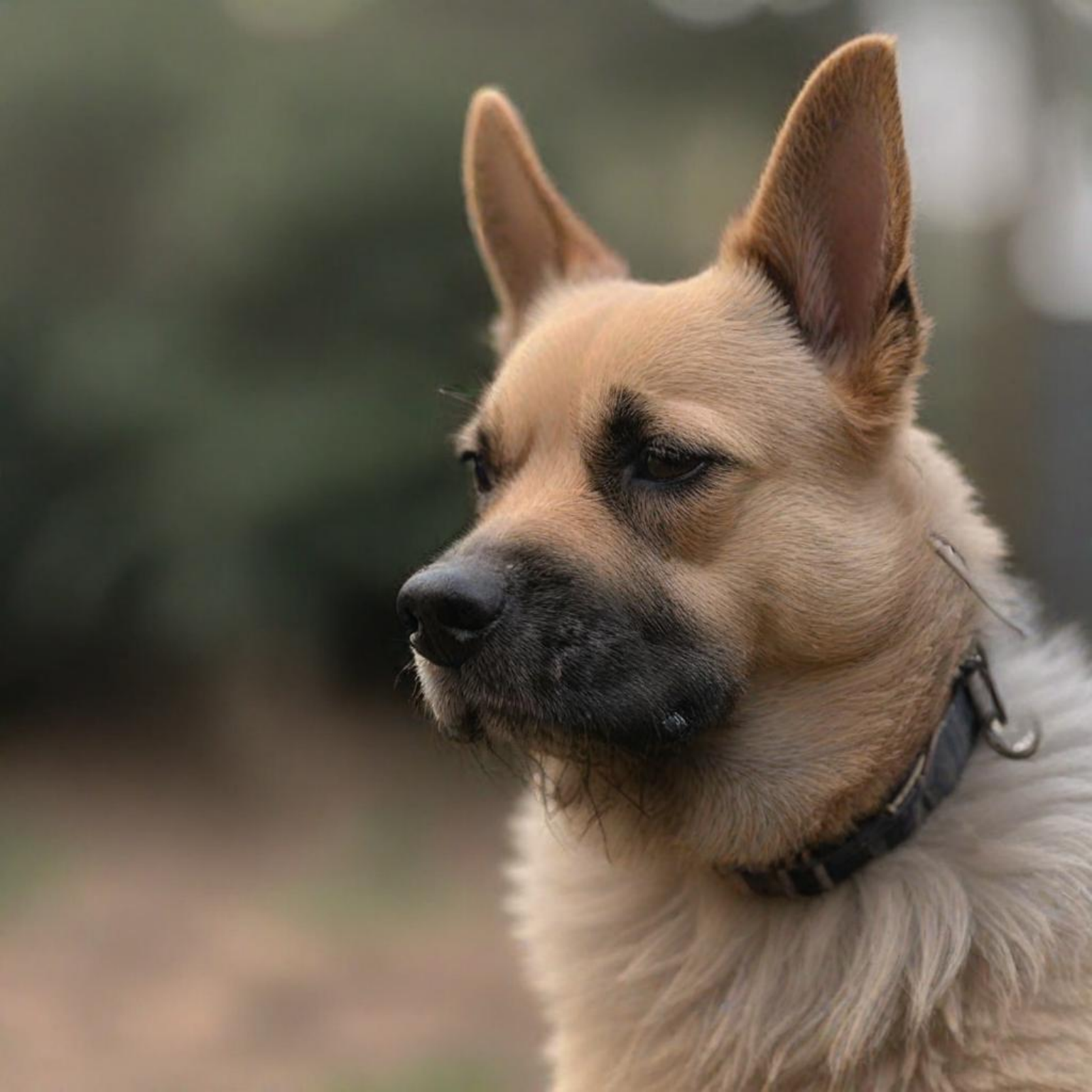}
    \end{minipage}
    \hfill
    \begin{minipage}{0.19\textwidth}
        \includegraphics[width=\textwidth]{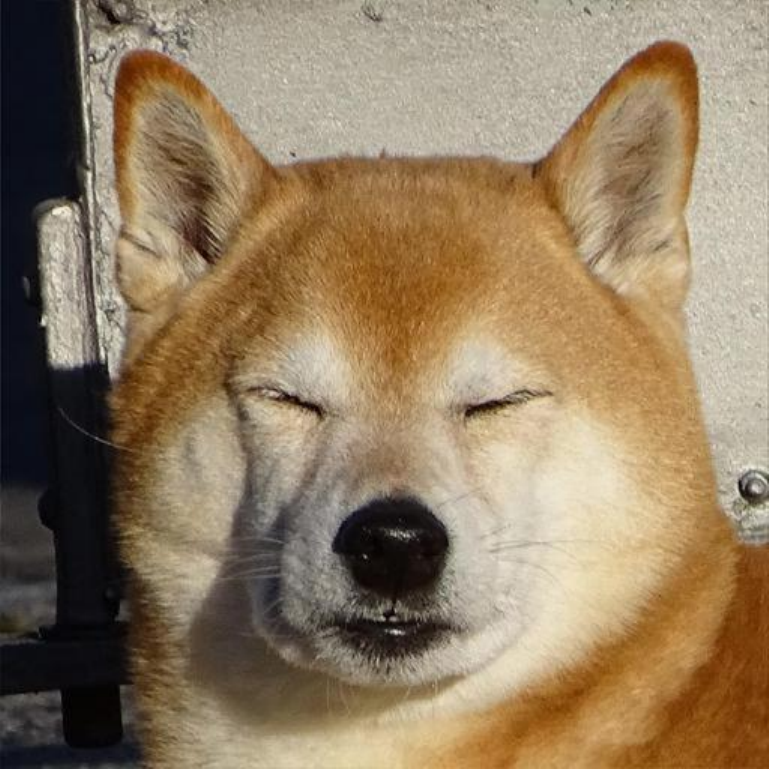} \\
        \includegraphics[width=\textwidth]{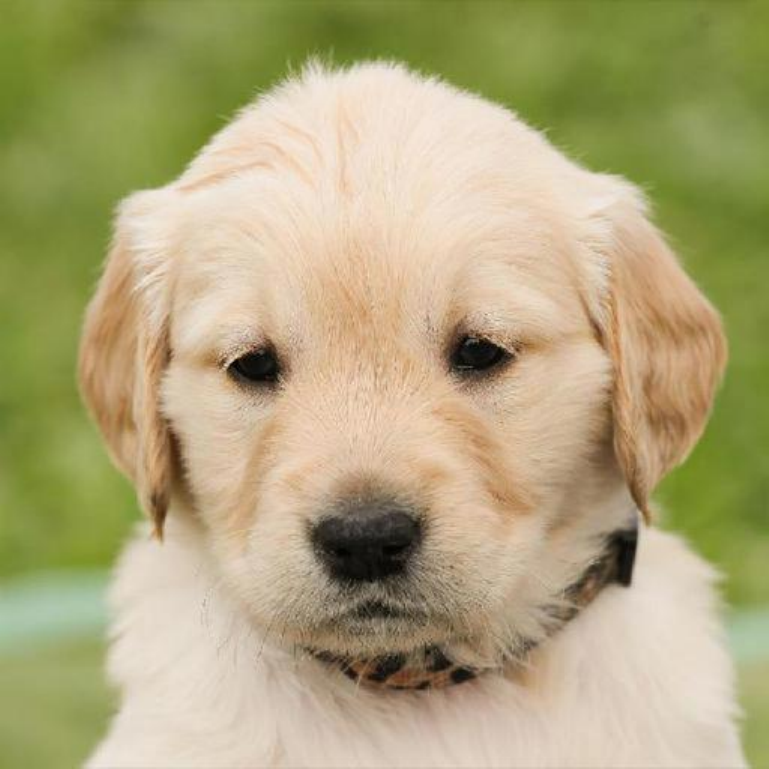} \\
        \includegraphics[width=\textwidth]{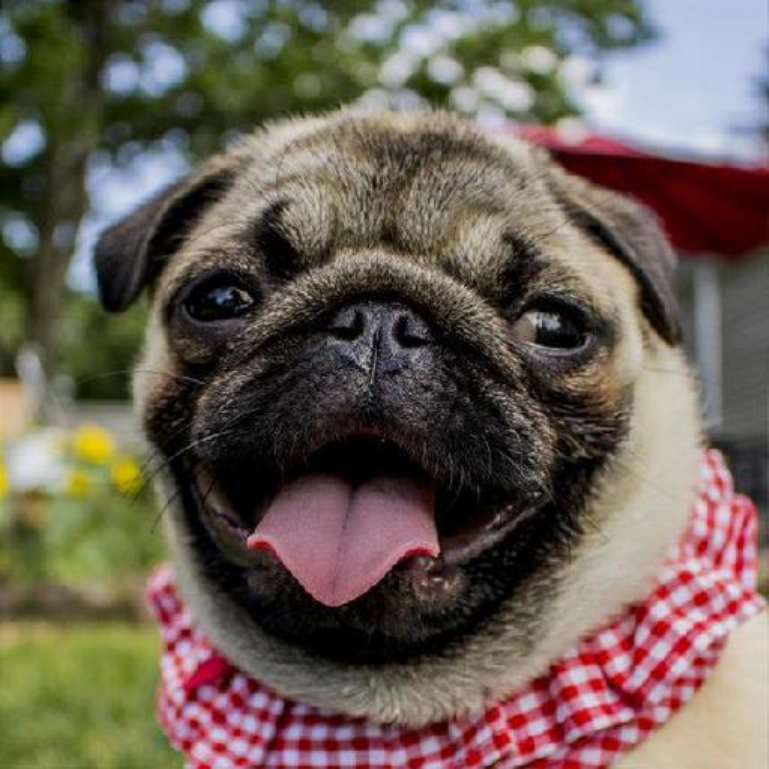}
    \end{minipage}
    \hfill
    \begin{minipage}{0.19\textwidth}
        \includegraphics[width=\textwidth]{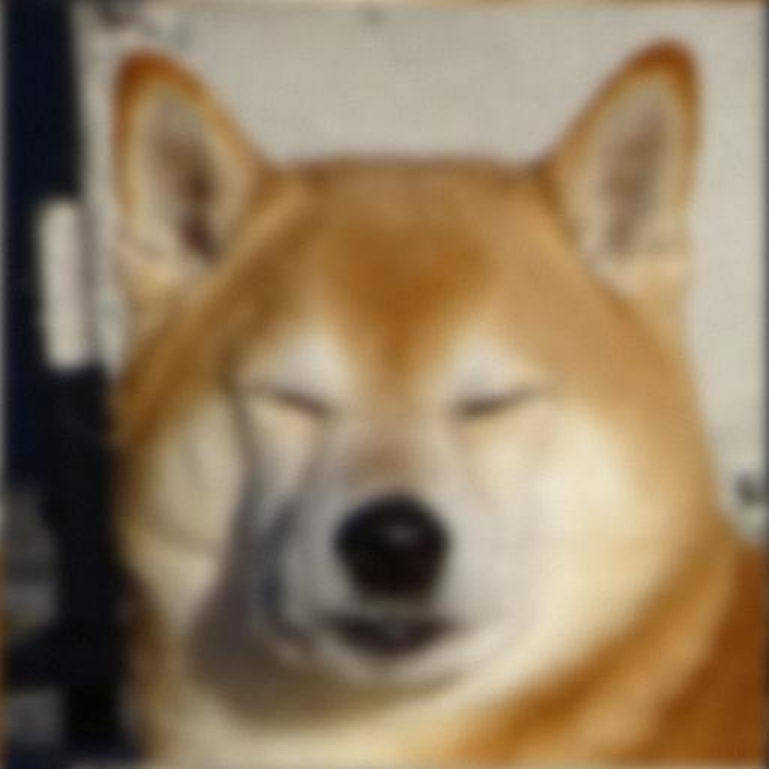} \\
        \includegraphics[width=\textwidth]{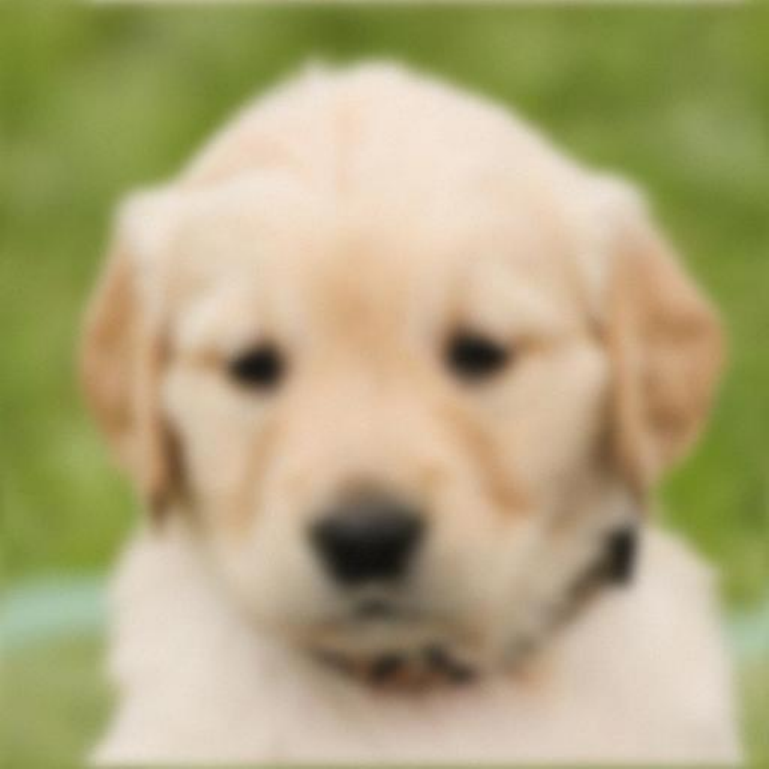} \\
        \includegraphics[width=\textwidth]{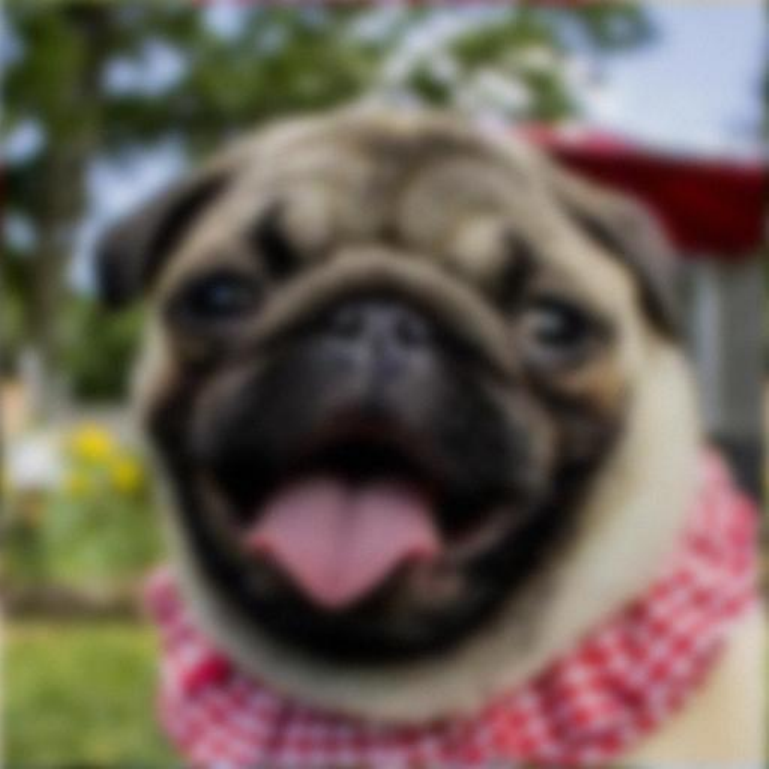}
    \end{minipage}
    \hfill
    \begin{minipage}{0.19\textwidth}
        \includegraphics[width=\textwidth]{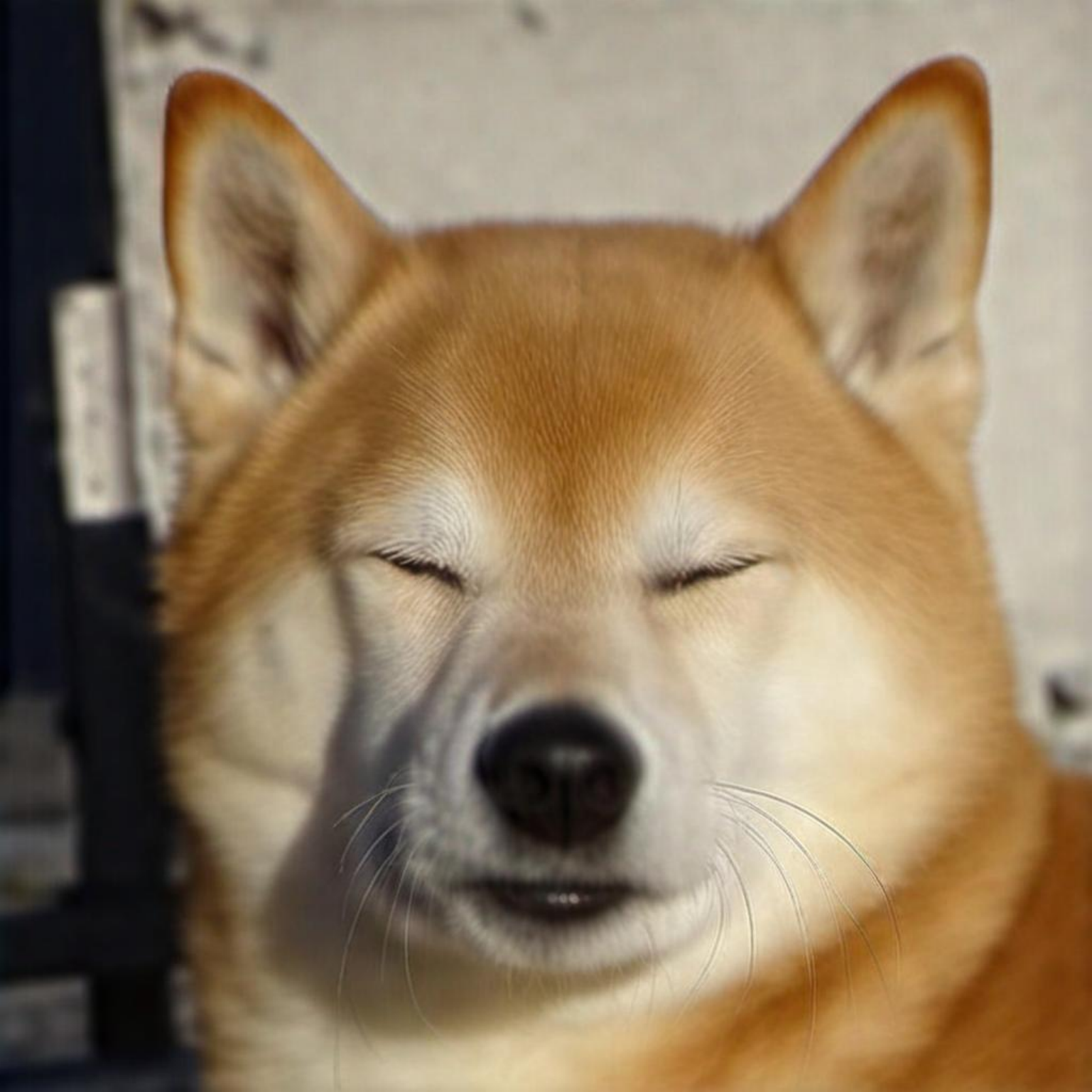} \\
        \includegraphics[width=\textwidth]{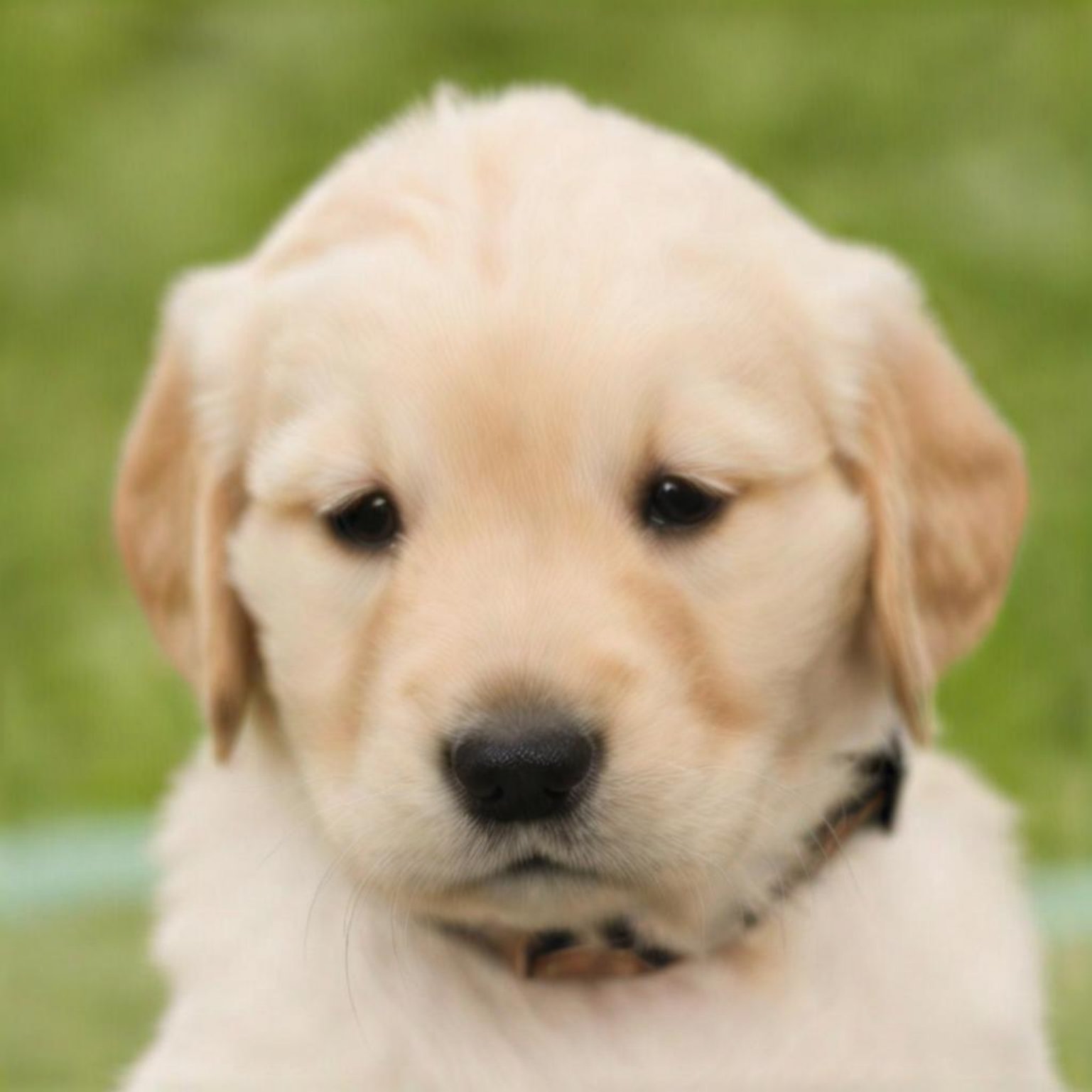} \\
        \includegraphics[width=\textwidth]{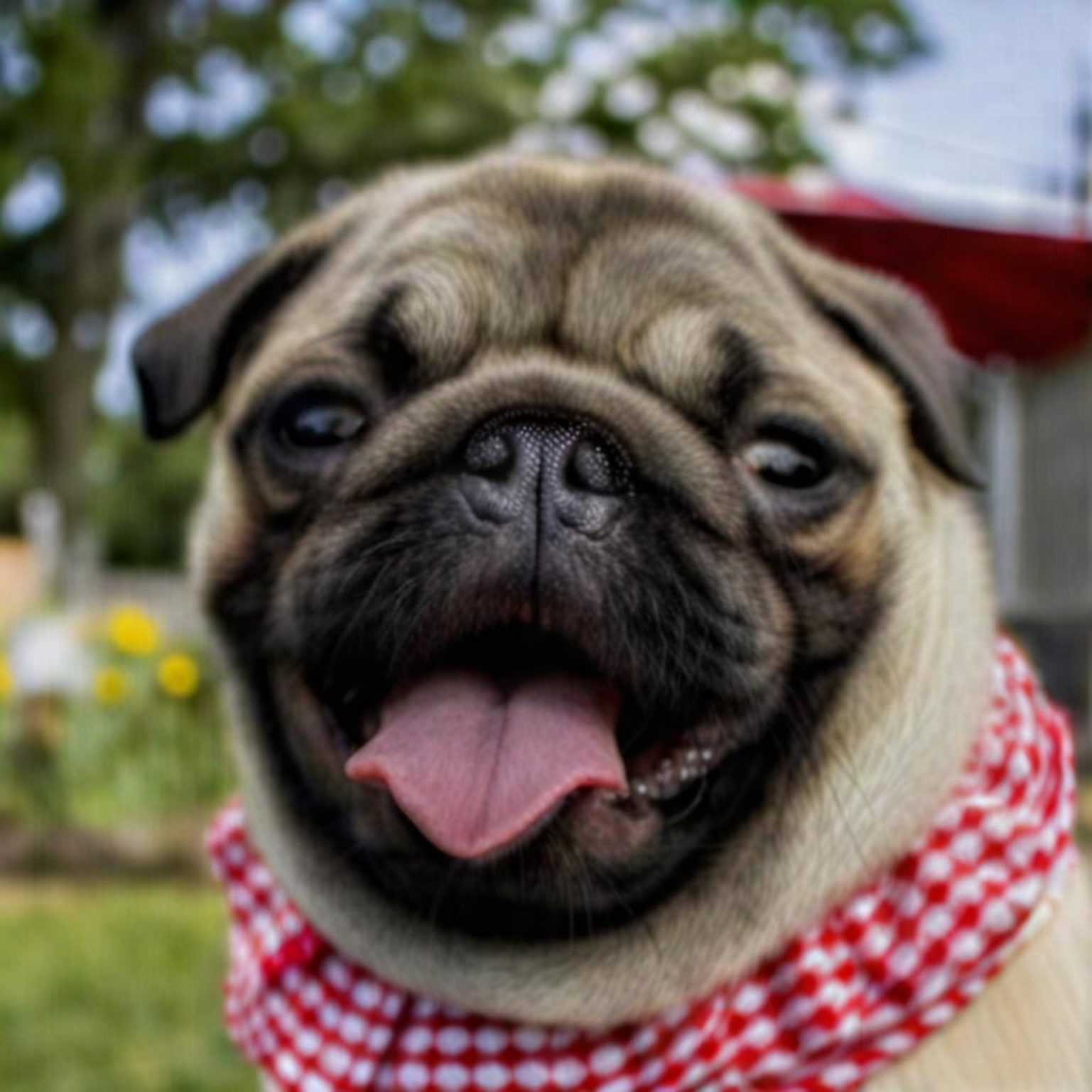}
    \end{minipage}
\vspace*{-0.2cm}
    \caption{Effect of prompt optimization on the AFHQ-dogs val dataset. Initial prompt: \texttt{a sharp photo of a dog}.}
    \label{fig:AFHQ_dogs_prompt}
\end{figure*}

}

\section{Prompt Tuning: experimental results}
\label{sec:prompt-exp}

In the Experimental section \ref{sec:experiments}, we explored prompt tuning as a way to improve the reconstructions when the prompt is already aligned with the image semantic (e.g., \texttt{a sharp photo of a face} for the FFHQ dataset). We will now show the capabilities of LATINO-PRO to significantly improve the reconstructions in cases where the given prompt is not aligned. \\

In Table \ref{tab:512_comparison_AFHQ_prompt}, we can see how when we try to reconstruct images of dogs with the prompt \texttt{a sharp photo of a cat}, the PSNR, SSIM and LPIPS metrics are much worse than with the prompt \texttt{a sharp photo of a dog}. However, LATINO-PRO brings the results closer to the optimal case. In particular, we can appreciate the effectiveness in the Gaussian deblurring case, while the SR $\times16$ seems harder to re-align. This is in fact in accordance with what we would expect, since the amount of information contained in the degraded observation $\vy$ is small, and thus the prior has more influence on the reconstruction. In practice, a high-resolution image of a cat could be compatible with a low-resolution image of a dog, and thus the prompt is not able to learn the actual ground truth. \\

In Figure \ref{fig:AFHQ_cat_dog} we can see how the prior can learn from the measurement the main features. The semantic shift from cat $\rightarrow$ dog is appreciable, as well as how the main colors are learned by the prior. Figure \ref{fig:AFHQ_cat_dog_metrics} shows the trend of LPIPS and PSNR during the LATINO-PRO iterations, averaged over 20 different seeds. We decided to run the algorithm for $25$ steps to show why it is preferable to early-stop it at around $15-20$ steps, the LPIPS metric indeed tends to usually rise after this interval, and the PSNR does not show any significant improvement. 

A similar experiment has been conducted on less extreme cases, as shown in Figure \ref{fig:AFHQ_dogs_prompt}, where the prompt given was \texttt{a sharp photo of a dog}. We can observe the ability of the prior to learn characteristics like the breed of the dog or whether it is a puppy or an adult.

\section{Inpainting}
\label{sec:Inpainting}

While in Section \ref{sec:experiments}, we focused on the deblurring and super-resolution tasks, we here apply our model to the inpainting case. We consider box impainting tasks where we cover the eyes of the animals in the AFHQ-512 dataset and both the eyes and the mouth for the faces in the FFHQ-512 dataset, as done in TReg \cite{Kim2023RegularizationBT}. We show the FID and PSNR metrics for the FFHQ and AFHQ 1k validation datasets in Table \ref{tab:box_inpainting_comparison_FFHQ}.

In Figure \ref{fig:qualitative_comparison_FFHQ_box} and Figure \ref{fig:qualitative_comparison_AFHQ_box}, we show a comparison of the available methods. The entries in the table are those advertised in the paper \cite{Kim2023RegularizationBT}. The last rows were obtained using the LATINO-PRO model, giving as prompt \texttt{a photo of} + \texttt{a face} or \texttt{a dog} + the specific caption. The results can be interpreted in the following way: the reduced number of steps of LATINO makes the inpainting task more challenging, especially for more complex images such as faces.
 The prompt optimization done through SAPG helps to mitigate this phenomenon with better visual results, especially for the AFHQ case. The high performance of the proposed method on the averaged metrics shows that similar problems are present in current SOTA methods.

\begin{table}[!h]
\centering \small
\begin{tabular}{lcccc}
\toprule
& \multicolumn{2}{c}{\textbf{FFHQ}} & \multicolumn{2}{c}{\textbf{AFHQ}} \\ 
\cmidrule(lr){2-3} \cmidrule(lr){4-5}
\textbf{Method} & \textbf{FID↓} & \textbf{PSNR↑} & \textbf{FID↓} & \textbf{PSNR↑} \\
\hline
\textbf{LATINO-PRO} & 67.79 & \textbf{20.55} & \textbf{19.91} & \textbf{21.05} \\

\textbf{LATINO} & 87.78 & \underline{20.01}  & \underline{27.01} & \underline{19.92} \\
\hline
P2L \cite{Chung2023PrompttuningLD} & 85.32 & 16.84 & 138.4 & 16.07 \\

TReg \cite{Kim2023RegularizationBT} & \underline{66.93} & 19.95 & 51.97 & 17.39 \\

PSLD \cite{Rout2023} & \textbf{60.97} & 19.76 & 104.7 & 16.93 \\
\bottomrule\vspace{-0.3cm}
\end{tabular}
\caption{Box Inpainting results on FFHQ (left) and AFHQ (right). \textbf{Bold}:  best, \underline{underline}: second best.\vspace{-0.3cm}}
\label{tab:box_inpainting_comparison_FFHQ}
\end{table}
\begin{figure}[!h]
\centering
\begin{minipage}{0.15\textwidth}
    \centering \textbf{Measurement} \\ 
    \includegraphics[width=\textwidth]{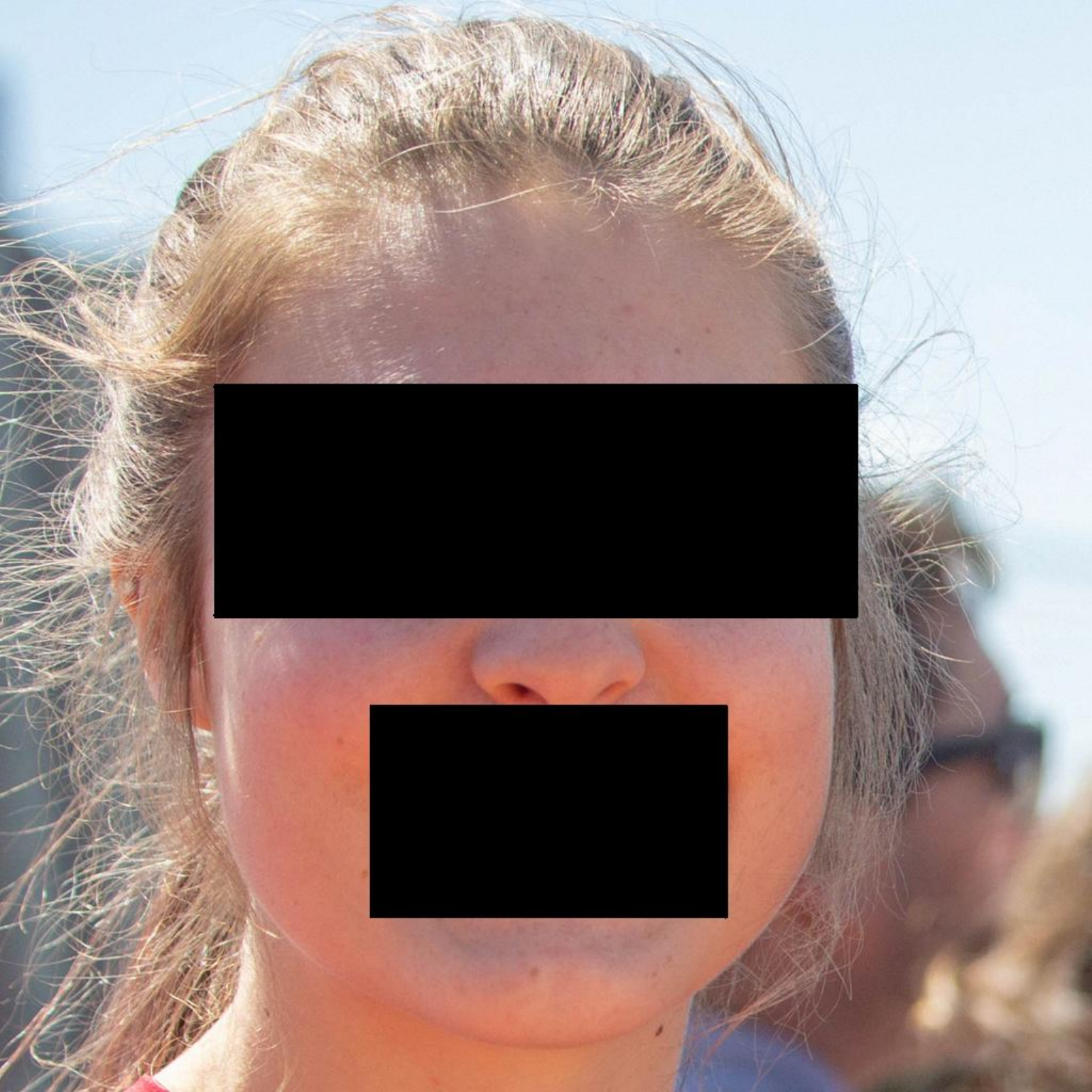}\\
    \centering \textbf{LATINO-PRO} \\ 
    \includegraphics[width=\textwidth]{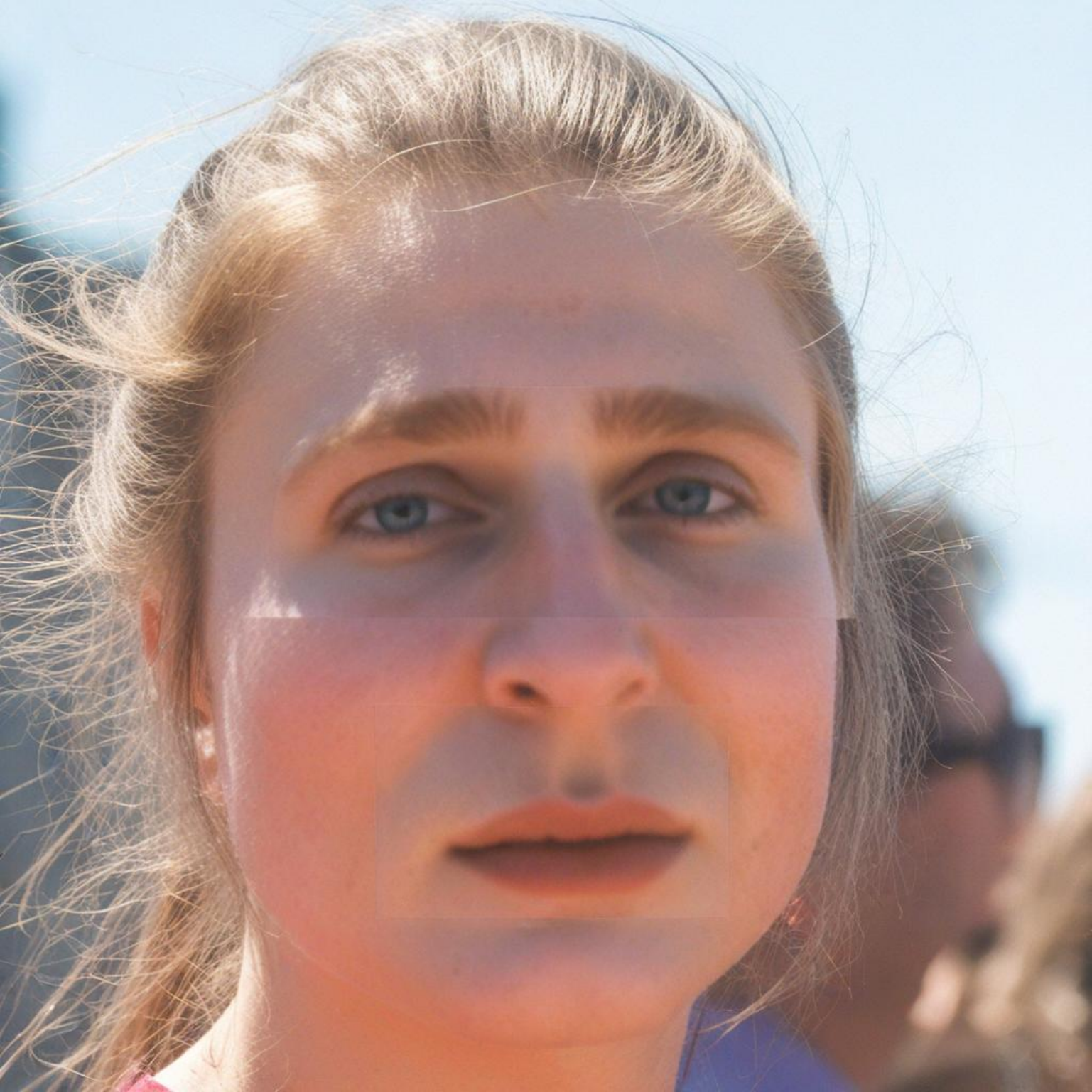}\\
    \centering \textbf{"sad"} \\ 
    \includegraphics[width=\textwidth]{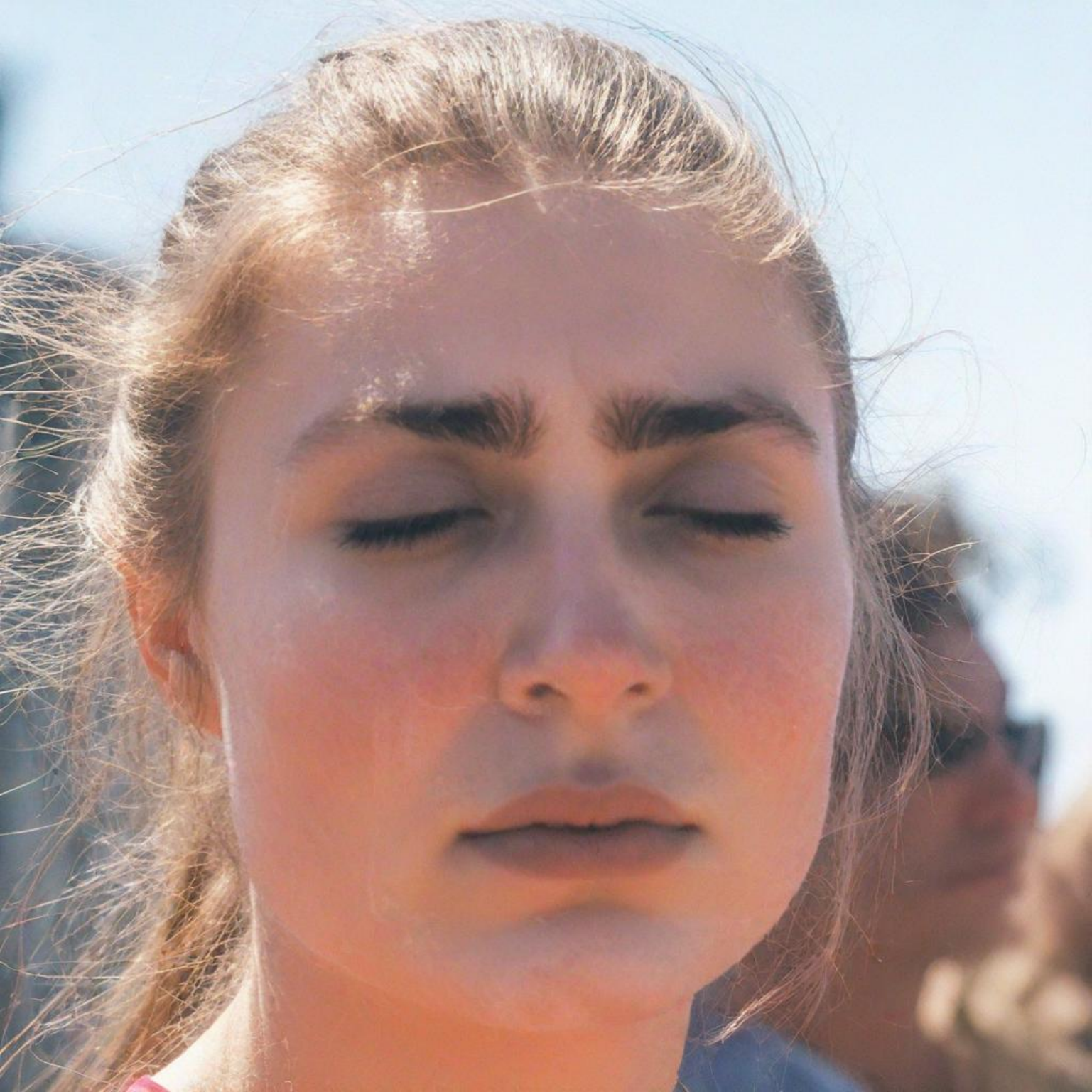}
\end{minipage}%
\hfill
\begin{minipage}{0.15\textwidth}
    \centering \textbf{GT} \\ 
    \includegraphics[width=\textwidth]{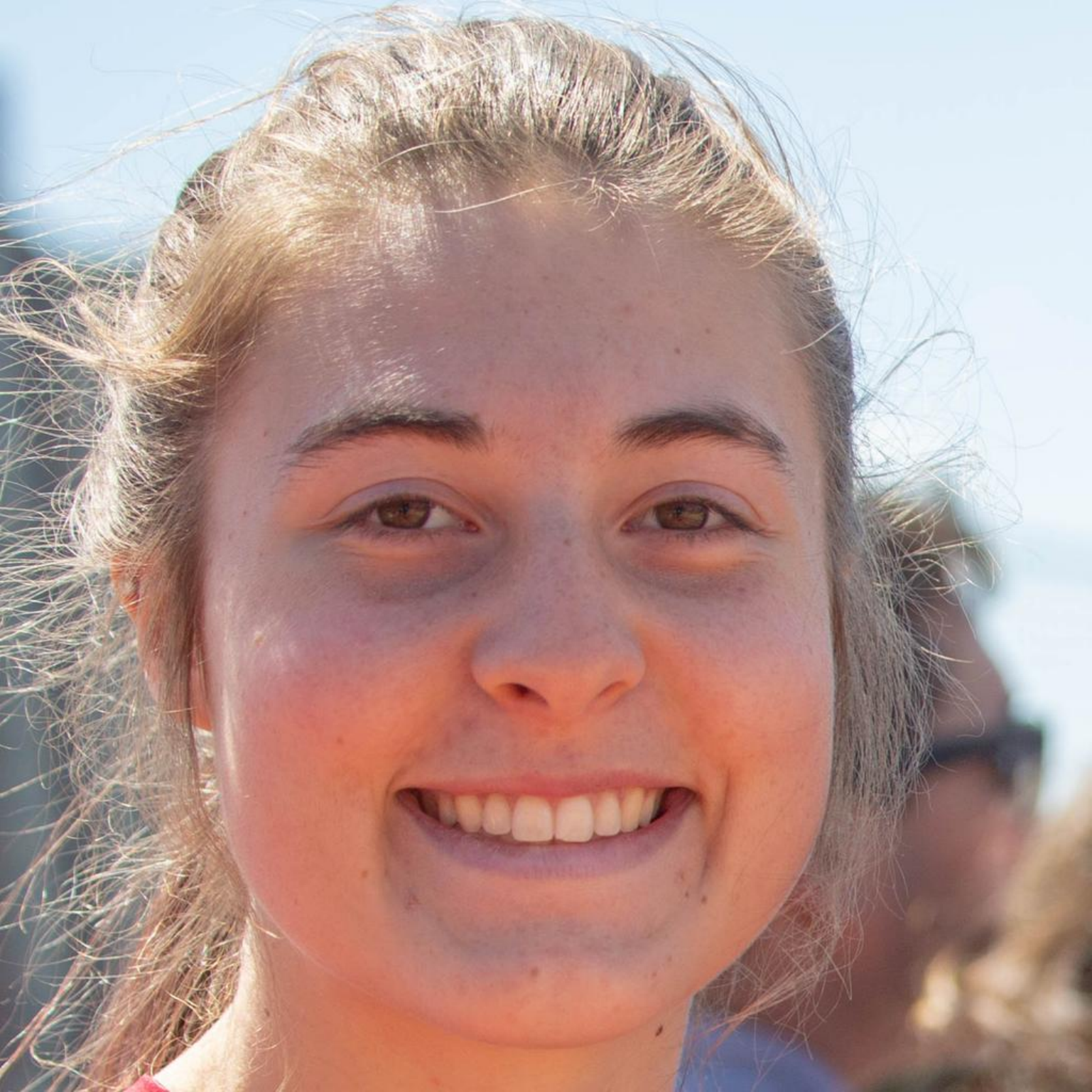}\\
    \centering \textbf{P2L} \\ 
    \includegraphics[width=\textwidth]{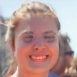}\\
    \centering \textbf{"with a smile"} \\ 
    \includegraphics[width=\textwidth]{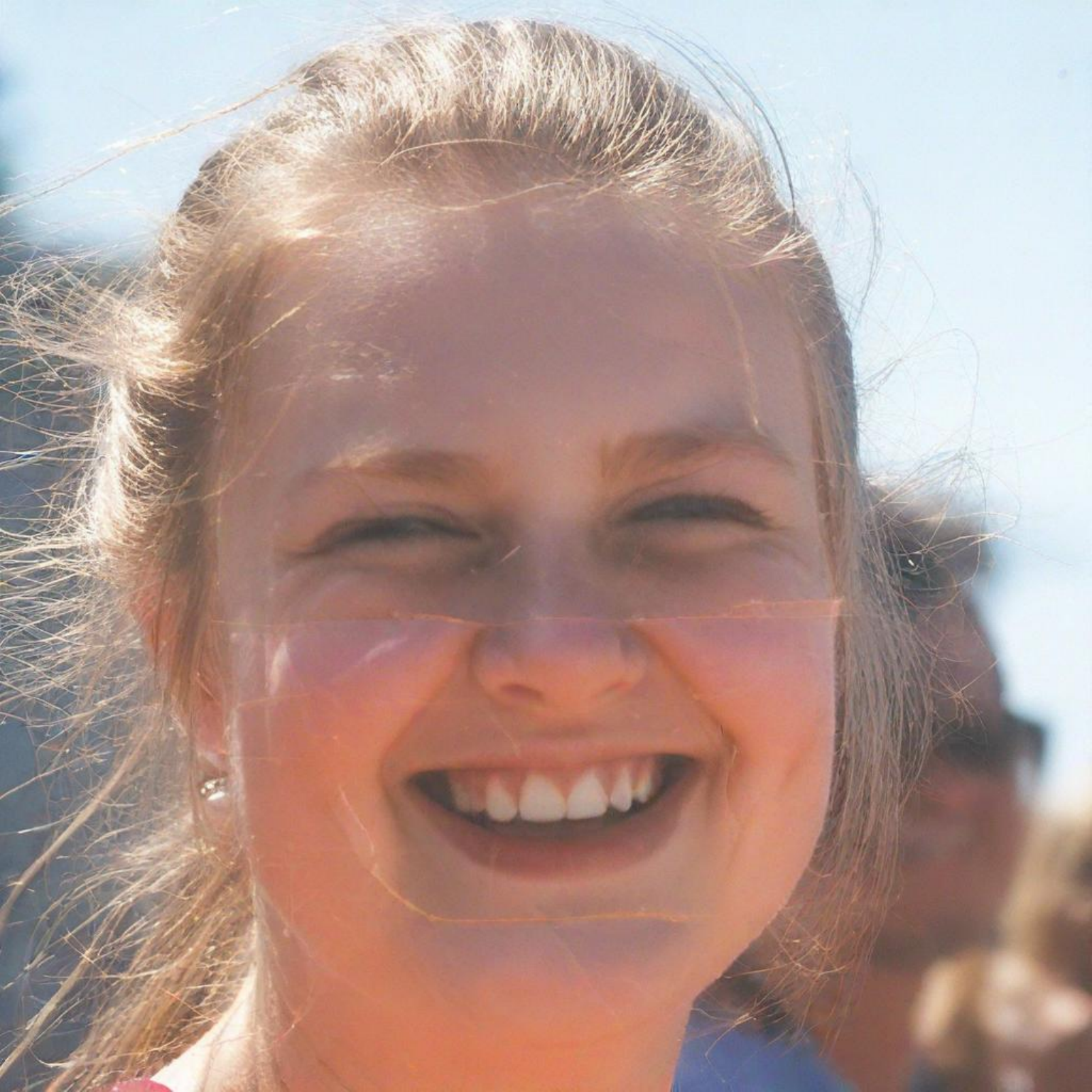}
\end{minipage}%
\hfill
\begin{minipage}{0.15\textwidth}
    \centering \textbf{LATINO} \\ 
    \includegraphics[width=\textwidth]{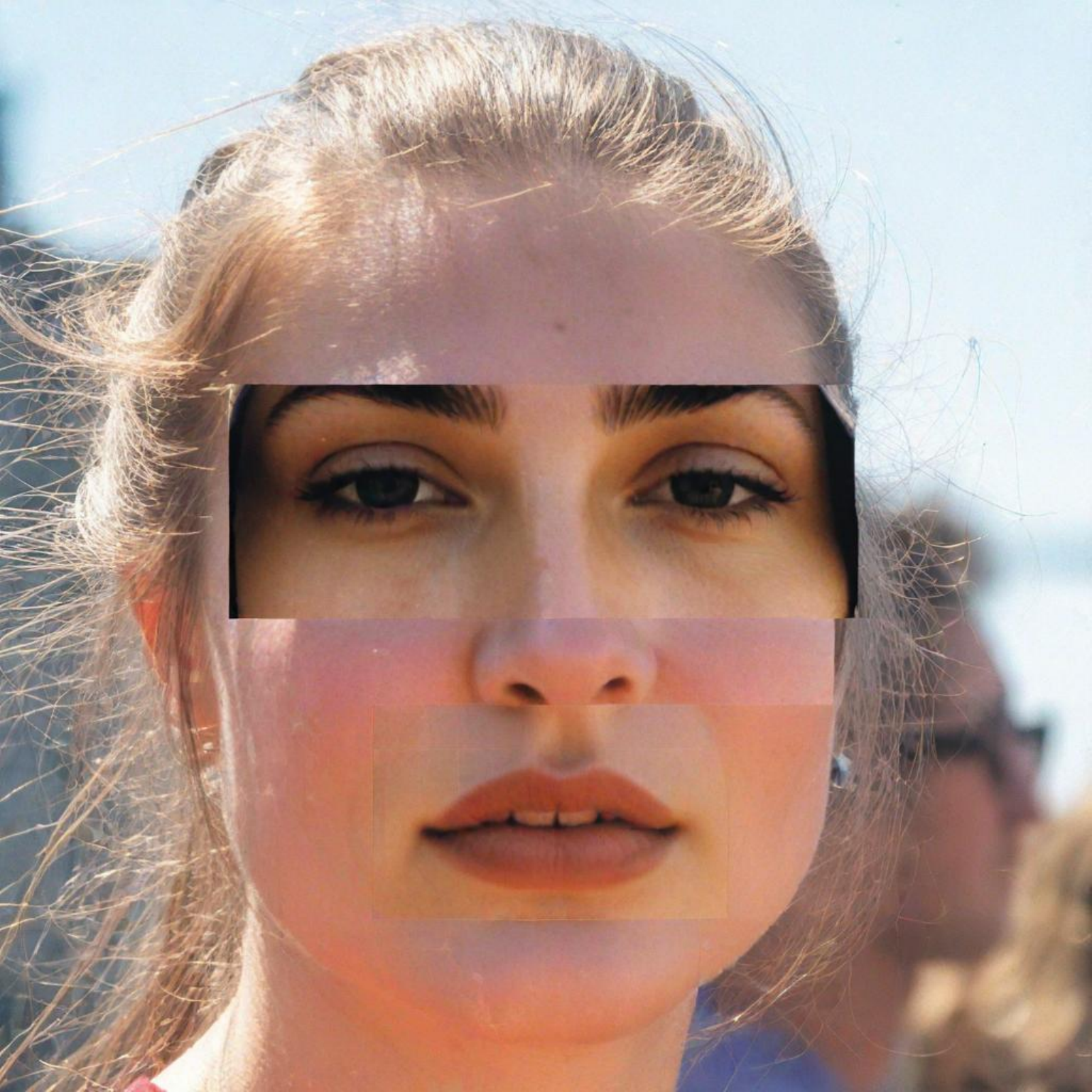}\\
    \centering \textbf{PSLD} \\ 
    \includegraphics[width=\textwidth]{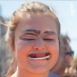}\\
    \centering \textbf{"with sunglasses"} \\ 
    \includegraphics[width=\textwidth]{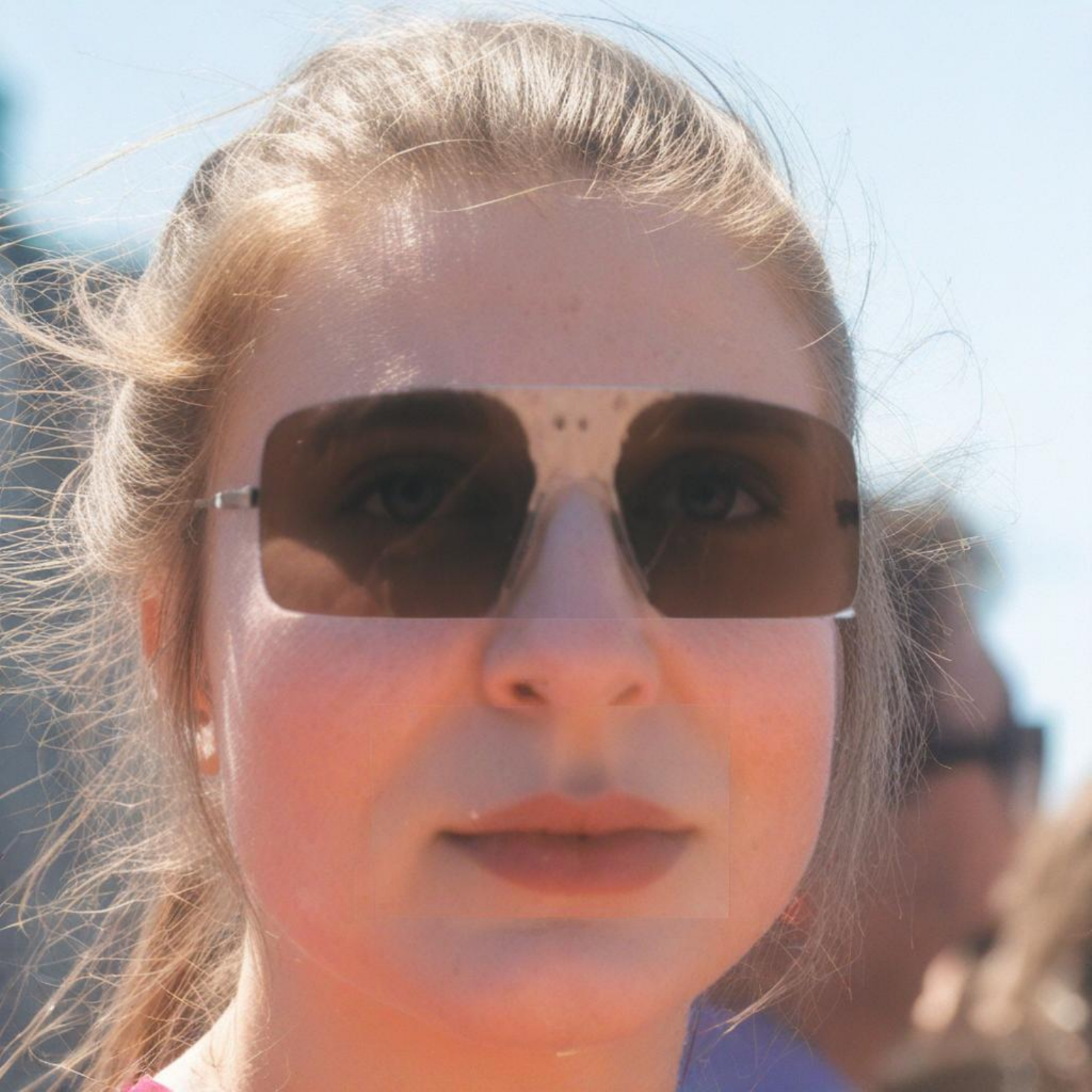}
\end{minipage}%
\vspace*{-.cm}
\caption{Box inpainting results on FFHQ-512.
Middle row: LATINO-PRO with the prompt \texttt{a sharp photo of a face}, P2L and PSLD. Bottom row: LATINO-PRO with different prompts.\vspace{-0.cm}}
\label{fig:qualitative_comparison_FFHQ_box}
\end{figure}
\begin{figure}[!h]
\centering
\begin{minipage}{0.15\textwidth}
    \centering \textbf{Measurement} \\ 
    \includegraphics[width=\textwidth]{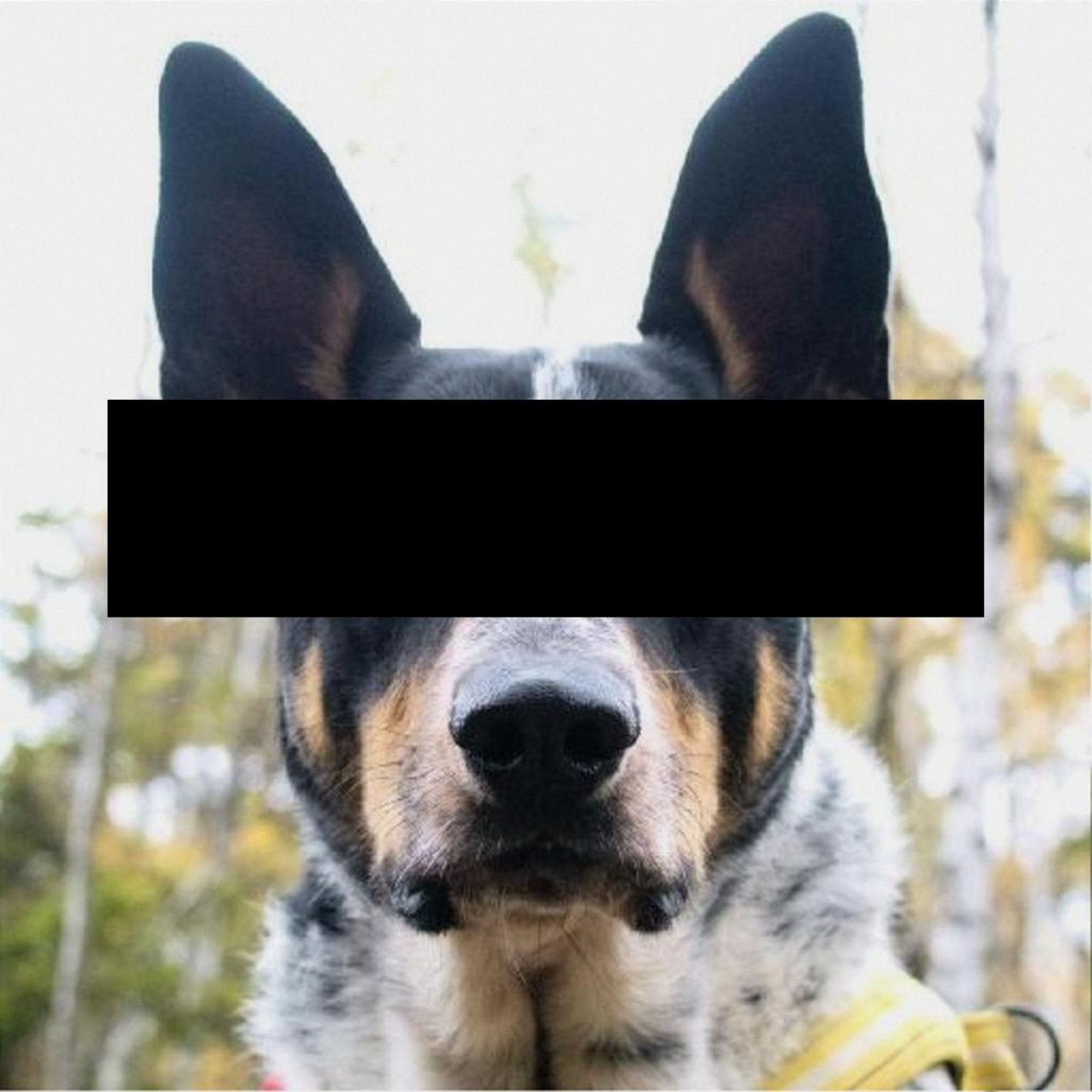}\\
    \centering \textbf{LATINO-PRO} \\ 
    \includegraphics[width=\textwidth]{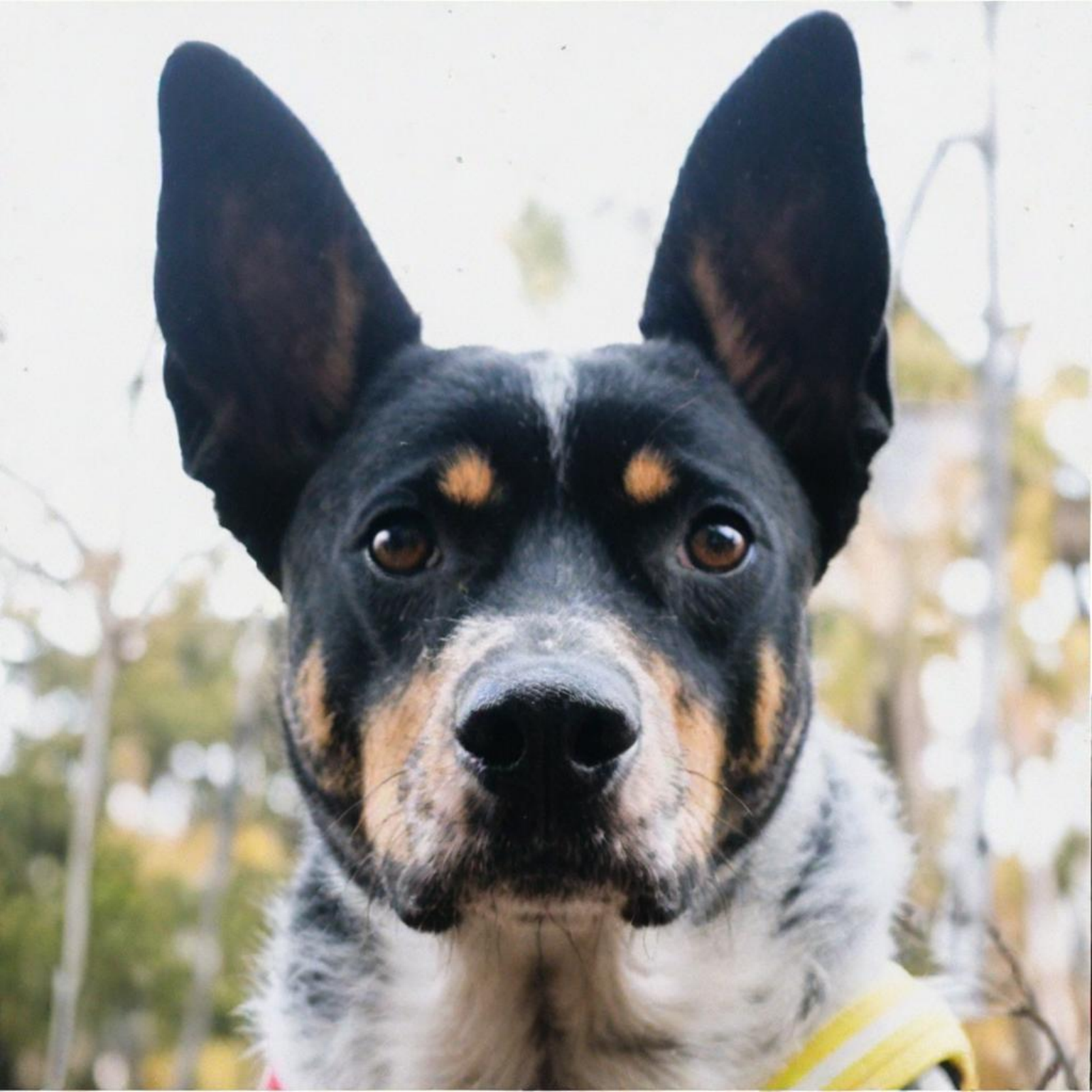}\\
    \centering \textbf{``with blue eyes''} \\ 
    \includegraphics[width=\textwidth]{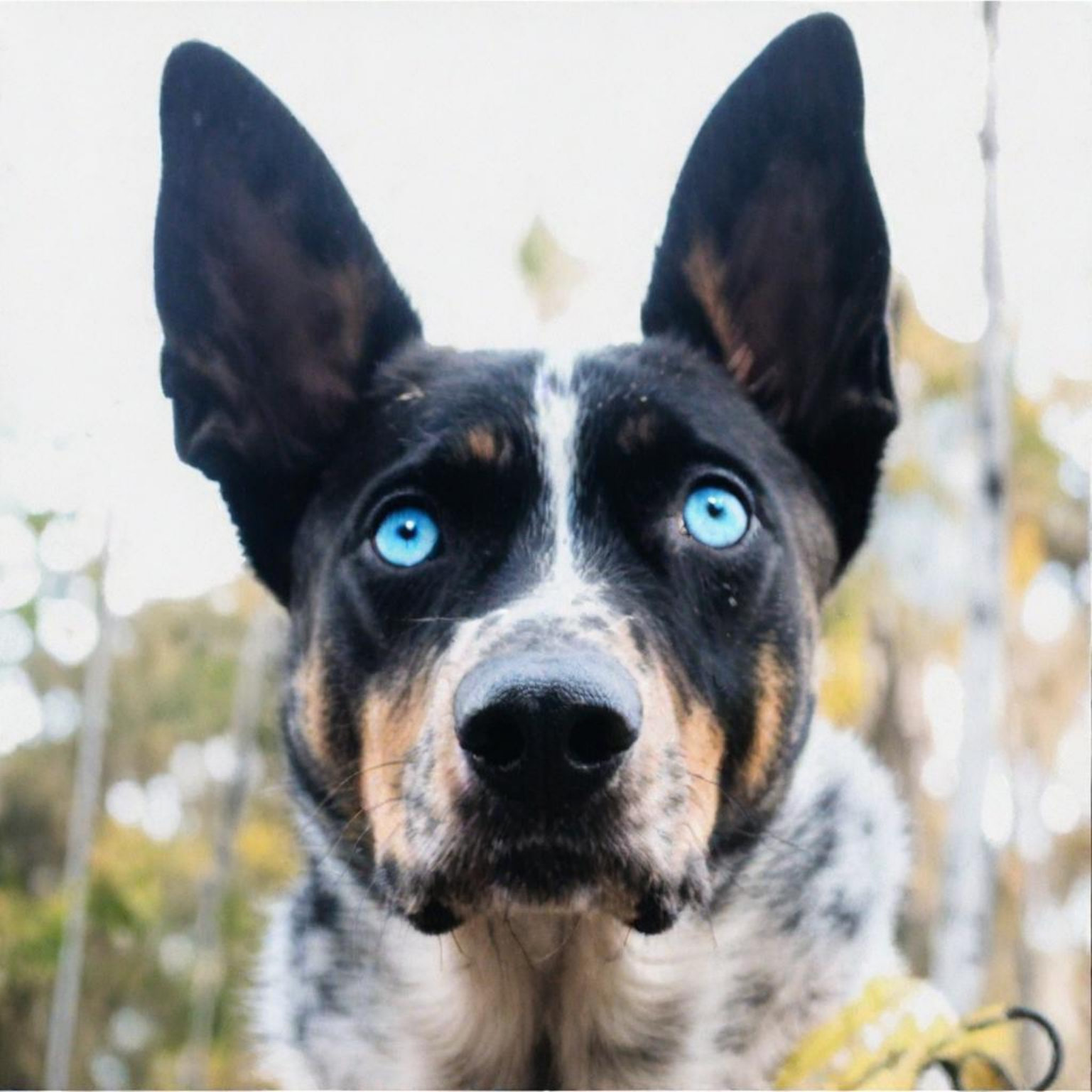}
\end{minipage}%
\hfill
\begin{minipage}{0.15\textwidth}
    \centering \textbf{GT} \\ 
    \includegraphics[width=\textwidth]{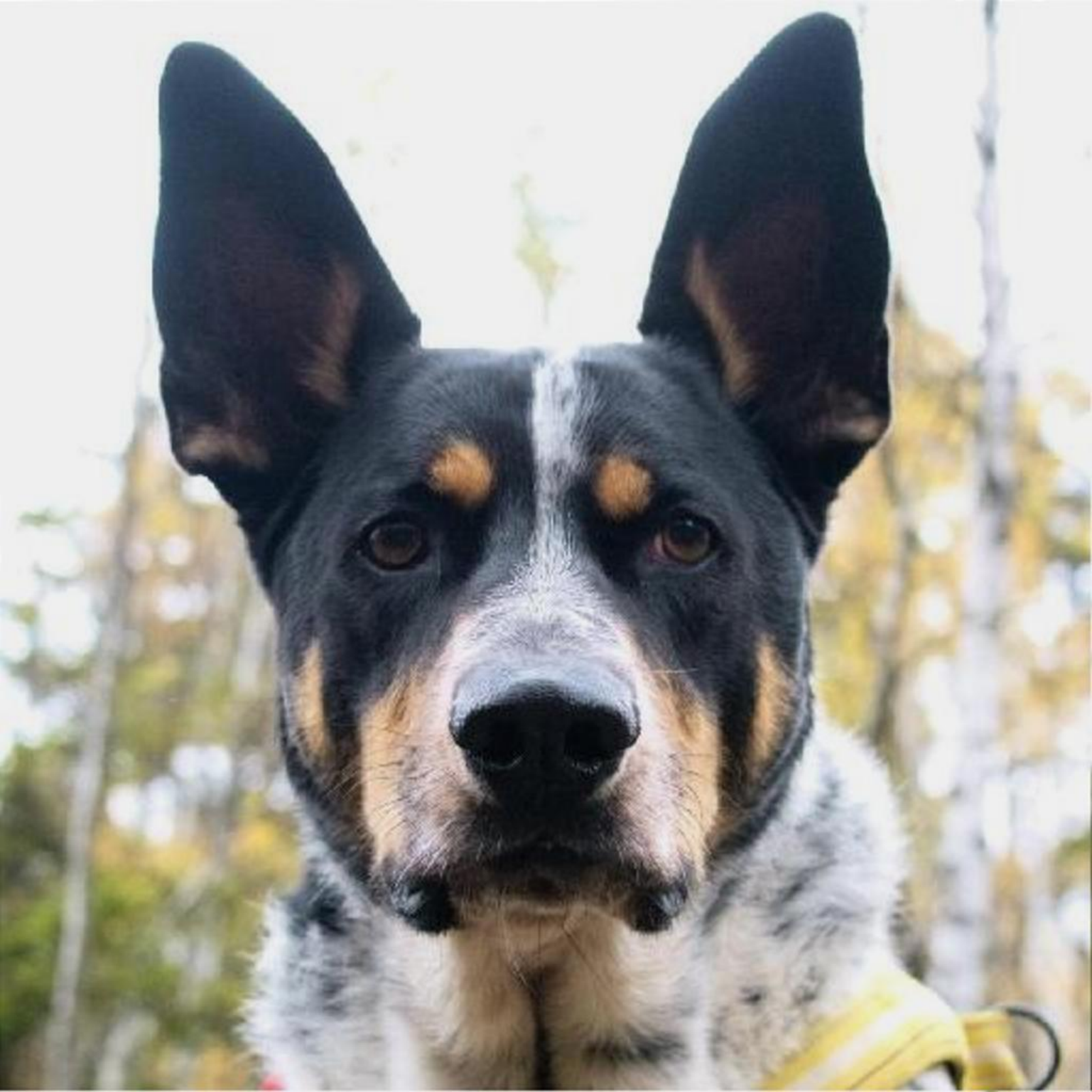}\\
    \centering \textbf{P2L} \\ 
    \includegraphics[width=\textwidth]{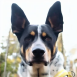}\\
    \centering \textbf{``with green eyes''} \\ 
    \includegraphics[width=\textwidth]{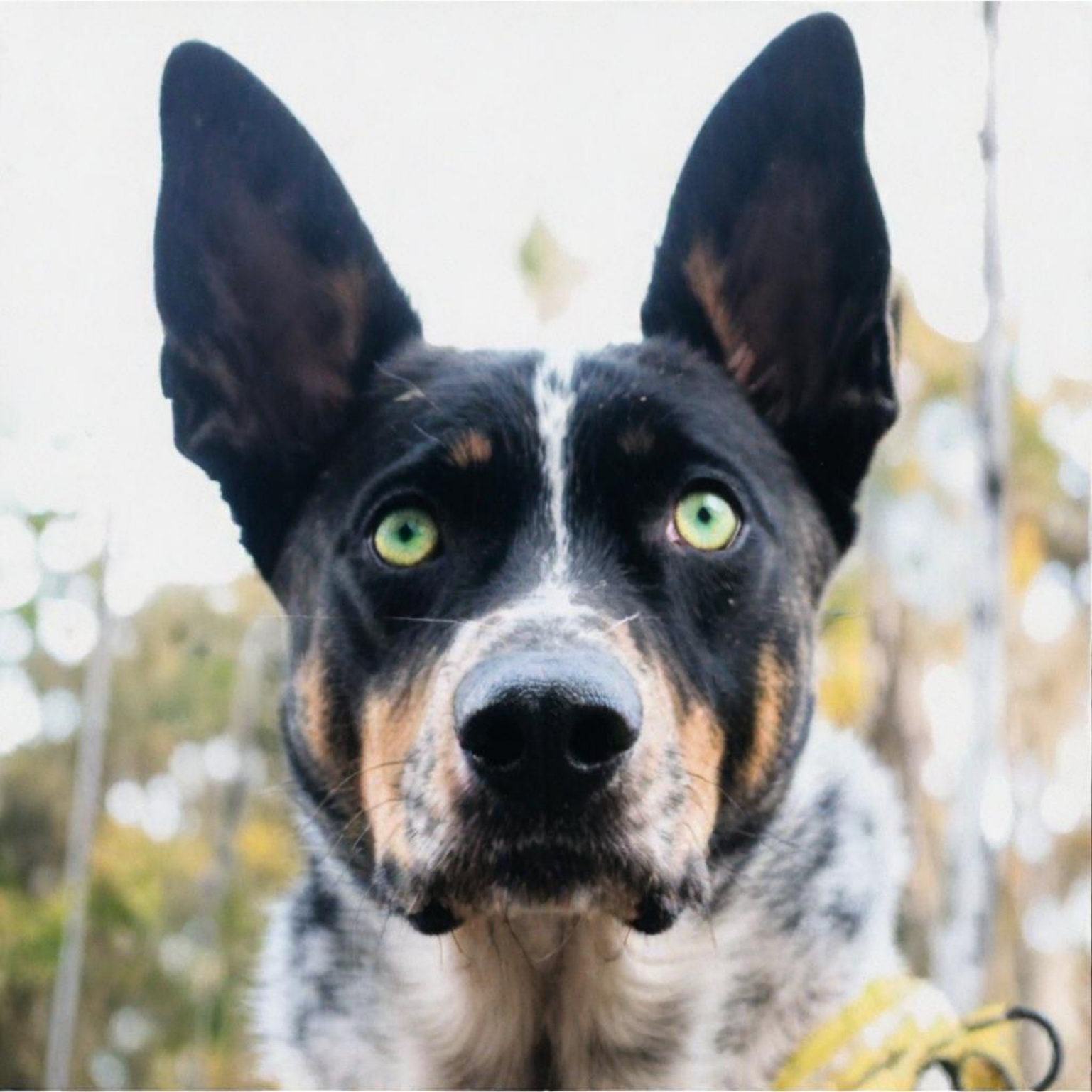}
\end{minipage}%
\hfill
\begin{minipage}{0.15\textwidth}
    \centering \textbf{LATINO} \\ 
    \includegraphics[width=\textwidth]{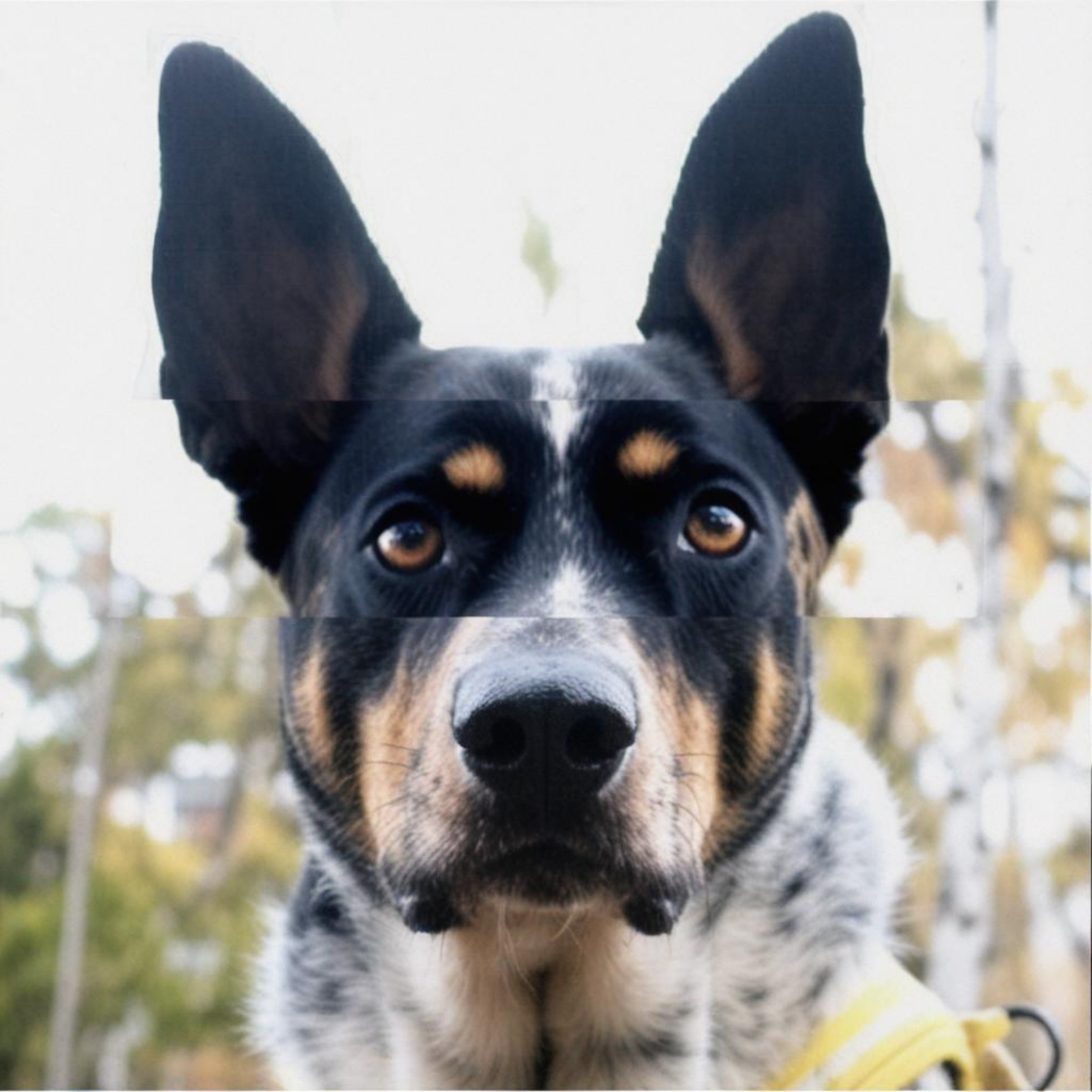}\\
    \centering \textbf{PSLD} \\ 
    \includegraphics[width=\textwidth]{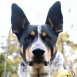}\\ 
    \centering \textbf{``with sunglasses''} \\ 
    \includegraphics[width=\textwidth]{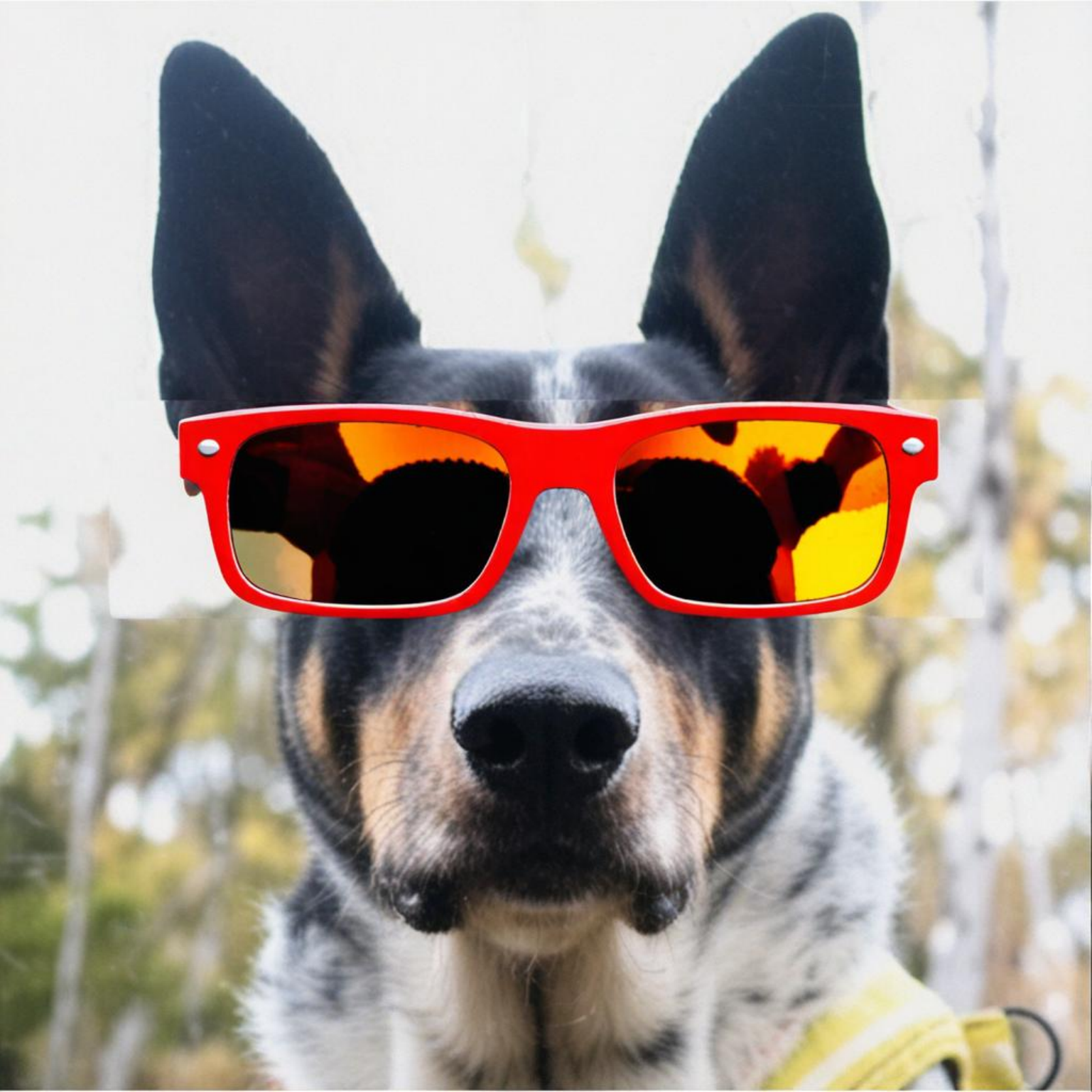}
\end{minipage}%
\vspace*{-.2cm}
\caption{Box inpainting (AFHQ-512).
Second row: LATINO-PRO with the prompt \texttt{a sharp photo of a dog}, P2L and PSLD. Last row: LATINO-PRO with various prompt initializations.\vspace{-0.cm}}
\label{fig:qualitative_comparison_AFHQ_box}
\end{figure}

\section{Harder cases}
\label{sec:extreme}

As shown in the the preview Figure \ref{fig:preview}, LATINO-PRO is able to solve also harder restoration tasks. Here we provide in Figure  \ref{fig:extreme} more visual results on the FFHQ1024 in this direction. In particular we observe an high level of consistency for aggressive tasks like Gaussian Deblurring with $\sigma = 20.0$ pixels and $\times 32$ super-resolution. We identify as a limitation the inability of our algorithm to keep the same level of consistency when the noise level is increased to $\sigma_n = 0.1$. The prior tendency to dominate is not contrasted enough by the proximity operator, and the results tend to deviate more from the actual ground truth.

\begin{figure}[!h]
\centering
\begin{minipage}{0.3\columnwidth}
    \centering \textbf{Measurement} \\ 
    \includegraphics[width=\columnwidth]{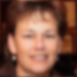} \\
    \includegraphics[width=\columnwidth]{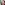}\\
    \includegraphics[width=\columnwidth]{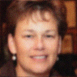}\\
    \includegraphics[width=\columnwidth]{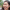}
\end{minipage}%
\begin{minipage}{0.3\columnwidth}
    \centering \textbf{GT} \\ 
    \includegraphics[width=\columnwidth]{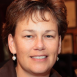} \\
    \includegraphics[width=\columnwidth]{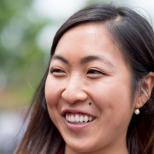}\\
    \includegraphics[width=\columnwidth]{images/Tests/extreme/clean3.pdf}\\
    \includegraphics[width=\columnwidth]{images/Tests/extreme/clean4.pdf}
\end{minipage}%
\begin{minipage}{0.3\columnwidth}
    \centering \textbf{LATINO-PRO} \\ 
    \includegraphics[width=\columnwidth]{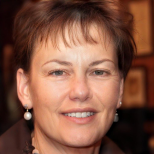} \\
    \includegraphics[width=\columnwidth]{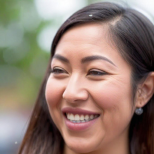}\\
    \includegraphics[width=\columnwidth]{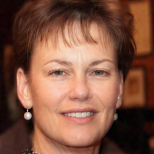}\\
    \includegraphics[width=\textwidth]{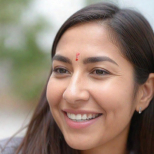}
\end{minipage}
\caption{Qualitative comparison of LATINO-PRO on hard image restoration on FFHQ-1024. Tasks: Gaussian deblur $\sigma=20.0$ and $\times 32$ super-resolution with noise $\sigma_n=0.01$. Gaussian deblur $\sigma=10.0$ and $\times 16$ super-resolution with noise $\sigma_n=0.1$.}
\label{fig:extreme}
\end{figure}

\end{document}